\newtheorem{theorem}{Theorem}[section]
\newtheorem{lemma}{Lemma}[section]
\newtheorem{proposition}{Proposition}[section]
\newcommand{\argmax}{\mathop{\rm arg~max}\limits}
\newcommand{\argmin}{\mathop{\rm arg~min}\limits}
\newcommand{\1}{\mbox{1}\hspace{-0.25em}\mbox{l}}
\title{Bayesian Quadrature Optimization for \\ Probability Threshold Robustness Measure}
\author{%
  % David S.~Hippocampus\thanks{Use footnote for providing further information
  %   about author (webpage, alternative address)---\emph{not} for acknowledging
  %   funding agencies.} \\
  % Department of Computer Science\\
  % Cranberry-Lemon University\\
  % Pittsburgh, PA 15213 \\
  % \texttt{hippo@cs.cranberry-lemon.edu} \\
  % examples of more authors
  Shogo Iwazaki \\
  Department of Computer Science, Nagoya Institute of Technology \\
  Gokiso-cho, Showa-ku, Nagoya, 466-8555, Japan \\
  %\texttt{email} \\
  \And
  Yu Inatsu \\
  RIKEN Center for Advanced Intelligence Project \\
  1-4-1 Nihonbashi, Chuo-ku, Tokyo, 103-0027, Japan \\
  % \texttt{email} \\
  \AND
  Ichiro Takeuchi \\
  Department of Computer Science/Research Institute for Information Science,\\
  Nagoya Institute of Technology \\
  Gokiso-cho, Showa-ku, Nagoya, 466-8555, Japan \\
  \texttt{takeuchi.ichiro@nitech.ac.jp} \\
}
\begin{document}

\maketitle
\begin{abstract}
In many product development problems, the performance of the product is governed by two types of parameters called design parameter and environmental parameter.
While the former is fully controllable, the latter varies depending on the environment in which the product is used.
The challenge of such a problem is to find the design parameter that maximizes the probability that the performance of the product will meet the desired requisite level given the variation of the environmental parameter.
In this paper, we formulate this practical problem as active learning (AL) problems and propose efficient algorithms with theoretically guaranteed performance.
Our basic idea is to use Gaussian Process (GP) model as the surrogate model of the product development process, and then to formulate our AL problems as Bayesian Quadrature Optimization problems for probabilistic threshold robustness (PTR) measure. 
We derive credible intervals for the PTR measure and propose AL algorithms for the optimization and level set estimation of the PTR measure. 
We clarify the theoretical properties of the proposed algorithms and demonstrate their efficiency in both synthetic and real-world product development problems.
\end{abstract}

\section{Introduction}
In many product development problems, the performance of the product is governed by two types of parameters called \emph{design parameter} and \emph{environmental parameter}.
While design parameter is fully controllable, environmental parameter varies depending on the environment in which the product is used.
The challenge of such a problem is to identify the design parameter that maximizes the probability that the performance of the product will meet a desired requisite level given the variation of the environmental parameter.
In this problem setup, it is important to clarify the difference between the \emph{development phase} and the \emph{use phase} of the product.
During the \emph{development phase}, we can arbitrarily specify both the design and environmental parameters.
On the other hand, during the use phase, the design parameter is held fixed, while the environmental parameter varies.
The goal of this paper is to formulate this practical problem as active learning (AL) problems and to propose efficient AL algorithms with theoretically guaranteed performances.

Let us represent the performance of a product as a real-valued function $f(\bm x, \bm w)$ and the desired threshold of the performance as a scalar $h$, where $\bm x \in \cX \subseteq \RR^d$ is design parameter and $\bm w \in \Omega \subseteq \RR^k$ is environmental parameter.
We consider the problem of finding the design parameter $\bm x$ such that the probability that $f(\bm x, \bm w) > h$ is as large as possible or greater than a certain value under the variation of the environmental parameter $\bm w$.
Let
\begin{align}
 \label{eq:pup}
 p_{\rm upper}(\bm x) = \int_{\Omega} \1\left[f(\bm x, \bm w) > h\right] p(\bm w) d \bm w,
\end{align}
where
$\1$ is the indicator function and $p(\bm w)$ is the probability density (mass) function of $\bm w$~~\footnote{
The discrete $\bm w$ case can be similarly defined by replacing the integral with summation.
}.
This measure is referred to as the \emph{probabilistic threshold robustness (PTR) measure} in the context of robust optimization~\cite{beyer2007robust}.
Figure~\ref{fig:illustration} illustrates the problem setup considered in this paper.

\begin{SCfigure}
\centering
\includegraphics[width=0.475\linewidth]{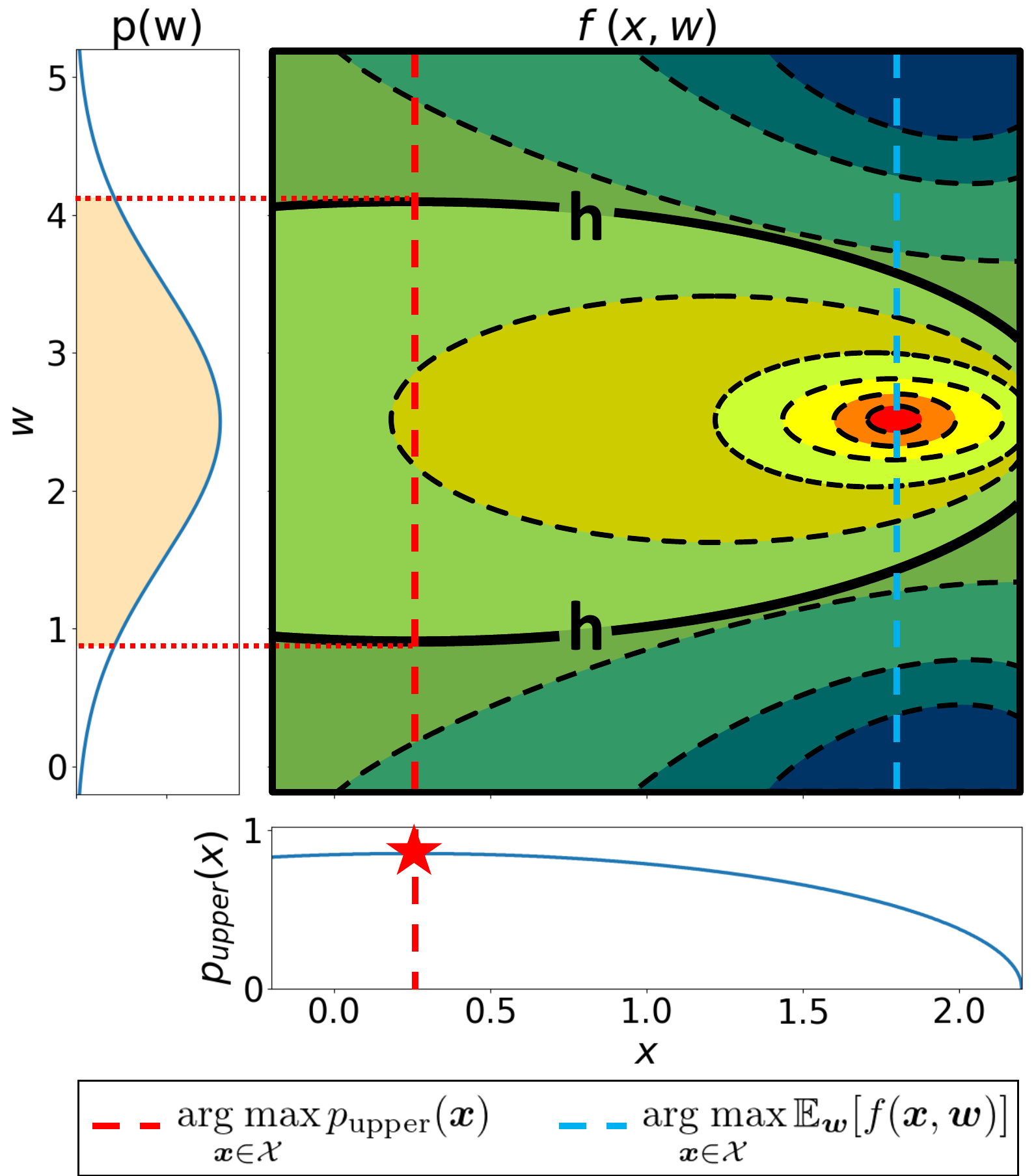}
 \caption{
An illustration of our problem setup in a two-dimensional synthetic example.
The horizontal and the vertical axes represent the design parameter $\bm x$ and the environmental parameter $\bm w$, respectively.
Our goals is to find the design parameter $\bm x^*$ that maximizes $p_{\rm upper}(\bm x)$ \textemdash the probability that the function $f$ exceeds the desired requisite level $h$ under the variation of the environmental parameter $\bm w$ characterized by the probability density $p(\bm w)$.
In this example, the optimal design parameter $\bm x^*$ is indicated by the yellow star and dotted line and the $p_{\rm upper}(\bm x^*)$ is indicated by the filled area of the probability distribution in the left.
In addition, the blue dotted line represents the design parameter that maximizes the expected value of $f(\bm x, \bm w)$.
In general, the design parameter that maximizes the expected value of $f$ and those that maximize $p_{\rm upper}$ are not the same.
}
\label{fig:illustration}
\end{SCfigure}

% \begin{figure}
% \centering
% \includegraphics[width=0.475\linewidth]{paper_image_2.pdf}
% % \includegraphics[width=0.475\linewidth]{paper_image_2}
%  \caption{
% An illustration of our problem setup in a two-dimensional synthetic example.
% %
% The horizontal and the vertical axes represent the design parameter $\bm x$ and the environmental parameter $\bm w$, respectively.
% %
% Our goals is to find the design parameter $\bm x^*$ that maximizes $p_{\rm upper}(\bm x)$ \textemdash the probability that the function $f$ exceeds the desired requisite level $h$ under the variation of the environmental parameter $\bm w$ characterized by the probability density $p(\bm w)$.
% %
% In this example, the optimal design parameter $\bm x^*$ is indicated by the yellow star and dotted line and the $p_{\rm upper}(\bm x^*)$ is indicated by the filled area of the probability distribution in the left.
% %
% In addition, the blue dotted line represents the design parameter that maximizes the expected value of $f(\bm x, \bm w)$.
% %
% In general, the design parameter that maximizes the expected value of $f$ and those that maximize $p_{\rm upper}$ are not the same.
% }
% \label{fig:illustration}
% \end{figure}

In order to make the development phase more efficient, it is desirable to be able to find the design parameter $\bm x$ that maximize $p_{\rm upper}(\bm x)$ or to know the range of design parameter $\bm x$ such that $p_{\rm upper}(\bm x)$ is sufficiently enough with as little trial and error as possible.
Therefore, in this paper, we consider AL problems for the optimization and the Level Set Estimation (LSE) of $p_{\rm upper}(\bm x)$.
Our basic idea is to consider the function $f(\bm x, \bm w)$ as a black-box function that is costly to evaluate and to use the Gaussian Process (GP) model as its surrogate model.
We make use of the uncertainties of the black-box function estimated by the surrogate GP model to determine how the design parameter $\bm x$ and the environmental parameter $\bm w$ should be selected at the development stage for the optimization and LSE of $p_{\rm upper}(\bm x)$.

\paragraph{Contributions}
Our contributions in this paper are as follows.
First, we introduce new problem setups that are motivated from practical product development problems that involve optimization and LSE of the PTR measure $p_{\rm upper}(\bm x)$.
Second, we develop AL methods for the optimization and LSE problems which require non-trivial derivation of credible intervals of $p_{\rm upper}(\bm x)$,
Third, we analyze the theoretical properties of $\epsilon$-regret (see \S2) for the optimization setting, and $\epsilon$-accuracy (see \S2) for the LSE setting.
Finally, we demonstrate the efficiency of the proposed methods in both synthetic and real-world problems.

\paragraph{Related works}
AL methods for optimization and LSE problems have been studied in the contexts of Bayesian Optimization (BO)~\cite{shahriari2015taking} and Bayesian LSE~\cite{bryan2006active}, respectively.
%
%The advantage of using a GP as a surrogate model is that the uncertainty of the black-box function can be properly modeled within a Bayesian inference framework.
%
%Estimated uncertainty of the black-box function can be effectively used for designing AL methods.
%
In various fields, there are problems in which the effect of uncontrollable and uncertain parameter \textemdash such as the environmental parameter $\bm w$ in \eq{eq:pup} \textemdash must be properly taken into account.
For example, in material simulations, some properties of the target material cannot actually be measured, so the simulation must take into account the uncertainty of these properties.
In medical clinical trials, it is vital to take into account the uncertainty associated with individual differences in patients.
%
%In machine learning problems with multiple tasks such as multi-task learning and transfer learning, it is necessary to properly consider the uncertainty of the future task distribution.
%
In modeling functions with uncertainty parameters such as $\bm w$, the most common approach is to consider the expectation \textemdash using our notation, this corresponds to considering the function in the form of $g(\bm x) = \int_\Omega f(\bm x, \bm w) p(\bm w) d \bm w$.
A nice aspect of the function $g(\bm x)$ is that, when $f(\bm x, \bm w)$ is written as a GP model, $g(\bm x)$ is also represented as a GP model.
%
%In particular, when a Gaussian kernel is used in the GP model of $f(\bm x, \bm w)$ and $p(\bm w)$ is represented by a Gaussian distribution, the GP model of $g(\bm x)$ can be obtained analytically, making it easy to design AL algorithms based on the uncertainty.
%
AL for maximizing the function in the form of $g(\bm x)$ is called \emph{Bayesian Quadrature Optimization (BQO)}~\cite{toscano2018bayesian}.
%
%Compared to existing studies on BQO, our work bears crucial differences \textemdash both in the problem setup and the technical approach \textemdash since our target function $p_{\rm upper}(\bm x)$ is not a GP anymore; therefore, it is necessary to solve non-trivial and technically challenging issues in modeling the uncertainty and providing the theoretical convergence guarantees.
%
Another line of research, which deals with uncontrollable and uncertain components of GP models is found in the context of \emph{robust learning}.
For example, \cite{bogunovic2018adversarially} studied adversarial robust update of a GP model by considering a scenario where the input is perturbed by an adversary. %, which can be interpreted as a way to incorporate the uncertain component of the black-box function into a BO problem.
Other closely related works are \cite{nguyen2020distributionally} and \cite{kirschner2020distributionally} where a distributional robust optimization  framework was introduced in the context of BQO. % by considering the case where the uncertainty parameter distribution is unknown.
Our work is also related to robust BO/LSE methods under input uncertainty~\cite{beland2017bayesian,DBLP:conf/aistats/OliveiraOR19,frohlich2020noisy,DBLP:journals/corr/abs-1909-06064,iwazaki2019bayesian} in which one can only obtain the function values evaluated at noisy inputs.
In addition to these related studies, various forms of robustness of GP modeling have been considered previously~\cite{shah2014student, MartinezCantin18aistats, bogunovic2020corruption}; however, to our knowledge, none of these previous works studied AL problems for the PTR measure in the form of $p_{\rm upper}(\bm x)$, for which it is necessary to solve non-trivial and technically challenging problems.

\section{Preliminaries}\label{sec:Preliminaries}
% \subsection{Problem Setup}
Let $f: \mathcal{X} \times \Omega \rightarrow \mathbb{R}$ be a black-box
function whose evaluation is costly, where $\mathcal{X}$ is a finite subset \footnote{Extensions to an infinite subset are given in Appendix.} of $\mathbb{R}^d$
and $\Omega$ is a compact subset of  $\mathbb{R}^k$.
At step $t$ in the development phase, we query $f$ at $(\bm{x}_t, \bm{w}_t)$ and observe noisy function
value $y_t = f(\bm{x}_t, \bm{w}_t) + \varepsilon_t$, where
$\varepsilon_t \sim \mathcal{N}(0, \sigma^2)$ is an independent Gaussian noise.
Furthermore, we assume that parameters $\bm{w} \in \Omega$ are distributed by density $p(\bm{w})$ at use phase.
Given a user-specified threshold $h$, we consider the PTR measure defined in \eq{eq:pup}.
In this paper, we assume that $f$ is drawn from GP defined over $\cX \times \Omega$.
Under this setting, we study AL problems for optimization and LSE of the PTR measure.
These problems are non-trivial since $p_{\rm upper}$ cannot be directly evaluated and it is not a GP anymore even if $f$ follows GP.

\paragraph{Optimization Setting}
The first problem we consider is the maximization:
\begin{equation*}
    \bm{x}^\ast = \argmax_{\bm{x} \in \mathcal{X}} ~p_{\text{upper}}(\bm{x}).
\end{equation*}
In this setting, our goal is to find $\bm{x}^\ast$ with few function evaluations as possible.
In order to evaluate an algorithm performance,  we define the following performance metrics based on what we call $\epsilon$-regret\footnote{Note that the name $\epsilon$-regret is used in \cite{bogunovic2018adversarially}, but its definition is different from ours.}.
Given a user-defined accuracy parameter $\epsilon >0$, we define the $\epsilon$-regret $r_t (\epsilon)$ at step $t$ as
\begin{equation*}
    r_t (\epsilon)= (p_{\text{upper}}(\bm{x}^\ast)-\epsilon) - p_{\text{upper}}(\bm{x}_t), \label{eq:epsilon-regret}
\end{equation*}
where $\bm{x}_t$ is the query specified by the algorithm at step $t$.
We then define cumulative $\epsilon$-regret $R_T (\epsilon)$ and Bayes $\epsilon$-regret $BR_T (\epsilon)$ at step $T$ as
\begin{equation*}
 \label{eq:cumulative-epsilon-regret}
  R_T (\epsilon)= \sum_{t=1}^{T} r_t (\epsilon)
  ~~~\text{and}~~~
  BR_T (\epsilon) = \mathbb{E} [R_T (\epsilon)],
\end{equation*}
where the expectation is taken w.r.t. the GP prior, noise $\varepsilon$ and any randomness of the algorithm.

Note that for $\epsilon=0$, $R_T(0)$ and $BR_T(0)$ are cumulative regret and Bayes cumulative regret~\cite{kandasamy2018parallelised}, respectively, which are commonly used in the context of BO.
The reason why we need to consider $r_t(\epsilon)$ instead of $r_t(0)$ is to make a theoretically rigorous argument for the case where $f(\bm x, \bm w)$ is exactly $h$ for some $(\bm x, \bm w) \in {\cal X} \times \Omega$.
In such a case, since only noisy response of $f$ is observed, the uncertainty of $p_{\rm upper}(\bm x)$ cannot be exactly zero no matter how much we evaluate $f(\bm x, \bm w)$.
In \S4, we show that our proposed algorithms in \S3 are sublinear w.r.t. the $\epsilon$-regret (with high probability) and Bayes $\epsilon$-regret for arbitrary small $\epsilon > 0$.

\paragraph{Level Set Estimation (LSE) Setting}
The second problem is the LSE problem \cite{bryan2006active,gotovos2013active}.
An LSE problem is defined as the problem of identifying the input regions where the target function value is above (below) a threshold $\alpha$.
Given a threshold $\alpha \in (0, 1)$, we formulate the LSE of $p_{ {\rm upper}} ({\bm x})$ as the problem of classifying  all ${\bm x} \in \mathcal{X}$ into the \emph{superlevel} set $\mathcal{H}$ and the \emph{sublevel} set $\mathcal{L}$ defined as
\begin{align*}
    \mathcal{H} = \left\{ \bm{x} \in \mathcal{X} \mid p_{\text{upper}}(\bm{x}) \geq \alpha \right\}
 ~~~\text{and}~~~
    \mathcal{L} = \left\{ \bm{x} \in \mathcal{X} \mid p_{\text{upper}}(\bm{x}) < \alpha \right\}.
\end{align*}

In order to evaluate an algorithm performance, we employ $\epsilon$-accuracy which is commonly used in the context of LSE~\cite{gotovos2013active}.
The $\epsilon$-accuracy is defined by using the misclassification loss $e_{\alpha} ({\bm x})$ defined as
\begin{align*}
    e_\alpha ({\bm{x}}) =
    \begin{cases}
        \max \{ 0, p_{\rm upper} ({\bm{x}} ) - \alpha \} & \text{if} \ {\bm{x}} \in \hat{\mathcal{L}}, \\
        \max \{ 0, \alpha - p_{\rm upper} ({\bm{x}} )  \} & \text{if} \ {\bm{x}} \in \hat{\mathcal{H}} \\
    \end{cases} \label{eq:loss_miss}
\end{align*}
where $\hat{\mathcal{H}} $ and $\hat{\mathcal{L}} $ are the estimates of ${\mathcal{H}} $ and ${\mathcal{L}} $ by the algorithm, respectively.
Then, given an accuracy parameter $\epsilon >0$, the pair $(\hat{\mathcal{H}}, \hat{\mathcal{L}})$ is said to be $\epsilon$-accurate solution if every point ${\bm x} \in \mathcal{X} $ satisfies $e_\alpha ({\bm x}) \leq \epsilon$.
In \S4, we show that our proposed algorithm in \S3 returns $\epsilon$-accurate solution with high probability for any $\epsilon >0$.

\subsection{Gaussian Process}
In this paper, we assume $f$ follows GP~\cite{gpml}.
Let $k: (\mathcal{X} \times \Omega) \times (\mathcal{X} \times \Omega) \rightarrow \mathbb{R}$
be a positive definite kernel where $ 0<\sigma^2_{0,min} \leq k((\bm{x}, \bm{w}), (\bm{x}, \bm{w})) \leq 1$ for all $(\bm{x}, \bm{w}) \in \mathcal{X} \times \Omega$,
and we assume $f \sim \mathcal{GP}(0, k)$ where $\mathcal{GP}(\mu, k)$ is the GP with mean function $\mu$ and covariance function $k$.
Given the sequence of queries and responses $\{((\bm{x}_i, \bm{w}_i), y_i)\}_{i=1}^t$,
the posterior distribution of $f(\bm{x}, \bm{w})$ follows a Gaussian with the following mean and variance:
\begin{align*}
    \mu_t(\bm{x}, \bm{w}) &= {\bm k}_t(\bm{x}, \bm{w})^{\top}
    (\bm{K}_t + \sigma^2 \bm{I}_t)^{-1}\bm{y}_t, \\
    \sigma_{t}^{2}(\bm{x}, \bm{w}) &= k((\bm{x}, \bm{w}), (\bm{x}, \bm{w})) -
    \bm{k}_t(\bm{x}, \bm{w})^{\top}(\bm{K}_t + \sigma^2 \bm{I}_t)^{-1}\bm{k}_t(\bm{x}, \bm{w}),
\end{align*}
where $\boldsymbol{k}_{t}(\bm{x}, \bm{w})=(k\left((\bm{x}, \bm{w}), (\bm{x}_1, \bm{w}_1)),\ldots, k\left((\bm{x}, \bm{w}), (\bm{x}_t, \bm{w}_t)\right)\right)^{\top},\ \boldsymbol{y}_{t}=\left(y_{1}, \ldots, y_{t}\right)^{\top}$, and $\bm{K}_t \in \mathbb{R}^{t\times t}$ is the kernel matrix whose $(i, j)$th
element is $k((\bm{x}_i, \bm{w}_i), (\bm{x}_j, \bm{w}_j))$.

\section{Proposed Algorithm}\label{sec:algorithm}
In this section, we propose two AL algorithms for optimization setting and an AL algorithm for LSE setting.
Since $f$ is drawn from GP, $p_{\rm upper}(\bm x)$ is a random variable.
However, it is important to note that $p_{\rm upper}(\bm x)$ does not follow Gaussian distribution anymore, which means that we cannot rely on acquisition functions (AFs) developed in the literature of standard BO and LSE.
Thus, the AFs of our proposed algorithms are constructed using a credible interval of $p_{\rm upper}(\bm{x})$.
At step $t$ in development phase, we are asked to select not only the design parameter $\bm x_t$ but also the environmental parameter $\bm w_t$.
Our basic strategy is to first select $\bm x_t$ based on the credible interval of $p_{\rm upper}(\bm{x})$, and then to select $\bm w_t$ such that the uncertainty of $p_{\rm upper}(\bm x_t)$ is minimized.

\subsection{Credible Interval of PTR Measure}
Here, we derive a credible interval of $p_{\rm upper}(\bm x)$.
\begin{proposition}\label{prop:mean_var_pt1}
 Let the mean and the variance of
 $p_{\rm upper}(\bm x)$
 at step
 $t-1$
% in
% \eq{eq:org_pt}
 as
 $\mu_{t-1}^{(p)}(\bm x)$
 and
 $\sigma_{t-1}^{(p)2}(\bm x)$.
 Then,
\begin{align*}
    \mu_{t-1}^{(p)}(\bm{x}) &= \int_{\Omega} \Phi
    \left( \frac{\mu_{t-1}(\bm{x}, \bm{w}) - h}{\sigma_{t-1}(\bm{x}, \bm{w})}\right) p(\bm{w}) \text{d}\bm{w}, \\
    \sigma_{t-1}^{(p)2}(\bm x) \le \gamma_{t-1}^2(\bm{x}) &= \int_{\Omega} \Phi
    \left( \frac{\mu_{t-1}(\bm{x}, \bm{w}) - h}{\sigma_{t-1}(\bm{x}, \bm{w})}\right)
    \left\{ 1 - \Phi
    \left( \frac{\mu_{t-1}(\bm{x}, \bm{w}) - h}{\sigma_{t-1}(\bm{x}, \bm{w})}\right) \right\}
     p(\bm{w}) \text{d}\bm{w},
\end{align*}
 where $\Phi$ is the cdf of the standard Gaussian distribution.
\end{proposition}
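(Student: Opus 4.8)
The plan is to treat $p_{\rm upper}(\bm x)$ as a functional of the random draw $f$ under the GP posterior at step $t-1$ and to compute its first moment exactly while bounding its second central moment. Throughout, write $Z(\bm w) := \1[f(\bm x,\bm w) > h]$ so that $p_{\rm upper}(\bm x) = \int_\Omega Z(\bm w)\, p(\bm w)\, d\bm w$, and let $\mathbb{E}$, $\mathrm{Var}$, $\mathrm{Cov}$ denote moments taken with respect to the posterior of $f$ at step $t-1$. The structural fact I will exploit is that, for each fixed $\bm w$, the posterior marginal $f(\bm x,\bm w)$ is Gaussian with mean $\mu_{t-1}(\bm x,\bm w)$ and variance $\sigma_{t-1}^2(\bm x,\bm w)$, so $Z(\bm w)$ is a Bernoulli random variable whose success probability is a standard Gaussian tail evaluation.

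For the mean, I would interchange the posterior expectation with the integral over $\Omega$ (justified by Tonelli's theorem, since the integrand is nonnegative, bounded by $1$, and $p$ is a probability density), reducing the computation to $\mathbb{E}[Z(\bm w)] = \Pr[f(\bm x,\bm w) > h]$ for each $\bm w$. Standardizing the Gaussian marginal gives $\Pr[f(\bm x,\bm w) > h] = 1 - \Phi\big((h-\mu_{t-1})/\sigma_{t-1}\big) = \Phi\big((\mu_{t-1}-h)/\sigma_{t-1}\big)$, which yields the claimed formula for $\mu_{t-1}^{(p)}(\bm x)$ exactly.

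The harder part is the variance bound. Writing the variance of the integral as a double integral of the covariance kernel gives $\sigma_{t-1}^{(p)2}(\bm x) = \int_\Omega\int_\Omega \mathrm{Cov}\big(Z(\bm w),Z(\bm w')\big)\, p(\bm w)\,p(\bm w')\, d\bm w\, d\bm w'$ (again by Fubini). The obstacle is that $\mathrm{Cov}(Z(\bm w),Z(\bm w'))$ depends on the joint bivariate Gaussian law of $(f(\bm x,\bm w), f(\bm x,\bm w'))$ and admits no clean closed form. I sidestep computing it by the Cauchy--Schwarz inequality, $\mathrm{Cov}(Z(\bm w),Z(\bm w')) \le \sqrt{\mathrm{Var}[Z(\bm w)]\,\mathrm{Var}[Z(\bm w')]}$, so that the double integral factors as $\big(\int_\Omega \sqrt{\mathrm{Var}[Z(\bm w)]}\, p(\bm w)\, d\bm w\big)^2$.

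Finally, I would apply Jensen's inequality with respect to the probability measure $p(\bm w)\,d\bm w$ and the convex map $s \mapsto s^2$ to pass from this squared $L^1(p)$-norm back to the $L^1(p)$-norm of the variances, i.e. $\big(\int_\Omega \sqrt{\mathrm{Var}[Z(\bm w)]}\, p\, d\bm w\big)^2 \le \int_\Omega \mathrm{Var}[Z(\bm w)]\, p\, d\bm w$. It then remains to identify $\mathrm{Var}[Z(\bm w)]$: since $Z(\bm w)$ is Bernoulli with success probability $\Phi\big((\mu_{t-1}-h)/\sigma_{t-1}\big)$, its variance equals $\Phi(\cdot)\{1-\Phi(\cdot)\}$ with the same argument, which is precisely the integrand defining $\gamma_{t-1}^2(\bm x)$. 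Chaining the Cauchy--Schwarz and Jensen steps delivers $\sigma_{t-1}^{(p)2}(\bm x) \le \gamma_{t-1}^2(\bm x)$.
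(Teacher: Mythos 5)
Your proposal is correct and follows essentially the same route as the paper: compute the mean by exchanging the posterior expectation with the integral over $\Omega$ and recognizing $\1[f(\bm x,\bm w)>h]$ as Bernoulli with success probability $\Phi\bigl((\mu_{t-1}-h)/\sigma_{t-1}\bigr)$, then write the variance as a double integral of covariances and bound it by $\int_\Omega \mathbb{V}[\1[f(\bm x,\bm w)>h]]\,p(\bm w)\,d\bm w$. The only cosmetic difference is the elementary inequality used in the variance step \textemdash you apply Cauchy--Schwarz pointwise and then Jensen on the resulting squared integral, whereas the paper uses $\mathrm{Cov}(X,Y)\le(\mathbb{V}[X]+\mathbb{V}[Y])/2$ directly; both collapse to the identical bound $\gamma_{t-1}^2(\bm x)$.
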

The proof of the proposition is in Appendix \ref{derivation_mean_upper}.

Based on Proposition~\ref{prop:mean_var_pt1}, the following Lemma implies that the credible interval of $p_{\text{upper}}(\bm{x})$ can be constructed by using $\mu_{t-1}^{(p)}(\bm{x})$ and $\gamma_{t-1}^2(\bm{x})$

\begin{lemma}\label{lem:cred_int}
    Let  $\delta \in (0, 1)$, $m \geq 2$, $t \geq 1$ and
     $\beta_t = \frac{|\mathcal{X}|\pi^2 t^2}{6\delta}$. Then, with probability at least $1-\delta$,
     it holds that
     \begin{equation*}
         |p_{\rm upper}(\bm{x}) - \mu_{t-1}^{(p)}(\bm{x})| <
         \beta_t^{1/m} \gamma_{t-1}^{2/m}(\bm{x}),~\forall \bm{x} \in \mathcal{X},~\forall t \geq 1.
     \end{equation*}
\end{lemma}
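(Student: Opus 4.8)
The plan is to regard $p_{\rm upper}(\bm x)$ as a random variable under the GP posterior at step $t-1$ and to prove a pointwise tail bound that is afterwards promoted to a uniform (over $\mathcal{X}$ and $t$) statement by a union bound. Proposition~\ref{prop:mean_var_pt1} supplies exactly the two ingredients needed: conditioned on the observations $\{((\bm x_i,\bm w_i),y_i)\}_{i=1}^{t-1}$, the posterior mean of $p_{\rm upper}(\bm x)$ equals $\mu_{t-1}^{(p)}(\bm x)$ and its posterior variance is at most $\gamma_{t-1}^2(\bm x)$. The exponents $1/m$ and $2/m$ in the claimed interval are the decisive hint: they indicate that the right instrument is the $m$-th moment Markov inequality applied to the centred deviation $D(\bm x):=p_{\rm upper}(\bm x)-\mu_{t-1}^{(p)}(\bm x)$, rather than any Gaussian-type concentration, which is unavailable because $p_{\rm upper}$ is not Gaussian.

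First I would convert the available second-moment control into an $m$-th absolute moment bound via boundedness. Because $p_{\rm upper}(\bm x)\in[0,1]$ and $\mu_{t-1}^{(p)}(\bm x)$ is itself an average of values $\Phi(\cdot)\in[0,1]$ against the density $p(\bm w)$ and hence also lies in $[0,1]$, we have $|D(\bm x)|\le 1$. Consequently, for any $m\ge 2$, $|D(\bm x)|^m=|D(\bm x)|^{m-2}|D(\bm x)|^2\le |D(\bm x)|^2$, and taking posterior expectations yields $\mathbb{E}[|D(\bm x)|^m]\le \mathbb{E}[D(\bm x)^2]=\sigma_{t-1}^{(p)2}(\bm x)\le \gamma_{t-1}^2(\bm x)$, where the first equality uses that $D(\bm x)$ is centred.

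Next I would invoke Markov's inequality on $|D(\bm x)|^m$ with the threshold $a=\beta_t^{1/m}\gamma_{t-1}^{2/m}(\bm x)$, for which $a^m=\beta_t\gamma_{t-1}^2(\bm x)$, giving
\begin{align*}
  \mathbb{P}\!\left(|D(\bm x)|\ge \beta_t^{1/m}\gamma_{t-1}^{2/m}(\bm x)\right)
  &\le \frac{\mathbb{E}[|D(\bm x)|^m]}{\beta_t\,\gamma_{t-1}^2(\bm x)}
  \le \frac{\gamma_{t-1}^2(\bm x)}{\beta_t\,\gamma_{t-1}^2(\bm x)} \\
  &= \frac{1}{\beta_t}
  = \frac{6\delta}{|\mathcal{X}|\pi^2 t^2}.
\end{align*}
This bound holds conditionally on every realisation of the data, hence also marginally over the GP prior and the noise. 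A union bound over the finite set $\mathcal{X}$ and over all $t\ge 1$ then caps the total failure probability by $\sum_{\bm x\in\mathcal{X}}\sum_{t\ge 1}\frac{6\delta}{|\mathcal{X}|\pi^2 t^2}=\frac{6\delta}{\pi^2}\sum_{t\ge 1}t^{-2}=\frac{6\delta}{\pi^2}\cdot\frac{\pi^2}{6}=\delta$, where the specific form of $\beta_t$ is chosen precisely so that the Basel sum absorbs all constants; the complement gives the claim.

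I expect the substantive step to be the moment-order flexibility rather than any computation: the lemma is asserted for an arbitrary $m\ge 2$ (with $m=2$ reducing to Chebyshev), and the nontrivial content is that the same variance proxy $\gamma_{t-1}^2$ controls every such moment. This is exactly where the boundedness $|D(\bm x)|\le 1$ is indispensable, since for an unbounded random variable higher moments need not be dominated by the variance. A minor technical point is the degenerate case $\gamma_{t-1}(\bm x)=0$, in which the strict inequality should be read as vacuous because $D(\bm x)=0$ almost surely; generically the posterior variance remains positive, which is consistent with the reason the $\epsilon$-margin was introduced in Section~\ref{sec:Preliminaries}.
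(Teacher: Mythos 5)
Your proposal is correct and follows essentially the same route as the paper's own proof (given in Appendix for the $\eta$-generalized Lemma~A.2, from which the stated lemma follows by setting $\eta=0$): an $m$-th moment Markov inequality combined with the boundedness $|p_{\rm upper}(\bm x)-\mu_{t-1}^{(p)}(\bm x)|\le 1$ to dominate the $m$-th moment by the variance proxy $\gamma_{t-1}^2(\bm x)$ from Proposition~\ref{prop:mean_var_pt1}, followed by a union bound over $\mathcal{X}$ and $t\ge 1$ using $\sum_{t\ge 1}t^{-2}=\pi^2/6$. Your accounting of the constants in $\beta_t$ matches the paper's, and your remark on the degenerate case $\gamma_{t-1}(\bm x)=0$ is a reasonable caveat that the paper leaves implicit.
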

The proof of the lemma is in Appendix \ref{lem:fin_union}.
Namely, given $\beta_t > 0, m \geq 2$, credible interval $Q_t(\bm{x})$ can be computed as
\begin{align}
 \label{eq:Q_t}
 Q_t(\bm{x}) &= [\mu_{t-1}^{(p)}(\bm{x}) - \beta_{t}^{1/m}\gamma_{t-1}^{2/m}(\bm{x}),
    ~\mu_{t-1}^{(p)}(\bm{x}) + \beta_{t}^{1/m}\gamma_{t-1}^{2/m}(\bm{x})].
\end{align}
Compared with credible interval of Normal distribution, additional parameter $m$ is introduced to control the $Q_t(\bm{x})$.
In Section \ref{sec:theory}, we discuss in depth for the details of $\beta_t$ and $m$ from theoretical viewpoint.

In the development of the proposed algorithms, for theoretically rigorous arguments, we use the following slightly modified versions of $p_{\rm upper}(\bm x)$ and $h$ which are characterized by a parameter $\eta > 0$:
\begin{align*}
    % \label{eq:eta_pt}
    p_{t-1;\eta}(\bm{x}) &= \int_{\Omega}
    \1[f(\bm{x}, \bm{w}) > h_{t-1, \bm{x}, \bm{w}; \eta}] p(\bm{w}) \text{d}\bm{w}, \\
    h_{t-1, \bm{x}, \bm{w}; \eta} &= \begin{cases}
        h + 2\eta & \text{if}~|\mu_{t-1}(\bm{x}, \bm{w}) - h| < \eta,\\
        h & \text{otherwise}
        \end{cases}.
\end{align*}
In \S4, we show that, given the desired accuracy parameter $\epsilon$ (see \S2), the parameter $\eta$ can be uniquely determined.
%
%In what follows, we define
%$\mu_{t-1;\eta}^{(p)}(\bm x)$,
%$\gamma_{t-1;\eta}^{2}(\bm x)$,
%and
%$Q_{t:\eta}(\bm x)$
%similarly (see Appendix~\red{xxx} for details).
%
In what follows,
by replacing $h$ in $\mu^{(p)}_{t-1}(\bm x)$ and $\gamma^2_{t-1}(\bm x)$ with $h_{t-1,\bm x,\bm w,\eta}$,
we similarly define $\mu^{(p)}_{t-1;\eta}(\bm x)$ and $\gamma^2_{t-1;\eta}(\bm x)$.
Furthermore,
by replacing $\mu^{(p)}_{t-1} (\bm x)$ and $\gamma^2_{t-1} (\bm x)$ in \eq{eq:Q_t} with
$\mu^{(p)}_{t-1;\eta} (\bm x)$ and $\gamma^2_{t-1;\eta} (\bm x)$,
we similarly define
$Q_{t;\eta} (\bm x)$.
See Appendix \ref{derivation_mean_upper} for details.

\subsection{Optimization}
In this subsection, we propose two AL methods to find maximizer of $p_{\text{upper}}$.

\paragraph{Upper Confidence Bound-based (UCB-based) strategy}
First, we propose a UCB based method with the following AFs at step $t$:
\begin{align}
 \label{eq:x_t}
 \bm{x}_t &= \argmax_{\bm{x} \in \mathcal{X}} ~\mu_{t-1;\eta}^{(p)}(\bm{x}) + \beta_t^{1/m}\gamma_{t-1;\eta}^{2/m}(\bm{x}), \\
 \label{eq:w_t}
 \bm{w}_t &= \argmax_{\bm{w} \in \Omega} ~\Phi\left(\frac{\mu_{t-1}(\bm{x}_t, \bm{w}) - h_{t-1, \bm{x}_t, \bm{w}; \eta}}{\sigma_{t-1}(\bm{x}_t, \bm{w})}\right)
    \left\{1 - \Phi\left(\frac{\mu_{t-1}(\bm{x}_t, \bm{w}) - h_{t-1, \bm{x}_t, \bm{w};\eta}}{\sigma_{t-1}(\bm{x}_t, \bm{w})}\right)\right\},
\end{align}
where $\beta_t > 0$ and $m \geq 2$ are parameters that
control the exploration and exploitation tradeoff.
Hereafter, we call this strategy Bayesian Probability Threshold (BPT)-UCB.
Algorithm \ref{alg:rbqo_ucb} shows the pseudocode of BPT-UCB algorithm.
\begin{algorithm}[t]
    \caption{BPT-UCB}
    \label{alg:rbqo_ucb}
    \begin{algorithmic}
        \REQUIRE Budget $T$, GP prior $\mathcal{GP}(0, k)$,
        ~$\eta \geq 0$, $\{\beta_t\}_{t \leq T}$, $m \geq 2$
        \FOR {$t = 1$ to $T$}
            \STATE Compute $\mu_{t-1;\eta}^{(p)}(\bm{x}),~\gamma_{t-1;\eta}^2(\bm{x})$ for all $\bm{x} \in \mathcal{X}$.
            % \STATE Choose $\bm{x}_t = \argmax_{\bm{x} \in \mathcal{X}} ~\mu_{t-1;\eta}^{(p)}(\bm{x}) + \beta_t^{1/m}\gamma_{t-1;\eta}^{2/m}(\bm{x})$.
            % \STATE Choose $\bm{w}_t = \argmax_{\bm{w} \in \Omega} ~
            % \Phi\left(\frac{\mu_{t-1}(\bm{x}_t, \bm{w}) - h_{t-1, \bm{x}_t, \bm{w}; \eta}}{\sigma_{t-1}(\bm{x}_t, \bm{w})}\right)
            % \left\{1 - \Phi\left(\frac{\mu_{t-1}(\bm{x}_t, \bm{w}) - h_{t-1, \bm{x}_t, \bm{w};\eta}}{\sigma_{t-1}(\bm{x}_t, \bm{w})}\right)\right\}$.
            \STATE Choose ($\bm{x}_t$, $\bm{w}_t$) from (\ref{eq:x_t}) and (\ref{eq:w_t}).
            \STATE Observe $y_t = f(\bm{x}_t, \bm{w}_t) + \varepsilon_t$.
            \STATE Update GP by adding $((\bm{x}_t, \bm{w}_t), y_t)$.
        \ENDFOR
        \ENSURE ${\rm argmax}_{\bm{x} \in \left\{ \bm{x}_1, \ldots, \bm{x}_T \right\}}~\mu_{T;\eta}^{(p)}(\bm{x})$.
    \end{algorithmic}
\end{algorithm}

\paragraph{Thompson Sampling based strategy}
We also propose a Thompson Sampling based strategy,
in which $\bm{x}_t$ is selected
according to the posterior probability such that
$p_{\rm upper}(\bm{x})$
%$p_{\text{upper}}(\bm{x}_t)$
is maximized, while $\bm{w}_t$ is selected in the same way as BPT-UCB.
%This sampling process is done via posterior sampling of $f$
%
%Algorithm \ref{alg:rbqo_ts} shows the pseudocode of BPT-TS.
%
Hereafter we call this strategy BPT-TS.
Specifically, the difference from BPT-UCB is that $\hat{f}$ is first sampled from $\mathcal{GP}(\mu_{t-1}, k_{t-1})$, where $k_{t-1}$ is the posterior covariance function at step $t-1$.
Then, the design parameter is chosen as
$\bm{x}_t = {\rm argmax}_{\bm{x} \in \mathcal{X}} \int_{\Omega}\1\left[ \hat{f}(\bm{x}, \bm{w}) > h \right]p(\bm{w})\text{d}\bm{w}$.
%

%\begin{algorithm}[t]
%    \caption{BPT-TS}
%    \label{alg:rbqo_ts}
%    \begin{algorithmic}
%        \REQUIRE Budget $T$, GP prior $\mathcal{GP}(0,\ k)$, $\eta \geq 0$
%        \FOR {$t = 1$ to $T$}
%            \STATE Sampling $\hat{f} \sim \mathcal{GP}(\mu_{t-1}, k_{t-1})$.
%            \STATE Choose $\bm{x}_t = \argmax_{\bm{x} \in \mathcal{X}} \int_{\Omega}\1\left[ \hat{f}(\bm{x}, \bm{w}) > h_{t-1, \bm{x}, \bm{w}; \eta} \right]p(\bm{w})\text{d}\bm{w}$.
%            \STATE Choose $\bm{w}_t = \argmax_{\bm{w} \in \Omega} ~
%            \Phi\left(\frac{\mu_{t-1}(\bm{x}_t, \bm{w}) - h_{t-1, \bm{x}_t, \bm{w}; \eta}}{\sigma_{t-1}(\bm{x}_t, \bm{w})}\right)
%            \left\{1 - \Phi\left(\frac{\mu_{t-1}(\bm{x}_t, \bm{w}) - h_{t-1, \bm{x}_t, \bm{w};\eta}}{\sigma_{t-1}(\bm{x}_t, \bm{w})}\right)\right\}$.
%            \STATE Observe $y_t = f(\bm{x}_t, \bm{w}_t) + \varepsilon_t$.
%            \STATE Update GP by adding $((\bm{x}_t, \bm{w}_t), y_t)$.
%        \ENDFOR
%        \ENSURE $\argmax_{\bm{x} \in \left\{ \bm{x}_1, \ldots, \bm{x}_T \right\}}~\mu_{T;\eta}^{(p)}(\bm{x})$
%    \end{algorithmic}
%\end{algorithm}

%\paragraph{Comparison of BPT-UCB and BPT-TS}

The two proposed methods BPT-UCB and BPT-TS have both advantages and drawbacks.
An advantage of BPT-TS is that it does not have hyperparameters (whereas BPT-UCB has two hyperparameters $\beta_t$ and $m$).
On the other hand, BPT-UCB is computationally more efficient than BPT-TS.
Specifically, when $\mathcal{X}$ and $\Omega$ are finite sets, BPT-TS requires $O(|\mathcal{X}|^2|\Omega|^2)$ computational cost which is prohibitive when $\mathcal{X}$ and $\Omega$ are large (in contrast to $O(|\mathcal{X}||\Omega|)$ for BPT-UCB).
Moreover, if $\mathcal{X}$ or $\Omega$ is continuous set, BPT-TS needs to resort on approximate posterior sampling strategies (e.g., \cite{rahimi2008random}), which is only applicable for restricted kernel classes.
Therefore, it would be beneficial to use the two proposed methods differently depending on the situation.

  \subsection{Level Set Estimation}
In this subsection, we propose an AL method to for LSE of $p_{\text{upper}}$.
Using the credible interval
$Q_{t;\eta}(\bm{x}) = [l_{t;\eta}(\bm{x}),~u_{t;\eta}(\bm{x})]$, the superlevel set $\mathcal{H}_t$ and the sublevel set $\mathcal{L}_t$ at step $t$ as:
\begin{equation}
 \label{eq:straddle}
  \mathcal{H}_t = \left\{ \bm{x} \in \mathcal{X} \mid l_{t;\eta}(\bm{x}) > \alpha - \epsilon/2 \right\},
  ~ \mathcal{L}_t = \left\{\bm{x} \in \mathcal{X} \mid u_{t;\eta}(\bm{x}) < \alpha + \epsilon/2 \right\}.
\end{equation}
Furthermore, we define unclassified set $\mathcal{U}_t$ as $\mathcal{U}_t = \mathcal{X} \backslash (\mathcal{H}_t \cup \mathcal{L}_t)$.

As the AF for $\bm x_t$, we use the straddle based criteria \cite{bryan2006active, gotovos2013active}:
\begin{align}
    \label{eq:strx}
 \bm{x}_t = \argmax_{\bm{x} \in \mathcal{X}} ~ \text{STR}_t(\bm{x}),
 \text{ where }
 \text{STR}_t(\bm{x}) := \min \left\{u_{t;\eta}(\bm{x}) - \alpha,~\alpha - l_{t;\eta}(\bm{x})\right\}
%= \beta_t^{1/m} \gamma_{t-1}^{1/m}(\bm{x}) - |\mu_{t-1}^{(p)}(\bm{x}) - \alpha|,
%    \bm{w}_t &= \argmax_{\bm{w} \in \Omega} ~\Phi\left(\frac{\mu_{t-1}(\bm{x}_t, \bm{w}) - h_{t, \bm{x}_t, \bm{w}; \eta}}{\sigma_{t-1}(\bm{x}_t, \bm{w})}\right)
%    \left\{1 - \Phi\left(\frac{\mu_{t-1}(\bm{x}_t, \bm{w}) - h_{t, \bm{x}_t, \bm{w};\eta}}{\sigma_{t-1}(\bm{x}_t, \bm{w})}\right)\right\}.
\end{align}
and
$\bm w_t$
is selected in the same way as
\eq{eq:w_t}.
Hereafter, we call the method as BPT-LSE.
Algorithm~\ref{alg:rbqlse} shows the pseudocode.
\begin{algorithm}[t]
    \caption{BPT-LSE}
    \label{alg:rbqlse}
    \begin{algorithmic}
        \REQUIRE GP prior $\mathcal{GP}(0,\ k)$,
        ~$\eta \geq 0$, $\{\beta_t\}_{t \leq T}$, $m \geq 2$, $\epsilon > 0$, threshold $\alpha$
        \STATE $\mathcal{H}_0 \leftarrow \emptyset$, $\mathcal{L}_0 \leftarrow \emptyset$, $\mathcal{U}_0 \leftarrow \mathcal{X}$, $t \leftarrow 1$
        \WHILE{$\mathcal{U}_{t-1} \neq \emptyset$}
            % \FORALL{$\bm{x} \in \mathcal{X}$}
            %     \STATE Compute $\mu_{t-1;\eta}^{(p)}(\bm{x}),~\gamma_{t-1;\eta}^2(\bm{x})$ for all $\bm{x} \in \mathcal{X}$.
            % \ENDFOR
            \STATE Compute $\mu_{t-1;\eta}^{(p)}(\bm{x}),~\gamma_{t-1;\eta}^2(\bm{x})$ and $\text{STR}_t(\bm{x})$ for all $\bm{x} \in \mathcal{X}$.
            \STATE Choose ($\bm{x}_t$, $\bm{w}_t$) from (\ref{eq:strx}) and (\ref{eq:w_t}).
            % \STATE Choose $\bm{x}_t = \argmax_{\bm{x} \in \mathcal{X}}~\text{STR}_t(\bm{x})$.
            % \STATE Choose $\bm{w}_t = \argmax_{\bm{w} \in \Omega}~
            % \Phi\left(\frac{\mu_{t-1}(\bm{x}_t, \bm{w}) - h_{t, \bm{x}_t, \bm{w}; \eta}}{\sigma_{t-1}(\bm{x}_t, \bm{w})}\right)
            % \left(1 - \Phi\left(\frac{\mu_{t-1}(\bm{x}_t, \bm{w}) - h_{t, \bm{x}_t, \bm{w};\eta}}{\sigma_{t-1}(\bm{x}_t, \bm{w})}\right)\right)$.
            \STATE Observe $y_t \leftarrow f(\bm{x}_t, \bm{w}_t) + \varepsilon_t$
            \STATE Update GP by adding $((\bm{x}_t, \bm{w}_t), y_t)$ and compute $\mathcal{H}_t, \mathcal{L}_t$ and $\mathcal{U}_t$. %from (\ref{eq:est_high_set}), (\ref{eq:est_low_set}
            \STATE $t \leftarrow t + 1$
        \ENDWHILE
        \STATE $\hat{\mathcal{H}} \leftarrow \mathcal{H}_{t-1}, \hat{\mathcal{L}} \leftarrow \mathcal{L}_{t-1}$
        \ENSURE Estimated Set $\hat{\mathcal{H}}, \hat{\mathcal{L}}$
    \end{algorithmic}
\end{algorithm}

\section{Theoretical Results}\label{sec:theory}
 In this section, we show theoretical guarantees for the proposed algorithm (detail proofs are given in Appendix).
First, we define the mutual information between $f$ and observations.
Let $A=\{ {\bm{a}}_1,\ldots, {\bm{a}}_k \} $ be a finite subset of $\mathcal{X} \times \Omega$,
and let ${\bm{y}}_A$ be a vector whose $i$th element is $y_{\bm{a}_i } = f({\bm{a}}_i) + \varepsilon _{ \bm{a} _i} $.
Moreover, let $I ({\bm{y}}_A;f)$ be the mutual information between $f$ and ${\bm{y}}_A$.
Then, we define the maximum information gain $\kappa _T$ after $T$ rounds as
$
\kappa_T = \max_{ A \subset \mathcal{X} \times \Omega; |A|=T} { I} ({\bm{y}}_A;f).
$
%
%\subsection{Regret Bound of rBQO-UCB}
The following theorem gives the upper bound of the cumulative $\epsilon$-regret for BPT-UCB:
\begin{theorem}\label{thm:fin_regret}
    Let $\delta \in (0, 1)$, $m \geq 2$, $\epsilon>0$, $\beta_t = |\mathcal{X}|\pi^2t^2/(3\delta)$ and
$2 \eta =    \min \{ \epsilon \sigma_{0,min}/2, \epsilon^2 \delta \sigma_{0,min} /(8|\mathcal{X}|) \}$.
Then, running BPT-UCB with these parameters,
    the cumulative $\epsilon$-regret satisfies the following inequality:
    \begin{equation*}
        Pr   \{ R_T (\epsilon) \leq C_1 \beta_T^{1/m}\kappa_T \eta^{ -(2+1/m) },~\forall T \geq 1 \} \ge 1 - \delta,
    \end{equation*}
    where $C_1 = 8m   ( (2\pi)^{1/2m}\log(1+\sigma^{-2}) )^{-1}$.
\end{theorem}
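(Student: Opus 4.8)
The plan is to follow the template of a GP-UCB cumulative-regret analysis, adapted to the non-Gaussian functional $p_{\rm upper}$ through the credible interval of Lemma~\ref{lem:cred_int}, and to absorb the bias introduced by the $\eta$-regularization into the $-\epsilon$ slack of the $\epsilon$-regret. First I would fix a single good event $\mathcal{E}$ of probability at least $1-\delta$ on which two things hold simultaneously for every $\bm{x}\in\mathcal{X}$ and every $t\ge1$: (i) the $\eta$-version of Lemma~\ref{lem:cred_int}, namely $|p_{t-1;\eta}(\bm{x})-\mu^{(p)}_{t-1;\eta}(\bm{x})|\le\beta_t^{1/m}\gamma^{2/m}_{t-1;\eta}(\bm{x})$, so that $p_{t-1;\eta}(\bm{x})\in Q_{t;\eta}(\bm{x})=[l_{t;\eta}(\bm{x}),u_{t;\eta}(\bm{x})]$; and (ii) a uniform control of the regularization bias $G_t(\bm{x}):=p_{\rm upper}(\bm{x})-p_{t-1;\eta}(\bm{x})$. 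The doubling of $\beta_t$ relative to Lemma~\ref{lem:cred_int} (from $|\mathcal{X}|\pi^2t^2/(6\delta)$ to $|\mathcal{X}|\pi^2t^2/(3\delta)$) reflects splitting the failure budget $\delta$ between these two requirements.

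On $\mathcal{E}$ the instantaneous regret decomposes in the usual way. Since raising the threshold can only lower the PTR measure we have $G_t\ge0$, hence $p_{\rm upper}(\bm{x}_t)\ge l_{t;\eta}(\bm{x}_t)$; and the UCB rule \eq{eq:x_t} together with $p_{t-1;\eta}(\bm{x}^\ast)\le u_{t;\eta}(\bm{x}^\ast)\le u_{t;\eta}(\bm{x}_t)$ gives $p_{\rm upper}(\bm{x}^\ast)\le u_{t;\eta}(\bm{x}_t)+G_t(\bm{x}^\ast)$. Subtracting yields $r_t(\epsilon)\le [u_{t;\eta}(\bm{x}_t)-l_{t;\eta}(\bm{x}_t)]+G_t(\bm{x}^\ast)-\epsilon = 2\beta_t^{1/m}\gamma^{2/m}_{t-1;\eta}(\bm{x}_t)+G_t(\bm{x}^\ast)-\epsilon$. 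The whole purpose of the regularization is then to force $G_t(\bm{x}^\ast)\le\epsilon$, so that the bias cancels against the $-\epsilon$ and $r_t(\epsilon)\le 2\beta_t^{1/m}\gamma^{2/m}_{t-1;\eta}(\bm{x}_t)$.

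To bound $G_t$ I would note that its integrand is nonzero only on $\{\bm w:|\mu_{t-1}(\bm x,\bm w)-h|<\eta\}$ and there equals $\1[h<f(\bm x,\bm w)\le h+2\eta]$, so $G_t(\bm x)\le\int_\Omega\1[h<f\le h+2\eta]\,p(\bm w)\,d\bm w$. Taking the prior expectation and using that under the prior $f(\bm x,\bm w)\sim\mathcal N(0,k)$ has variance at least $\sigma_{0,min}^2$, its density is at most $(\sqrt{2\pi}\,\sigma_{0,min})^{-1}$, giving $\mathbb E[G_t(\bm x)]\le 2\eta/(\sqrt{2\pi}\,\sigma_{0,min})$. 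The two terms in the prescribed $\eta$ arise respectively from making this mean small and from the Markov-plus-union-bound conversion (over the $|\mathcal X|$ points and over $t$) into a uniform high-probability statement; this is what produces the $\epsilon^2\delta\sigma_{0,min}/(8|\mathcal X|)$ and $\epsilon\sigma_{0,min}/2$ pieces. This bias-control step, and the bookkeeping tying the two halves of the $\eta$ formula to the two halves of the argument, is the part I expect to be most delicate.

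Finally I would collapse $\sum_t\gamma^{2/m}_{t-1;\eta}(\bm{x}_t)$ onto the information gain. Because $p(\bm w)$ integrates to one and $\bm w_t$ is chosen in \eq{eq:w_t} to maximize exactly the integrand $\phi_{t-1}(\bm x_t,\bm w):=\Phi(z)(1-\Phi(z))$ of $\gamma^2_{t-1;\eta}$, we get $\gamma^2_{t-1;\eta}(\bm x_t)\le\phi_{t-1}(\bm x_t,\bm w_t)$ and hence $\gamma^{2/m}_{t-1;\eta}(\bm x_t)\le\phi_{t-1}(\bm x_t,\bm w_t)^{1/m}$. The regularization guarantees $|z|\ge\eta/\sigma_{t-1}(\bm x_t,\bm w_t)$ at the queried point, so a Gaussian-tail estimate of $\Phi(z)(1-\Phi(z))$ under this constraint yields a bound of the form $\phi_{t-1}(\bm x_t,\bm w_t)^{1/m}\le C\,(2\pi)^{-1/2m}\eta^{-(2+1/m)}\sigma_{t-1}^2(\bm x_t,\bm w_t)$, which is where $(2\pi)^{-1/2m}$ and the exact powers of $\eta$ and $m$ in $C_1$ are pinned down. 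Then the concavity bound $\sigma_{t-1}^2\le\log(1+\sigma^{-2}\sigma_{t-1}^2)/\log(1+\sigma^{-2})$ (valid since $\sigma_{t-1}^2\le1$) together with $\sum_t\tfrac12\log(1+\sigma^{-2}\sigma_{t-1}^2(\bm x_t,\bm w_t))=I(\bm y_{1:T};f)\le\kappa_T$ replaces $\sum_t\sigma_{t-1}^2$ by $2\kappa_T/\log(1+\sigma^{-2})$, so $\kappa_T$ enters \emph{linearly}. Using $\beta_t\le\beta_T$ and collecting constants gives $R_T(\epsilon)\le C_1\beta_T^{1/m}\kappa_T\eta^{-(2+1/m)}$ on $\mathcal E$. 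The two places I would watch most carefully are the bias bound above and getting the tail estimate to produce precisely $\eta^{-(2+1/m)}$ while keeping $\kappa_T$ to the first power.
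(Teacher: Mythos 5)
Your proposal is correct and follows essentially the same route as the paper's proof: the same split of the failure probability $\delta$ between the credible-interval event (Lemma~\ref{lem:fin_union}) and the prior-level bias event $\tilde p_{2\eta}(\bm x)<\epsilon$ (Lemma~\ref{lem:prior_pro}), the same UCB regret decomposition in which the bias $G_t(\bm x^\ast)\le\tilde p_{2\eta}(\bm x^\ast)$ cancels the $-\epsilon$ slack, and the same chain $\gamma^2_{t-1;\eta}(\bm x_t)\le\Phi(-\eta/\sigma_{t-1}(\bm x_t,\bm w_t))$, Gaussian tail bound, $e^{-x}\le 1/x$, and the information-gain lemma keeping $\kappa_T$ linear. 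The only cosmetic difference is in the bias step, where the paper uses Chebyshev's inequality with $\mathbb{V}[\tilde p_{2\eta}]$ bounded by its mean (which is exactly what produces the $\epsilon^2\delta\sigma_{0,min}/(8|\mathcal X|)$ piece) and needs a union bound only over $\mathcal X$ and not over $t$, since $\tilde p_{2\eta}$ is defined independently of $t$.
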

Moreover, the following theorem gives the upper bound of the Bayes $\epsilon$-regret for BPT-TS:
\begin{theorem}\label{thm:fin_bayes_regret}
    Let $m \geq 2$, $\epsilon >0$ and  $2 \eta =    \min \{ \epsilon \sigma_{0,min}/4, \epsilon^3 \sigma_{0,min} /(32|\mathcal{X}|) \}$.
Then, running BPT-TS with these parameters, the Bayes $\epsilon$-regret satisfies the following inequality:
    \begin{equation*}
       BR_T (\epsilon) \leq \pi^2/6 + C_2T^{2/m} \kappa_T \eta^{-(2+1/m)},
    \end{equation*}
    where $C_2 = 4m|\mathcal{X}|^{1/m}  (  (2\pi)^{1/2m}(\log (1+\sigma^{-2})) )^{-1}$.
\end{theorem}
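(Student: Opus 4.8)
The plan is to adapt the information-theoretic Bayes-regret analysis of Thompson Sampling to the non-Gaussian PTR measure, reusing the credible interval of Lemma~\ref{lem:cred_int} as an \emph{analytic} (rather than algorithmic) upper confidence bound. Throughout, let $\mathcal{F}_{t-1}$ denote the history $\{((\bm x_i,\bm w_i),y_i)\}_{i<t}$, and define the surrogate UCB $U_t(\bm x) = \mu_{t-1;\eta}^{(p)}(\bm x) + c_t\,\gamma_{t-1;\eta}^{2/m}(\bm x)$ with a coefficient $c_t$ fixed below. The decisive property of BPT-TS is that, conditioned on $\mathcal{F}_{t-1}$, the sampled function $\hat f$ and the true $f$ share the same posterior law, so $\bm x_t$ and $\bm x^\ast$ are identically distributed given $\mathcal{F}_{t-1}$; since $U_t$ is $\mathcal{F}_{t-1}$-measurable, this yields $\mathbb{E}[U_t(\bm x_t)\mid\mathcal{F}_{t-1}] = \mathbb{E}[U_t(\bm x^\ast)\mid\mathcal{F}_{t-1}]$. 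This identity is the TS analogue of the direct confidence argument behind Theorem~\ref{thm:fin_regret}.

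First I would decompose the instantaneous Bayes regret as
\[
\mathbb{E}[r_t(\epsilon)] = \mathbb{E}\big[p_{\rm upper}(\bm x^\ast)-U_t(\bm x^\ast)\big] + \mathbb{E}\big[U_t(\bm x_t)-p_{\rm upper}(\bm x_t)\big] - \epsilon,
\]
where the eliminated cross term $\mathbb{E}[U_t(\bm x^\ast)-U_t(\bm x_t)]$ is zero by the matching property. To control the two remaining terms I would set $c_t=(C_3\,t^2|\mathcal{X}|)^{1/m}$ so that, by the concentration underlying Lemma~\ref{lem:cred_int} (a Cantelli-type inequality raised to the $m$-th power together with a union bound over $\mathcal{X}$), the event that $Q_{t;\eta}$ fails to contain $p_{t-1;\eta}$ at some $\bm x$ has probability at most $1/t^2$. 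On the complementary good event both bracketed terms are bounded by the interval half-width plus the $\eta$-modification bias $|p_{\rm upper}-p_{t-1;\eta}|$; the choice $2\eta=\min\{\epsilon\sigma_{0,min}/4,\ \epsilon^3\sigma_{0,min}/(32|\mathcal{X}|)\}$ (via the same $\eta$-approximation lemma used for Theorem~\ref{thm:fin_regret}) makes this bias at most $\epsilon/2$, so the two $\epsilon/2$ contributions cancel against the $-\epsilon$ while the failure events contribute $\sum_{t\ge1}t^{-2}=\pi^2/6$. What survives is $\mathbb{E}[r_t(\epsilon)]\le 2c_t\,\mathbb{E}[\gamma_{t-1;\eta}^{2/m}(\bm x_t)] + (\text{summable failure terms})$.

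The heart of the argument is then a pointwise bound $\gamma_{t-1;\eta}^{2/m}(\bm x_t)\le C_4\,\eta^{-(2+1/m)}\,\sigma_{t-1}^2(\bm x_t,\bm w_t)$. Here the selection rule~\eqref{eq:w_t} is essential: because $\bm w_t$ maximizes the Bernoulli variance $\Phi(z)(1-\Phi(z))$ of the integrand, $\gamma_{t-1;\eta}^2(\bm x_t)\le \Phi(z_t)(1-\Phi(z_t))$ with $z_t$ evaluated at $(\bm x_t,\bm w_t)$. The $\eta$-shift of the threshold guarantees $|z_t|\ge \eta/\sigma_{t-1}(\bm x_t,\bm w_t)$ in both branches of $h_{t-1,\bm x,\bm w;\eta}$, and the Gaussian tail bound $\Phi(z)(1-\Phi(z))\le \phi(|z|)/|z|$ then converts this into the claimed powers of $\eta$ and $\sigma_{t-1}^2$ after taking the $1/m$-th power (using $\eta\le1$ to keep $C_4$ free of $\eta$). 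Summing and invoking the standard inequality $\sum_{t}\sigma_{t-1}^2(\bm x_t,\bm w_t)\le 2\kappa_T/\log(1+\sigma^{-2})$ gives $\sum_t\gamma_{t-1;\eta}^{2/m}(\bm x_t)\le C_5\,\eta^{-(2+1/m)}\kappa_T$; combined with $c_t\le (C_3 T^2|\mathcal{X}|)^{1/m}$ this yields the $C_2\,T^{2/m}\kappa_T\,\eta^{-(2+1/m)}$ term and, with the $\pi^2/6$ from failures, the stated bound.

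The main obstacle I anticipate is precisely this pointwise conversion $\gamma_{t-1;\eta}^{2/m}\lesssim \eta^{-(2+1/m)}\sigma_{t-1}^2$: naive tail bounds give only $\gamma^2\lesssim \sigma_{t-1}/\eta$ or $\gamma^2\lesssim \sigma_{t-1}^2/\eta^2$, and a direct power-mean summation of $\sigma_{t-1}^{2/m}$ would introduce an extra factor $T^{1-1/m}$ and degrade $\kappa_T$ to $\kappa_T^{1/m}$. Obtaining $\kappa_T$ to the first power with exactly the exponent $-(2+1/m)$ on $\eta$ requires exploiting the super-exponential decay of $\Phi(z)(1-\Phi(z))$ for small $\sigma_{t-1}$ so that the bound is phrased directly in $\sigma_{t-1}^2$, and verifying that the resulting constant $C_4$ is independent of $\eta$ and $t$ is the delicate step. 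A secondary care point is making the posterior-matching identity fully rigorous given that $\bm x^\ast$ is itself random under the GP prior.
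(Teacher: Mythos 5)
Your plan follows essentially the same route as the paper's proof: the Russo--Van-Roy decomposition with the surrogate UCB $U_t=\mu^{(p)}_{t-1;\eta}+\beta_t^{1/m}\gamma^{2/m}_{t-1;\eta}$ and $\beta_t=t^2|\mathcal{X}|$, posterior matching to cancel the cross term, Chebyshev-type concentration giving the $\sum_t t^{-2}=\pi^2/6$ failure contribution, absorption of the bias $\tilde{p}_{2\eta}(\bm{x}^\ast)$ by the $-\epsilon$ (via the same $\eta$-approximation lemma), the pointwise bound $\gamma^{2/m}_{t-1;\eta}(\bm{x}_t)\le 2m(2\pi)^{-1/2m}\eta^{-(2+1/m)}\sigma^2_{t-1}(\bm{x}_t,\bm{w}_t)$ obtained exactly as you describe (the $\bm{w}_t$ rule, $|z_t|\ge\eta/\sigma_{t-1}$, and $e^{-x}\le 1/x$ to land on $\sigma^2_{t-1}$ to the \emph{first} power), and the information-gain sum. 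The only discrepancy is bookkeeping: the paper charges $\mathbb{E}[U_t(\bm{x}_t)-p_{t-1;\eta}(\bm{x}_t)]$ a single half-width because the tower property gives $\mathbb{E}[\mu^{(p)}_{t-1;\eta}(\bm{x}_t)-p_{t-1;\eta}(\bm{x}_t)]=0$ exactly, and charges the $\bm{x}^\ast$ term only the failure probability, whereas your ``both terms bounded by the half-width plus bias'' accounting would double the constant $C_2$.
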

Finally,  we give the theorem about the convergence and accuracy of BPT-LSE:
\begin{theorem}\label{thm:lse_convergence}
Let $\epsilon>0$, $\alpha \in (0, 1)$, $m \geq 2$ and $C_3 = 4m(2\pi)^{-1/2m} / \log(1 + \sigma^{-2})$.
     Furthermore, let $\delta \in (0, 1)$, $\beta_t = |\mathcal{X}|\pi^2t^2 /(3\delta)$ and $2 \eta =    \min \{ \epsilon \sigma_{0,min} /4, \epsilon^2 \delta \sigma_{0,min} /(32|\mathcal{X}|)\}$.
    Then, BPT-LSE algorithm terminates after at most
     $T$ rounds, where   $T$ is the smallest positive integer satisfying
    \begin{equation}
       C_3 \eta^{-(2+1/m)}  \beta^{1/m}_T \kappa_T T^{-1}     < \epsilon/2. \label{eq:LSEbound}
    \end{equation}
Moreover, with probability at least $1-\delta$,
BPT-LSE returns $\epsilon$-accurate solution, i.e.,
the following inequality holds:
$
Pr  \{ \max _{ {\bm{x}} \in \mathcal{X} } e_\alpha ( {\bm{x}} )   \leq  \epsilon  \} \geq 1-\delta.
$
\end{theorem}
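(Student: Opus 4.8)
The plan is to establish the two assertions—finite termination and $\epsilon$-accuracy—separately, both on the high-probability event $E$ on which the $\eta$-version of Lemma~\ref{lem:cred_int} holds, i.e. $p_{t-1;\eta}(\bm{x})\in Q_{t;\eta}(\bm{x})=[l_{t;\eta}(\bm{x}),u_{t;\eta}(\bm{x})]$ for every $\bm{x}\in\mathcal{X}$ and every $t$ (probability at least $1-\delta$). The termination claim is essentially geometric: a point survives in $\mathcal{U}_{t}$ only while its credible interval straddles the band $(\alpha-\epsilon/2,\alpha+\epsilon/2)$, so it suffices to show that the interval half-width $\beta_t^{1/m}\gamma_{t-1;\eta}^{2/m}(\bm{x}_t)$ at the queried point is forced below $\epsilon/2$. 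The accuracy claim then follows by reading off, from membership in $\hat{\mathcal{H}}$ or $\hat{\mathcal{L}}$, a one-sided estimate of $p_{\mathrm{upper}}$ up to an $O(\eta)$ bias.

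For termination I would first record the lower bound. While $\mathcal{U}_{t-1}\neq\emptyset$ there is an unclassified point at which both $u_{t;\eta}-\alpha\ge\epsilon/2$ and $\alpha-l_{t;\eta}\ge\epsilon/2$, so $\mathrm{STR}_t$ there is $\ge\epsilon/2$; since $\bm{x}_t$ maximizes $\mathrm{STR}_t$ and $\mathrm{STR}_t(\bm{x})\le\tfrac12(u_{t;\eta}(\bm{x})-l_{t;\eta}(\bm{x}))=\beta_t^{1/m}\gamma_{t-1;\eta}^{2/m}(\bm{x})$, I get $\beta_t^{1/m}\gamma_{t-1;\eta}^{2/m}(\bm{x}_t)\ge\epsilon/2$ at every non-terminal round. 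The matching upper bound is the crux. Because $\bm{w}_t$ in \eqref{eq:w_t} maximizes the integrand of $\gamma_{t-1;\eta}^2$ and $p(\bm{w})$ has unit mass, $\gamma_{t-1;\eta}^2(\bm{x}_t)\le\Phi(z_t)(1-\Phi(z_t))$, where $z_t$ is the standardized margin at $(\bm{x}_t,\bm{w}_t)$. The $\eta$-truncation of the threshold is engineered precisely so that $|z_t|\ge\eta/\sigma_{t-1}(\bm{x}_t,\bm{w}_t)$ in either branch of $h_{t-1,\bm{x},\bm{w};\eta}$, whence $\gamma_{t-1;\eta}^2(\bm{x}_t)\le 1-\Phi(\eta/\sigma_{t-1}(\bm{x}_t,\bm{w}_t))$. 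I would then convert this Gaussian tail into a polynomial in $\sigma_{t-1}$ calibrated to the exponent $1/m$, using $1-\Phi(a)\le\phi(a)/a$ together with the elementary bound $e^{-a^2/2}\le(2m/a^2)^m$; with $a=\eta/\sigma_{t-1}$ and $\sigma_{t-1}\le1$ this gives $\gamma_{t-1;\eta}^2(\bm{x}_t)\le(2\pi)^{-1/2}(2m)^m\eta^{-(2m+1)}\sigma_{t-1}^{2m}(\bm{x}_t,\bm{w}_t)$, so that after the $1/m$ power the half-width obeys $\gamma_{t-1;\eta}^{2/m}(\bm{x}_t)\le 2m(2\pi)^{-1/2m}\eta^{-(2+1/m)}\sigma_{t-1}^{2}(\bm{x}_t,\bm{w}_t)$. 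Using $\beta_t\le\beta_T$ and the maximum-information-gain estimate $\sum_{t=1}^{T}\sigma_{t-1}^2(\bm{x}_t,\bm{w}_t)\le 2\kappa_T/\log(1+\sigma^{-2})$, summation yields $\sum_{t=1}^T\beta_t^{1/m}\gamma_{t-1;\eta}^{2/m}(\bm{x}_t)\le C_3\eta^{-(2+1/m)}\beta_T^{1/m}\kappa_T$ with exactly the stated $C_3$. If the algorithm had not terminated within $T$ rounds, the per-round lower bound $\ge\epsilon/2$ would force this sum to exceed $T\epsilon/2$, giving $C_3\eta^{-(2+1/m)}\beta_T^{1/m}\kappa_T T^{-1}\ge\epsilon/2$; contrapositively, once $T$ satisfies \eqref{eq:LSEbound} the set $\mathcal{U}$ is already empty, proving termination by that round.

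For $\epsilon$-accuracy I would work on $E$ at the terminal round, where $\mathcal{X}=\hat{\mathcal{H}}\cup\hat{\mathcal{L}}$. Since $h_{t-1,\bm{x},\bm{w};\eta}\ge h$ pointwise, $\mathbf{1}[f>h_{t-1;\eta}]\le\mathbf{1}[f>h]$ and hence $p_{t-1;\eta}(\bm{x})\le p_{\mathrm{upper}}(\bm{x})$ for the same draw of $f$. On the superlevel side, $\bm{x}\in\hat{\mathcal{H}}$ gives $p_{\mathrm{upper}}(\bm{x})\ge p_{t-1;\eta}(\bm{x})\ge l_{t;\eta}(\bm{x})>\alpha-\epsilon/2$, so $e_\alpha(\bm{x})<\epsilon$ at once. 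On the sublevel side, $\bm{x}\in\hat{\mathcal{L}}$ gives $p_{t-1;\eta}(\bm{x})\le u_{t;\eta}(\bm{x})<\alpha+\epsilon/2$, and it remains to bound the bias $p_{\mathrm{upper}}(\bm{x})-p_{t-1;\eta}(\bm{x})=\int_{\Omega}\mathbf{1}[h<f(\bm{x},\bm{w})\le h+2\eta]\,p(\bm{w})\,\mathrm{d}\bm{w}$ by $\epsilon/2$; this is exactly what $2\eta=\min\{\epsilon\sigma_{0,min}/4,\ \epsilon^2\delta\sigma_{0,min}/(32|\mathcal{X}|)\}$ secures, with the second term arising from a union-bound control of the near-threshold band (which accounts for the $|\mathcal{X}|$ and $\delta$ factors). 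Combining the two cases gives $p_{\mathrm{upper}}(\bm{x})<\alpha+\epsilon$ and $e_\alpha(\bm{x})<\epsilon$, so $\max_{\bm{x}}e_\alpha(\bm{x})\le\epsilon$ holds with probability at least $1-\delta$.

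The main obstacle is the variance-reduction inequality in the second paragraph: turning the non-Gaussian uncertainty $\gamma_{t-1;\eta}^2$ into the GP posterior variance $\sigma_{t-1}^2$ so that the information-gain bound applies. The delicate points are, first, that this is possible only because the $\eta$-truncation keeps the standardized margin at distance at least $\eta/\sigma_{t-1}$ from the degenerate value at which $\Phi(1-\Phi)$ fails to vanish; and second, that the exponential-to-polynomial tail conversion must be tuned to the root exponent $1/m$ so that the surviving power of $\sigma_{t-1}$ is exactly two, which is what allows $\kappa_T$ to enter linearly. Carrying the $\eta$-exponent through to $2+1/m$ and matching the constant $C_3$ is the bookkeeping that must be done with care, and the parallel near-threshold bias estimate on the sublevel side—which fixes the admissible value of $\eta$—is the other place where this analysis is genuinely required.
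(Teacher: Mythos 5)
Your proposal is correct and follows essentially the same route as the paper's proof: the same credible-interval lemma for $p_{t-1;\eta}$, the same observation that the $\eta$-truncated threshold forces the standardized margin at $(\bm{x}_t,\bm{w}_t)$ to exceed $\eta/\sigma_{t-1}$ so that $\gamma_{t-1;\eta}^2(\bm{x}_t)\le \Phi(-\eta/\sigma_{t-1})$, the same exponential-to-polynomial tail conversion tuned to the exponent $1/m$ yielding $\gamma_{t-1;\eta}^{2/m}(\bm{x}_t)\le 2m(2\pi)^{-1/2m}\eta^{-(2+1/m)}\sigma_{t-1}^2(\bm{x}_t,\bm{w}_t)$, the same information-gain summation, and the same split of the accuracy argument into a one-sided bound on $\hat{\mathcal H}$ and a $\tilde p_{2\eta}<\epsilon/2$ bias bound on $\hat{\mathcal L}$ (with the paper's $\delta/2+\delta/2$ split of the failure probability, which you should make explicit). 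The only cosmetic difference is that you argue termination by contradiction over all $T$ rounds, while the paper uses the equivalent pigeonhole form (some $t'\le T$ has $\mathrm{STR}_{t'}(\bm{x}_{t'})<\epsilon/2$, which empties $\mathcal U_{t'}$).
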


Note that upper bounds of $\kappa_T$ have been studied for some kernels \cite{srinivas2009gaussian}.
For example,  under certain conditions the orders of $\kappa_T$ in Linear and Gaussian    are respectively
$\mathcal{O}( \tilde{d} \log T )$ and $\mathcal{O}( ( \log T)^{\tilde{d}+1} )$, where $\tilde{d}=d+k$.
Moreover, for Mat\'{e}rn kernels with $\nu >1$, its order is $\mathcal{O}(T^{\tilde{d}  (\tilde{d}+1)/( 2 \nu +\tilde{d}  (\tilde{d}+1)    )  } ( \log T)  )$.
Thus, if we use sufficiently large $m$ in Theorem \ref{thm:fin_regret}--\ref{thm:lse_convergence}, $\beta^{1/m}_T \kappa_T$ can be less than $T$.
Hence, it holds that $\lim_{T \to \infty} T^{-1} BR_T (\epsilon)=0$.
Similarly, with high probability, $R_T (\epsilon )$ satisfies $\lim _{T \to \infty} T^{-1} R_T (\epsilon ) =0$.
 Moreover, $\beta^{1/m}_t \kappa_t /t $ tends to zero, i.e., there exists the positive integer $T$ satisfying \eqref{eq:LSEbound}.

\section{Numerical Experiments}
In this section we present numerical experiments both on synthetic and real problems.
Due to the space limitation,
we present the summary here
and the details are deferred to Appendix~\ref{app:detail-exeperiments}.

\paragraph{Artificial Data Experiments}
We compared the performances of the proposed methods
({\tt BPT-UCB}, {\tt BPT-TS}, and {\tt BPT-LSE})
with a variety of existing methods
on two benchmark functions in each of the optimization and the LSE setting.
The evaluation metric at step $t$ in the optimization setting is
$p_{\text{upper}}(\bm{x}^*) - p_{\text{upper}}(\hat{\bm{x}}_t)$,
where
$\hat{\bm{x}}_t$
is the estimated maximizer reported by the algorithm at step $t$\footnote{
We reported this evaluation metric in experiments because it is easy to interpret in practice. This metric is slightly different from $\epsilon$-regret which we discussed in \S4.}.
The evaluation metric at step $t$ in the LSE setting is F1-score
which is computed by treating
the estimated super/sub-level sets $\mathcal{H}$ and $\mathcal{L}$
as positively and negatively labeled instances, respectively.
As the benchmark functions in the optimization setting,
we considered 2D-Rosenbrock function
and
McCormick function.
As the benchmark functions in the LSE setting,
we considered Himmelblau function and Goldstein-Price function.
These benchmark functions are commonly used in previous related studies.
Due to the space limitation, the details of these benchmark functions are deferred to Appendix~\ref{app:detail-exeperiments}.
In the optimization setting,
we considered
{\tt GP-UCB}~\cite{srinivas2009gaussian} (with the environmental parameter $\bm w$ fixed as its mean),
{\tt StableOpt}~\cite{bogunovic2018adversarially},
{\tt BQO-EI}~\cite{nguyen2020distributionally},
and its UCB version
({\tt BQO-UCB}),
{\tt BQO-TS}~\cite{nguyen2020distributionally},
and each of their adaptive versions\footnote{In adaptive version, the estimated maximizer $\hat{\bm{x}}_t$ is chosen in the same way as the proposed method (see Appendix~\ref{app:detail-exeperiments} for the details).}
({\tt Pmax-GP-UCB}, {\tt Pmax-StableOpt}, {\tt Pmax-BQO-EI}, {\tt Pmax-BQO-UCB}, {\tt Pmax-BQO-TS})
as well as
Random Sampling ({\tt RS})
as existing methods for comparison.
In the LSE setting,
we considered
the standard LSE~\cite{bryan2006active} with the environmental parameter $\bm w$ fixed as its mean ({\tt LSE}),
the LSE version of StableOpt~\cite{bogunovic2018adversarially}
({\tt StableLSE}), the LSE version of BQO~\cite{toscano2018bayesian}
({\tt BQLSE})
and each of their adaptive versions
({\tt P-LSE}, {\tt P-StableLSE}, {\tt P-BQLSE})
as well as
Random Sampling ({\tt RS})
as existing methods for comparison.
Due to the space limitation, the details of these existing methods are deferred to Appendix~\ref{app:detail-exeperiments}.
Figures~\ref{fig:bo_benchmark_result} and \ref{fig:lse_benchmark_result}
show
the results in the optimization and the LSE setting, respectively.
In both settings, the proposed methods have better performances than existing methods.
This is reasonable since the proposed methods are developed to optimize the target tasks, while existing methods are developed to optimize different robustness measures.
In the LSE settings, some of the existing methods could rapidly increase the F1-scores in the early stage. However, since the target robustness measures in the existing methods are inconsistent with the problem setup, they are eventually outperformed by the proposed methods.
\begin{figure}[t]
    \begin{center}
        \begin{tabular}{cc}
         \includegraphics[width=0.500\linewidth]{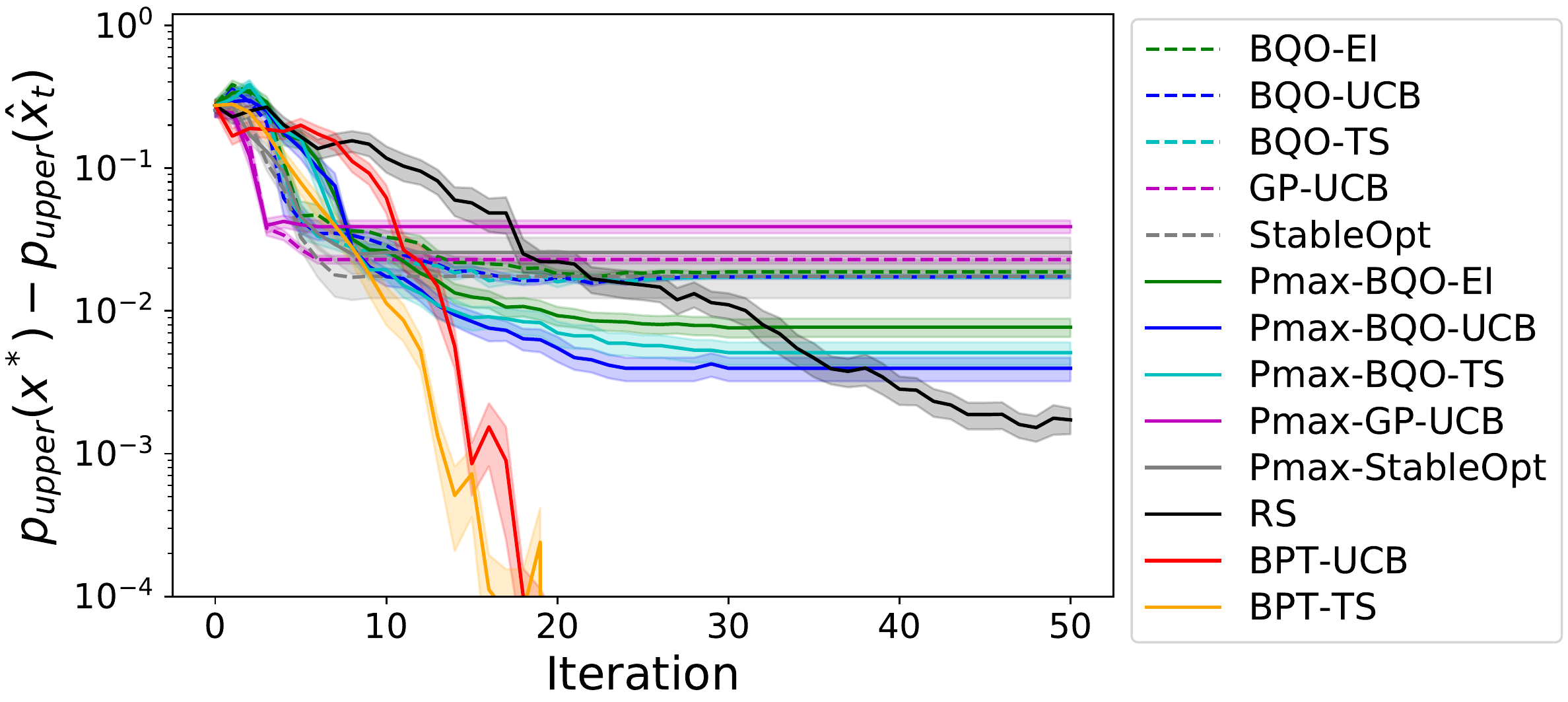} &
         \includegraphics[width=0.500\linewidth]{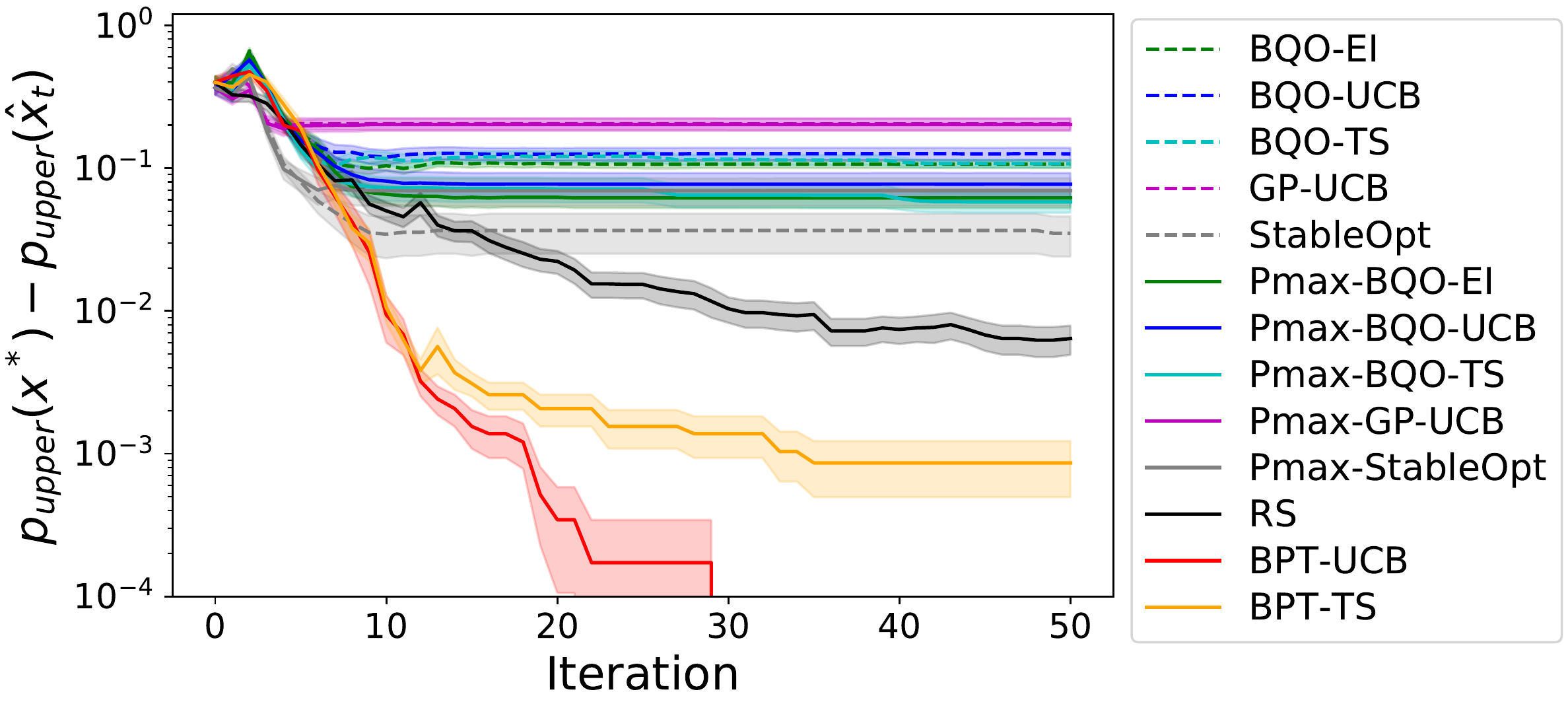} \\
         (a) 2D-Rosenbrock function &
         (b) McCormick function
        \end{tabular}
   \end{center}
 % \begin{center}
 %  % \begin{tabular}{ccc}
 %      \includegraphics[width=0.9\linewidth]{paper_bo_art_result_mod.pdf}
 %     % \includegraphics[width=0.9\linewidth]{bo_art_result.pdf}
 %   % \includegraphics[width=0.33\linewidth]{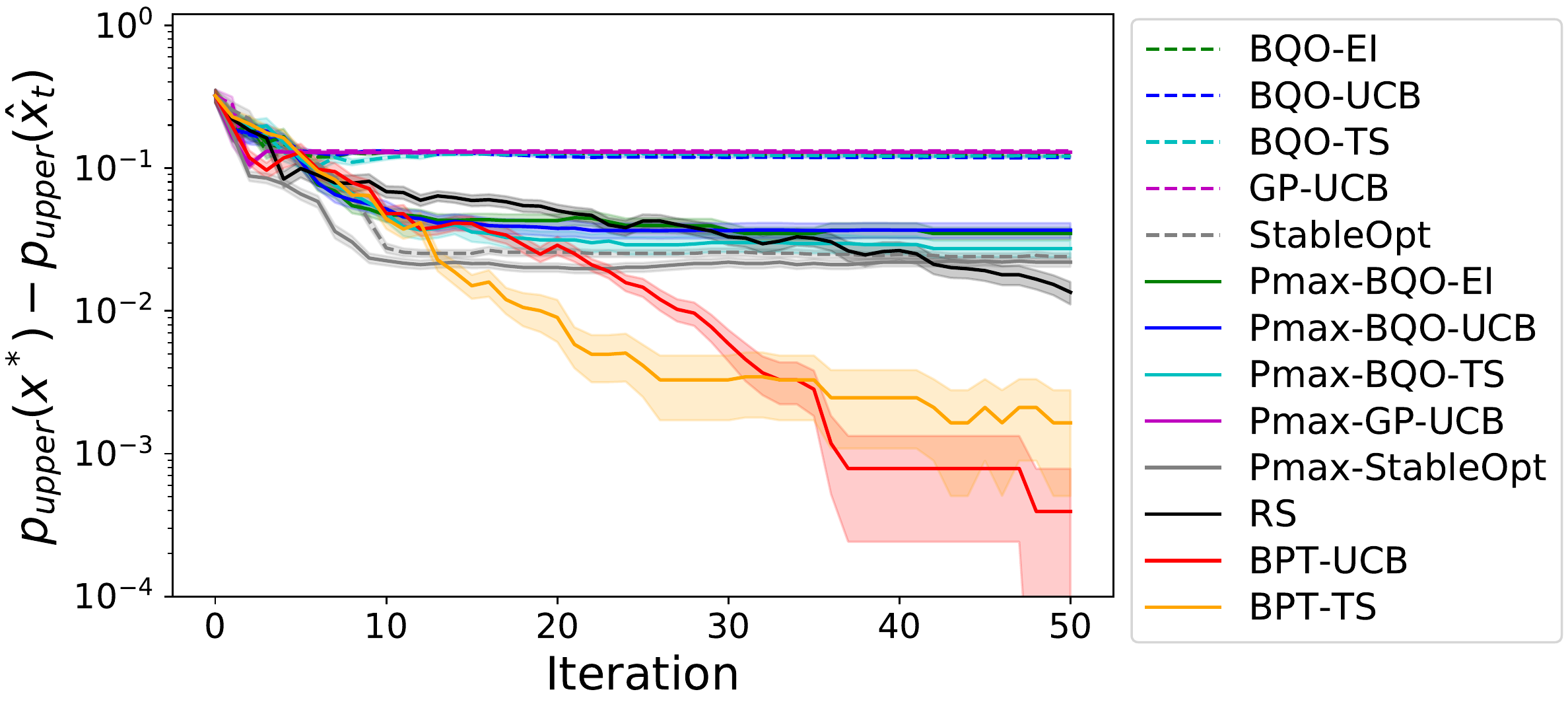} &
 %   % \includegraphics[width=0.33\linewidth]{paper_bo_rosenbrock.pdf} &
 %   % \includegraphics[width=0.33\linewidth]{paper_bo_mccormick.pdf} \\
 %   % (a) 2D GP function&
 %   % (b) 2D-Rosenbrock function &
 %   % (c) McCormick function
 %  % \end{tabular}
 % \end{center}
 \caption{
 The experimental results in the optimization setting with two benchmark functions.
 These plots indicate the average performances over $50$ trials.
 }
 \label{fig:bo_benchmark_result}
\end{figure}

\begin{figure}[t]
    \begin{center}
        \begin{tabular}{cc}
         \includegraphics[width=0.500\linewidth]{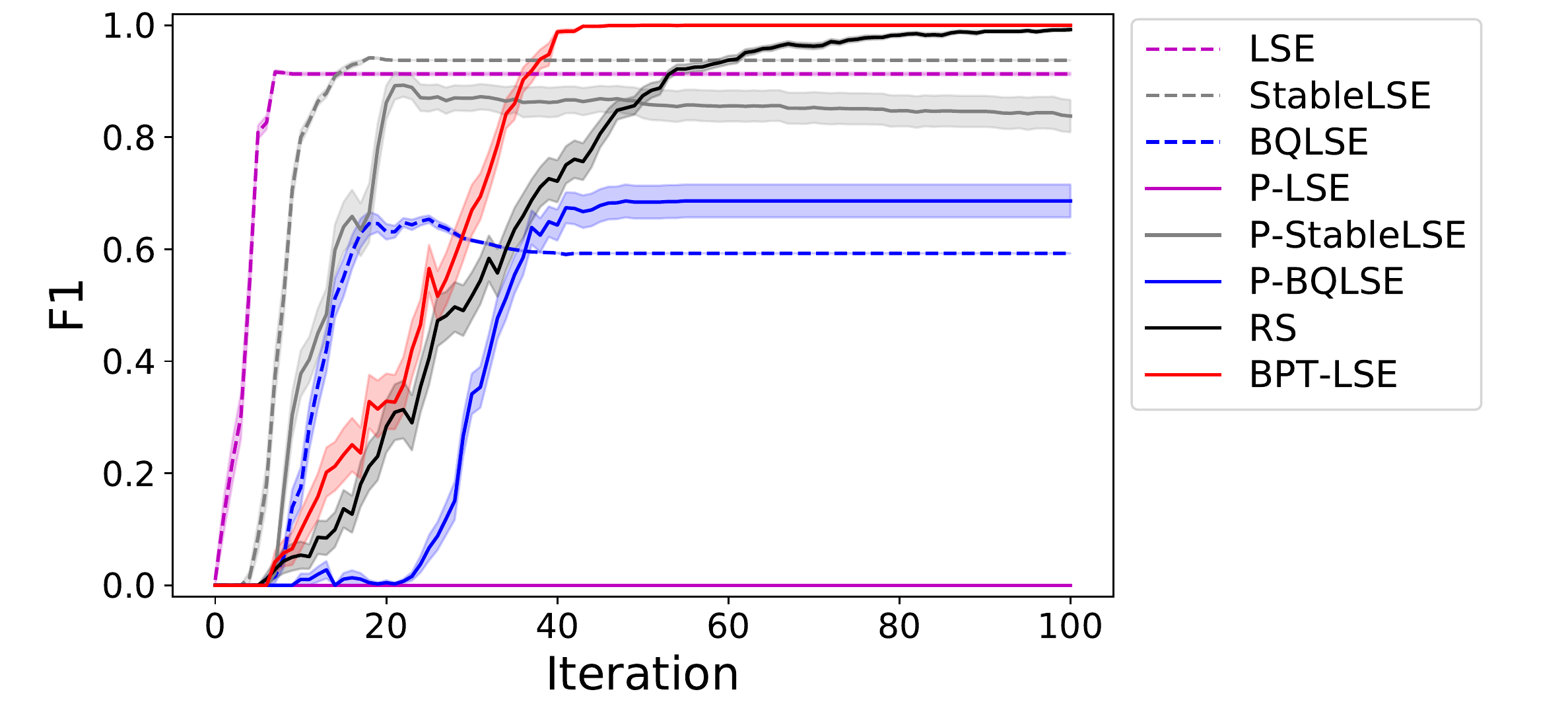} &
         \includegraphics[width=0.500\linewidth]{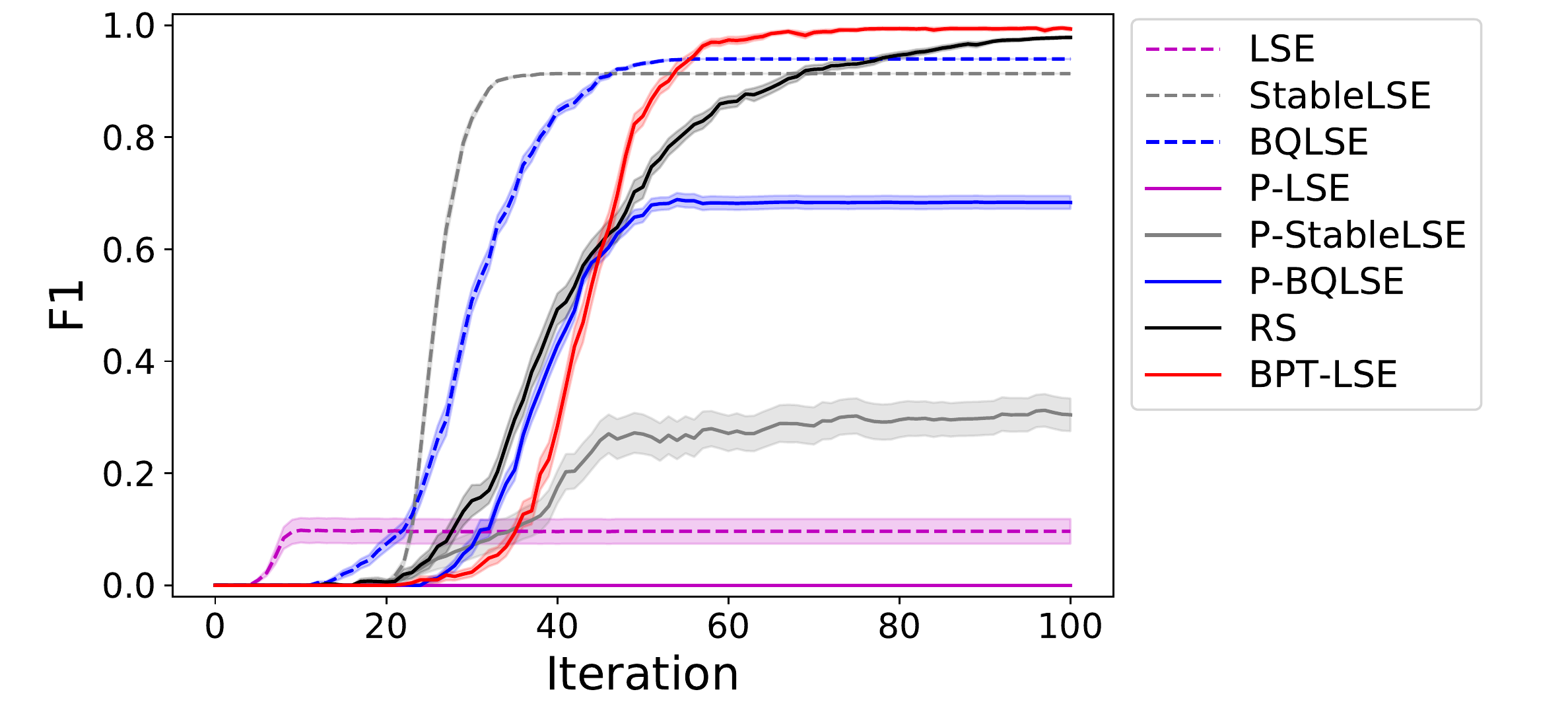} \\
         (a) Himmelblau function &
         (b) Goldstein-Price function
        \end{tabular}
   \end{center}
 % \begin{center}
 %  % \begin{tabular}{ccc}
 %    \includegraphics[width=0.9\linewidth]{paper_lse_art_result_mod.pdf}
 %    % \includegraphics[width=0.9\linewidth]{lse_art_result.pdf}
 %   % \includegraphics[width=0.33\linewidth]{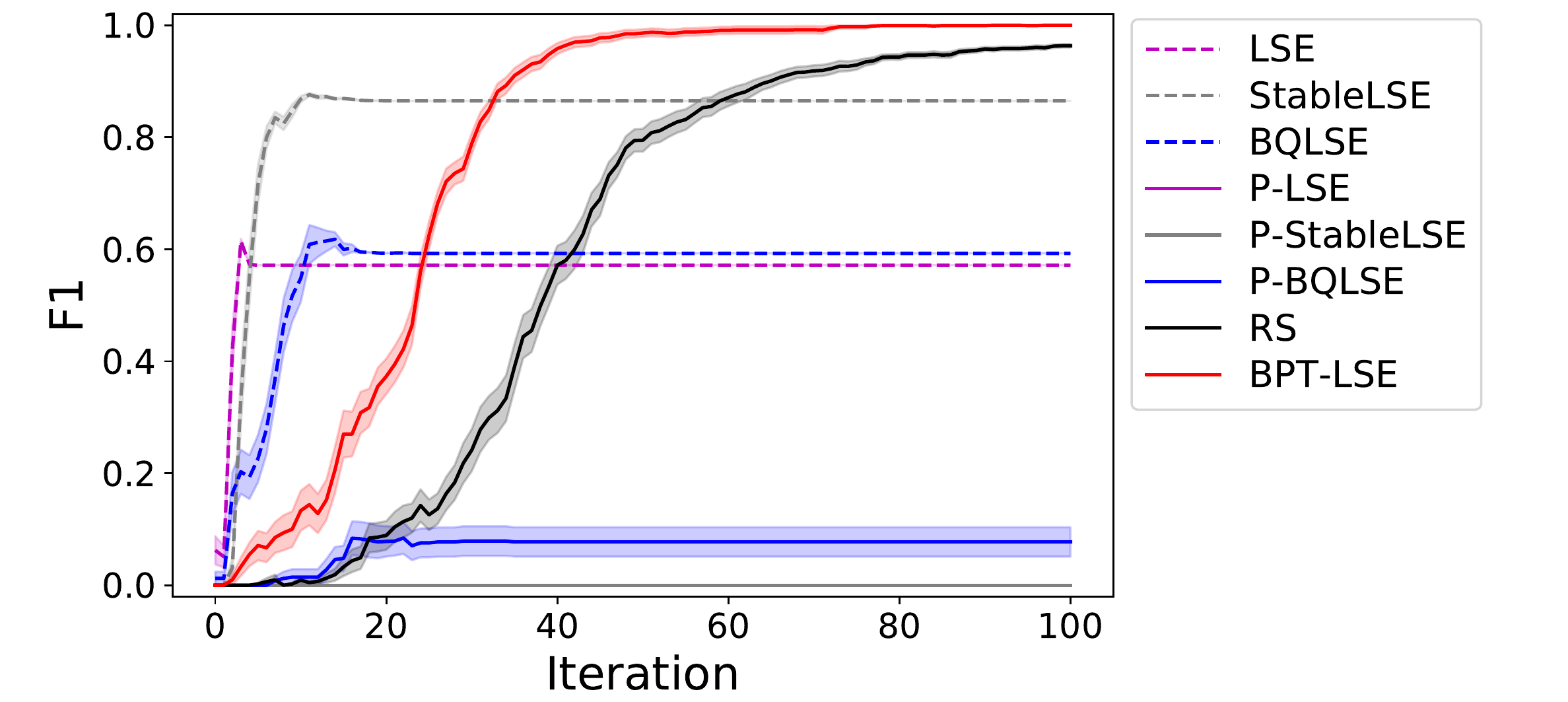} &
 %   % \includegraphics[width=0.33\linewidth]{paper_lse_himmelblau_f1.pdf} &
 %   % \includegraphics[width=0.33\linewidth]{paper_lse_goldstein_f1.pdf} \\
 %   % (a) 2D GP function&
 %   % (b) Himmelblau function &
 %   % (c) Goldstein-Price function
 %  % \end{tabular}
 % \end{center}
 \caption{
 The experimental results in the LSE setting with two benchmark functions.
 These plots indicate the average F1-scores over $50$ trials.
 }
 \label{fig:lse_benchmark_result}
\end{figure}

\paragraph{Real Data Experiments}
We applied the proposed methods in the optimization and the LSE setting to  \emph{Newsvendor problem under dynamic consumer substitution}~\cite{mahajan2001stocking} and \emph{Infection control problem}~\cite{kermack1927contribution}, respectively.
Both of them are simulation-based decision making problem in which the goal is to find the optimal decisions with as small number of simulation runs as possible.
The goal of the former problem is to optimize the initial inventory level of each product in order to maximize the revenue that is determined by uncertain customer purchasing behavior.
Here, the design parameter $\bm x$ is the initial inventory levels of the products, while the environmental parameter $\bm w$ is customer purchasing behavior which are assumed to follow mutually independent Gamma distribution.
This problem was also studied in \cite{toscano2018bayesian} for demonstrating the performance of BQO.
The goal of the latter problem is to decide the target infection rate to minimize the associated economic risk.
More specifically, we want to find the range of the target infection rate so that it achieves the economic risk at tolerable level $h$ with sufficiently high probability.
Here, the design parameter $\bm x$ is the target infection rate, while the environmental parameter $\bm w$ is the recovery rate because the latter is uncertain and uncontrollable in reality.
We describe more details in Appendix~\ref{app:detail-exeperiments}.
Figure~\ref{fig:real_results} shows the results of these real data experiments.
We observed that the proposed methods
({\tt BPT-UCB}, {\tt BPT-TS} for the optimization setting and {\tt BPT-LSE})
consistently outperformed the existing methods.
Deeper discussion on the experimental results are also provided in Appendix~\ref{app:detail-exeperiments}.
\begin{figure}[t]
 \begin{center}
  \begin{tabular}{cc}
   \includegraphics[width=0.500\linewidth]{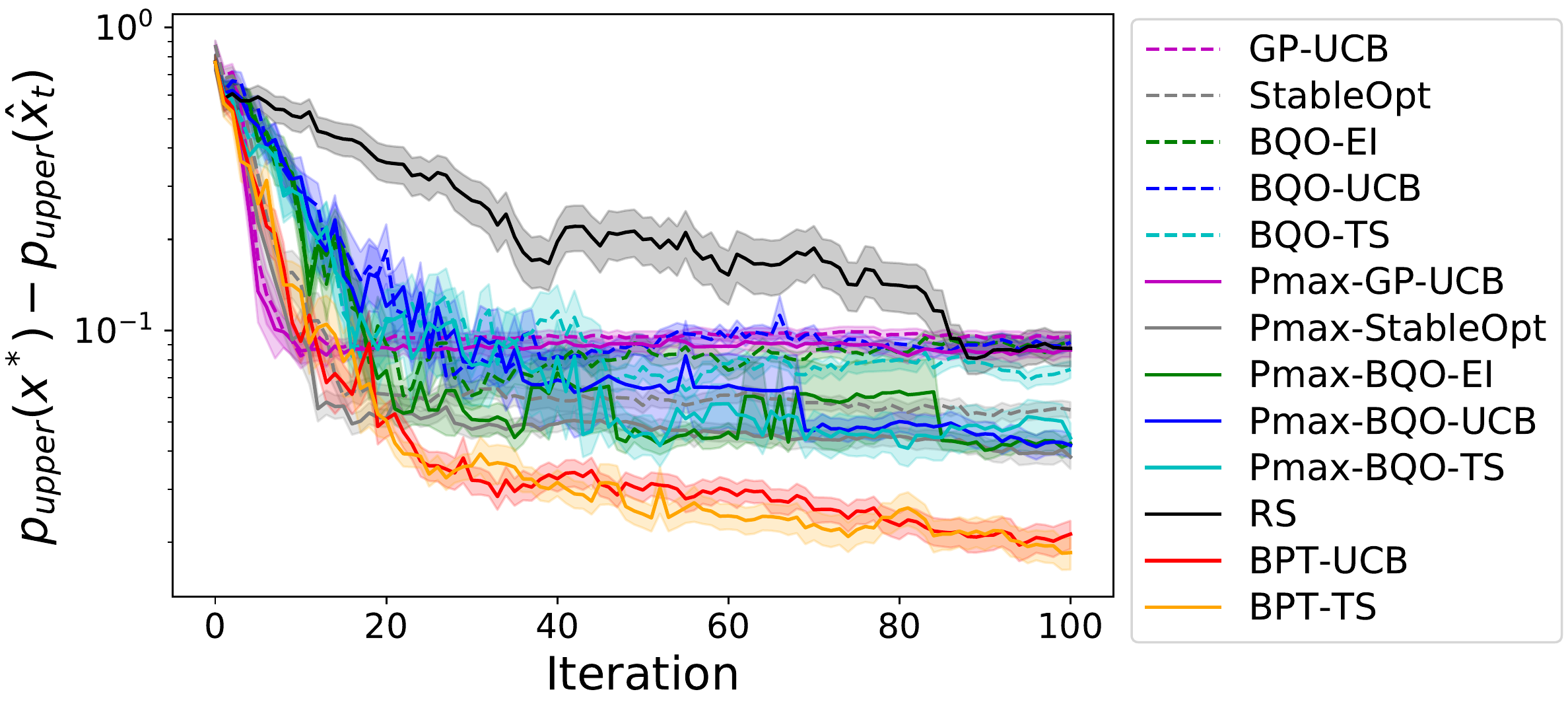} &
   \includegraphics[width=0.500\linewidth]{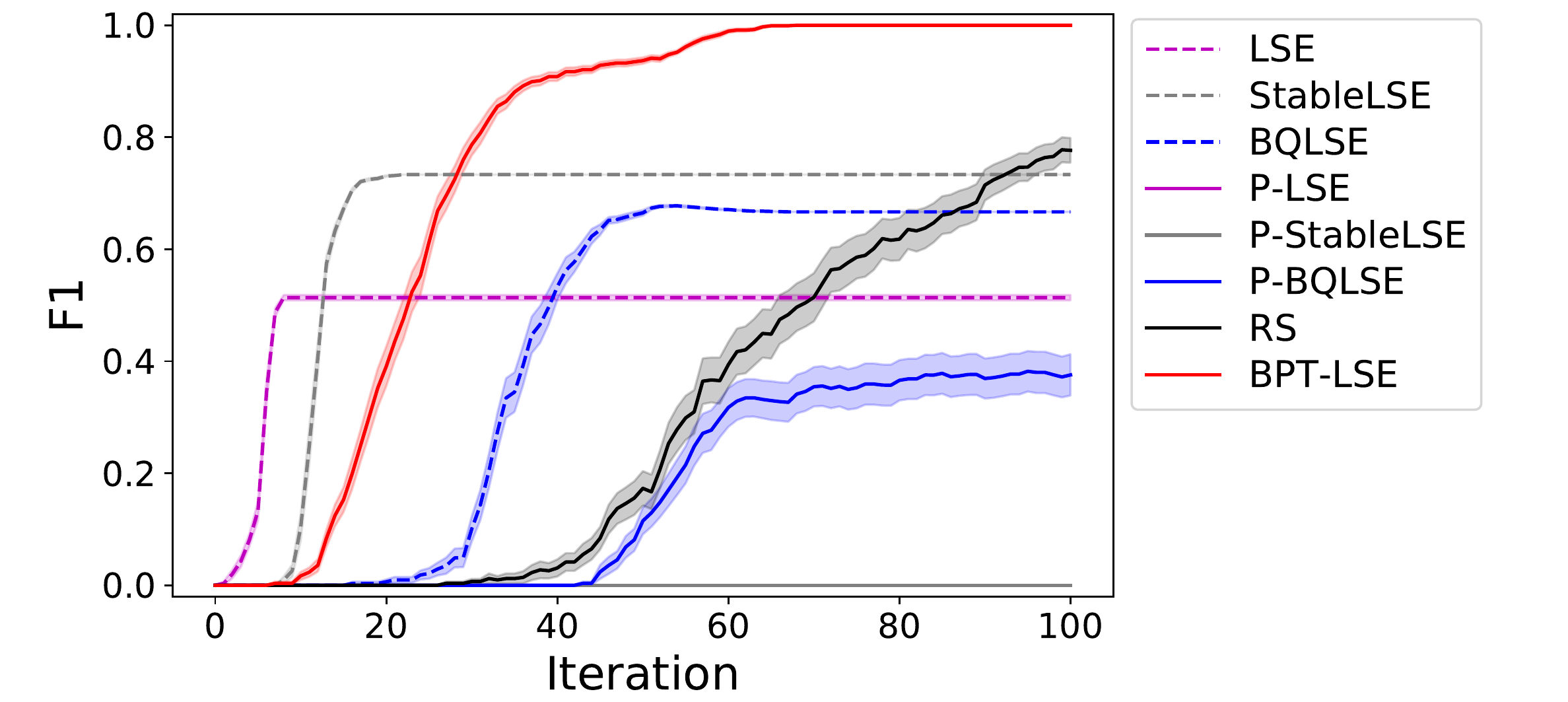} \\
   (a) Optimization setting  \emph{Newsvendor}&
   (b) LSE setting on \emph{Infection Control}
  \end{tabular}
 \end{center}
 \caption{
 The experimental results in real data experiments.
 The left and the right plots show
 the results in the optimization setting on
 \emph{Newsvendor} problem
 and
 the results in the LSE setting on
 \emph{Infection Control} problem.
 These plots indicate the average F1-scores over $50$ trials.
 }
 \label{fig:real_results}
\end{figure}

\section{Conclusion}
We proposed AL methods for optimization and Level Set Estimation (LSE) of Probabilistic Threshold Robustness (PTR) measure under uncertain and uncontrollable environmental parameter.
We showed that the proposed AL methods have theoretically desirable properties and perform better than existing methods in numerical experiments.
One of the key issues for the future is to consider the case where the distribution of environmental parameter is unknown. 

\section*{Broader Impact}
In the fields of manufacturing engineering and materials science, active learning methods are gaining attention as an efficient experimental design method of product development.
A common approach used in this context is Bayesian Optimization (BO) by which engineers or scientists expect to find the optimal design parameter with as small number of experiments as possible.
However, in practice, uncontrolled environmental parameter must often be taken into account.
In such cases, it is necessary to robustly determine design parameter that meet certain requirements, even if they are not necessarily optimal, for variations in environmental parameter.
This study presents a formulation and a solution to this practical problem.
We expect that this study further promotes the use of machine learning in the field of product development.

\section{Acknowledgements}
This work was partially supported by MEXT KAKENHI (20H00601, 16H06538), JST CREST (JPMJCR1502), and RIKEN Center for Advanced Intelligence Project.

\bibliography{myref}
\bibliographystyle{unsrt}

\clearpage

\appendix

\section{The mean and an upper bound of the variance of PTR measure}
\label{derivation_mean_upper}
In this section, we derive  $\mu^{(p)}_{t-1} ({\bm x})$, $\gamma^2_{t-1} ({\bm x})$, $\mu^{(p)}_{t-1;\eta} ({\bm  x})$ and $\gamma^2_{t-1;\eta} ({\bm x})$. 
Since $f$ follows GP, integrands of (\ref{eq:pup}) $\1[f({\bm x}, {\bm w}) > h]:
\mathcal{X} \times \Omega \rightarrow \{0, 1\}$ follow certain stochastic process, which is not GP. 
Thus, (\ref{eq:pup}) becomes the integral of stochastic process. 
Then, by using known results about the integral of stochastic process (see, e.g., \cite{papoulis2002probability}), 
at time $t-1$  its mean $\mu_{t-1}^{(p)}(\bm{x})$ and variance $\sigma^{(p)2}_{t-1} ({\bm x})$ can be expressed as follows:
\begin{align*}
    \mu_{t-1}^{(p)} &= \mathbb{E}[p(\bm{x})] = \int_{\Omega} \mathbb{E}[\1[f(\bm{x}, \bm{w}) > h]]p(\bm{w}) \text{d}\bm{w}, \\
   \sigma^{(p)2}_{t-1} ({\bm x})= \mathbb{V}[p(\bm{x})] &= \int_{\Omega} \int_{\Omega}\mathrm{Cov}[\1[f(\bm{x}, \bm{w}) > h],
    \1[f(\bm{x}, \bm{w} ^\prime) > h]]p(\bm{w})p(\bm{w} ^\prime)\text{d}\bm{w}\text{d}\bm{w} ^\prime,
\end{align*}
where the expectation, covariance and variance are taken with respect to the posterior of $f$. 
Furthermore, from $\mathrm{Cov}(X, Y) \leq (\mathbb{V}[X] + \mathbb{V}[Y])/2$, we define the
variance upper bound $\gamma_{t-1}^2(\bm{x})$ as:
\begin{align*}
    \sigma^{(p)2}_{t-1} ({\bm x}) &\leq \int_{\Omega} \int_{\Omega} \frac{(\mathbb{V}[\1[f(\bm{x}, \bm{w}) > h]]
    + \mathbb{V}[\1[f(\bm{x}, \bm{w}') > h]])}{2}p(\bm{w})p(\bm{w}^\prime)\text{d}\bm{w}\text{d}\bm{w}^\prime \\
    & =\int_{\Omega} \mathbb{V}[\1[f(\bm{x}, \bm{w}) > h]]p(\bm{w}) \text{d}\bm{w} \\
    & \coloneqq \gamma_{t-1}^2(\bm{x}).
\end{align*}
Here, from $f(\bm{x}, \bm{w}) \sim \mathcal{N}(\mu_{t-1}(\bm{x}, \bm{w}), \sigma^2_{t-1}(\bm{x}, \bm{w}))$ at time $t-1$, 
$\1[f(\bm{x}, \bm{w}) > h]$ follows Bernoulli distribution with mean 
$\Phi \left( \frac{\mu_{t-1}(\bm{x}, \bm{w}) - h}{\sigma_{t-1}(\bm{x}, \bm{w})}\right)$, where
$\Phi(\cdot)$ is a cumulative distribution function of standard Normal distribution.
Therefore, $\mu_{t-1}^{(p)}(\bm{x})$ and $\gamma_{t-1}^{(p)}(\bm{x})$ can be expressed as
\begin{align*}
    \mu_{t-1}^{(p)}(\bm{x}) &= \int_{\Omega} \Phi
    \left( \frac{\mu_{t-1}(\bm{x}, \bm{w}) - h}{\sigma_{t-1}(\bm{x}, \bm{w})}\right) p(\bm{w}) \text{d}\bm{w}, \\
    \gamma_{t-1}^2(\bm{x}) &= \int_{\Omega} \Phi
    \left( \frac{\mu_{t-1}(\bm{x}, \bm{w}) - h}{\sigma_{t-1}(\bm{x}, \bm{w})}\right)
    \left\{ 1 - \Phi
    \left( \frac{\mu_{t-1}(\bm{x}, \bm{w}) - h}{\sigma_{t-1}(\bm{x}, \bm{w})}\right) \right\}
     p(\bm{w}) \text{d}\bm{w}.
\end{align*}
Similarly, since $h_{t-1,{\bm x}, {\bm w};\eta } $ is a deterministic function at time $t-1$, 
$\1[f(\bm{x}, \bm{w}) > h_{t-1,{\bm x}, {\bm w};\eta } ]$ follows Bernoulli distribution with mean 
$\Phi \left( \frac{\mu_{t-1}(\bm{x}, \bm{w}) - h_{t-1,{\bm x}, {\bm w};\eta } }{\sigma_{t-1}(\bm{x}, \bm{w})}\right)$. 
Hence, the mean $\mu_{t-1;\eta}^{(p)}(\bm{x})$ and upper bound of variance $\gamma_{t-1;\eta}^2(\bm{x})$
of $p_{t-1;\eta}(\bm{x})$ at time $t-1$ are given by 
\begin{align*}
   % \label{eq:mu_p}
    \mu_{t-1;\eta}^{(p)}(\bm{x}) &= \int_{\Omega} \Phi
    \left( \frac{\mu_{t-1}(\bm{x}, \bm{w}) - h_{t-1, \bm{x}, \bm{w};\eta}}{\sigma_{t-1}(\bm{x}, \bm{w})}\right) p(\bm{w}) \text{d}\bm{w}, \\
  %  \label{eq:gamma}
    \gamma_{t-1;\eta}^2(\bm{x}) &= \int_{\Omega} \Phi
    \left( \frac{\mu_{t-1}(\bm{x}, \bm{w}) - h_{t-1, \bm{x}, \bm{w};\eta}}{\sigma_{t-1}(\bm{x}, \bm{w})}\right)
    \left\{ 1 - \Phi
    \left( \frac{\mu_{t-1}(\bm{x}, \bm{w}) - h_{t-1, \bm{x}, \bm{w};\eta}}{\sigma_{t-1}(\bm{x}, \bm{w})}\right) \right\}
     p(\bm{w}) \text{d}\bm{w}.
\end{align*}
%Additionally, we can construct credible interval $Q_{t;\eta}(\bm{x}) = [l_t(\bm{x}),~u_t(\bm{x})]$
%as:
% 'Ü'½, $p_{t-1;\eta}(\bm{x})$'̐M—p‹æŠÔ$Q_{t;\eta}(\bm{x}) = [l_t(\bm{x}),~u_t(\bm{x})]$'Í
%\begin{align}
%    Q_{t;\eta}(\bm{x}) &= [\mu_{t-1;\eta}^{(p)}(\bm{x}) - \beta_{t}^{1/m}\gamma_{t-1;\eta}^{2/m}(\bm{x}),
%    ~\mu_{t-1;\eta}^{(p)}(\bm{x}) + \beta_{t}^{1/m}\gamma_{t-1;\eta}^{2/m}(\bm{x})] \\
%    &\coloneqq [l_t(\bm{x}),~u_t(\bm{x})].
%\end{align}
% 'Æ'È'é.

\section{Details of the modified version of PTR measure and $\epsilon$-regret}
\label{app:epsilon-regret}
In this section, we explain about   inaccurate behaviors of the predicted distribution for $p_{\rm upper} ({\bm x})$. 
After that, we also explain the motivation of   $p_{t-1;\eta} ({\bm x})$ and $\epsilon$-regret. 
\subsection{Inaccurate behaviors of the predicted distribution}\label{subsec:inaccurate_behavior}
As mentioned in \S \ref{sec:Preliminaries}, when $f({\bm x} ,{\bm w} )$ is exactly $h$, 
the prediction of $p_{\rm upper} ({\bm x})$ is still inaccurate no matter how much we evaluate  $f({\bm x} ,{\bm w} )$ because 
$f({\bm x} ,{\bm w} ) =h$ and observations of $f({\bm x} ,{\bm w} )$ have noise. 
For example, as an extreme case, let $\mathcal{X} =\{ x \} \subset \mathbb{R}$, $\Omega =\{ w \} \subset \mathbb{R}$ and 
$f(x,w) =h$. 
Assume that $f \sim \mathcal{G} \mathcal{P} (0, k)$, where $k((x,w),(x,w))=1$. 
Then, the posterior mean $\mu_t (x,w)$ and variance $\sigma^2_t (x,w)$ can be given by
\begin{align}
\mu_t (x,w) &=  {\bm 1} ^\top _t ( {\bm 1}_t {\bm 1}^\top _t + \sigma^2 {\bm I}_t ) ^{-1} {\bm y}_t, \label{eq:one_dim_mean} \\
\sigma^2_t (x,w) &= 1-  {\bm 1} ^\top _t ( {\bm 1}_t {\bm 1}^\top _t + \sigma^2 {\bm I}_t ) ^{-1} {\bm 1}_t, \label{eq:one_dim_var} 
\end{align}
where ${\bm 1}_t $ is a $t$-dimension vector in which all elements are 1. 
Here, noting that 
$$
( {\bm 1}_t {\bm 1}^\top _t + \sigma^2 {\bm I}_t ) ^{-1} = \sigma^{-2} {\bm I}_t -\frac{\sigma^{-4}  {\bm 1}_t {\bm 1}^\top _t }{1+t \sigma^{-2}},
$$
 \eqref{eq:one_dim_mean} and  \eqref{eq:one_dim_var} can be rewritten as follows:
\begin{align*}
\mu_t (x,w) &= \sigma^{-2} t \bar{y}_t -\frac{ \sigma^{-4} t^2 \bar{y}_t  }{1+t \sigma^{-2}} = \frac{ \sigma^{-2} t \bar{y}_t }{1+ t \sigma^{-2}}   = \frac{1}{ \sigma^2/t +1 } \bar{y}_t     , \\
\sigma^2_t (x,w) &= 1- \left (\sigma^{-2} t - \frac{\sigma^{-4} t^2}{1+ t \sigma^{-2}} \right ) = 1-\frac{\sigma^{-2} t }{1+t \sigma^{-2}}=\frac{1}{1+t \sigma^{-2}},
\end{align*}
where $\bar{y}_t = t^{-1} \sum_{i=1}^t y_i $. 
Therefore, the posterior distribution of $f(x,w)$ can be expressed as 
$$
 \frac{1}{ \sigma^2/t +1 } \bar{y}_t + \frac{1}{\sqrt{1+t \sigma^{-2}}} Z,
$$
where $Z \sim \mathcal{N} (0,1)$. 
Hence, the posterior distribution of $f(x,w) -h$ is given by 
\begin{align*}
 \frac{1}{ \sigma^2/t +1 } \bar{y}_t + \frac{1}{\sqrt{1+t \sigma^{-2}}} Z-h &= 
 \frac{1}{ \sigma^2/t +1 } (\bar{y}_t-h) + \frac{1}{\sqrt{1+t \sigma^{-2}}} Z + \frac{h}{ \sigma^2/t +1 } -h \\
&= \frac{1}{ \sigma^2/t +1 } (\bar{y}_t-h) + \frac{1}{\sqrt{1+t \sigma^{-2}}} Z -  \frac{h\sigma^2/t }{ \sigma^2/t +1 } .
\end{align*}
Thus, we get 
\begin{align*}
f(x,w) >h & \Leftrightarrow f(x,w)-h >0 \\
& \Leftrightarrow   \sqrt{t \sigma^{-2}  }  \{ f(x,w)-h    \}  >0 \\
&\Leftrightarrow  \frac{  \sqrt{t \sigma^{-2}  } }{ \sigma^2/t +1 } (\bar{y}_t-h) + \frac{  \sqrt{t \sigma^{-2}  } }{\sqrt{1+t \sigma^{-2}}} Z -  \frac{h\sigma^2/t }{ \sigma^2/t +1 }   \sqrt{t \sigma^{-2}  } >0.
\end{align*}
Moreover, noting that $\sqrt{t \sigma^{-2} } (\bar{y}_t -h) \sim \mathcal{N} (0,1)$, 
$(\bar{y}_t -h )$ and $Z$ are mutually independent and 
$$
\lim_{t \to \infty }  \frac{1}{\sigma^2/t +1} =1, \ 
\lim_{t \to \infty }  \frac{  \sqrt{t \sigma^{-2}  } }{\sqrt{1+t \sigma^{-2}}}  =1 , \ 
\lim_{t \to \infty }  \frac{h\sigma^2/t }{ \sigma^2/t +1 }   \sqrt{t \sigma^{-2}  } =0,
$$ 
we have 
$$
\frac{  \sqrt{t \sigma^{-2}  } }{ \sigma^2/t +1 } (\bar{y}_t-h) + \frac{  \sqrt{t \sigma^{-2}  } }{\sqrt{1+t \sigma^{-2}}} Z -  \frac{h\sigma^2/t }{ \sigma^2/t +1 }   \sqrt{t \sigma^{-2}  } \xrightarrow{d} \mathcal{N} (0,2),
$$
where $\xrightarrow{d}$ means convergence in distribution. 
This implies that 
\begin{align*}
\lim_{t \to \infty} Pr (p_{\rm upper } (x) =1 ) &= \lim_{t \to \infty} Pr (f(x,w)>h) = \lim_{t \to \infty}  Pr (  \sqrt{t \sigma^{-2}} (f(x,w)-h ) >0) \\
 &= Pr (\mathcal{N} (0,2) >0) = 0.5.
\end{align*}
Hence, the prediction of $p_{\rm upper} (x)$ is still inaccurate no matter how much  observations of $f(x,w)$ including noise are evaluated.

\subsection{Motivations of the modified version of PTR measure and $\epsilon$-regret} 
In order to avoid the issue explained in subsection \ref{subsec:inaccurate_behavior}, we consider the posterior distribution of 
$p_{t-1;\eta} ({\bm x}) $, instead of $p_{\rm upper} ({\bm x})$. 
Our idea is based on the following inequality:
\begin{align}
p_{t-1;\eta} ({\bm x} ) \leq p_{\rm upper} ({\bm x}) \leq p_{t-1;\eta} ({\bm x} ) + \int_{\Omega} \1\left[  h+2 \eta \geq  f(\bm{x}, \bm{w}) > h\right] p(\bm{w}) \text{d}\bm{w}. \label{eq:basic_inequality}
\end{align}
Note that \eqref{eq:basic_inequality} holds for any $t \geq 1$, ${\bm x} \in \mathcal{X}$, threshold $h$, $\eta > 0$, $p({\bm w})$ and $f$. 
In addition, if ${\bm w}_t$ is chosen by using \eqref{eq:w_t}, the upper bound $\gamma^2_{t-1;\eta} ({\bm x}_t)$ of the posterior variance of $p_{t-1;\eta} ({\bm x}_t)$  satisfies 
$$
\gamma^{2/m}_{t-1;\eta} ({\bm x}_t) \leq C_{m,\eta} \sigma^2_{t-1} ({\bm x}_t,{\bm w}_t ), \ \forall m \geq 2, \ \eta >0,
$$
where $C_{m,\eta}$ is a constant (see, \eqref{eq:gamma_upper}).
Therefore, the prediction of $p_{t-1;\eta} ({\bm x}_t ) $ becomes more accurate if   $\gamma^2_{t-1;\eta} ({\bm x}_t)$ becomes small. 
In this sense, the posterior distribution of $p_{t-1;\eta} ({\bm x})$  is more tractable than that of $p_{\rm upper} ({\bm x})$.
Furthermore,  for any $\epsilon >0$, the following holds with high probability if an appropriate $\eta $ is chosen (see, Lemma \ref{lem:prior_pro}):
$$
 \int_{\Omega} \1\left[  h+2 \eta \geq  f(\bm{x}, \bm{w}) > h\right] p(\bm{w}) \text{d}\bm{w} < \epsilon.
$$  
Hence, with high probability, the ordinary regret $r_t (0)$ can be bounded as follows:
$$
r_t (0) = p_{\rm upper } ({\bm x}^\ast ) - p_{\rm upper } ({\bm x}_t) \leq p_{t-1;\eta} ({\bm x}^\ast ) + \epsilon - p_{t-1;\eta} ({\bm x}_t ) .
$$
Thus, from the definition of $\epsilon$-regret, the following holds with high probability:
$$
r_t (\epsilon) = r_t (0) - \epsilon \leq  p_{t-1;\eta} ({\bm x}^\ast )  - p_{t-1;\eta} ({\bm x}_t ).
$$
Note that the right hand side in this inequality has only $p_{t-1;\eta} ({\bm x})$ which is more tractable, not $p_{\rm upper} ({\bm x})$. 
Therefore, by considering $\epsilon$-regret, theoretical guarantees for $R_t (\epsilon) $ and  $BR_t (\epsilon )$ based on $\epsilon$-regret  can be obtained (see, \S \ref{sec:theory}).

\section{Proofs}
In this section, we show the theoretical guarantees for our proposed methods.
First, we define the random variable %$p_{t-1;\eta}(\bm{x})$ and
 $\tilde{p} _{2\eta} ({\bm x})$ as
\begin{equation*}
  %  p_{t-1; \eta}(\bm{x}) = \int_{\Omega} \1\left[f(\bm{x}, \bm{w}) > h_{t, \bm{x}, \bm{w}; \eta}\right] p(\bm{w}) \text{d}\bm{w}, \
 \tilde{p} _{2 \eta} ({\bm x}) =  \int_{\Omega} \1\left[  h+2 \eta \geq  f(\bm{x}, \bm{w}) > h\right] p(\bm{w}) \text{d}\bm{w}.
\end{equation*}
%Hereafter, we assume that $p_{\text{upper}}$ is the observed value of $p_{t-1;\eta}$.
\subsection{Regret Bound of BPT-UCB}
In this subsection, we show the upper for the cumulative $\epsilon$-regret in BPT-UCB.
The basic techniques used in this section are based on \cite{srinivas2009gaussian}. %\textcolor{red}{(我々のarXiv論文も入れてもいいかも).}
\begin{lemma}\label{lem:prior_pro}
Let $\delta \in (0,1)$, $\epsilon >0$ and $2\eta =\min \{ \frac{\epsilon \sigma_{0,min}}{2}, \frac{\epsilon^2 \delta \sigma_{0,min}}{8|\mathcal{X}|}\}$. Then, with probability at least $1- \delta/2$, the following inequality holds for any ${\bm x} \in \mathcal{X}$:
$$
\tilde{p}_{2 \eta} ({\bm x} ) < \epsilon.
$$
\end{lemma}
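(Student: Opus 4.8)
The plan is to treat this as a statement about the \emph{prior} draw $f \sim \mathcal{GP}(0,k)$: since $\eta$, $h$ and $\bm x$ are fixed, $\tilde{p}_{2\eta}(\bm x)$ is a deterministic functional of $f$, and the asserted probability is over the randomness of $f$. First I would bound the prior mean $\mathbb{E}[\tilde{p}_{2\eta}(\bm x)]$ for each fixed $\bm x$, and then turn this into the uniform high-probability claim by Markov's inequality followed by a union bound over the finite set $\mathcal{X}$.

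To bound the mean, note that the integrand $\1[h + 2\eta \ge f(\bm x,\bm w) > h]\,p(\bm w)$ is nonnegative, so Tonelli's theorem lets me exchange expectation and the $\Omega$-integral:
\[
\mathbb{E}[\tilde{p}_{2\eta}(\bm x)] = \int_\Omega \Pr\!\left[h < f(\bm x,\bm w) \le h + 2\eta\right] p(\bm w)\,\mathrm{d}\bm w.
\]
Under the prior the marginal $f(\bm x,\bm w)$ is $\mathcal{N}(0,s^2)$ with $s^2 = k((\bm x,\bm w),(\bm x,\bm w)) \ge \sigma_{0,min}^2$. Since the $\mathcal{N}(0,s^2)$ density is bounded by $(s\sqrt{2\pi})^{-1}$ and the band $(h,h+2\eta]$ has length $2\eta$, the inner probability is at most $2\eta/(s\sqrt{2\pi}) \le 2\eta/(\sigma_{0,min}\sqrt{2\pi})$; integrating against the unit-mass density $p(\bm w)$ gives $\mathbb{E}[\tilde{p}_{2\eta}(\bm x)] \le 2\eta/(\sigma_{0,min}\sqrt{2\pi})$. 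This is the one place where the kernel lower bound $\sigma_{0,min}^2 \le k$ is essential, as it controls $s$ from below uniformly in $\bm w$.

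Substituting the prescribed $\eta$ and using the binding term $2\eta \le \epsilon^2\delta\sigma_{0,min}/(8|\mathcal{X}|)$ gives $\mathbb{E}[\tilde{p}_{2\eta}(\bm x)] \le \epsilon^2\delta/(8|\mathcal{X}|\sqrt{2\pi})$, so by Markov's inequality $\Pr[\tilde{p}_{2\eta}(\bm x) \ge \epsilon] \le \epsilon\delta/(8|\mathcal{X}|\sqrt{2\pi}) \le \delta/(2|\mathcal{X}|)$, where the final step uses $\epsilon \le 4\sqrt{2\pi}$ (the claim is vacuous otherwise, since $\tilde{p}_{2\eta}(\bm x) \le 1$). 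A union bound over the $|\mathcal{X}|$ points then yields $\Pr[\exists \bm x \in \mathcal{X}: \tilde{p}_{2\eta}(\bm x) \ge \epsilon] \le \delta/2$, i.e.\ the desired event holds with probability at least $1-\delta/2$. Only the second term of the $\min$ defining $\eta$ is needed here; the first term $\epsilon\sigma_{0,min}/2$ is imposed by the companion lemmas that share the same $\eta$.

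The computations are routine, so the work is mostly conceptual: recognizing that the probability is over the prior (not the posterior), justifying the Tonelli swap, and tracking the constants so that the per-point Markov bound is exactly $\le \delta/(2|\mathcal{X}|)$, which is what makes the union bound close at $\delta/2$ and dovetail with the $1-\delta/2$ credible-interval bound used elsewhere in the regret analysis. The only mildly delicate point is extracting the $1/\sigma_{0,min}$ factor uniformly in $\bm w$ from the diagonal lower bound on the kernel; the rest is bookkeeping.
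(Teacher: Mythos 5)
Your proof is correct, but it takes a genuinely different route from the paper's. The paper centers $\tilde{p}_{2\eta}(\bm x)$ at its prior mean $\tilde{\mu}(\bm x)$ and applies Chebyshev's inequality, which forces it to bound \emph{both} the mean and the variance; the variance bound is obtained by pushing the covariance inside the double integral via $\mathrm{Cov}(X,Y)\le(\mathbb{V}[X]+\mathbb{V}[Y])/2$ and then using the Bernoulli inequality $\mathbb{V}[\1[\cdot]]\le\mathbb{E}[\1[\cdot]]$ to reduce it back to the mean, giving $\tilde{p}_{2\eta}(\bm x)<\tilde{\mu}(\bm x)+\sqrt{2|\mathcal{X}|\,\tilde{\mu}(\bm x)/\delta}\le\epsilon/2+\epsilon/2$. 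Note that in that decomposition the \emph{first} term of the $\min$ defining $\eta$ is what controls $\tilde{\mu}(\bm x)\le 2\eta/\sigma_{0,min}\le\epsilon/2$, so your side remark that the first term is only imposed by companion lemmas is not accurate for the paper's own argument (though it is true for yours). Your approach instead applies Markov's inequality directly to the nonnegative variable $\tilde{p}_{2\eta}(\bm x)$, needing only the first-moment bound $\mathbb{E}[\tilde{p}_{2\eta}(\bm x)]\le 2\eta/(\sigma_{0,min}\sqrt{2\pi})$ (essentially the same mean estimate as the paper's, with the $1/\sqrt{2\pi}$ retained). This is shorter and dispenses with the variance computation entirely, at the cost of a tail that decays only like $1/\epsilon$, which is why you need the (correctly handled) observation that the claim is vacuous for $\epsilon>4\sqrt{2\pi}$ since $\tilde{p}_{2\eta}\le 1$; the paper's two-sided split works uniformly in $\epsilon$ without that caveat. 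Both arguments close the union bound at $\delta/(2|\mathcal{X}|)$ per point, and both constants check out.
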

\begin{proof}
From Chebyshev's inequality, for any $\tau >0$ and ${\bm x} \in \mathcal{X}$, the following holds:
\begin{align*}
Pr \{ | \tilde{p}_{2 \eta} ({\bm x} ) - \tilde{\mu} ({\bm x} ) | \geq \tau \} \leq \frac{ \mathbb{V} [ \tilde{p}_{2 \eta} ( {\bm x})]  }{\tau^2}
,
\end{align*}
where $\tilde{\mu} ( {\bm x}) = \mathbb{E} [\tilde{p}_{2 \eta} ({\bm x} ) ]$.
Note that the expectation and variance are taken with respect to the prior distribution.
Thus, by replacing $\tau$ with $ (\delta /(2 |\mathcal{X}| ) ) ^{-1/2} ( \mathbb{V} [\tilde{p}_{2 \eta} ( {\bm x})] )^{1/2}$,
with probability at least $1-\delta/2$ the following holds for any ${\bm x} \in \mathcal{X}$:
$$
 | \tilde{p}_{2 \eta} ({\bm x} ) - \tilde{\mu} ({\bm x} ) | < \frac{\sqrt{  \mathbb{V} [\tilde{p}_{2 \eta} ( {\bm x})]      }}{ \sqrt{\delta/(2 |\mathcal{X}|)}} .
$$
This implies that
\begin{align}
\tilde{p}_{2 \eta} ({\bm x} ) <   \tilde{\mu} ({\bm x} ) +\frac{\sqrt{  \mathbb{V} [\tilde{p}_{2 \eta} ( {\bm x})]      }}{ \sqrt{\delta/(2 |\mathcal{X}|)}}.
 \label{eq:prior_bound}
\end{align}
Furthermore, $\tilde{\mu} ({\bm x} )$ can be expressed as
\begin{align*}
\tilde{\mu} ({\bm x} ) = \int _{\Omega} \left \{ \Phi \left (\frac{h+2\eta}{\sigma_0 ( {\bm x},{\bm w})} \right ) - \Phi \left (\frac{h}{\sigma_0 ( {\bm x},{\bm w})} \right ) \right \} p({\bm w}) \text{d} {\bm w}.
\end{align*}
Here, from Taylor's expansion, for any $ a <b$, it holds that
$$
\Phi (b) = \Phi (a) + \phi (c) (b-a) \leq \Phi (a) + \frac{1}{\sqrt{2 \pi} } (b-a) \leq \Phi (a) + (b-a),
$$
where $ c \in (a,b)$.
Therefore, we have
\begin{align}
\tilde{\mu} ({\bm x} ) \leq  \int _{\Omega} \frac{  2 \eta }{\sigma_0 ( {\bm x}, {\bm w} ) } p({\bm w}) \text{d} {\bm w} \leq
 \int _{\Omega} \frac{  2 \eta }{\sigma_{0,min} } p({\bm w}) \text{d} {\bm w} = \frac{  2 \eta }{\sigma_{0,min} }.
\label{eq:tildemubound}
\end{align}
Moreover, $\mathbb{V}  [ \tilde{p}_{2 \eta} ({\bm x} ) ]$ can be bounded as
\begin{align}
&\mathbb{V}  [ \tilde{p}_{2 \eta} ({\bm x} ) ] \nonumber \\
&=  \int _{\Omega} \int _{\Omega}  {\rm Cov}
[  \1\left[  h + 2 \eta \geq  f(\bm{x}, \bm{w}) > h\right] ,  \1\left[  h + 2\eta \geq  f(\bm{x}, \bm{w}^\prime) > h\right] ] p({\bm w}) p({\bm w}^\prime) \text{d} {\bm w}  \text{d} {\bm w^\prime} \nonumber \\
&\leq
\int _{\Omega} \int _{\Omega} \frac{
\mathbb{V}[  \1\left[  h+ 2\eta \geq  f(\bm{x}, \bm{w}) > h\right] ]+ \mathbb{V}  [ \1\left[  h+2 \eta \geq  f(\bm{x}, \bm{w}^\prime) > h\right] ]}{2}  p({\bm w}) p({\bm w}^\prime) \text{d} {\bm w}  \text{d} {\bm w}^\prime \nonumber \\
&= \int _{\Omega} \mathbb{V}[  \1\left[  h+ 2\eta \geq  f(\bm{x}, \bm{w}) > h\right] ] p({\bm w})   \text{d} {\bm w} \nonumber \\
&= \int _{\Omega}  \left \{ \Phi \left (\frac{h+2 \eta}{\sigma_0 ( {\bm x},{\bm w})} \right ) - \Phi \left (\frac{h}{\sigma_0 ( {\bm x},{\bm w})} \right ) \right \}     \left \{1- \Phi \left (\frac{h+2\eta}{\sigma_0 ( {\bm x},{\bm w})} \right ) + \Phi \left (\frac{h}{\sigma_0 ( {\bm x},{\bm w})} \right ) \right \} p({\bm w})   \text{d} {\bm w}  \nonumber \\
&\leq
\int _{\Omega} \left \{ \Phi \left (\frac{h+2 \eta}{\sigma_0 ( {\bm x},{\bm w})} \right ) - \Phi \left (\frac{h}{\sigma_0 ( {\bm x},{\bm w})} \right ) \right \} p({\bm w}) \text{d} {\bm w} \nonumber \\
&= \tilde{\mu} ({\bm x} ) \leq  \frac{  2 \eta }{\sigma_{0,min} }. \label{eq:tildevarbound}
\end{align}
Hence, by substituting \eqref{eq:tildemubound} and \eqref{eq:tildevarbound} into \eqref{eq:prior_bound}, we get
$$
\tilde{p}_{2 \eta} ({\bm x} ) <   \frac{  2 \eta  }{\sigma_{0,min} }+\sqrt{ \frac{ 2|\mathcal{X} | 2 \eta   }{\delta   \sigma_{0,min}}  }.
$$
Thus, from the assumption, we have
$$
\tilde{p}_{2 \eta} ({\bm x} ) < \frac{\epsilon}{2} + \sqrt{   \frac{\epsilon^2}{4}} = \epsilon.
$$
\end{proof}

\begin{lemma}\label{lem:fin_union}
   Let  $\delta \in (0, 1)$, $m \geq 2$, $\eta \geq 0$ and
    $\beta_t = \frac{|\mathcal{X}|\pi^2 t^2}{3\delta}$. Then, with probability at least $1-\delta/2$, it holds that
    \begin{equation*}
        |p_{t-1; \eta}(\bm{x}) - \mu_{t-1;\eta}^{(p)}(\bm{x})| < \beta_t^{1/m} \gamma_{t-1;\eta}^{2/m} ({\bm x}),~\forall \bm{x} \in \mathcal{X},~\forall t \geq 1.
    \end{equation*}
\end{lemma}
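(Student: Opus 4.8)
The plan is to combine a conditional Chebyshev bound with a union bound over $\mathcal{X}$ and over $t \ge 1$. The starting point is the fact derived in Appendix~\ref{derivation_mean_upper} (the $\eta$-version of Proposition~\ref{prop:mean_var_pt1}): conditioned on the observations $\{((\bm{x}_i,\bm{w}_i),y_i)\}_{i=1}^{t-1}$, the random variable $p_{t-1;\eta}(\bm{x})$ has conditional mean $\mu_{t-1;\eta}^{(p)}(\bm{x})$ and conditional variance at most $\gamma_{t-1;\eta}^2(\bm{x})$. Since $\mu_{t-1;\eta}^{(p)}(\bm{x})$, $\gamma_{t-1;\eta}^2(\bm{x})$ and hence the width $\beta_t^{1/m}\gamma_{t-1;\eta}^{2/m}(\bm{x})$ are all measurable with respect to the data up to step $t-1$, I would apply Chebyshev's inequality conditionally on that data for each fixed $\bm{x}$ and $t$, and only afterwards average over the data to obtain an unconditional statement.

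Fixing $\bm{x}$ and $t$, the quantitative core is to show that the conditional probability of the bad event $\{|p_{t-1;\eta}(\bm{x}) - \mu_{t-1;\eta}^{(p)}(\bm{x})| \ge \beta_t^{1/m}\gamma_{t-1;\eta}^{2/m}(\bm{x})\}$ is at most $1/\beta_t$, and I would split into two cases according to the (data-dependent) value of $\beta_t\gamma_{t-1;\eta}^2(\bm{x})$. When $\beta_t\gamma_{t-1;\eta}^2(\bm{x}) > 1$, the width equals $(\beta_t\gamma_{t-1;\eta}^2(\bm{x}))^{1/m} > 1$, while $|p_{t-1;\eta}(\bm{x}) - \mu_{t-1;\eta}^{(p)}(\bm{x})| \le 1$ because both quantities lie in $[0,1]$; hence the bad event is empty and the strict inequality holds surely. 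When $\beta_t\gamma_{t-1;\eta}^2(\bm{x}) \le 1$, Chebyshev's inequality with $\tau = \beta_t^{1/m}\gamma_{t-1;\eta}^{2/m}(\bm{x})$ gives a conditional failure probability at most $\gamma_{t-1;\eta}^2(\bm{x})/\tau^2 = (\gamma_{t-1;\eta}^2(\bm{x}))^{1-2/m}/\beta_t^{2/m}$, and since $m \ge 2$ makes the exponent $1-2/m$ nonnegative, the constraint $\beta_t\gamma_{t-1;\eta}^2(\bm{x}) \le 1$ bounds this by $1/\beta_t$.

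With the uniform per-pair bound $1/\beta_t$ in hand, I would remove the conditioning by taking expectation over the observations (using that $\beta_t$ is deterministic) and then apply a union bound over $\bm{x} \in \mathcal{X}$ and $t \ge 1$. With $\beta_t = |\mathcal{X}|\pi^2 t^2/(3\delta)$, the total failure probability is at most $\sum_{t \ge 1} |\mathcal{X}|/\beta_t = \sum_{t \ge 1} 3\delta/(\pi^2 t^2) = (3\delta/\pi^2)(\pi^2/6) = \delta/2$, giving the claimed $1-\delta/2$ confidence simultaneously for all $\bm{x}$ and all $t$.

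The main obstacle I anticipate is twofold. First, one must correctly handle the regime where the credible width $\beta_t^{1/m}\gamma_{t-1;\eta}^{2/m}(\bm{x})$ already exceeds $1$: there Chebyshev's inequality is vacuous, and the trivial bound $|p_{t-1;\eta}-\mu_{t-1;\eta}^{(p)}| \le 1$ must be used instead (this is precisely why the exponent $2/m$ on the variance bound, rather than a plain standard deviation, appears in the width). Second, one must be careful that the mean, the variance bound, and the width are data-dependent random objects generated along the adaptively chosen queries, so Chebyshev has to be applied conditionally on the relevant filtration before the expectation and union bound are taken.
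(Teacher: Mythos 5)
Your proof is correct and follows essentially the same route as the paper's: a Chebyshev-type moment bound exploiting $|p_{t-1;\eta}(\bm{x}) - \mu_{t-1;\eta}^{(p)}(\bm{x})| \le 1$ to obtain a per-pair (in $\bm{x}$ and $t$) failure probability of $1/\beta_t$, followed by a union bound over $\mathcal{X}$ and $t \ge 1$ using $\sum_{t\ge 1} t^{-2} = \pi^2/6$ to land at total failure probability $\delta/2$. The only cosmetic difference is that the paper avoids your case split on whether $\beta_t\gamma_{t-1;\eta}^2(\bm{x})$ exceeds $1$ by applying Markov's inequality directly to $|p_{t-1;\eta}(\bm{x})-\mu_{t-1;\eta}^{(p)}(\bm{x})|^m$ and using $\mathbb{E}[|p_{t-1;\eta}(\bm{x})-\mu_{t-1;\eta}^{(p)}(\bm{x})|^m] \le \mathbb{E}[|p_{t-1;\eta}(\bm{x})-\mu_{t-1;\eta}^{(p)}(\bm{x})|^2] \le \gamma_{t-1;\eta}^2(\bm{x})$ (valid since $m \ge 2$ and the deviation is bounded by $1$), which yields the same $1/\beta_t$ in one stroke.
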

\begin{proof}
From Chebyshev's inequality and LemmaA.2 in  \cite{iwazaki2019bayesian},
noting that $|p_{t-1; \eta}(\bm{x}) - \mu_{t-1;\eta}^{(p)}(\bm{x})| \leq 1$ the inequality holds for any $\tau > 0$, $t \geq 1$ and $\bm{x} \in \mathcal{X}$:
    \begin{align}
        Pr\left\{ |p_{t-1; \eta}(\bm{x}) - \mu_{t-1;\eta}^{(p)}(\bm{x})| \geq \tau \right\}
 &= Pr\left\{ |p_{t-1; \eta}(\bm{x}) - \mu_{t-1;\eta}^{(p)}(\bm{x})|^m \geq \tau^m \right\} \nonumber \\
&\leq \frac{\mathbb{E} [    |p_{t-1; \eta}(\bm{x}) - \mu_{t-1;\eta}^{(p)}(\bm{x})|^m   ]  }{\tau^m} \nonumber \\
&\leq \frac{\mathbb{E} [    |p_{t-1; \eta}(\bm{x}) - \mu_{t-1;\eta}^{(p)}(\bm{x})|^2   ]  }{\tau^m} \nonumber \\
& = \frac{\mathbb{V}   [p_{t-1; \eta}(\bm{x}) ]}{\tau^m} \leq \frac{\gamma_{t-1;\eta}^2 ({\bm x})}{\tau^m}. \label{eq:Cinequality}
    \end{align}
By replacing $\tau $ with $ (\delta/2)^{-1/m}\gamma_{t-1;\eta}^{2/m} ({\bm x})$, for any   $t \geq 1$ and $\bm{x} \in \mathcal{X}$ the
following inequality holds with probability at least $1-\delta/2$:
    \begin{equation*}
      %  \label{eq:cer_x_bound}
        |p_{t-1; \eta}(\bm{x}) - \mu_{t-1;\eta}^{(p)}(\bm{x})| < (\delta/2)^{-1/m} \gamma_{t-1;\eta}^{2/m} ({\bm x}).
    \end{equation*}
Therefore, by replacing $\delta$ with $6 \delta /(|\mathcal{X}| \pi^2 t^2) $, with probability at least $1- 3 \delta /(|\mathcal{X}| \pi^2 t^2)$ the following holds:
    \begin{equation*}
       % \label{eq:x_union}
        |p_{t-1; \eta}(\bm{x}) - \mu_{t-1;\eta}^{(p)}(\bm{x})| <
(|\mathcal{X}| \pi^2 t^2 /(3 \delta) )^{1/m} \gamma_{t-1;\eta}^{2/m} ({\bm x})
=\beta^{1/m}_t \gamma_{t-1;\eta}^{2/m} ({\bm x}).
    \end{equation*}
Thus, noting that $\sum_{t=1}^\infty t^{-2} = \pi^2/6$, with probability at least $1-\delta/2$ the following union bound holds:
    \begin{equation*}
    |p_{t-1; \eta}(\bm{x}) - \mu_{t-1;\eta}^{(p)}(\bm{x})| < \beta_t^{1/m} \gamma_{t-1;\eta}^{2/m} ({\bm x}),~\forall \bm{x} \in \mathcal{X},~\forall t \geq 1.
    \end{equation*}
\end{proof}
Note that when $\eta =0$, by replacing $\delta$ with $2 \delta$ we have Lemma \ref{lem:cred_int}.

\begin{lemma}\label{lem:r_bound}
Let $\epsilon >0$. Assume that there exists $t \geq 1$ such that $|p_{t-1; \eta}(\bm{x}) - \mu_{t-1;\eta}^{(p)}(\bm{x})| < \beta_t^{1/m} \gamma_{t-1;\eta}^{2/m}(\bm{x})$ for any $\bm{x} \in \mathcal{X}$.
Also assume that $\tilde{p}_{2 \eta} ({\bm x} ) < \epsilon $ for any ${\bm x} \in \mathcal{X}$.
   Then, the $\epsilon$-regret $r_t (\epsilon)$ satisfies the following inequality:
    \begin{equation*}
        r_t (\epsilon) \leq \frac{4C_4 \beta_t^{1/m}\sigma_{t-1}^2(\bm{x}_t, \bm{w}_t)}{\eta^{2+1/m}},
    \end{equation*}
   where $C_4 = \frac{m}{(2\pi)^{1/2m}}$.
\end{lemma}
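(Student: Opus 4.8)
The plan is to split the bound into two stages: first reduce the $\epsilon$-regret $r_t(\epsilon)$ to twice the credible-interval half-width of $p_{t-1;\eta}$ evaluated at $\bm x_t$, and then convert that half-width into the raw posterior variance $\sigma_{t-1}^2(\bm x_t,\bm w_t)$. For the first stage I would combine the two hypotheses with the sandwich bound \eqref{eq:basic_inequality}. From $p_{\rm upper}(\bm x^\ast)\le p_{t-1;\eta}(\bm x^\ast)+\tilde p_{2\eta}(\bm x^\ast)<p_{t-1;\eta}(\bm x^\ast)+\epsilon$ (using $\tilde p_{2\eta}<\epsilon$) together with $p_{\rm upper}(\bm x_t)\ge p_{t-1;\eta}(\bm x_t)$, the two $\epsilon$'s cancel, giving $r_t(\epsilon)<p_{t-1;\eta}(\bm x^\ast)-p_{t-1;\eta}(\bm x_t)$. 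A standard UCB argument then finishes this stage: the credible-interval hypothesis gives $p_{t-1;\eta}(\bm x^\ast)<\mu^{(p)}_{t-1;\eta}(\bm x^\ast)+\beta_t^{1/m}\gamma^{2/m}_{t-1;\eta}(\bm x^\ast)$, which is dominated by the same upper bound at $\bm x_t$ through the selection rule \eqref{eq:x_t}, while the matching lower bound at $\bm x_t$ is strictly below $p_{t-1;\eta}(\bm x_t)$; subtracting yields $r_t(\epsilon)<2\beta_t^{1/m}\gamma^{2/m}_{t-1;\eta}(\bm x_t)$.

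The second, harder stage bounds $\gamma^{2/m}_{t-1;\eta}(\bm x_t)$ by $\sigma_{t-1}^2(\bm x_t,\bm w_t)$. I would first discard the integral: since $\bm w_t$ is chosen in \eqref{eq:w_t} to maximize the Bernoulli-variance integrand $\Phi(1-\Phi)$ and $\int_\Omega p(\bm w)\,d\bm w=1$, we get $\gamma^2_{t-1;\eta}(\bm x_t)\le g(z_t)$ with $g(z)=\Phi(z)(1-\Phi(z))$ and $z_t=(\mu_{t-1}(\bm x_t,\bm w_t)-h_{t-1,\bm x_t,\bm w_t;\eta})/\sigma_{t-1}(\bm x_t,\bm w_t)$. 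The whole purpose of the $\eta$-modified threshold is to keep $z_t$ away from $0$: a two-line case check on $h_{t-1,\bm x,\bm w;\eta}$ shows $|\mu_{t-1}-h_{t-1,\bm x,\bm w;\eta}|\ge\eta$ in both branches, hence $|z_t|\ge\eta/\sigma_{t-1}(\bm x_t,\bm w_t)$. I would then bound $g$ by Mill's ratio, $g(z)\le 1-\Phi(|z|)\le\phi(|z|)/|z|=(2\pi)^{-1/2}|z|^{-1}e^{-z^2/2}$, raise to the power $1/m$, and trade the exponential for a polynomial via $e^{-z^2/(2m)}\le 2m/z^2$, producing $g(z_t)^{1/m}\le 2m(2\pi)^{-1/2m}|z_t|^{-(2+1/m)}$. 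Substituting $|z_t|\ge\eta/\sigma_{t-1}$ and then using $\sigma_{t-1}\le 1$ (from $k\le 1$) to collapse $\sigma_{t-1}^{2+1/m}$ to $\sigma_{t-1}^2$ lands exactly on $\gamma^{2/m}_{t-1;\eta}(\bm x_t)\le 2C_4\,\sigma_{t-1}^2(\bm x_t,\bm w_t)/\eta^{2+1/m}$ with $C_4=m/(2\pi)^{1/2m}$; plugging this into the stage-one inequality gives the claimed $r_t(\epsilon)\le 4C_4\beta_t^{1/m}\sigma_{t-1}^2(\bm x_t,\bm w_t)/\eta^{2+1/m}$.

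The main obstacle is matching the powers and the constant in the second stage. Crude tail bounds miss the target: the plain Mill's estimate $g(z)\le(2\pi)^{-1/2}|z|^{-1}$ produces $\sigma_{t-1}^{1/m}$, while a Chernoff bound $g(z)\le\tfrac12 e^{-z^2/2}$ yields the wrong constant. Only the sharp form $\phi(|z|)/|z|$, raised to the power $1/m$ \emph{before} applying $e^{-z^2/(2m)}\le 2m/z^2$, generates both the exponent $2+1/m$ on $|z_t|$ and the constant $2m(2\pi)^{-1/2m}$. The final, easy-to-overlook ingredient is the normalization $\sigma_{t-1}\le 1$, which is exactly what permits reducing $\sigma_{t-1}^{2+1/m}$ to $\sigma_{t-1}^2$ so that the summed bound can later be tied to the information gain $\kappa_T$.
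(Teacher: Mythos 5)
Your proposal is correct and follows essentially the same route as the paper's proof: the identical chain $r_t(\epsilon)\le p_{t-1;\eta}(\bm{x}^\ast)-p_{t-1;\eta}(\bm{x}_t)\le 2\beta_t^{1/m}\gamma_{t-1;\eta}^{2/m}(\bm{x}_t)$ via the sandwich inequality and the UCB selection rule, followed by bounding the integrand at $\bm{w}_t$, exploiting $|\mu_{t-1}-h_{t-1,\bm{x}_t,\bm{w}_t;\eta}|\ge\eta$, a Gaussian tail bound, $e^{-x}\le 1/x$, and $\sigma_{t-1}\le 1$. The only cosmetic difference is that you use the exact Mills ratio $\phi(|z|)/|z|$ and invoke $\sigma_{t-1}\le 1$ at the end, whereas the paper bounds $\int_{\eta/\sigma}^{\infty}\phi(a)\,\mathrm{d}a\le\eta^{-1}\phi(\eta/\sigma)$ (using $\sigma_{t-1}\le 1$ up front); both yield the same constant $C_4$.
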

\begin{proof}
    From the definition of $r_t (\epsilon)$, noting that $p_{t-1;\eta} ({\bm x}) \leq p_{\rm upper} ({\bm x}) \leq p_{t-1;\eta} ({\bm x}) +\tilde{p}_{2 \eta} ({\bm x})$, the following holds:
    \begin{align}
        r_t (\epsilon) &= p_{\text{upper}}(\bm{x}^*) -\epsilon - p_{\text{upper}}(\bm{x}_t) \nonumber \\
&\leq p_{t-1;\eta} ({\bm x}^\ast) + \tilde{p}_{2 \eta} ({\bm x} ^\ast ) -\epsilon -p_{t-1;\eta} ({\bm x}_t) \nonumber \\
            &\leq \mu_{t-1;\eta}^{(p)}(\bm{x}^\ast) + \beta_t^{1/m}\gamma_{t-1;\eta}^{2/m}(\bm{x}^\ast) - \left(\mu^{{(p)}}_{t-1;\eta}(\bm{x}_t) - \beta_t^{1/m}\gamma_{t-1;\eta}^{2/m}(\bm{x}_t)\right) \nonumber \\
            &\leq \mu_{t-1;\eta}^{(p)}(\bm{x}_t) + \beta_t^{1/m}\gamma_{t-1;\eta}^{2/m}(\bm{x}_t) - \left(\mu_{t-1;\eta}^{(p)}(\bm{x}_t) - \beta_t^{1/m}\gamma_{t-1;\eta}^{2/m}(\bm{x}_t)\right) \nonumber \\
            \label{eq:r_upper}
            &= 2\beta_{t}^{1/m}\gamma_{t-1;\eta}^{2/m}(\bm{x}_t).
    \end{align}
    Moreover, from the definitions of $\gamma_{t-1;\eta}$ and  $\bm{w}_t$, $ \gamma_{t-1;\eta}^2(\bm{x}_t)$ can be bounded as follows:
    \begin{align*}
&        \gamma_{t-1;\eta}^2(\bm{x}_t) \\
&= \int_{\Omega} \Phi\left(\frac{\mu_{t-1}(\bm{x}_t, \bm{w}) - h_{t-1, \bm{x}_t, \bm{w}; \eta}}{\sigma_{t-1}(\bm{x}_t, \bm{w})}\right)
        \left(1 - \Phi\left(\frac{\mu_{t-1}(\bm{x}_t, \bm{w}) - h_{t-1, \bm{x}_t, \bm{w};\eta}}{\sigma_{t-1}(\bm{x}_t, \bm{w})}\right)\right) p(\bm{w}) \text{d}\bm{w}\\
        &\leq \Phi\left(\frac{\mu_{t-1}(\bm{x}_t, \bm{w}_t) - h_{t-1, \bm{x}_t, \bm{w}_t; \eta}}{\sigma_{t-1}(\bm{x}_t, \bm{w}_t)}\right)
        \left(1 - \Phi\left(\frac{\mu_{t-1}(\bm{x}_t, \bm{w}_t) - h_{t-1, \bm{x}_t, \bm{w}_t;\eta}}{\sigma_{t-1}(\bm{x}_t, \bm{w}_t)}\right)\right) \\
      %  &\leq \Phi\left(\frac{\epsilon_t}{\sigma_{t-1}(\bm{x}_t, \bm{w}_t)}\right)
     %   \left(1 - \Phi\left(\frac{\epsilon_t}{\sigma_{t-1}(\bm{x}_t, \bm{w}_t)}\right)\right) \\
        &\leq \Phi\left(\frac{-\eta}{\sigma_{t-1}(\bm{x}_t, \bm{w}_t)}\right).
    \end{align*}
Furthermore, for the cumulative distribution function of standard Normal distribution, the following holds:
    \begin{align*}
        \Phi\left(\frac{-\eta}{\sigma_{t-1}(\bm{x}_t, \bm{w}_t)}\right) &= \int_{-\infty}^{-\frac{\eta}{\sigma_{t-1}(\bm{x}_t, \bm{w}_t)}} \phi(a) \text{d}a \\
                        &= \int_{\frac{\eta}{\sigma_{t-1}(\bm{x}_t, \bm{w}_t)}}^{\infty}\phi(a) \text{d}a \\
                        &\leq \int_{\frac{\eta}{\sigma_{t-1}(\bm{x}_t, \bm{w}_t)}}^{\infty} \frac{a\phi(a)}{\eta} \text{d}a \\
                        &= \frac{1}{\eta\sqrt{2\pi}}\exp{\left (-\frac{\eta^2}{2\sigma_{t-1}^2(\bm{x}_t, \bm{w}_t)}\right )},
    \end{align*}
    where $\phi(\cdot)$ is the probability density function of standard Normal distribution.
In addition, noting that  $k( ({\bm x}, {\bm w}   ),  ({\bm x}, {\bm w}   )  ) \leq 1$ for any $  ({\bm x}, {\bm w}   )$,
the inequality for the third row can be obtained by using $\sigma_{t-1}(\bm{x}_t,{\bm w}_t) \leq 1$, i.e., $\eta/\sigma_{t-1}(\bm{x}_t,{\bm w}_t)  \geq \eta$.
Moreover, since $e^{-x} \leq 1/x$ for any positive number $x$, the following inequality holds:
    \begin{align}
        \gamma_{t-1;\eta}^{2/m}(\bm{x}_t) &\leq (\eta\sqrt{2\pi})^{-1/m}\exp{\left (-\frac{\eta^2}{2m\sigma_{t-1}^2(\bm{x}_t, \bm{w}_t)} \right )} \nonumber \\
        \label{eq:gamma_upper}
        &\leq (\eta\sqrt{2\pi})^{-1/m} \frac{2m\sigma_{t-1}^2(\bm{x}_t, \bm{w}_t)}{\eta^2}.
    \end{align}
Therefore, by using (\ref{eq:r_upper}) and  (\ref{eq:gamma_upper}), we get the desired inequality.
\end{proof}

\begin{lemma}\label{lem:var_bound}
Fix $T \geq 1$. Then, the following inequality holds:
    %$T$ラウンド目までに観測された点における事後分散の総和と, Maximum Information Gain $\kappa_T$に
    %対して,  以下の不等式が成り立つ:
    \begin{equation}
        \label{eq:var_bound}
        \sum_{t=1}^T \sigma_{t-1}^2(\bm{x}_t, \bm{w}_t) \leq \frac{2}{ \log (1 + \sigma^{-2})  } \kappa_T.
    \end{equation}
\end{lemma}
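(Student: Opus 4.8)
The plan is to connect the sum of posterior variances to the maximum information gain $\kappa_T$ via the standard argument of Srinivas et al. First I would recall the closed-form expression for the mutual information between $f$ and the observations collected along the realized query sequence $A = \{(\bm{x}_1,\bm{w}_1),\ldots,(\bm{x}_T,\bm{w}_T)\}$. Because $f$ is a GP and the noise $\varepsilon_t$ is independent Gaussian, the chain rule of mutual information together with the fact that each $y_t$ is conditionally (on the first $t-1$ observations) Gaussian with variance $\sigma_{t-1}^2(\bm{x}_t,\bm{w}_t) + \sigma^2$ yields
\begin{equation*}
    I(\bm{y}_A; f) = \frac{1}{2}\sum_{t=1}^T \log\left(1 + \sigma^{-2}\sigma_{t-1}^2(\bm{x}_t,\bm{w}_t)\right).
\end{equation*}

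Next I would bound each posterior variance by its corresponding log term. The key elementary fact is that $u \mapsto u/\log(1+u)$ is increasing on $(0,\infty)$, so for every $u \in [0,\sigma^{-2}]$ one has $u \leq \frac{\sigma^{-2}}{\log(1+\sigma^{-2})}\log(1+u)$. I would apply this with $u = \sigma^{-2}\sigma_{t-1}^2(\bm{x}_t,\bm{w}_t)$, which indeed lies in $[0,\sigma^{-2}]$ because the normalization $k((\bm{x},\bm{w}),(\bm{x},\bm{w})) \leq 1$ forces $\sigma_{t-1}^2(\bm{x}_t,\bm{w}_t) \leq 1$ (the posterior variance never exceeds the prior variance). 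Dividing through by $\sigma^{-2}$ gives
\begin{equation*}
    \sigma_{t-1}^2(\bm{x}_t,\bm{w}_t) \leq \frac{1}{\log(1+\sigma^{-2})}\log\left(1+\sigma^{-2}\sigma_{t-1}^2(\bm{x}_t,\bm{w}_t)\right).
\end{equation*}

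Summing over $t=1,\ldots,T$ and substituting the information-gain identity turns the right-hand side into $\frac{2}{\log(1+\sigma^{-2})}\,I(\bm{y}_A;f)$. Finally, since $A \subset \mathcal{X}\times\Omega$ has cardinality $T$, the definition $\kappa_T = \max_{A : |A|=T} I(\bm{y}_A;f)$ gives $I(\bm{y}_A;f) \leq \kappa_T$, which establishes \eqref{eq:var_bound}.

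I expect the only slightly delicate step to be the derivation of the closed-form information-gain identity, which relies on the Gaussianity of the posterior and the independence of the noise; however, this is a standard computation that can be invoked directly from Srinivas et al. Everything else is elementary, and the crucial boundedness $\sigma_{t-1}^2(\bm{x}_t,\bm{w}_t) \leq 1$ that licenses the monotonicity inequality follows immediately from the kernel normalization $k \leq 1$ assumed in \S\ref{sec:Preliminaries}.
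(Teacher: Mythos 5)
Your proof is correct and follows essentially the same route as the paper: the paper simply cites Lemmas 5.3 and 5.4 of Srinivas et al.\ for the information-gain identity and the variance-to-log bound, whereas you sketch their derivations (via the chain rule and the monotonicity of $u/\log(1+u)$, using $\sigma_{t-1}^2 \leq 1$ from the kernel normalization), then conclude with $I(\bm{y}_A;f) \leq \kappa_T$ exactly as the paper does.
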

\begin{proof}
From Lemma 5.3 in \cite{srinivas2009gaussian}, $I(\bm{y}_T; f)$ can be expressed as
    \begin{equation}
        I(\bm{y}_T; f) = \frac{1}{2} \sum_{t=1}^T \log(1 + \sigma^{-2}\sigma_{t-1}(\bm{x}_t, \bm{w}_t)). \label{eq:IG}
    \end{equation}
Similarly, from Lemma 5.4 in \cite{srinivas2009gaussian}, it holds that
    \begin{equation}
        \sigma_{t-1}^2(\bm{x}_t, \bm{w}_t) \leq \frac{\log(1 + \sigma^{-2}\sigma_{t-1}(\bm{x}_t, \bm{w}_t))}{ \log (1 + \sigma^{-2})}. \label{eq:PVbound}
    \end{equation}
    Therefore, using \eqref{eq:IG} and \eqref{eq:PVbound} we have
    \begin{align*}
        \sum_{t=1}^T\sigma_{t-1}^2(\bm{x}_t, \bm{w}_t)
        &\leq \frac{2}{ \log ( 1 + \sigma^{-2})} I(\bm{y}_T; f) \\
        &\leq \frac{2}{ \log (1 + \sigma^{-2})} \kappa_T.
    \end{align*}
\end{proof}

%\begin{lemma}\label{lem:r_ig_bound}
%    Let $\delta \in (0, 1),~m \textcolor{red}{\geq 2}$, and let $\beta_t$ be  defined as in Lemma\ref{lem:fin_union}.
%Moreover, let $\epsilon >0$ and let $h'$ be defined as in Lemma \ref{lem:prior_pro}.
% Then, the following  holds with probability at least $1-\delta$:
 %   \begin{equation}
  %      \sum_{t=1}^T r_t (\epsilon) \leq \frac{C_1 \beta_T^{1/m}\kappa_T}{\epsilon^{2+1/m}},~\forall T \geq 1, \label{eq:CRboundlemma}
   % \end{equation}
  %  where $C_1 = \frac{8m}{(2\pi)^{1/2m}\log(1+\sigma^{-2})}$.
%\end{lemma}
Finally, we prove Theorem \ref{thm:fin_regret}.
\begin{proof}
Let $\delta \in (0, 1),~m \geq 2$, and let $\beta_t$ be  defined as in Lemma \ref{lem:fin_union}.
Moreover, let $\epsilon >0$ and let $2 \eta$ be defined as in Lemma \ref{lem:prior_pro}.
Then, from Lemma \ref{lem:prior_pro}, \ref{lem:fin_union} and \ref{lem:r_bound}, with probability at least $1-\delta$, the following inequality holds for any $t \geq 1$:
    %Lemma\ref{lem:r_bound}と, $\beta_t$の単調非減少性, および$\epsilon_t$の単調非増加性より,
    \begin{align*}
        r_t (\epsilon )&\leq \frac{4C_4 \beta_t^{1/m} \sigma_{t-1}^2(\bm{x}_t, \bm{w}_t)}{\eta^{2+1/m}} \\
            &\leq \frac{4C_4 \beta_T^{1/m} \sigma_{t-1}^2(\bm{x}_t, \bm{w})}{\eta^{2+1/m}},  %~\forall t \geq 1
    \end{align*}
where the second inequality is obtained by using monotonicity of $\beta_t$.
   Therefore, from Lemma \ref{lem:var_bound}, the following holds with probability at least $1-\delta$:
    \begin{align*}
      R_T (\epsilon) =  \sum_{t=1}^T r_t (\epsilon) &\leq \frac{4C_4 \beta_T^{1/m} \sum_{t=1}^T \sigma_{t-1}^2(\bm{x}_t, \bm{w}_t)}{\eta^{2+1/m}} \\
            &\leq \frac{8C_4 \beta_T^{1/m} \kappa_T}{\log (1 + \sigma^{-2})\eta^{2+1/m}},~\forall T \geq 1.
    \end{align*}
   Therefore, noting that $\frac{8C_4}{ \log (1 + \sigma^{-2})} = \frac{8m}{(2\pi)^{\frac{1}{2m}}\log(1+\sigma^{-2})}=C_1$ we get Theorem \ref{thm:fin_regret}.
\end{proof}
%Finally, when $\epsilon_t = (\log t)^{-\frac{1}{2+1/m}} $, the right hand side of \eqref{eq:CRboundlemma} in Lemma\ref{lem:r_ig_bound}
 %is $C_1 \beta^{1/m}_T \kappa_T \log T$.
%Therefore, we get Theorem\ref{thm:fin_regret}.
 %by substituting $\epsilon_t = (\log t)^{-\frac{1}{2+1/m}} $ into \eqref{eq:CRboundlemma} in Lemma\ref{lem:r_ig_bound}, we get
%Theorem\ref{thm:fin_regret}.

\subsection{Regret Bound of BPT-TS}
In this subsection, we prove Theorem \ref{thm:fin_bayes_regret}.
The basic ideas for the proof are based on \cite{russo2014learning}.
First, we show the  following lemmas.
\begin{lemma}\label{lem:split_regret}
   Let  $T \geq 1$, $\epsilon >0$ and $\eta >0$. Then, for any sequence of upper confidence bounds $\left\{U_t: \mathcal{X} \rightarrow \mathbb{R} \right\}_{t \geq 1}$, the following holds:
    \begin{equation*}
       BR_T (\epsilon) \leq \sum_{t=1}^T\mathbb{E}\left[ U_t(\bm{x}_t) - p_{t-1;\eta}(\bm{x}_t) \right]
                + \sum_{t=1}^T \mathbb{E}\left[ p_{t-1; \eta}(\bm{x}^\ast) - U_t(\bm{x}^\ast) \right]
+ \sum_{t=1}^T \mathbb{E} [   \tilde{p}_{2 \eta} ({\bm x}^\ast) - \epsilon].
    \end{equation*}
\end{lemma}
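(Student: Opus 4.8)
The plan is to bound each $\epsilon$-regret term $r_t(\epsilon) = p_{\rm upper}(\bm{x}^\ast) - \epsilon - p_{\rm upper}(\bm{x}_t)$ using only the tractable surrogate $p_{t-1;\eta}$, the slack $\tilde{p}_{2\eta}$, and the arbitrary sequence $U_t$, and then to exploit the defining symmetry of posterior sampling when passing to expectations. First I would apply the sandwich bound \eqref{eq:basic_inequality} at both $\bm{x}^\ast$ and $\bm{x}_t$, namely $p_{\rm upper}(\bm{x}^\ast) \le p_{t-1;\eta}(\bm{x}^\ast) + \tilde{p}_{2\eta}(\bm{x}^\ast)$ and $p_{\rm upper}(\bm{x}_t) \ge p_{t-1;\eta}(\bm{x}_t)$, to obtain
\begin{equation*}
r_t(\epsilon) \le \bigl(p_{t-1;\eta}(\bm{x}^\ast) - p_{t-1;\eta}(\bm{x}_t)\bigr) + \bigl(\tilde{p}_{2\eta}(\bm{x}^\ast) - \epsilon\bigr).
\end{equation*}
I would then insert the upper confidence function $U_t$ by adding and subtracting $U_t(\bm{x}^\ast)$ and $U_t(\bm{x}_t)$, which regroups the right-hand side into exactly the three target summands plus a residual:
\begin{equation*}
r_t(\epsilon) \le \bigl(U_t(\bm{x}_t) - p_{t-1;\eta}(\bm{x}_t)\bigr) + \bigl(p_{t-1;\eta}(\bm{x}^\ast) - U_t(\bm{x}^\ast)\bigr) + \bigl(\tilde{p}_{2\eta}(\bm{x}^\ast) - \epsilon\bigr) + \bigl(U_t(\bm{x}^\ast) - U_t(\bm{x}_t)\bigr).
\end{equation*}

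The crux is to show that the residual term vanishes in expectation, i.e.\ $\mathbb{E}[U_t(\bm{x}^\ast) - U_t(\bm{x}_t)] = 0$. Here I would let $\mathcal{F}_{t-1}$ denote the history $\{((\bm{x}_i, \bm{w}_i), y_i)\}_{i=1}^{t-1}$ and use the fact that $U_t$ is a deterministic, hence $\mathcal{F}_{t-1}$-measurable, function. The key structural property of BPT-TS is that $\bm{x}_t$ is the argmax over $\mathcal{X}$ of the PTR functional applied to a sample $\hat{f}$ drawn from the posterior $\mathcal{GP}(\mu_{t-1}, k_{t-1})$, while $\bm{x}^\ast$ is the argmax of the same functional applied to the true $f$. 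Since posterior sampling makes $\hat{f}$ and $f$ share the same conditional law given $\mathcal{F}_{t-1}$, the induced maximizers satisfy $\bm{x}_t \mid \mathcal{F}_{t-1} \stackrel{d}{=} \bm{x}^\ast \mid \mathcal{F}_{t-1}$. Consequently $\mathbb{E}[U_t(\bm{x}_t) \mid \mathcal{F}_{t-1}] = \mathbb{E}[U_t(\bm{x}^\ast) \mid \mathcal{F}_{t-1}]$, and the tower property yields $\mathbb{E}[U_t(\bm{x}^\ast) - U_t(\bm{x}_t)] = 0$.

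Finally I would take expectations in the displayed bound on $r_t(\epsilon)$, discard the residual, and sum over $t = 1, \dots, T$ to conclude
\begin{equation*}
BR_T(\epsilon) = \sum_{t=1}^T \mathbb{E}[r_t(\epsilon)] \le \sum_{t=1}^T \mathbb{E}[U_t(\bm{x}_t) - p_{t-1;\eta}(\bm{x}_t)] + \sum_{t=1}^T \mathbb{E}[p_{t-1;\eta}(\bm{x}^\ast) - U_t(\bm{x}^\ast)] + \sum_{t=1}^T \mathbb{E}[\tilde{p}_{2\eta}(\bm{x}^\ast) - \epsilon],
\end{equation*}
which is the claim. I expect the main obstacle to be a clean justification of the probability-matching identity: one must verify that the argmax operation applied to $\hat{f}$ and to $f$ genuinely transfers the equality in distribution of the sampled and true functions to their maximizers, which requires care about measurability and tie-breaking of the argmax, whereas everything else reduces to elementary algebra and the monotonicity already recorded in \eqref{eq:basic_inequality}.
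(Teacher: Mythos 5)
Your proposal is correct and follows essentially the same route as the paper: the sandwich bound $p_{t-1;\eta} \le p_{\rm upper} \le p_{t-1;\eta} + \tilde{p}_{2\eta}$, the add-and-subtract decomposition around $U_t(\bm{x}^\ast)$ and $U_t(\bm{x}_t)$, and the probability-matching argument (conditioned on the history $H_t$, the posterior-sampled maximizer $\bm{x}_t$ and the true maximizer $\bm{x}^\ast$ are identically distributed while $U_t$ is $H_t$-measurable, so the residual vanishes by the tower property). The only difference is cosmetic: you take expectations after assembling the pointwise bound rather than before regrouping, and you flag the measurability/tie-breaking caveat for the argmax, which the paper passes over silently.
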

\begin{proof}
Noting that $p_{t-1;\eta} ({\bm x} ) \leq p_{\rm upper} ({\bm x}) \leq p_{t-1;\eta} ({\bm x} ) + \tilde{p}_{2 \eta} ({\bm x})$,  the following inequality holds:
$$
p_{\rm upper} ({\bm x}^\ast ) -\epsilon - p_{\rm upper} ({\bm x}_t ) \leq  p_{t-1;\eta} ({\bm x}^\ast ) + \tilde{p}_{2 \eta} ({\bm x}^\ast) -\epsilon - p_{t-1;\eta} ({\bm x}_t ).
$$
Therefore, we get
\begin{align}
&\mathbb{E} [p_{\rm upper} ({\bm x}^\ast ) -\epsilon - p_{\rm upper} ({\bm x}_t ) ] \nonumber \\
&\leq \mathbb{E}[p_{t-1;\eta} ({\bm x}^\ast) - p_{t-1;\eta} ({\bm x}_t )] + \mathbb{E}[ \tilde{p}_{2 \eta} ({\bm x}^\ast ) -\epsilon] \nonumber \\
&=
\mathbb{E}[p_{t-1;\eta} ({\bm x}^\ast) - U_t ({\bm x}^\ast ) ] + \mathbb{E}[U_t ({\bm x}^\ast )-U_t ({\bm x}_t )]  +\mathbb{E}[U_t ({\bm x}_t)- p_{t-1;\eta} ({\bm x}_t )] + \mathbb{E}[ \tilde{p}_{2 \eta} ({\bm x}^\ast ) -\epsilon]. \label{eq:Ber1}
\end{align}
Here, let $H_t =\{ {\bm x}_1, {\bm w}_1, y_1,\ldots, {\bm x}_{t-1}, {\bm w}_{t-1}, y_{t-1} \}$. Then, conditioned on $H_t$,
${\bm x}^\ast$ and ${\bm x}_t $ have the same distribution, and $U_t $ is a deterministic function.  Therefore, it holds that
$\mathbb{E} [ U_t ({\bm x}^\ast) \mid H_t] = \mathbb{E} [U_t ({\bm x}_t ) \mid H_t]$.
This implies that
\begin{align}
\mathbb{E}[U_t ({\bm x}^\ast )-U_t ({\bm x}_t )]  &=   \mathbb{E} [ \mathbb{E}[U_t ({\bm x}^\ast) -U_t ({\bm x}_t )  \mid H_t]] \nonumber \\
&=\mathbb{E}[  \mathbb{E}[ U_t ({\bm x}^\ast ) \mid H_t] -  \mathbb{E}[ U_t ({\bm x}_t ) \mid H_t]  ] =0. \label{eq:Ber2}
\end{align}
Thus, from \eqref{eq:Ber1} and \eqref{eq:Ber2}, we have
$$
\mathbb{E} [p_{\rm upper} ({\bm x}^\ast ) -\epsilon - p_{\rm upper} ({\bm x}_t ) ] \leq
\mathbb{E}[p_{t-1;\eta} ({\bm x}^\ast) - U_t ({\bm x}^\ast ) ]   +\mathbb{E}[U_t ({\bm x}_t)- p_{t-1;\eta} ({\bm x}_t )] + \mathbb{E}[ \tilde{p}_{2 \eta} ({\bm x}^\ast ) -\epsilon].
$$
Summing over $t$,  we get the desired inequality.
\end{proof}

\begin{lemma}\label{lem:expectation0}
Let $\epsilon >0$ and $2\eta =\min \{ \frac{\epsilon \sigma_{0,min}}{4}, \frac{\epsilon^3 \sigma_{0,min}}{32|\mathcal{X}|}\}$.
 Then, the following inequality holds for any $t \geq 1$:
$$
\mathbb{E}[ \tilde{p}_{2 \eta} ( {\bm x}^\ast ) - \epsilon ] \leq 0.
$$
\end{lemma}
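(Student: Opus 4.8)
The plan is to deduce the claim from the uniform prior bound already established in Lemma~\ref{lem:prior_pro}, rather than estimating $\tilde{p}_{2\eta}(\bm{x}^\ast)$ directly. The first observation is that $\bm{x}^\ast \in \mathcal{X}$ for every realization of the prior draw $f$, so pointwise $\tilde{p}_{2\eta}(\bm{x}^\ast) \leq M$, where $M := \max_{\bm{x} \in \mathcal{X}} \tilde{p}_{2\eta}(\bm{x})$. Hence it suffices to show $\mathbb{E}[M] \leq \epsilon$. Moreover, since $\tilde{p}_{2\eta}(\bm{x}) \leq \int_{\Omega} p(\bm{w})\,\mathrm{d}\bm{w} = 1$, the random variable $M$ takes values in $[0,1]$; in particular, if $\epsilon \geq 1$ the inequality $\mathbb{E}[\tilde{p}_{2\eta}(\bm{x}^\ast) - \epsilon] \leq 1 - \epsilon \leq 0$ is immediate, and I would dispose of this case first.

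For $\epsilon \in (0,1)$ I would split the expectation at the level $\epsilon/2$: writing $\mathbb{E}[M] = \mathbb{E}[M\,\1[M < \epsilon/2]] + \mathbb{E}[M\,\1[M \geq \epsilon/2]]$ and bounding the first term by $\tfrac{\epsilon}{2}$ and the second by $Pr(M \geq \epsilon/2)$ using $M \leq 1$, one obtains $\mathbb{E}[M] \leq \tfrac{\epsilon}{2} + Pr(M \geq \epsilon/2)$. It then remains only to verify the tail bound $Pr(M \geq \epsilon/2) \leq \epsilon/2$.

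The key step is to extract this tail bound from Lemma~\ref{lem:prior_pro} by the substitution $\epsilon \mapsto \epsilon/2$ and $\delta \mapsto \epsilon$, which is legitimate precisely because $\epsilon \in (0,1)$ makes $\epsilon$ a valid confidence level. Under this substitution the $\eta$ prescribed in Lemma~\ref{lem:prior_pro} becomes $\min\{\tfrac{(\epsilon/2)\sigma_{0,min}}{2},\, \tfrac{(\epsilon/2)^2\,\epsilon\,\sigma_{0,min}}{8|\mathcal{X}|}\} = \min\{\tfrac{\epsilon\sigma_{0,min}}{4},\, \tfrac{\epsilon^3\sigma_{0,min}}{32|\mathcal{X}|}\}$, which is exactly the $2\eta$ assumed in the present statement. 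The conclusion of Lemma~\ref{lem:prior_pro} then reads $Pr(\exists\, \bm{x}\in\mathcal{X}: \tilde{p}_{2\eta}(\bm{x}) \geq \epsilon/2) \leq \epsilon/2$, which is precisely $Pr(M \geq \epsilon/2) \leq \epsilon/2$. Combining with the split gives $\mathbb{E}[M] \leq \epsilon$, and therefore $\mathbb{E}[\tilde{p}_{2\eta}(\bm{x}^\ast) - \epsilon] \leq \mathbb{E}[M] - \epsilon \leq 0$.

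The only genuine subtlety is that $\bm{x}^\ast$ is itself random (a functional of the prior draw $f$), so the fixed-$\bm{x}$ Markov/Chebyshev estimate behind Lemma~\ref{lem:prior_pro} cannot be applied to $\bm{x}^\ast$ directly; routing the argument through the uniform maximum $M$ over the finite set $\mathcal{X}$ is what makes this sound. I expect the bookkeeping of the $(\epsilon,\delta)$ substitution to be the one place requiring care, whereas the expectation-splitting and the $M \leq 1$ truncation are entirely routine.
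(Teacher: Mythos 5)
Your proposal is correct and follows essentially the same route as the paper: both arguments reduce to the uniform bound of Lemma~\ref{lem:prior_pro} applied with $\epsilon\mapsto\epsilon/2$ and $\delta\mapsto\epsilon$, and your split of $\mathbb{E}[M]$ over $\{M<\epsilon/2\}$ versus $\{M\geq\epsilon/2\}$ is exactly the paper's decomposition over the event $H=\{f\mid\tilde{p}_{2\eta}(\bm{x})<\epsilon/2,\ \forall\bm{x}\in\mathcal{X}\}$ and its complement, with the same truncation $\tilde{p}_{2\eta}\leq 1$ on the bad event. Your explicit handling of the case $\epsilon\geq 1$ (needed so that $\delta=\epsilon$ is a valid confidence level) is a small point of extra care that the paper glosses over, but it does not change the argument.
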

\begin{proof}
Let $2 \eta $ be defined as in Lemma \ref{lem:expectation0}.
Note that this value is given by replacing $\epsilon $ and $\delta$ in Lemma \ref{lem:prior_pro} with $\epsilon/2$ and $\epsilon$, respectively.
Then, from Lemma \ref{lem:prior_pro}, with probability at least $1- \epsilon/2$, the following holds for any ${\bm x} \in \mathcal{X}$:
$$
\tilde{p}_{2 \eta} ({\bm x}) < \epsilon /2.
$$
Here,  we define an event $H$ as
$$
H= \{ f \mid \tilde{p}_{2 \eta} ({\bm x}) < \epsilon /2,\ \forall {\bm x} \in \mathcal{X} \}.
$$
Since the result of Lemma \ref{lem:prior_pro} is derived for the prior distribution,
the expected value $\mathbb{E}[\1 [ f \in H] ] $ satisfies $\mathbb{E} [\1 [ f \in H] ] \geq 1-\epsilon/2$,
where the expectation is taken with respect to the prior distribution $f$.
Moreover, for any $t \geq 1$, let ${\bm\theta}_t$ be a random  vector which contains the observation noise, optimal value ${\bm x}^\ast$ and any randomness of the algorithm.
Note that ${\bm\theta}_t$ does not contain $f$. Therefore, for any $t \geq 1$, noting that $0 \leq \tilde{p}_{2 \eta} ({\bm x}^\ast) \leq 1$ and the definition of $H$, the following holds:
\begin{align*}
\mathbb{E}[ \tilde{p}_{2 \eta} ( {\bm x}^\ast ) ] =
\mathbb{E}_{f, {\bm\theta}_t } [ \tilde{p}_{2 \eta} ( {\bm x}^\ast ) ] &= \mathbb{E}_{f, {\bm\theta}_t } [ \1[f \in H] \tilde{p}_{2 \eta} ( {\bm x}^\ast ) ]
+
\mathbb{E}_{f, {\bm\theta}_t } [ \1[f \notin H] \tilde{p}_{2 \eta} ( {\bm x}^\ast ) ] \\
&\leq  \mathbb{E}_{f, {\bm\theta}_t } [ \1[f \in H]  (\epsilon /2) ] +
\mathbb{E}_{f, {\bm\theta}_t } [ \1[f \notin H]  ] \\
&\leq  \mathbb{E}_{f, {\bm\theta}_t } [\epsilon /2 ] + \mathbb{E}_{f, {\bm\theta}_t } [ \1[f \notin H]  ] \\
&\leq \epsilon /2 +  \mathbb{E}_{f } [ \1[f \notin H]  ] \leq  \epsilon/2 + \epsilon / 2 = \epsilon.
\end{align*}
Hence,  we obtain $\mathbb{E}  [ \tilde{p}_{2 \eta} ( {\bm x}^\ast ) ] -\epsilon \leq 0$.
\end{proof}

\begin{lemma}\label{lem:second_term_bound}
    Let $m \geq 2$, $\eta>0$, $\beta_t = t^2|\mathcal{X}|$ and
$U_t(\bm{x}) = \mu_{t-1;\eta}^{(p)}(\bm{x}) + \beta_t^{1/m}\gamma^{2/m}_{t-1;\eta}(\bm{x})$.
Then, the following holds:
    \begin{equation*}
        \sum_{t=1}^T\mathbb{E}\left[p_{t-1;\eta}(\bm{x}^\ast) - U_t(\bm{x}^\ast)\right] \leq \pi^2/6, ~\forall T \geq 1.
    \end{equation*}
\end{lemma}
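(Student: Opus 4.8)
The plan is to follow the standard Thompson-sampling decomposition argument (as in \cite{russo2014learning}) and to control the ``optimistic gap'' $p_{t-1;\eta}(\bm{x}^\ast) - U_t(\bm{x}^\ast)$ by the probability that the random PTR measure overshoots its deterministic upper confidence bound. The crucial structural fact is that $U_t$ is a deterministic function of the history $H_t = \{\bm{x}_1,\bm{w}_1,y_1,\ldots,\bm{x}_{t-1},\bm{w}_{t-1},y_{t-1}\}$, so every probabilistic statement can be made conditionally on $H_t$, and the specific choice $\beta_t = t^2|\mathcal{X}|$ will turn out to be exactly what is needed to make the final sum converge.

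First I would remove the dependence on the random maximizer $\bm{x}^\ast$ by passing to positive parts over the whole finite domain:
\[
p_{t-1;\eta}(\bm{x}^\ast) - U_t(\bm{x}^\ast) \le \bigl(p_{t-1;\eta}(\bm{x}^\ast) - U_t(\bm{x}^\ast)\bigr)^+ \le \sum_{\bm{x}\in\mathcal{X}} \bigl(p_{t-1;\eta}(\bm{x}) - U_t(\bm{x})\bigr)^+,
\]
which holds deterministically since every summand is nonnegative and $\bm{x}^\ast\in\mathcal{X}$. After taking expectations, it then suffices to bound $\mathbb{E}\bigl[(p_{t-1;\eta}(\bm{x}) - U_t(\bm{x}))^+\bigr]$ for each fixed $\bm{x}$.

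Next, conditioning on $H_t$, the posterior of $f$ is $\mathcal{GP}(\mu_{t-1},k_{t-1})$, so by Proposition~\ref{prop:mean_var_pt1} the conditional mean of $p_{t-1;\eta}(\bm{x})$ equals $\mu_{t-1;\eta}^{(p)}(\bm{x})$ and its conditional variance is at most $\gamma_{t-1;\eta}^2(\bm{x})$. I would then reuse the Chebyshev-type bound \eqref{eq:Cinequality} established inside the proof of Lemma~\ref{lem:fin_union}, which gives, for any $\tau>0$,
\[
Pr\bigl\{p_{t-1;\eta}(\bm{x}) - \mu_{t-1;\eta}^{(p)}(\bm{x}) \ge \tau \mid H_t\bigr\} \le \frac{\gamma_{t-1;\eta}^2(\bm{x})}{\tau^m}.
\]
Choosing $\tau = \beta_t^{1/m}\gamma_{t-1;\eta}^{2/m}(\bm{x})$, so that $\mu_{t-1;\eta}^{(p)}(\bm{x}) + \tau = U_t(\bm{x})$, collapses the right-hand side to $1/\beta_t = 1/(t^2|\mathcal{X}|)$. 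Since $0 \le p_{t-1;\eta}(\bm{x}) \le 1$ and $U_t(\bm{x}) \ge 0$, the positive part never exceeds $1$, whence
\[
\mathbb{E}\bigl[(p_{t-1;\eta}(\bm{x}) - U_t(\bm{x}))^+ \mid H_t\bigr] \le Pr\{p_{t-1;\eta}(\bm{x}) > U_t(\bm{x}) \mid H_t\} \le \frac{1}{t^2|\mathcal{X}|}.
\]

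Finally, taking the outer expectation over $H_t$, summing the $|\mathcal{X}|$ terms to obtain $\mathbb{E}[p_{t-1;\eta}(\bm{x}^\ast) - U_t(\bm{x}^\ast)] \le t^{-2}$, and then summing over $t$ with $\sum_{t=1}^{\infty} t^{-2} = \pi^2/6$ yields the claim. I expect the only genuine subtlety to be bookkeeping: verifying that $p_{t-1;\eta}(\bm{x})$ is indeed the correct random object whose conditional law given $H_t$ has variance bounded by $\gamma_{t-1;\eta}^2(\bm{x})$ (so that \eqref{eq:Cinequality} legitimately applies conditionally), and observing that the value $\beta_t = t^2|\mathcal{X}|$ is calibrated precisely so that the per-point failure probability $1/(t^2|\mathcal{X}|)$ aggregates over $\mathcal{X}$ into the summable sequence $1/t^2$.
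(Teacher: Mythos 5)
Your proposal is correct and follows essentially the same route as the paper's proof: the same reduction to positive parts summed over the finite domain $\mathcal{X}$, the same conditional Chebyshev/moment bound (the paper's \eqref{eq:Cinequality}) applied given $H_t$ with $\tau=\beta_t^{1/m}\gamma_{t-1;\eta}^{2/m}(\bm{x})$, and the same aggregation $\sum_{t}\sum_{\bm{x}}(t^2|\mathcal{X}|)^{-1}=\sum_t t^{-2}\le\pi^2/6$. The only cosmetic difference is that you phrase the paper's indicator-function bound as a positive-part bound.
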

\begin{proof}
    Noting that $0 \leq p_{t-1; \eta} ({\bm x}) \leq 1$  and $U_t(\bm{x})  \geq 0$,      for any $\bm{x} \in \mathcal{X}$ the following inequality holds:
    \begin{equation*}
        \mathbb{E}\left[\1\left[p_{t-1; \eta}(\bm{x}) - U_t(\bm{x}) > 0 \right] \left\{ p_{t-1;\eta}(\bm{x}) - U_t(\bm{x})\right\} \mid H_t\right]
        \leq \mathbb{E}\left[\1\left[p_{t-1; \eta}(\bm{x}) - U_t(\bm{x}) > 0\right] \mid H_t\right ].
    \end{equation*}
    Moreover, conditioned on $H_t$, the conditional expected value of $p_{t-1;\eta} ({\bm x} )$ is equal to $\mu^{(p)}_{t-1;\eta} ({\bm x})$.
Furthermore, the conditional variance can be bounded by $\gamma^2_{t-1;\eta} ({\bm x})$.
Therefore, by using the same technique as in \eqref{eq:Cinequality}, the following holds for any $t \geq 1$ and ${\bm x} \in \mathcal{X}$:
$$
\left\{ |p_{t-1; \eta}(\bm{x}) - \mu_{t-1;\eta}^{(p)}(\bm{x})| > \beta_t^{1/m}\gamma_{t-1;\eta}^{2/m}(\bm{x}) \mid H_t\right\} <  \beta_t^{-1} .
$$
In addition, it holds that
    \begin{align*}
        &\hspace{15pt}Pr\left\{ |p_{t-1; \eta}(\bm{x}) - \mu_{t-1,\eta}^{(p)}(\bm{x})| > \beta_t^{1/m}\gamma_{t-1,\eta}^{2/m}(\bm{x}) \mid H_t\right\} <  \beta_t^{-1} \\
        &\Leftrightarrow Pr\left\{ p_{t-1; \eta}(\bm{x}) - \mu_{t-1;\eta}^{(p)}(\bm{x}) - \beta_t^{1/m}\gamma_{t-1;\eta}^{2/m}(\bm{x}) > 0 \mid H_t\right\} \\
& \quad \quad  +
         Pr\left\{ \mu_{t-1}^{(p)}(\bm{x}) - \beta_t^{1/m}\gamma_{t-1;\eta}^{2/m}(\bm{x}) - p_{t-1; \eta}(\bm{x}) > 0 \mid H_t\right\} < \beta_t^{-1} \\
        &\Rightarrow Pr\left\{ p_{t-1; \eta}(\bm{x}) - \mu_{t-1;\eta}^{(p)}(\bm{x}) - \beta_t^{1/m}\gamma_{t-1;\eta}^{2/m}(\bm{x}) > 0 \mid H_t\right\} < \beta_t^{-1}.
    \end{align*}
    Thus, we have
    \begin{align}
        \mathbb{E}\left[\1\left[p_{t-1; \eta}(\bm{x}) - U_t(\bm{x})>0 \right] \mid H_t \right]
        &= Pr\left\{ p_{t-1; \eta}(\bm{x}) - \mu_{t-1;\eta}^{(p)}(\bm{x}) - \beta_t^{1/m}\gamma_{t-1;\eta}^{2/m}(\bm{x}) > 0 \mid H_t\right\} \nonumber \\
        &< \beta^{-1} _{  t    }
        =\frac{1}{t^2|\mathcal{X}|}. \label{eq:betatinv}
    \end{align}
Here, the following inequality holds for any $t \geq 1$ and ${\bm x} \in \mathcal{X}$:
\begin{align*}
 p_{t-1;\eta} ({\bm x} ) -U_t ({\bm x}) &\leq \1 [ p_{t-1;\eta} ({\bm x} ) -U_t ({\bm x}) >0] \{ p_{t-1;\eta} ({\bm x} ) -U_t ({\bm x})  \}
\\
&\leq \sum_{ {\bm x} \in \mathcal{X}} \1 [ p_{t-1;\eta} ({\bm x} ) -U_t ({\bm x}) >0] \{ p_{t-1;\eta} ({\bm x} ) -U_t ({\bm x})  \}.
\end{align*}
Note  that  $p_{t-1;\eta} ({\bm x}) - U_t ({\bm x})  \leq 1$      because  $0 \leq p_{t-1;\eta} ({\bm x}) \leq 1$ and $U_t ({\bm x}) \geq 0$.
Thus, we get
$$
 p_{t-1;\eta} ({\bm x}^\ast  ) -U_t ({\bm x}^\ast) \leq \sum_{ {\bm x} \in \mathcal{X}} \1 [ p_{t-1;\eta} ({\bm x} ) -U_t ({\bm x}) >0].
$$
By using this inequality and \eqref{eq:betatinv}, the following holds for any $T \geq 1$:
    \begin{align}
    \sum_{t=1}^T \mathbb{E}\left[p_{t-1;\eta}(\bm{x}^\ast) - U_t(\bm{x}^\ast)\right] &\leq
    \sum_{t=1}^{T}\sum_{\bm{x} \in \mathcal{X}} \mathbb{E}\left[\1\left[ p_{t-1;\eta}(\bm{x}) - U_t(\bm{x}) > 0 \right] \right] \nonumber \\
&=  \sum_{t=1}^{T}\sum_{\bm{x} \in \mathcal{X}} \mathbb{E} \left [ \mathbb{E}\left[\1\left[ p_{t-1;\eta}(\bm{x}) - U_t(\bm{x}) > 0 \right] \mid H_t \right]  \right ] \nonumber \\
&\leq  \sum_{t=1}^{T}\sum_{\bm{x} \in \mathcal{X}} \frac{1}{t^2 |\mathcal{X}| } \nonumber \\
    &= \sum_{t=1}^{T}\frac{1}{t^2} \leq \sum_{t=1}^\infty \frac{1}{t^2} = \frac{\pi^2}{6}. \nonumber
%    \label{eq:ts_second_term}
    \end{align}
\end{proof}

\begin{lemma}\label{lem:first_term_bound}
   Let  $m \geq 2$, $\eta >0$, and let $U_t$ and $ \beta_t$ be defined as in Lemma \ref{lem:second_term_bound}.
Then, the following holds:
    \begin{equation*}
    \sum_{t=1}^T \mathbb{E}\left[ U_t(\bm{x}_t) - p_{t-1; \eta}(\bm{x}_t) \right] \leq C_2 T^{2/m}\eta^{-(2+1/m)} \kappa_T,
    \end{equation*}
    where $C_2 = \frac{4m|\mathcal{X}|^{1/m}}{(2\pi)^{1/2m}(\log (1+\sigma^{-2}))}$.
\end{lemma}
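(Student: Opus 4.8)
The plan is to reduce each summand $\mathbb{E}[U_t(\bm{x}_t) - p_{t-1;\eta}(\bm{x}_t)]$ to the expected credible-interval half-width $\mathbb{E}[\beta_t^{1/m}\gamma_{t-1;\eta}^{2/m}(\bm{x}_t)]$ via a conditioning argument, and then to control that half-width using the posterior-variance bound \eqref{eq:gamma_upper} together with the information-gain bound of Lemma \ref{lem:var_bound}.

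First I would condition on the history $H_t = \{\bm{x}_1,\bm{w}_1,y_1,\dots,\bm{x}_{t-1},\bm{w}_{t-1},y_{t-1}\}$. Given $H_t$, both $U_t$ and $\mu^{(p)}_{t-1;\eta}$ are deterministic functions, and by the definition of $U_t$ one has the pointwise identity $U_t(\bm{x}) - \mu^{(p)}_{t-1;\eta}(\bm{x}) = \beta_t^{1/m}\gamma_{t-1;\eta}^{2/m}(\bm{x})$. The key observation is that in BPT-TS the query $\bm{x}_t$ is a function of the Thompson sample $\hat{f}$ and the algorithm's internal randomness only, hence conditionally independent of the true $f$ given $H_t$. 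Since $p_{t-1;\eta}(\cdot)$ is a function of $f$ whose conditional mean at any fixed $\bm{x}$ equals $\mu^{(p)}_{t-1;\eta}(\bm{x})$, the tower property yields $\mathbb{E}[p_{t-1;\eta}(\bm{x}_t) \mid H_t] = \mathbb{E}[\mu^{(p)}_{t-1;\eta}(\bm{x}_t) \mid H_t]$. Combining this with the pointwise identity gives $\mathbb{E}[U_t(\bm{x}_t) - p_{t-1;\eta}(\bm{x}_t) \mid H_t] = \mathbb{E}[\beta_t^{1/m}\gamma_{t-1;\eta}^{2/m}(\bm{x}_t) \mid H_t]$, and taking the outer expectation removes the conditioning.

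Next I would bound the half-width. Because $\bm{w}_t$ in BPT-TS is selected exactly as in \eqref{eq:w_t}, the derivation leading to \eqref{eq:gamma_upper} in the proof of Lemma \ref{lem:r_bound} applies verbatim — it uses only the $\bm{w}_t$ rule and $\sigma_{t-1}(\bm{x}_t,\bm{w}_t)\le 1$, never the way $\bm{x}_t$ was chosen — so $\gamma_{t-1;\eta}^{2/m}(\bm{x}_t) \le (2\pi)^{-1/(2m)}\,(2m/\eta^{2+1/m})\,\sigma_{t-1}^2(\bm{x}_t,\bm{w}_t)$. With $\beta_t = t^2|\mathcal{X}|$ and $t \le T$, monotonicity gives $\beta_t^{1/m} \le T^{2/m}|\mathcal{X}|^{1/m}$. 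Summing over $t$ and invoking Lemma \ref{lem:var_bound} to bound $\sum_{t=1}^T \sigma_{t-1}^2(\bm{x}_t,\bm{w}_t) \le 2\kappa_T/\log(1+\sigma^{-2})$ then yields the claimed estimate once the constants are collected into $C_2 = 4m|\mathcal{X}|^{1/m}/((2\pi)^{1/2m}\log(1+\sigma^{-2}))$.

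The main obstacle is the conditioning step: one must argue carefully that, under Thompson sampling, the selected $\bm{x}_t$ carries no information about the true $f$ beyond $H_t$, so that the posterior mean of $p_{t-1;\eta}$ evaluated at the \emph{random} point $\bm{x}_t$ is still $\mu^{(p)}_{t-1;\eta}(\bm{x}_t)$ in conditional expectation. Everything after this reduction is a routine substitution of the already-established bound \eqref{eq:gamma_upper} and Lemma \ref{lem:var_bound}, together with the elementary monotonicity of $\beta_t^{1/m}$.
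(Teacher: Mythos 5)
Your proposal is correct and follows essentially the same route as the paper's proof: split $U_t(\bm{x}_t) - p_{t-1;\eta}(\bm{x}_t)$ into the term $\mu^{(p)}_{t-1;\eta}(\bm{x}_t) - p_{t-1;\eta}(\bm{x}_t)$, which vanishes in expectation by the tower property, plus the half-width $\beta_t^{1/m}\gamma^{2/m}_{t-1;\eta}(\bm{x}_t)$, which is then controlled by \eqref{eq:gamma_upper}, monotonicity of $\beta_t$, and Lemma \ref{lem:var_bound}. Your explicit justification of the tower-property step via the conditional independence of $\bm{x}_t$ and $f$ given $H_t$ is a welcome elaboration of a point the paper states without detail.
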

\begin{proof}
    From the definition of $U_t$, the following holds:
    \begin{align}
        \sum_{t=1}^T\mathbb{E}\left[ U_t(\bm{x}_t) - p_{t-1; \eta} ({\bm x}_t ) \right] &=
\sum_{t=1}^T \mathbb{E} \left [ \mu^{(p)}_{t-1;\eta} ({\bm x}_t ) + \beta_t^{1/m}\gamma^{2/m}_{t-1;\eta} (\bm{x}_t) - p_{t-1;\eta} ({\bm x}_t) \right ] \nonumber  \\
&=\sum_{t=1}^T \mathbb{E} \left [ \mu^{(p)}_{t-1;\eta} ({\bm x}_t ) - p_{t-1;\eta} ({\bm x}_t) \right ]
+\sum_{t=1}^T \mathbb{E} \left [  \beta_t^{1/m}\gamma^{2/m} _{t-1;\eta} (\bm{x}_t)  \right ]. \label{eq:TS12}
    \end{align}
Here, by using the tower property of conditional expectation,
 we get
\begin{equation}
\sum_{t=1}^T \mathbb{E} \left [ \mu^{(p)}_{t-1;\eta} ({\bm x}_t ) - p_{t-1;\eta} ({\bm x}_t) \right ]  = 0. \label{eq:lemmaA8first}
\end{equation}
Next, from  (\ref{eq:gamma_upper}) we have
    \begin{align*}
        %\label{eq:ts_first_term}
       \sum_{t=1}^T \mathbb{E} \left [  \beta_t^{1/m}\gamma^{2/m} _{t-1;\eta} (\bm{x}_t)  \right ] \leq \sum_{t=1}^T\mathbb{E}\left[\beta_t^{1/m}(\eta\sqrt{2\pi})^{-1/m} \frac{2m\sigma_{t-1}^2(\bm{x}_t, \bm{w}_t)}{\eta^2} \right].
    \end{align*}
    In addition, from monotonicity of $\beta_t$, the following inequality holds:
    \begin{align*}
   \sum_{t=1}^T \mathbb{E} \left [  \beta_t^{1/m}\gamma^{2/m} _{t-1;\eta} (\bm{x}_t)  \right ] \leq 2m(2\pi)^{-1/2m} \beta_T^{1/m}(\eta)^{-(2+1/m)}\mathbb{E}\left[\sum_{t=1}^T \sigma_{t-1}^2(\bm{x}_t, \bm{w}_t)\right].
    \end{align*}
    Hence, from Lemma \ref{lem:var_bound} we obtain
    \begin{align}
         \sum_{t=1}^T \mathbb{E} \left [  \beta_t^{1/m}\gamma^{2/m} _{t-1;\eta} (\bm{x}_t)  \right ]
        &\leq 4m\frac{(2\pi)^{-1/2m}}{\log (1+\sigma^{-2})} \beta_T^{1/m}(\eta)^{-(2+1/m)} \kappa_T \nonumber \\
        &= 4m\frac{(2\pi)^{-1/2m}}{\log (1+\sigma^{-2})} |\mathcal{X}|^{1/m}T^{2/m}(\eta)^{-(2+1/m)} \kappa_T \nonumber \\
        &\leq C_2 T^{2/m}(\eta)^{-(2+1/m)} \kappa_T, \label{eq:lemmaA8second}
    \end{align}
   where $C_2 = \frac{4m|\mathcal{X}|^{1/m}}{(2\pi)^{1/2m}(\log (1+\sigma^{-2}))}$.
Therefore, by substituting \eqref{eq:lemmaA8first} and \eqref{eq:lemmaA8second} into \eqref{eq:TS12}, we get the desired
inequality.
\end{proof}
Finally, from Lemma \ref{lem:split_regret}, \ref{lem:expectation0}, \ref{lem:second_term_bound} and \ref{lem:first_term_bound},
we have Theorem \ref{thm:fin_bayes_regret}.

\subsection{Convergence of BPT-LSE}
In this subsection, we prove Theorem \ref{thm:lse_convergence}.
\begin{lemma}\label{lem:str_gam}
    For any $t\geq 1$ and  $m > 0$, it holds that
    \begin{align*}
        \text{STR}_t(\bm{x}_t) \leq \beta_t^{1/m}\gamma_{t-1;\eta}^{2/m} ({\bm x}_t).
    \end{align*}
\end{lemma}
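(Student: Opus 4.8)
The plan is to prove the bound pointwise and then read it off at $\bm{x}_t$; in fact the argmax defining $\bm{x}_t$ in \eqref{eq:strx} plays no role here, since the inequality will hold for every $\bm{x} \in \mathcal{X}$. First I would simply unpack the straddle criterion. By \eqref{eq:strx},
\[
\text{STR}_t(\bm{x}_t) = \min\left\{u_{t;\eta}(\bm{x}_t) - \alpha,~\alpha - l_{t;\eta}(\bm{x}_t)\right\}.
\]

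The only tool needed is the elementary fact that $\min\{a,b\} \le (a+b)/2$ for any reals $a,b$. Applying it with $a = u_{t;\eta}(\bm{x}_t) - \alpha$ and $b = \alpha - l_{t;\eta}(\bm{x}_t)$, the two copies of $\alpha$ cancel, which gives
\[
\text{STR}_t(\bm{x}_t) \le \tfrac{1}{2}\left(u_{t;\eta}(\bm{x}_t) - l_{t;\eta}(\bm{x}_t)\right).
\]
It then remains to identify the width of the credible interval. Recalling that $Q_{t;\eta}(\bm{x}) = [l_{t;\eta}(\bm{x}),~u_{t;\eta}(\bm{x})]$ is obtained from \eqref{eq:Q_t} by replacing $\mu^{(p)}_{t-1}$ and $\gamma^2_{t-1}$ with $\mu^{(p)}_{t-1;\eta}$ and $\gamma^2_{t-1;\eta}$, the endpoints are $l_{t;\eta}(\bm{x}) = \mu^{(p)}_{t-1;\eta}(\bm{x}) - \beta_t^{1/m}\gamma^{2/m}_{t-1;\eta}(\bm{x})$ and $u_{t;\eta}(\bm{x}) = \mu^{(p)}_{t-1;\eta}(\bm{x}) + \beta_t^{1/m}\gamma^{2/m}_{t-1;\eta}(\bm{x})$, so that $u_{t;\eta}(\bm{x}_t) - l_{t;\eta}(\bm{x}_t) = 2\beta_t^{1/m}\gamma^{2/m}_{t-1;\eta}(\bm{x}_t)$. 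Substituting this into the previous display yields exactly $\text{STR}_t(\bm{x}_t) \le \beta_t^{1/m}\gamma^{2/m}_{t-1;\eta}(\bm{x}_t)$.

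There is no real obstacle in this argument: it is just the ``$\min \le$ average'' bound combined with a direct reading of the half-width of $Q_{t;\eta}$, and the stated range $m > 0$ needs no special treatment since $\gamma^{2/m}_{t-1;\eta}(\bm{x}_t)$ is well-defined and nonnegative. The one point worth stating explicitly is the cancellation of the threshold $\alpha$, which is precisely what makes the resulting bound depend only on the interval width $\gamma^{2/m}_{t-1;\eta}$ and not on $\alpha$; this is the feature that lets the LSE convergence analysis in Theorem~\ref{thm:lse_convergence} proceed in the same way as the regret analysis for BPT-UCB.
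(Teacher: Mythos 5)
Your proof is correct and is essentially identical to the paper's: both use $\min\{a,b\}\leq (a+b)/2$ so that $\alpha$ cancels, and then identify $u_{t;\eta}(\bm{x}_t)-l_{t;\eta}(\bm{x}_t)$ as twice the half-width $\beta_t^{1/m}\gamma_{t-1;\eta}^{2/m}(\bm{x}_t)$ of the credible interval $Q_{t;\eta}$. Your added remarks (that the bound is pointwise and that the argmax plays no role) are accurate but not needed.
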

\begin{proof}
    From the definition of $\text{STR}_t$, the following holds:
    \begin{align*}
        \text{STR}_t(\bm{x}_t)
        &= \min\{u_{t;\eta}(\bm{x}_t) - \alpha,~\alpha - l_{t;\eta}(\bm{x}_t)\} \\
        &\leq \frac{u_{t;\eta}(\bm{x}_t) - l_{t;\eta}(\bm{x}_t)}{2} \\
        &= \beta_t^{1/m}\gamma_{t-1;\eta}^{2/m} ({\bm x}_t).
    \end{align*}
\end{proof}

\begin{lemma}\label{lem:str_bound}
    Let $m > 0$, and let $\beta_t$ be defined as in Lemma \ref{lem:fin_union}.
Then, for any $t \geq 1$, there exists a natural number $t^\prime$ such that  $t^\prime \leq t$ and
    \begin{equation}
        \text{STR}_{t^\prime}(\bm{x}_{t^\prime}) \leq \frac{C_3\eta^{-(2+1/m)}\beta_t^{1/m}\kappa_t}{t}, \label{eq:STRbyt}
    \end{equation}
    where $C_3 = \frac{4m}{(2\pi)^{1/2m}\log(1 + \sigma^{-2})}$.
\end{lemma}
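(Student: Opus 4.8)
The statement is an averaging argument: I want to exhibit one index $t' \le t$ at which the straddle criterion is small, and the natural way is to bound the \emph{minimum} of $\text{STR}_{t'}(\bm{x}_{t'})$ over $t' \in \{1,\dots,t\}$ by the average, and then bound the average via the earlier telescoping/information-gain lemmas. So my first move would be to recall that the minimum of a finite collection of nonnegative numbers never exceeds their mean, giving
\begin{equation*}
    \min_{1 \le t' \le t} \text{STR}_{t'}(\bm{x}_{t'}) \le \frac{1}{t}\sum_{t'=1}^{t} \text{STR}_{t'}(\bm{x}_{t'}),
\end{equation*}
and then take $t'$ to be the minimizing index. It remains only to control the sum on the right.

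Next I would assemble the two ingredients already proved. Lemma~\ref{lem:str_gam} gives $\text{STR}_{t'}(\bm{x}_{t'}) \le \beta_{t'}^{1/m}\gamma_{t'-1;\eta}^{2/m}(\bm{x}_{t'})$, and the variance bound \eqref{eq:gamma_upper} established inside the proof of Lemma~\ref{lem:r_bound} gives
\begin{equation*}
    \gamma_{t'-1;\eta}^{2/m}(\bm{x}_{t'}) \le (\eta\sqrt{2\pi})^{-1/m}\,\frac{2m\,\sigma_{t'-1}^2(\bm{x}_{t'},\bm{w}_{t'})}{\eta^2}
    = 2m\,(2\pi)^{-1/2m}\,\eta^{-(2+1/m)}\,\sigma_{t'-1}^2(\bm{x}_{t'},\bm{w}_{t'}),
\end{equation*}
where I have folded the $\eta^{-1/m}$ from $(\eta\sqrt{2\pi})^{-1/m}$ together with the $\eta^{-2}$. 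Combining these two inequalities term by term yields
\begin{equation*}
    \text{STR}_{t'}(\bm{x}_{t'}) \le 2m\,(2\pi)^{-1/2m}\,\beta_{t'}^{1/m}\,\eta^{-(2+1/m)}\,\sigma_{t'-1}^2(\bm{x}_{t'},\bm{w}_{t'}).
\end{equation*}

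Then I would sum over $t'=1,\dots,t$. Since $\beta_{t'} = |\mathcal{X}|\pi^2 (t')^2/(3\delta)$ is increasing in $t'$, I can replace each $\beta_{t'}^{1/m}$ by $\beta_t^{1/m}$ and pull it out of the sum, after which Lemma~\ref{lem:var_bound} bounds $\sum_{t'=1}^{t}\sigma_{t'-1}^2(\bm{x}_{t'},\bm{w}_{t'}) \le \frac{2}{\log(1+\sigma^{-2})}\kappa_t$. This gives
\begin{equation*}
    \sum_{t'=1}^{t}\text{STR}_{t'}(\bm{x}_{t'}) \le \frac{4m}{(2\pi)^{1/2m}\log(1+\sigma^{-2})}\,\eta^{-(2+1/m)}\,\beta_t^{1/m}\kappa_t = C_3\,\eta^{-(2+1/m)}\,\beta_t^{1/m}\kappa_t.
\end{equation*}
Dividing by $t$ and feeding this into the minimum-at-most-average step yields exactly \eqref{eq:STRbyt} at the minimizing index $t' \le t$. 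I do not anticipate a genuine obstacle here; the only points requiring care are the bookkeeping of the $\eta$ exponents when absorbing $(\eta\sqrt{2\pi})^{-1/m}$ into $\eta^{-(2+1/m)}$, and justifying the swap $\beta_{t'}^{1/m}\le\beta_t^{1/m}$ by monotonicity before invoking Lemma~\ref{lem:var_bound} (whose bound is stated for the full sum up to the outer index $t$).
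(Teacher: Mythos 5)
Your proposal is correct and follows essentially the same route as the paper's proof: Lemma~\ref{lem:str_gam} combined with \eqref{eq:gamma_upper} to bound each $\text{STR}_{t'}(\bm{x}_{t'})$ by a multiple of $\sigma_{t'-1}^2(\bm{x}_{t'},\bm{w}_{t'})$, monotonicity of $\beta_t$ to pull out $\beta_t^{1/m}$, Lemma~\ref{lem:var_bound} to control the variance sum via $\kappa_t$, and the minimum-at-most-average step (which the paper writes equivalently as $t\,\text{STR}_{t'}(\bm{x}_{t'}) \le \sum_{i=1}^t \text{STR}_i(\bm{x}_i)$ at the minimizing index). The constant bookkeeping also matches.
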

\begin{proof}
  Fix $t \geq 1$. From   (\ref{eq:gamma_upper}) and Lemma \ref{lem:str_gam}, it holds that
    \begin{align*}
        \text{STR}_t(\bm{x}_t)
        &\leq \beta_t^{1/m} \gamma_{t-1;\eta}^{2/m} ({\bm x}_t) \\
        &\leq \beta_t^{1/m} 2m(2\pi)^{-1/2m}\eta^{-(2+1/m)} \sigma_{t-1}^2(\bm{x}_t, \bm{w}_t).
    \end{align*}
    Thus, noting that monotonicity of $\beta_t$  we get
    \begin{align*}
        \sum_{i=1}^t \text{STR}_i(\bm{x}_i)
        &\leq \sum_{i=1}^t \beta_i^{1/m} 2m(2\pi)^{-1/2m}\eta^{-(2+1/m)} \sigma_{i-1}^2(\bm{x}_i, \bm{w}_i) \\
        &\leq \beta_t^{1/m} 2m(2\pi)^{-1/2m}\eta^{-(2+1/m)} \sum_{i=1}^{t} \sigma_{i-1}^2(\bm{x}_i, \bm{w}_i).
    \end{align*}
    Hence, from Lemma\ref{lem:var_bound} the following inequality holds:
    \begin{equation}
        \sum_{i=1}^t \text{STR}_i(\bm{x}_i)
        \leq \frac{4m(2\pi)^{-1/2m}}{\log(1 + \sigma^{-2})}\eta^{-(2+1/m)}\beta_t^{1/m} \kappa_t
        = C_3\eta^{-(2+1/m)}\beta_t^{1/m}\kappa_t, \nonumber
    \end{equation}
    where $C_3 = \frac{4m}{(2\pi)^{1/2m}\log(1 + \sigma^{-2})}$.
Here, let  $t^\prime$ be a natural number satisfying $t^\prime \leq t$ and $\text{STR}_{t^\prime}(\bm{x}_{t^\prime}) \leq \text{STR}_{i}(\bm{x}_i)~(\forall{i} \leq t)$, i.e.,
$t^\prime$ is given by
$$
t^\prime = \argmin _{ i  : \  i \leq t } \text{STR}_i ({\bm x}_i) .
$$
Then, it holds that
    \begin{align}
        t \text{STR}_{t^\prime}(\bm{x}_{t^\prime}) \leq \sum_{i=1}^t \text{STR}_i(\bm{x}_i) \leq C_3\eta^{-(2+1/m)}\beta_t^{1/m}\kappa_t. \label{eq:tSTR}
    \end{align}
    Therefore, by dividing \eqref{eq:tSTR} by $t$, we obtain \eqref{eq:STRbyt}.
\end{proof}

\begin{lemma}\label{lem:terminate_cond}
While running BPT-LSE, if $\text{STR}_t(\bm{x}_t) < \epsilon /2 $ for some $t \geq 1$, then  $\mathcal{U}_{t} = \emptyset$.
\end{lemma}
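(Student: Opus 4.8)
The plan is to establish the contrapositive: I will show that if the unclassified set $\mathcal{U}_t$ is nonempty, then the straddle acquisition value at the chosen query obeys $\text{STR}_t(\bm{x}_t) \geq \epsilon/2$, so that the assumed inequality $\text{STR}_t(\bm{x}_t) < \epsilon/2$ forces $\mathcal{U}_t = \emptyset$. The whole argument rests on unfolding the definitions in \eqref{eq:straddle} and \eqref{eq:strx}, which crucially all refer to the same credible interval $Q_{t;\eta}(\bm{x}) = [l_{t;\eta}(\bm{x}), u_{t;\eta}(\bm{x})]$.

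First I would take an arbitrary $\bm{x} \in \mathcal{U}_t = \mathcal{X} \setminus (\mathcal{H}_t \cup \mathcal{L}_t)$. Since $\bm{x} \notin \mathcal{H}_t$ and $\bm{x} \notin \mathcal{L}_t$, the defining conditions in \eqref{eq:straddle} give $l_{t;\eta}(\bm{x}) \leq \alpha - \epsilon/2$ and $u_{t;\eta}(\bm{x}) \geq \alpha + \epsilon/2$. Rearranging these two inequalities yields $\alpha - l_{t;\eta}(\bm{x}) \geq \epsilon/2$ and $u_{t;\eta}(\bm{x}) - \alpha \geq \epsilon/2$ simultaneously.

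Next I would feed both bounds into the straddle criterion from \eqref{eq:strx}, obtaining $\text{STR}_t(\bm{x}) = \min\{u_{t;\eta}(\bm{x}) - \alpha,~\alpha - l_{t;\eta}(\bm{x})\} \geq \epsilon/2$. Finally, because $\bm{x}_t$ is defined as the maximizer of $\text{STR}_t$ over $\mathcal{X}$, I conclude $\text{STR}_t(\bm{x}_t) \geq \text{STR}_t(\bm{x}) \geq \epsilon/2$. Contraposing this implication gives exactly the claim.

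I do not expect a genuine obstacle here; the statement is essentially a bookkeeping consequence of the classification rule together with the maximality of the straddle query. The one point that deserves care—and which I would state explicitly—is the consistency of indexing: the sets $\mathcal{H}_t, \mathcal{L}_t$ and the acquisition $\text{STR}_t$ must be evaluated with the same credible interval $Q_{t;\eta}$ (equivalently, with the posterior quantities $\mu_{t-1;\eta}^{(p)}$ and $\gamma_{t-1;\eta}^2$ associated with step $t$), since otherwise the derived inequality $\text{STR}_t(\bm{x}) \geq \epsilon/2$ and the membership test defining $\mathcal{U}_t$ would reference different intervals and the chain of implications would break.
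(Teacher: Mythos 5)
Your proof is correct and is essentially identical to the paper's: the paper argues by contradiction (assume $\mathcal{U}_t \neq \emptyset$, pick $\bm{x} \in \mathcal{U}_t$ with $\mathrm{STR}_t(\bm{x}) \geq \epsilon/2$, and use maximality of $\bm{x}_t$ to contradict $\mathrm{STR}_t(\bm{x}_t) < \epsilon/2$), which is just the contrapositive phrasing you chose. Your explicit unfolding of why membership in $\mathcal{U}_t$ forces $\mathrm{STR}_t(\bm{x}) \geq \epsilon/2$, and your note on index consistency of $Q_{t;\eta}$, only make explicit steps the paper leaves implicit.
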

\begin{proof}
Assume that     $\mathcal{U}_t \neq \emptyset$. %であると仮定する.
Then, there exists $\bm{x} \in \mathcal{X}$ such that
    $\text{STR}_t(\bm{x}) \geq \epsilon/2$.
Therefore, from the definition of $\bm{x}_t$, it holds that
$$\epsilon /2 \leq \text{STR}_t(\bm{x}) \leq \text{STR}_t(\bm{x}_t) .$$
 However, it contradicts the   assumption
   $\text{STR}_t(\bm{x}_t) < \epsilon /2$.
\end{proof}
Finally, we prove Theorem \ref{thm:lse_convergence} by using Lemma \ref{lem:prior_pro}, \ref{lem:fin_union}, \ref{lem:str_bound} and  \ref{lem:terminate_cond}.
Let $m \geq 2$. Then, from Lemma \ref{lem:fin_union}, with probability at least $1-\delta/2$, it holds that for any $t \geq 1$ and ${\bm x} \in \mathcal{X}$:
$$
p_{t-1;\eta} ( {\bm x} )   \in Q_{t;\eta} ({\bm x}) = [l_{t;\eta} ({\bm x}) , u_{t;\eta} ({\bm x} )     ].
$$
Moreover, by replacing $\epsilon$ in Lemma \ref{lem:prior_pro} with $\epsilon/2$,  with probability at least $1-\delta/2$, the following holds for any ${\bm x} \in \mathcal{X}$:
$$
\tilde{p}_{2 \eta} ( {\bm x} ) < \epsilon /2 .$$
Furthermore, noting that $p_{t-1;\eta} ({\bm x}) \leq p_{\rm upper} ({\bm x}) \leq p_{t-1;\eta} ({\bm x}) + \tilde{p}_{2 \eta} ({\bm x}) $,
with probability at least $1-\delta$, it holds that
 $$p_{\rm upper} ({\bm x}) \in [l_{t;\eta} ({\bm x}) , u_{t;\eta} ({\bm x} ) +\epsilon/2],\   \forall t \geq 1, \ \forall {\bm x} \in \mathcal{X}.  $$
Therefore, from the classification condition, if ${\bm x} \in  \mathcal{H}_t$, then $p_{\rm upper} ({\bm x} ) \geq \alpha -\epsilon/2.$
Similarly, if ${\bm x} \in  \mathcal{L}_t$, then $p_{\rm upper} ({\bm x} )  \leq \alpha +\epsilon$.
Hence,
noting that  the definition of $e_\alpha ({\bm x}) $,
with probability at least $1-\delta$ the following holds:
$$
\max _{ {\bm x} \in \mathcal{X}} e_ \alpha ({\bm x} ) \leq  \epsilon.
$$
Next,  from Lemma \ref{lem:str_bound} and  \eqref{eq:LSEbound}, there exists $t^\prime$ such that $t^\prime \leq T$ and
$\text{STR}_{t^\prime} ({\bm x}_{t^\prime} ) < \epsilon /2$.
Hence, from Lemma \ref{lem:terminate_cond},  BPT-LSE terminates after at most $T$ rounds.

\section{Extension to Query-based Setting}
\label{app:query-based}
In this section, we consider extensions to an infinite set for $\mathcal{X}$.
Basic ideas used in this section are based on \cite{srinivas2009gaussian}, i.e., we assume stochastic Lipschitz continuity for $p_{\rm upper} ({\bm x})$.
Let $\mathcal{X} \subset \mathbb{R}^d$ be an infinite set. For simplicity,  assume that $\mathcal{X} \subset [0,r]^d$, for some $r>0$.
In addition, for  finite subset $\tilde{\mathcal{X}} \subset \mathcal{X}$ and ${\bm x} \in \mathcal{X}$,
let $  [ {\bm x} ] $ be the closest point in $\tilde{\mathcal{X}}$ to ${\bm x}$.
Hereafter, we assume that $\tilde{\mathcal{X}}$ has $\zeta ^d$ elements.
Also assume that the following inequality holds for any ${\bm x} \in \mathcal{X}$:
\begin{align}
\| {\bm x} - [{\bm x} ] \|_1 \leq rd /\zeta. \label{eq:discretization}
\end{align}
Moreover, for $p_{\rm upper} ({\bm x})$, we assume the following condition:
\begin{description}
\item [(C1)] There exists positive constants $a$ and $b$ such that
$$
Pr \left \{ \sup_{ {\bm x} \in \mathcal{X} } | \partial p_{\rm upper} ({\bm x})/ \partial x_j | >L \right  \}  \leq a e^{-(L/b)^2}, \ j=1,\ldots,d.
$$
\end{description}

\subsection{Regret bound of  BPT-UCB when $\mathcal{X}$ is infinite set}
In this subsection, we give the theorem about the cumulative $\epsilon$-regret for BPT-UCB.
The following theorem holds:
\begin{theorem}\label{thm:BPT-UCB-infinite}
Let $\delta \in (0,1)$, $\epsilon >0$, $m \geq 2$, $\zeta = \lceil d (\epsilon /4)^{-1} b r \sqrt{  \log (4 d a /\delta) }  \rceil  $ and $\beta _t = 2 (1+ \zeta^d ) \pi^2 t^2 /(3 \delta ) $ and    $2 \eta = \min \{ \frac{3 \epsilon \sigma_{0,min}}{8}, \frac{9 \epsilon^2 \delta \sigma_{0,min}}{128 \zeta ^d }\}$.
Then, running BPT-UCB with these parameters, the cumulative $\epsilon$-regret satisfies the following inequality:
\begin{equation*}
        Pr\left\{ R_T (\epsilon) \leq C_1 \beta_T^{1/m}\kappa_T \eta^{ -(2+1/m) },~\forall T \geq 1 \right\} \geq 1 - \delta,
    \end{equation*}
    where $C_1 = \frac{8m}{(2\pi)^{\frac{1}{2m}}\log(1+\sigma^{-2})}$.
\end{theorem}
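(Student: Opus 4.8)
The plan is to reduce this infinite-domain statement to the finite-domain analysis behind Theorem~\ref{thm:fin_regret}, following the discretization device of \cite{srinivas2009gaussian}. The key difficulty is that the maximizer $\bm{x}^\ast=\argmax_{\bm x\in\mathcal{X}}p_{\rm upper}(\bm x)$ lies in an uncountable set and is itself random (since $p_{\rm upper}$ is random), so the union bound of Lemma~\ref{lem:fin_union} cannot be applied at $\bm{x}^\ast$ directly. I would instead transfer every confidence statement about $\bm{x}^\ast$ to its nearest grid anchor $[\bm{x}^\ast]\in\tilde{\mathcal{X}}$ via a high-probability Lipschitz bound, and carefully split the budgets: the total failure probability $\delta$ into $\delta/4$ (Lipschitz/discretization), $\delta/4$ (credible interval), $\delta/2$ (shell term), and the accuracy $\epsilon$ into $\epsilon/4$ (discretization) and $3\epsilon/4$ (shell term).

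First I would convert condition (C1) into a usable Lipschitz constant. A union bound over the $d$ coordinates together with the choice $L=b\sqrt{\log(4da/\delta)}$ makes the event $\{\sup_{\bm x}|\partial p_{\rm upper}(\bm x)/\partial x_j|\le L,\ \forall j\}$ hold with probability at least $1-dae^{-(L/b)^2}=1-\delta/4$, which gives $|p_{\rm upper}(\bm x)-p_{\rm upper}(\bm x')|\le L\|\bm x-\bm x'\|_1$. Combined with the discretization guarantee \eqref{eq:discretization} and the stated $\zeta=\lceil d(\epsilon/4)^{-1}br\sqrt{\log(4da/\delta)}\rceil$, this controls the discretization error as $|p_{\rm upper}(\bm x)-p_{\rm upper}([\bm x])|\le Lrd/\zeta\le\epsilon/4$ for every $\bm x\in\mathcal{X}$.

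Next I would build the credible interval on the finite point set that actually appears in the regret chain. The upper bound is needed at $[\bm{x}^\ast]$, which ranges over the $\zeta^d$ grid points, while the lower bound is needed at the query point $\bm{x}_t$; since $\bm{x}_t$ is measurable with respect to the history $H_t=\{(\bm{x}_i,\bm{w}_i,y_i)\}_{i=1}^{t-1}$, conditioning on $H_t$ and reapplying the Chebyshev step \eqref{eq:Cinequality} handles it as a single extra point per round. Requesting total failure $\delta/4$ over these $1+\zeta^d$ points per round, with the $\sum_t t^{-2}=\pi^2/6$ union over $t$, reproduces exactly $\beta_t=2(1+\zeta^d)\pi^2t^2/(3\delta)$. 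In parallel, applying Lemma~\ref{lem:prior_pro} on $\tilde{\mathcal{X}}$ with $3\epsilon/4$ in place of $\epsilon$ and $\zeta^d$ in place of $|\mathcal{X}|$ yields $\tilde{p}_{2\eta}([\bm{x}^\ast])<3\epsilon/4$ with probability $1-\delta/2$, and the resulting threshold is precisely the stated $2\eta=\min\{3\epsilon\sigma_{0,min}/8,\ 9\epsilon^2\delta\sigma_{0,min}/(128\zeta^d)\}$.

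Finally, on the intersection of the three events I would run the regret decomposition as in Theorem~\ref{thm:fin_regret}. Chaining $p_{\rm upper}(\bm{x}^\ast)\le p_{\rm upper}([\bm{x}^\ast])+\epsilon/4$, the bracketing $p_{t-1;\eta}\le p_{\rm upper}\le p_{t-1;\eta}+\tilde{p}_{2\eta}$, the upper credible bound at $[\bm{x}^\ast]$, the UCB rule \eqref{eq:x_t} (whose maximum is taken over all of $\mathcal{X}\supseteq\tilde{\mathcal{X}}\ni[\bm{x}^\ast]$), and the lower credible bound at $\bm{x}_t$, the discretization term $\epsilon/4$ and the shell term $3\epsilon/4$ exactly cancel the $-\epsilon$ in $r_t(\epsilon)$, leaving $r_t(\epsilon)\le 2\beta_t^{1/m}\gamma_{t-1;\eta}^{2/m}(\bm{x}_t)$, identical to the finite case. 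From here Lemma~\ref{lem:r_bound} (using the choice of $\bm{w}_t$ in \eqref{eq:w_t} to bound $\gamma_{t-1;\eta}^{2/m}$ by a constant multiple of $\eta^{-(2+1/m)}\sigma_{t-1}^2(\bm{x}_t,\bm{w}_t)$) and Lemma~\ref{lem:var_bound} close the summation and deliver the claimed bound with the same $C_1$. I expect the main obstacle to be exactly the bookkeeping of how the $\epsilon$ and $\delta$ budgets are apportioned so that the three sources of slack collapse into the clean finite-case expression; the Lipschitz transfer to $[\bm{x}^\ast]$ is the conceptual crux, and verifying that the stated $\beta_t$, $\zeta$, and $\eta$ are consistent with these allocations is the computational heart.
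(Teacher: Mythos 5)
Your proposal is correct and follows essentially the same route as the paper: the same $\delta$-budget split ($\delta/4$ for the Lipschitz/discretization event, $\delta/4$ for the Chebyshev credible intervals over the $1+\zeta^d$ points per round, $\delta/2$ for the shell term via Lemma~\ref{lem:prior_pro} applied with $3\epsilon/4$ and $\zeta^d$), the same $\epsilon$-split ($\epsilon/4$ discretization plus $3\epsilon/4$ shell), and the same final chain through the UCB rule, \eqref{eq:gamma_upper}, and Lemma~\ref{lem:var_bound}. This matches the paper's Lemma~\ref{lem:stochastic} (applied with $\delta/2$) followed by the proof of Theorem~\ref{thm:BPT-UCB-infinite}, including the consistency checks on $\zeta$, $\beta_t$, and $\eta$.
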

In order to prove Theorem \ref{thm:BPT-UCB-infinite},
first, we show the following lemma:
\begin{lemma}\label{lem:stochastic}
Let $\delta \in (0,1)$, $\epsilon >0$, $m \geq 2$,  $\zeta = \lceil d (\epsilon /4)^{-1} b r \sqrt{  \log (2 d a /\delta) }  \rceil  $ and $\beta _t =  (1+ \zeta^d ) \pi^2 t^2 /(3 \delta ) $.
Then, for any $ \eta \geq 0$, with probability at least $1-\delta$, the following inequality holds for any ${\bm x} \in \mathcal{X}$ and $ t \geq 1$:
\begin{align*}
p_{\rm upper} ({\bm x} ) &\leq \epsilon/4 + \tilde{p}_{2 \eta}  ([ {\bm x} ]) + \mu^{(p)}_{t-1;\eta} ([{\bm x}]) +
\beta^{1/m}_t  \gamma^{2/m}_{t-1;\eta} ([{\bm x}]), \\
 p_{\rm upper} ({\bm x} ) &\geq -\epsilon/4  + \mu^{(p)}_{t-1;\eta} ([{\bm x}]) -
\beta^{1/m}_t  \gamma^{2/m}_{t-1;\eta} ([{\bm x}]), \\
|p_{t-1;\eta} ({\bm x}_t ) - \mu^{(p)}_{t-1;\eta} ({\bm x}_t ) | & \leq \beta^{1/m}_t \gamma^{2/m}_{t-1;\eta} ({\bm x}_t ).
\end{align*}
\end{lemma}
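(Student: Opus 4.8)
The plan is to establish all three inequalities on a single high-probability event built as the intersection of two sub-events, splitting the failure budget as $\delta/2+\delta/2$. The first sub-event controls the discretization error through the stochastic Lipschitz condition (C1); the second is a uniform credible-interval event for $p_{t-1;\eta}$ over the finite grid $\tilde{\mathcal{X}}$ together with the query points, obtained by the same conditional Chebyshev argument used in Lemma \ref{lem:fin_union}. Once both sub-events hold, the three claims follow by chaining the Lipschitz bound, the basic inequality \eqref{eq:basic_inequality}, and the grid credible interval.

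For the discretization sub-event, I would first union-bound (C1) over the $d$ coordinates: taking $L = b\sqrt{\log(2da/\delta)}$, condition (C1) gives $Pr\{\exists j : \sup_{\bm{x}}|\partial p_{\rm upper}(\bm{x})/\partial x_j| > L\} \le da\,e^{-(L/b)^2} \le \delta/2$. On the complementary event $p_{\rm upper}$ is $L$-Lipschitz in the $\ell_1$ norm, so combining with the discretization bound \eqref{eq:discretization} and the choice of $\zeta$ yields
\begin{equation*}
|p_{\rm upper}(\bm{x}) - p_{\rm upper}([\bm{x}])| \le L\,\|\bm{x} - [\bm{x}]\|_1 \le L\,rd/\zeta \le \epsilon/4, \quad \forall \bm{x} \in \mathcal{X},
\end{equation*}
where the last step uses $\zeta \ge d(\epsilon/4)^{-1} r L$.

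For the credible-interval sub-event, I would condition on the history $H_t$ and apply Chebyshev's inequality exactly as in \eqref{eq:Cinequality}, but now uniformly over the $\zeta^d$ points of $\tilde{\mathcal{X}}$ and the single data-dependent query point $\bm{x}_t$. Since, given $H_t$, each of these points is deterministic and the posterior variance of $p_{t-1;\eta}$ there is bounded by $\gamma^2_{t-1;\eta}$, choosing the deviation level $\beta_t^{1/m}\gamma^{2/m}_{t-1;\eta}$ makes each of the $1+\zeta^d$ conditional failure probabilities at most $3\delta/((1+\zeta^d)\pi^2 t^2)$. Summing over the $1+\zeta^d$ points and over $t$ with $\sum_t t^{-2} = \pi^2/6$ gives total failure probability $\delta/2$, which pins down $\beta_t = (1+\zeta^d)\pi^2 t^2/(3\delta)$ and delivers, on this sub-event, both $|p_{t-1;\eta}([\bm{x}]) - \mu^{(p)}_{t-1;\eta}([\bm{x}])| \le \beta_t^{1/m}\gamma^{2/m}_{t-1;\eta}([\bm{x}])$ at every grid point and the third asserted inequality at $\bm{x}_t$.

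On the intersection of the two sub-events I would assemble the remaining two bounds. For the upper bound, Lipschitz continuity gives $p_{\rm upper}(\bm{x}) \le p_{\rm upper}([\bm{x}]) + \epsilon/4$, the upper half of \eqref{eq:basic_inequality} gives $p_{\rm upper}([\bm{x}]) \le p_{t-1;\eta}([\bm{x}]) + \tilde{p}_{2\eta}([\bm{x}])$, and the grid credible interval replaces $p_{t-1;\eta}([\bm{x}])$ by $\mu^{(p)}_{t-1;\eta}([\bm{x}]) + \beta_t^{1/m}\gamma^{2/m}_{t-1;\eta}([\bm{x}])$, producing the first inequality. Symmetrically, $p_{\rm upper}(\bm{x}) \ge p_{\rm upper}([\bm{x}]) - \epsilon/4$ together with the lower half $p_{t-1;\eta}([\bm{x}]) \le p_{\rm upper}([\bm{x}])$ and the credible interval gives the second inequality, with no $\tilde{p}_{2\eta}$ term needed. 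The main obstacle I anticipate is the bookkeeping of the union bound with data-dependent query points: one must verify that, conditioned on $H_t$, both $\bm{x}_t$ and the variance proxy $\gamma^2_{t-1;\eta}(\bm{x}_t)$ are $H_t$-measurable, so that the conditional Chebyshev bound integrates cleanly to the claimed marginal probability, and that the Lipschitz event from (C1) and the per-step posterior-deviation events lie in a common probability space so that the $\delta/2+\delta/2$ union bound is valid.
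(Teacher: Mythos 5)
Your proposal is correct and follows essentially the same route as the paper's proof: a $\delta/2$ budget for the Lipschitz/discretization event obtained from {\sf (C1)} with $L=b\sqrt{\log(2da/\delta)}$, a $\delta/2$ budget for the Chebyshev-based credible intervals union-bounded over the $\zeta^d$ grid points plus the query point $\bm{x}_t$ and over $t$, and then chaining these with the sandwich $p_{t-1;\eta}\le p_{\rm upper}\le p_{t-1;\eta}+\tilde{p}_{2\eta}$. Your remark about verifying $H_t$-measurability of $\bm{x}_t$ and $\gamma^2_{t-1;\eta}(\bm{x}_t)$ is a point the paper passes over silently, but it does not change the argument.
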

\begin{proof}
From the condition {\sf (C1)}, we have
$$
Pr \{\forall j, \ \forall {\bm x} \in \mathcal{X}, \ | \partial p_{\rm upper} ({\bm x}) / \partial x_j | <L \}
\geq 1- da e^{-L^2/(b^2) } .
$$
This implies that with probability at least $1-da e^{-L^2/(b^2) }$ the following holds:
$$
\forall {\bm x}, {\bm x}^\prime \in \mathcal{X}, \ | p_{\rm upper} ({\bm x}) -p_{\rm upper} ({\bm x}^\prime) |
\leq  L \| {\bm x} - {\bm x}^\prime \|_1 .
$$
Here, by replacing ${\bm x}^\prime $ with $ [{\bm x} ] $, we get
$$
\forall {\bm x} \in \mathcal{X}, \ | p_{\rm upper} ({\bm x}) -p_{\rm upper}   ( [ {\bm x} ] ) |
\leq  L \| {\bm x} - [{\bm x}] \|_1 .
$$
Therefore, by choosing $L= b \sqrt{  \log (2 d a /\delta )}$, with probability at least $1-\delta/2$, the following holds:
$$
\forall {\bm x} \in \mathcal{X}, \ | p_{\rm upper} ({\bm x}) -p_{\rm upper}   ( [ {\bm x} ] ) |
\leq  b \sqrt{  \log (2 d a /\delta )} \| {\bm x} - [{\bm x}] \|_1 .
$$
Moreover, from \eqref{eq:discretization}, we have
$$
\forall {\bm x} \in \mathcal{X}, \ | p_{\rm upper} ({\bm x}) -p_{\rm upper}   ( [ {\bm x} ] ) |
\leq  b \sqrt{  \log (2 d a /\delta )} r d /\zeta .
$$
Hence, from the definition of $\zeta$, the following inequality holds:
\begin{equation}
\forall {\bm x} \in \mathcal{X}, \ | p_{\rm upper} ({\bm x}) -p_{\rm upper}   ( [ {\bm x} ] ) |
\leq  \epsilon /4. \label{eq:epsilon4}
\end{equation}
On the other hand, by using the same argument as the proof of Lemma \ref{lem:fin_union}, for any $t \geq 1$ and ${\bm x}^\prime \in \tilde{\mathcal{X}}$,
the following holds with probability at least $1-3 \delta /( (1+\zeta^d)\pi^2 t^2  ) $:
\begin{align*}
|p_{t-1;\eta} ({\bm x}^\prime ) - \mu^{(p)}_{t-1;\eta} ({\bm x}^\prime ) |  & \leq \beta^{1/m}_t \gamma^{2/m}_{t-1;\eta} ({\bm x}^\prime ).
\end{align*}
Similarly, with probability at least $1-3 \delta /( (1+\zeta^d)\pi^2 t^2  ) $, the following inequality holds:
$$
|p_{t-1;\eta} ({\bm x}_t ) - \mu^{(p)}_{t-1;\eta} ({\bm x}_t ) |  \leq \beta^{1/m}_t \gamma^{2/m}_{t-1;\eta} ({\bm x}_t ).
$$
Thus, since $|\tilde{\mathcal{X}}| = \zeta^d$, with probability at least $1- \delta/2$, the following holds for any $t \geq 1$ and ${\bm x}^\prime \in \tilde{\mathcal{X}}$:
\begin{align}
|p_{t-1;\eta} ({\bm x}^\prime ) - \mu^{(p)}_{t-1;\eta} ({\bm x}^\prime ) |  & \leq \beta^{1/m}_t \gamma^{2/m}_{t-1;\eta} ({\bm x}^\prime ), \label{eq:66}\\
|p_{t-1;\eta} ({\bm x}_t ) - \mu^{(p)}_{t-1;\eta} ({\bm x}_t ) |  &\leq \beta^{1/m}_t \gamma^{2/m}_{t-1;\eta} ({\bm x}_t ). \label{eq:67}
\end{align}
Therefore, noting that $p_{t-1;\eta} ({\bm x} ) \leq p_{\rm upper} ({\bm x}) \leq p_{t-1;\eta} ({\bm x} ) +\tilde{p}_{2 \eta} ({\bm x} )$, using
\eqref{eq:epsilon4}, \eqref{eq:66}  and \eqref{eq:67}, with probability at least $1-\delta$, desired inequalities hold for any ${\bm x} \in \mathcal{X}$ and $t \geq1$.
\end{proof}
By using Lemma \ref{lem:stochastic}, we prove Theorem \ref{thm:BPT-UCB-infinite}.
\begin{proof}
Let $\delta \in (0,1)$, $\epsilon >0$, $m \geq 2$, $\zeta = \lceil d (\epsilon /4)^{-1} b r \sqrt{  \log (4 d a /\delta) }  \rceil  $,  $\beta _t = 2 (1+ \zeta^d ) \pi^2 t^2 /(3 \delta ) $ and $2 \eta = \min \{ \frac{3 \epsilon \sigma_{0,min}}{8}, \frac{9 \epsilon^2 \delta \sigma_{0,min}}{128 \zeta ^d }\}$.
Then, from Lemma \ref{lem:stochastic}, with probability at least $1-\delta/2$, the following holds for any ${\bm x} \in \mathcal{X}$ and $t \geq 1$:
\begin{align*}
p_{\rm upper} ({\bm x} ) &\leq \epsilon/4 + \tilde{p}_{2 \eta}  ([ {\bm x} ]) + \mu^{(p)}_{t-1;\eta} ([{\bm x}]) +
\beta^{1/m}_t  \gamma^{2/m}_{t-1;\eta} ([{\bm x}]), \\
p_{t-1;\eta} ({\bm x}_t ) & \geq  \mu^{(p)}_{t-1;\eta} ({\bm x}_t )  -\beta^{1/m}_t \gamma^{2/m}_{t-1;\eta} ({\bm x}_t ).
\end{align*}
Here, noting that $p_{t-1;\eta} ({\bm x}_t ) \leq p_{\rm upper} ({\bm x}_t )$, from the definition of $r_t (\epsilon)$ we get
\begin{align*}
&r_t (\epsilon ) \\
&=   (p_{\rm upper} ({\bm x}^\ast ) - \epsilon)- p_{\rm upper } ({\bm x}_t ) \\
&\leq  -3 \epsilon/4 + \tilde{p}_{2 \eta}  ([ {\bm x}^\ast ]) + \mu^{(p)}_{t-1;\eta} ([{\bm x}^\ast]) +
\beta^{1/m}_t  \gamma^{2/m}_{t-1;\eta} ([{\bm x}^\ast])-p_{t-1;\eta} ({\bm x}_t ) \\
&\leq -3 \epsilon/4 + \tilde{p}_{2 \eta}  ([ {\bm x}^\ast ]) + \mu^{(p)}_{t-1;\eta} ([{\bm x}^\ast]) +
\beta^{1/m}_t  \gamma^{2/m}_{t-1;\eta} ([{\bm x}^\ast])-(\mu^{(p)}_{t-1;\eta} ({\bm x}_t )  -\beta^{1/m}_t \gamma^{2/m}_{t-1;\eta} ({\bm x}_t )).
\end{align*}
Moreover, from the definition of ${\bm x}_t$, it holds that
$$
\mu^{(p)}_{t-1;\eta} ([{\bm x}^\ast]) +
\beta^{1/m}_t  \gamma^{2/m}_{t-1;\eta} ([{\bm x}^\ast]) \leq \mu^{(p)}_{t-1;\eta} ({\bm x}_t )  +\beta^{1/m}_t \gamma^{2/m}_{t-1;\eta} ({\bm x}_t ).
$$
Thus, we have
$$
r_t (\epsilon ) \leq -3 \epsilon/4 + \tilde{p}_{2 \eta}  ([ {\bm x}^\ast ]) +2 \beta^{1/m}_t \gamma^{2/m}_{t-1;\eta} ({\bm x}_t ).
$$
In addition, from Lemma  \ref{lem:prior_pro}, with probability at least $1-\delta /2 $, it holds that
$$
\tilde{p}_{2 \eta}  ({\bm x} ^\prime ) < 3 \epsilon /4, \ \forall {\bm x}^\prime \in \tilde{\mathcal{X}}
$$
because the value of $2 \eta$ is given by replacing $\epsilon$ and $|\mathcal{X}|$ in Lemma \ref{lem:prior_pro} with $3\epsilon /4$ and
$|  \tilde{\mathcal{X}} | = \zeta^d$, respectively.
Therefore, with probability at least $1- \delta$, the following holds for any $t \geq 1$:
\begin{align}
r_t (\epsilon ) \leq 2 \beta^{1/m}_t \gamma^{2/m}_{t-1;\eta} ({\bm x}_t ). \label{eq:epsilon_regret_infinite}
\end{align}
Hence, from \eqref{eq:epsilon_regret_infinite}, \eqref{eq:gamma_upper} and \eqref{eq:var_bound},
with probability at least $1-\delta$, the following holds for any $T \geq 1$:
\begin{align*}
R_T (\epsilon) = \sum_{t=1}^T r_t (\epsilon)
& \leq 2 \beta ^{1/m}_T \sum_{t=1}^T \gamma^{2/m}_{t-1;\eta} ({\bm x}_t ) \\
&\leq  2 \beta ^{1/m}_T    \frac{2m}{   (2 \pi)^{1/(2m)}    \eta^{2+1/m}}        \sum_{t=1}^T  \sigma^2_{t-1} ({\bm x}_t,{\bm w}_t) \\
&\leq  2 \beta ^{1/m}_T    \frac{2m}{   (2 \pi)^{1/(2m)}    \eta^{2+1/m}}   \frac{2}{\log (1+\sigma^{-2} ) } \kappa_T
= C_1 \beta^{1/m} _T \kappa_T \eta^{-(2+1/m)}.
\end{align*}
\end{proof}

\subsection{Regret bound of  BPT-TS when $\mathcal{X}$ is infinite set}
In this subsection, we give the theorem about the Bayes $\epsilon$-regret for BPT-TS.
First,  we change the selection strategy for ${\bm x}_t$.
The design parameter ${\bm x}_t$ is chosen as
\begin{align*}
{\bm x}_t = [ {\bm x}^\prime_t  ] \ \text{and} \
 {\bm x}^\prime_t = \argmax _{ {\bm x} \in \mathcal{X} } \int _{\Omega} \1 [\hat{f} ({\bm x}, {\bm w} ) > h  ] p({\bm w} ) \text{d} {\bm w}. %
\end{align*}
Then, the following theorem holds:
\begin{theorem}\label{thm:BPT-TS-infinite}
Let $m \geq 2$, $\epsilon >0$, $\zeta = \lceil d (\epsilon/4) ^{-1} b r  \sqrt{\log ( 4 d a  / \epsilon ) } \rceil $ and
$2 \eta = \min \{   \frac{ \epsilon \sigma_{0,min} }{8}, \frac{\epsilon^3 \sigma_{0,min}}{256 \zeta^d} \}$.
Then, running BPT-TS with these parameters, the Bayes $\epsilon$-regret satisfies the following inequality:
$$
BR_T (\epsilon) \leq \pi^2/6 +C_4 T^{2/m} \kappa_T \eta^{-(2+1/m)},
$$
where $C_4 = \frac{4 m \zeta^{d/m}}{   (2 \pi)^{1/(2m)} \log (1+\sigma^{-2})   }$.
\end{theorem}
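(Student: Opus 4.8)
The plan is to replay the finite-domain argument behind Theorem~\ref{thm:fin_bayes_regret} (i.e.\ the chain of Lemmas~\ref{lem:split_regret}--\ref{lem:first_term_bound}) on the discretization $\tilde{\mathcal{X}}$, while absorbing the extra discretization error through the stochastic Lipschitz condition {\sf (C1)} exactly as in Lemma~\ref{lem:stochastic}. Throughout I would work with the internal parameter $\beta_t = t^2 \zeta^d$ and the upper confidence bound $U_t(\bm{x}) = \mu^{(p)}_{t-1;\eta}(\bm{x}) + \beta_t^{1/m}\gamma^{2/m}_{t-1;\eta}(\bm{x})$ defined on $\tilde{\mathcal{X}}$, noting that $|\tilde{\mathcal{X}}| = \zeta^d$ now plays the role of $|\mathcal{X}|$ in the finite case; this is precisely what turns the constant $|\mathcal{X}|^{1/m}$ of Lemma~\ref{lem:first_term_bound} into $\zeta^{d/m}$ and hence produces $C_4$.

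First I would establish the regret decomposition analogous to Lemma~\ref{lem:split_regret}. Using $p_{t-1;\eta}(\bm{x}) \le p_{\rm upper}(\bm{x}) \le p_{t-1;\eta}(\bm{x}) + \tilde{p}_{2\eta}(\bm{x})$ at $[\bm{x}^\ast]$, the splitting $p_{\rm upper}(\bm{x}^\ast) = \{p_{\rm upper}(\bm{x}^\ast) - p_{\rm upper}([\bm{x}^\ast])\} + p_{\rm upper}([\bm{x}^\ast])$, and $p_{\rm upper}(\bm{x}_t) \ge p_{t-1;\eta}(\bm{x}_t)$ (valid since $\bm{x}_t = [\bm{x}'_t] \in \tilde{\mathcal{X}}$), I would insert $U_t$ to express each per-step term as the sum of $\{p_{\rm upper}(\bm{x}^\ast) - p_{\rm upper}([\bm{x}^\ast])\}$, $\{\tilde{p}_{2\eta}([\bm{x}^\ast]) - \epsilon\}$, $\{p_{t-1;\eta}([\bm{x}^\ast]) - U_t([\bm{x}^\ast])\}$, $\{U_t([\bm{x}^\ast]) - U_t(\bm{x}_t)\}$, and $\{U_t(\bm{x}_t) - p_{t-1;\eta}(\bm{x}_t)\}$. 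The Thompson-sampling identity is the key step: conditioned on $H_t$ the continuous maximizers $\bm{x}^\ast$ and $\bm{x}'_t$ are identically distributed, so the deterministic map $[\cdot]$ makes $[\bm{x}^\ast]$ and $\bm{x}_t = [\bm{x}'_t]$ identically distributed, and since $U_t$ is $H_t$-measurable this gives $\mathbb{E}[U_t([\bm{x}^\ast]) - U_t(\bm{x}_t)] = 0$, exactly as in \eqref{eq:Ber2}.

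It then remains to bound the four surviving expectations summed over $t$. For the combined slack I would prove the ($t$-independent) bounds $\mathbb{E}[\tilde{p}_{2\eta}([\bm{x}^\ast])] \le \epsilon/2$ and $\mathbb{E}[p_{\rm upper}(\bm{x}^\ast) - p_{\rm upper}([\bm{x}^\ast])] \le \epsilon/2$, so that their per-step sum minus $\epsilon$ is $\le 0$. The first follows from the argument of Lemma~\ref{lem:expectation0} applied to $\tilde{\mathcal{X}}$: instantiating Lemma~\ref{lem:prior_pro} with $\epsilon \mapsto \epsilon/4$, $\delta \mapsto \epsilon/2$, $|\mathcal{X}| \mapsto \zeta^d$ yields exactly the stated $2\eta = \min\{\epsilon\sigma_{0,min}/8,\ \epsilon^3\sigma_{0,min}/(256\zeta^d)\}$ and the prior event $\{\tilde{p}_{2\eta}(\bm{x}) < \epsilon/4\ \forall \bm{x}\in\tilde{\mathcal{X}}\}$ of probability $\ge 1-\epsilon/4$; splitting on this event and using $\tilde{p}_{2\eta}\le 1$ off it gives $\epsilon/4 + \epsilon/4$. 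The second is handled identically, replacing the $\tilde p$ event by the Lipschitz event of Lemma~\ref{lem:stochastic} — with $\zeta = \lceil d(\epsilon/4)^{-1}br\sqrt{\log(4da/\epsilon)}\rceil$ the failure probability is $\le \epsilon/4$, while $0 \le p_{\rm upper}(\bm{x}^\ast)-p_{\rm upper}([\bm{x}^\ast])\le 1$ off the event. Next, $\sum_t \mathbb{E}[p_{t-1;\eta}([\bm{x}^\ast]) - U_t([\bm{x}^\ast])] \le \pi^2/6$ reproduces Lemma~\ref{lem:second_term_bound}, now Chebyshev-union-bounding over the $\zeta^d$ points of $\tilde{\mathcal{X}}$ with $\beta_t^{-1} = (t^2\zeta^d)^{-1}$. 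Finally $\sum_t \mathbb{E}[U_t(\bm{x}_t) - p_{t-1;\eta}(\bm{x}_t)] \le C_4 T^{2/m}\eta^{-(2+1/m)}\kappa_T$ repeats Lemma~\ref{lem:first_term_bound}: the $\mu^{(p)}-p_{t-1;\eta}$ part vanishes by the tower property (using that $\bm{x}_t$ depends only on $\hat f$, hence is independent of $f$ given $H_t$), and the $\gamma^{2/m}$ part is controlled by \eqref{eq:gamma_upper}, monotonicity of $\beta_t$, and \eqref{eq:var_bound}, with $\beta_T^{1/m} = T^{2/m}\zeta^{d/m}$. Adding these bounds gives the claim.

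I expect the main obstacle to be the Thompson-sampling distributional identity under discretization, i.e.\ justifying that $[\bm{x}^\ast]$ and $[\bm{x}'_t]$ share a conditional law given $H_t$ — this needs the equality in law of the true and sampled continuous maximizers together with measurability of $[\cdot]$ — combined with the bookkeeping that makes the two prior-level slacks (from $\tilde p$ and from the Lipschitz discretization) add up to at most $\epsilon$, since it is exactly this splitting into $\epsilon/4$ pieces that forces the non-trivial constants appearing in $\zeta$ and $\eta$.
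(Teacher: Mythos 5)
Your proposal is correct and follows essentially the same route as the paper: the paper proves Theorem~\ref{thm:BPT-TS-infinite} by the four-term decomposition of Lemma~\ref{lem:4bunkatu} (with the $\epsilon$ slack split into two $\epsilon/2$ pieces absorbed by the Lipschitz discretization error and by $\tilde p_{2\eta}([\bm x^\ast])$ via Lemmas~\ref{lem:upper_dis} and~\ref{lem:2eta_infinite}), the Thompson-sampling identity $\mathbb{E}[U_t([\bm x^\ast]) - U_t(\bm x_t)\mid H_t]=0$, and then Lemma~\ref{lem:xast_closest_bound}, which is exactly Lemmas~\ref{lem:second_term_bound}--\ref{lem:first_term_bound} rerun on $\tilde{\mathcal X}$ with $\beta_t = t^2\zeta^d$, producing the constant $C_4$ through $|\tilde{\mathcal X}|^{1/m}=\zeta^{d/m}$. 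Your instantiation of the parameters ($\epsilon\mapsto\epsilon/4$, $\delta\mapsto\epsilon/2$, $|\mathcal X|\mapsto\zeta^d$ in Lemma~\ref{lem:prior_pro}) matches the stated $\eta$ and $\zeta$, so the bookkeeping is consistent with the paper's.
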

In order to prove Theorem \ref{thm:BPT-TS-infinite}, we show the following lemmas:
\begin{lemma}\label{lem:4bunkatu}
Let $T \geq 1$, $\epsilon >0$ and $\eta >0$. Then, for any sequence of upper confidence bounds $\{U_t : \mathcal{X} \to \mathbb{R} \}_{ t \geq 1} $, the following holds:
\begin{align*}
BR_T (\epsilon ) &\leq \sum _{t=1}^T \mathbb{E} [ p_{\rm upper} ({\bm x}^\ast ) -  p_{\rm upper} ([{\bm x}^\ast] )-\epsilon/2]
+\sum _{t=1}^T \mathbb{E} [   p_{t-1;\eta} ( [{\bm x}^\ast]  ) - U_t ( [{\bm x}^\ast] )       ] \\
&\  + \sum _{t=1}^T \mathbb{E} [   \tilde{p}_{2 \eta} ( [{\bm x}^\ast] )  -\epsilon/2  ]    
   + \sum _{t=1}^T \mathbb{E} [    U_t ( {\bm x}_t )  -   p_{t-1;\eta} (  {\bm x}_t  )   ] .
\end{align*}
\end{lemma}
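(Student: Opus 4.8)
The plan is to mirror the decomposition of Lemma~\ref{lem:split_regret}, inserting the discretization point $[\bm{x}^\ast]$ wherever the finite-set proof used $\bm{x}^\ast$ directly, and to absorb the gap $p_{\rm upper}(\bm{x}^\ast) - p_{\rm upper}([\bm{x}^\ast])$ into a dedicated summand. First I would start from the instantaneous $\epsilon$-regret $r_t(\epsilon) = (p_{\rm upper}(\bm{x}^\ast) - \epsilon) - p_{\rm upper}(\bm{x}_t)$ and apply the two-sided bound \eqref{eq:basic_inequality}, namely $p_{t-1;\eta}(\bm{x}) \le p_{\rm upper}(\bm{x}) \le p_{t-1;\eta}(\bm{x}) + \tilde{p}_{2 \eta}(\bm{x})$. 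Using the lower bound at $\bm{x}_t$ gives $-p_{\rm upper}(\bm{x}_t) \le -p_{t-1;\eta}(\bm{x}_t)$; I would then split the $-\epsilon$ into two halves $-\epsilon/2$ and algebraically insert $p_{\rm upper}([\bm{x}^\ast])$, $p_{t-1;\eta}([\bm{x}^\ast])$, $U_t([\bm{x}^\ast])$ and $U_t(\bm{x}_t)$ by adding and subtracting, applying the upper bound $p_{\rm upper}([\bm{x}^\ast]) \le p_{t-1;\eta}([\bm{x}^\ast]) + \tilde{p}_{2 \eta}([\bm{x}^\ast])$ once to convert $p_{\rm upper}([\bm{x}^\ast])$ into its $p_{t-1;\eta}$ and $\tilde{p}_{2\eta}$ pieces.

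Concretely, this produces the telescoping inequality
\begin{align*}
r_t(\epsilon) \le &\ \big(p_{\rm upper}(\bm{x}^\ast) - p_{\rm upper}([\bm{x}^\ast]) - \epsilon/2\big) + \big(p_{t-1;\eta}([\bm{x}^\ast]) - U_t([\bm{x}^\ast])\big) \\
&+ \big(\tilde{p}_{2 \eta}([\bm{x}^\ast]) - \epsilon/2\big) + \big(U_t([\bm{x}^\ast]) - U_t(\bm{x}_t)\big) + \big(U_t(\bm{x}_t) - p_{t-1;\eta}(\bm{x}_t)\big).
\end{align*}
Taking expectations and summing over $t$ yields exactly the four claimed sums plus the extra term $\sum_{t=1}^T \mathbb{E}[U_t([\bm{x}^\ast]) - U_t(\bm{x}_t)]$, which I must show vanishes.

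The crux is the Thompson-sampling posterior-matching step. Conditioned on the history $H_t = \{\bm{x}_1,\bm{w}_1,y_1,\ldots,\bm{x}_{t-1},\bm{w}_{t-1},y_{t-1}\}$, the sampled function $\hat{f}$ is drawn from the same posterior as $f$, so $\bm{x}^\prime_t = \argmax_{\bm{x} \in \mathcal{X}} \int_\Omega \1[\hat{f}(\bm{x},\bm{w})>h]p(\bm{w})\text{d}\bm{w}$ has the same conditional law as $\bm{x}^\ast$. Since the discretization map $[\,\cdot\,]$ is deterministic, $\bm{x}_t = [\bm{x}^\prime_t]$ and $[\bm{x}^\ast]$ are identically distributed given $H_t$, and $U_t$ is $H_t$-measurable; hence $\mathbb{E}[U_t([\bm{x}^\ast]) \mid H_t] = \mathbb{E}[U_t(\bm{x}_t) \mid H_t]$, and the tower property gives $\mathbb{E}[U_t([\bm{x}^\ast]) - U_t(\bm{x}_t)] = 0$, exactly as in \eqref{eq:Ber2}. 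This is the main obstacle: I must verify that the \emph{modified} selection rule $\bm{x}_t = [\bm{x}^\prime_t]$ (rather than $\bm{x}_t = \bm{x}^\prime_t$) still preserves the distributional identity after applying $[\,\cdot\,]$, which it does because applying a fixed measurable map to two equal-in-law random elements keeps them equal in law. Dropping this zero term leaves precisely the stated bound on $BR_T(\epsilon) = \sum_{t=1}^T \mathbb{E}[r_t(\epsilon)]$.
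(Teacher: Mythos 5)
Your proposal is correct and follows essentially the same route as the paper: the same add-and-subtract decomposition of $r_t(\epsilon)$ around $[\bm{x}^\ast]$, the same use of the sandwich $p_{t-1;\eta} \leq p_{\rm upper} \leq p_{t-1;\eta} + \tilde{p}_{2\eta}$, and the same Thompson-sampling posterior-matching step $\mathbb{E}[U_t([\bm{x}^\ast]) - U_t(\bm{x}_t) \mid H_t] = 0$. If anything, you spell out more carefully than the paper does why the deterministic discretization map $[\,\cdot\,]$ preserves the conditional equality in law between $\bm{x}^\prime_t$ and $\bm{x}^\ast$.
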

\begin{proof}
From the definition of $r_t (\epsilon)$, we have
\begin{align*}
r_t (\epsilon) &= ( p_{\rm upper} ({\bm x}^\ast ) -\epsilon) -   p_{\rm upper } (  {\bm x}_t  )  \\
&= (p_{\rm upper} ({\bm x}^\ast ) - p_{\rm upper} ([{\bm x}^\ast] ) -\epsilon/2 ) + (p_{\rm upper} ([{\bm x}^\ast] ) -U_t ([{\bm x}^\ast]) -\epsilon/2) ) \\
&+ (  U_t ([{\bm x}^\ast] ) - U_t ({\bm x}_t ) ) + (U_t ({\bm x}_t ) - p_{\rm upper} ({\bm x}_t )  ) .
\end{align*}
Thus, from $ p_{t-1;\eta} ( {\bm x} ) \leq p_{\rm upper} ({\bm x} ) \leq p_{t-1;\eta} ( {\bm x} ) +\tilde{p}_{2 \eta} ({\bm x})$ and
$$
\mathbb{E} [ U_t ([{\bm x}^\ast] ) - U_t ({\bm x}_t )] =
\mathbb{E}[ \mathbb{E} [ U_t ([{\bm x}^\ast] ) - U_t ({\bm x}_t ) \mid H_t] ] , \
\mathbb{E} [ U_t ([{\bm x}^\ast] ) - U_t ({\bm x}_t) \mid H_t] =0,
$$
taking expectation and summing over $t$ we get the desired inequality.
\end{proof}

\begin{lemma}\label{lem:upper_dis}
Let $\epsilon >0$ and $\zeta = \lceil d (\epsilon/4) ^{-1} b r  \sqrt{\log ( 4 d a  / \epsilon ) } \rceil $. Then,
the following holds:
$$
\sum _{t=1}^T \mathbb{E} [ p_{\rm upper} ({\bm x}^\ast ) -  p_{\rm upper} ([{\bm x}^\ast] ) -\epsilon/2 ] \leq 0.
$$
\end{lemma}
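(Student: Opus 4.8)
\textbf{Proof proposal for Lemma~\ref{lem:upper_dis}.}
The plan is to reduce the statement to a high-probability $\ell_1$-Lipschitz bound on $p_{\rm upper}$ and then control the expectation by splitting over a ``good'' discretization event, in direct analogy with the treatment of $\tilde{p}_{2\eta}$ in Lemma~\ref{lem:expectation0}. First I would invoke condition {\sf (C1)} together with a union bound over the $d$ coordinate directions to get that, with probability at least $1 - d a e^{-L^2/b^2}$, every partial derivative of $p_{\rm upper}$ is bounded by $L$ uniformly over $\mathcal{X}$; this yields $|p_{\rm upper}(\bm{x}) - p_{\rm upper}(\bm{x}')| \le L\|\bm{x} - \bm{x}'\|_1$ for all $\bm{x}, \bm{x}' \in \mathcal{X}$, exactly as at the start of the proof of Lemma~\ref{lem:stochastic}. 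Specializing to $\bm{x}' = [\bm{x}]$ and using the discretization bound $\|\bm{x} - [\bm{x}]\|_1 \le rd/\zeta$ gives $|p_{\rm upper}(\bm{x}) - p_{\rm upper}([\bm{x}])| \le Lrd/\zeta$ on that event.

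The crucial calibration step is to set $L = b\sqrt{\log(4da/\epsilon)}$. The definition $\zeta = \lceil d (\epsilon/4)^{-1} b r \sqrt{\log(4da/\epsilon)}\rceil$ then forces $Lrd/\zeta \le \epsilon/4$, while the failure probability becomes $d a e^{-L^2/b^2} = da\cdot\epsilon/(4da) = \epsilon/4$. Hence the event $H = \{f : |p_{\rm upper}(\bm{x}) - p_{\rm upper}([\bm{x}])| \le \epsilon/4,\ \forall \bm{x} \in \mathcal{X}\}$ has prior probability at least $1 - \epsilon/4$.

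For the expectation, note that since $\bm{x}^\ast = \argmax_{\bm{x} \in \mathcal{X}} p_{\rm upper}(\bm{x})$, optimality gives $0 \le p_{\rm upper}(\bm{x}^\ast) - p_{\rm upper}([\bm{x}^\ast]) \le 1$, and this quantity is a measurable function of $f$ alone (both $\bm{x}^\ast$ and $[\bm{x}^\ast]$ are determined by $f$), so the expectation reduces to one over the prior $\mathbb{E}_f$. The subtlety I would emphasize is that $\bm{x}^\ast$ is random, so no pointwise bound applies; instead the uniformity of $H$ over all $\bm{x} \in \mathcal{X}$ lets the estimate hold at the random point $\bm{x}^\ast$. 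Splitting $\mathbb{E}_f[p_{\rm upper}(\bm{x}^\ast) - p_{\rm upper}([\bm{x}^\ast])]$ on $\{f \in H\}$ versus its complement, the first term is at most $\epsilon/4$ and the second is at most $Pr(f \notin H) \le \epsilon/4$ since the integrand never exceeds $1$. Thus the per-step expectation is at most $\epsilon/2$, giving $\mathbb{E}[p_{\rm upper}(\bm{x}^\ast) - p_{\rm upper}([\bm{x}^\ast]) - \epsilon/2] \le 0$. Because this summand carries no dependence on $t$, summing over $t = 1, \dots, T$ preserves the inequality and yields the claim. I expect the only delicate point to be the joint calibration of $L$ against $\zeta$ so that both the discretization error and the failure probability land at $\epsilon/4$ simultaneously; the rest follows the established templates of Lemmas~\ref{lem:stochastic} and~\ref{lem:expectation0}.
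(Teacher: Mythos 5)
Your proposal is correct and follows essentially the same route as the paper: a Lipschitz bound from {\sf (C1)} calibrated so that both the discretization error and the failure probability equal $\epsilon/4$, followed by splitting the expectation over the good event $\tilde{H}$ and using that the integrand is bounded by $1$ on the complement. The only cosmetic difference is that you additionally note the nonnegativity of $p_{\rm upper}(\bm{x}^\ast) - p_{\rm upper}([\bm{x}^\ast])$ via optimality, whereas the paper simply bounds the absolute value; both give the same per-step bound of $\epsilon/2$.
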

\begin{proof}
By using the same argument as proof of Lemma \ref{lem:stochastic}, with probability at least $1-\epsilon/4$,
the following holds:
$$
\forall {\bm x} \in \mathcal{X}, \ | p_{\rm upper} ({\bm x}) -   p_{\rm upper} ([ {\bm x} ]) | \leq b \sqrt{\log (4 d a / \epsilon)} r d / \zeta.
$$
 Hence, from the definition of $\zeta$, the following holds with probability at least $1-\epsilon /4$:
$$
\forall {\bm x} \in \mathcal{X}, \ | p_{\rm upper} ({\bm x}) -   p_{\rm upper} ([ {\bm x} ]) | \leq \epsilon/4.
$$
Here, let $\tilde{H} = \{ f \mid \forall {\bm x} \in \mathcal{X}, \ | p_{\rm upper} ({\bm x}) -   p_{\rm upper} ([ {\bm x} ]) | \leq \epsilon /4 \}$.
Then, noting that $p_{\rm upper} ({\bm x} ) - p_{\rm upper} ({\bm x}^\prime )  \leq 1$, we obtain
\begin{align*}
&\mathbb{E} [ p_{\rm upper} ({\bm x}^\ast ) -  p_{\rm upper} ([{\bm x}^\ast] ) ] \\
&=
\mathbb{E} [ \1  [ f \in \tilde{H}   ]    (p_{\rm upper} ({\bm x}^\ast ) -  p_{\rm upper} ([{\bm x}^\ast] )   )] +
\mathbb{E} [ \1  [ f \notin \tilde{H}   ]    (p_{\rm upper} ({\bm x}^\ast ) -  p_{\rm upper} ([{\bm x}^\ast] )   )]  \\
&\leq \mathbb{E} [ \1  [ f \in \tilde{H}   ]    |p_{\rm upper} ({\bm x}^\ast ) -  p_{\rm upper} ([{\bm x}^\ast] )   |] +
\mathbb{E} [ \1  [ f \notin \tilde{H}   ]   ]  \\
&\leq \mathbb{E} [ \1  [ f \in \tilde{H}   ]   \epsilon/4]  +
\mathbb{E} [ \1  [ f \notin \tilde{H}   ]   ]  \leq \epsilon/4 + \epsilon/4 =\epsilon/2.
\end{align*}
Therefore, we get the desired inequality.
\end{proof}

\begin{lemma}\label{lem:2eta_infinite}
Let $\epsilon >0$ and $2 \eta = \min \{  \frac{ \epsilon \sigma_{0,min}}{ 8  },   \frac{ \epsilon^3 \sigma_{0,min} }{ 256 |\tilde{\mathcal{X}}|  }  \}$. Then, the following holds for any $t \geq 1$:
$$
\mathbb{E} [   \tilde{p}_{2 \eta} ( [{\bm x}^\ast] )  -\epsilon/2  ]  \leq 0.
$$
\end{lemma}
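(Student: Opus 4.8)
The plan is to mirror the proof of Lemma \ref{lem:expectation0} almost verbatim, the only changes being that the union bound is taken over the finite discretization $\tilde{\mathcal{X}}$ rather than over $\mathcal{X}$, and that the target accuracy is halved to $\epsilon/2$. The key structural observation that makes this work is that $[{\bm x}^\ast]$ always lies in the fixed finite set $\tilde{\mathcal{X}}$, so a uniform high-probability bound over $\tilde{\mathcal{X}}$ automatically controls $\tilde{p}_{2\eta}([{\bm x}^\ast])$ regardless of how the random maximizer ${\bm x}^\ast$ depends on $f$.

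First I would re-run the Chebyshev-plus-union-bound argument of Lemma \ref{lem:prior_pro} over the finite set $\tilde{\mathcal{X}}$ instead of $\mathcal{X}$; nothing in that proof uses any property of $\mathcal{X}$ beyond its finiteness, so the identical computation yields, with the roles of the accuracy and confidence parameters played by $\epsilon/4$ and $\epsilon/2$ and with $|\tilde{\mathcal{X}}|$ in place of $|\mathcal{X}|$, the choice
\[
2\eta = \min\Bigl\{\tfrac{(\epsilon/4)\sigma_{0,min}}{2},\ \tfrac{(\epsilon/4)^2(\epsilon/2)\sigma_{0,min}}{8|\tilde{\mathcal{X}}|}\Bigr\} = \min\Bigl\{\tfrac{\epsilon\sigma_{0,min}}{8},\ \tfrac{\epsilon^3\sigma_{0,min}}{256|\tilde{\mathcal{X}}|}\Bigr\},
\]
which is exactly the value assumed in the statement. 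With this $\eta$, the argument gives, with prior probability at least $1-\epsilon/4$, the uniform bound $\tilde{p}_{2\eta}({\bm x}') < \epsilon/4$ for every ${\bm x}' \in \tilde{\mathcal{X}}$.

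Next I would introduce the good event $H = \{f \mid \tilde{p}_{2\eta}({\bm x}') < \epsilon/4,\ \forall {\bm x}' \in \tilde{\mathcal{X}}\}$, so that $\mathbb{E}[\1[f\in H]] \ge 1-\epsilon/4$, where the expectation is over the prior on $f$. Because $[{\bm x}^\ast]\in\tilde{\mathcal{X}}$ deterministically, on $H$ we have $\tilde{p}_{2\eta}([{\bm x}^\ast]) < \epsilon/4$ no matter how ${\bm x}^\ast$ is realized. Splitting the expectation over $H$ and its complement and using $0 \le \tilde{p}_{2\eta}([{\bm x}^\ast]) \le 1$ then gives
\[
\mathbb{E}[\tilde{p}_{2\eta}([{\bm x}^\ast])] \le \mathbb{E}[\1[f\in H]\,(\epsilon/4)] + \mathbb{E}[\1[f\notin H]] \le \epsilon/4 + \epsilon/4 = \epsilon/2,
\]
which is the desired inequality $\mathbb{E}[\tilde{p}_{2\eta}([{\bm x}^\ast]) - \epsilon/2] \le 0$.

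There is no serious obstacle here; the only point requiring care is the bookkeeping of the halved accuracy budget---splitting $\epsilon/2$ into $\epsilon/4$ from the uniform bound plus $\epsilon/4$ from the failure probability---and verifying that the chosen $2\eta$ is precisely what the composed substitution into Lemma \ref{lem:prior_pro} produces. The conceptual subtlety, shared with Lemma \ref{lem:expectation0}, is that although ${\bm x}^\ast$ is itself a function of $f$, the uniformity of the bound over the fixed set $\tilde{\mathcal{X}}$, together with $[{\bm x}^\ast]\in\tilde{\mathcal{X}}$, means we never need to track this dependence explicitly.
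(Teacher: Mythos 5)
Your proposal is correct and matches the paper's intended argument: the paper simply states that the proof is the same as that of Lemma \ref{lem:expectation0}, and your parameter bookkeeping (accuracy $\epsilon/4$, confidence parameter $\epsilon/2$, union bound over $\tilde{\mathcal{X}}$) reproduces exactly the stated value of $2\eta$ and the $\epsilon/4+\epsilon/4=\epsilon/2$ split. The observation that $[{\bm x}^\ast]\in\tilde{\mathcal{X}}$ deterministically, so the uniform prior-probability bound absorbs the dependence of ${\bm x}^\ast$ on $f$, is precisely the point the paper relies on.
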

\begin{proof}
Proof is the same as that of Lemma \ref{lem:expectation0}.
\end{proof}

\begin{lemma}\label{lem:xast_closest_bound}
Let $m \geq 2$, $\eta >0$, $\beta_t = t^2 |\tilde{\mathcal{X}}| $ and $U_t =\mu^{(p)}_{t-1;\eta} ({\bm x}) + \beta^{1/m}_t \gamma^{2/m}_{t-1;\eta} ({\bm x})$. Then, the following holds for any $T \geq 1$:
\begin{align*}
\sum _{t=1}^T \mathbb{E} [   p_{t-1;\eta} ( [{\bm x}^\ast]  ) - U_t ( [{\bm x}^\ast] )       ] &\leq \pi^2/6, \\
\sum _{t=1}^T \mathbb{E} [   U_t ( {\bm x}_t ) -  p_{t-1;\eta} ( {\bm x}_t  )        ] &\leq \frac{ 4m |\tilde{\mathcal{X}}|^{1/m}  } {  (2 \pi)^{1/(2m)} \log (1+\sigma^{-2})  }   T^{2/m} \eta^{-(2+1/m) } \kappa_T.
\end{align*}
\end{lemma}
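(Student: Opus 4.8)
The plan is to prove the two inequalities separately, each by replaying the corresponding finite-domain argument---Lemma~\ref{lem:second_term_bound} for the first bound and Lemma~\ref{lem:first_term_bound} for the second---with every union bound and cardinality factor taken over the discretization $\tilde{\mathcal{X}}$ instead of $\mathcal{X}$. The fact that makes this transfer work is that both $[\bm{x}^\ast]$ and the query $\bm{x}_t = [\bm{x}'_t]$ belong to the finite set $\tilde{\mathcal{X}}$ of size $|\tilde{\mathcal{X}}| = \zeta^d$, so that $\beta_t = t^2|\tilde{\mathcal{X}}|$ plays exactly the role that $t^2|\mathcal{X}|$ did in the finite case.

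For the first inequality, I would first reproduce the per-step conditional tail bound from \eqref{eq:Cinequality}: conditioned on $H_t$, Chebyshev's inequality applied to $|p_{t-1;\eta}(\bm{x}) - \mu^{(p)}_{t-1;\eta}(\bm{x})|^m$ gives $\Pr\{p_{t-1;\eta}(\bm{x}) - U_t(\bm{x}) > 0 \mid H_t\} < \beta_t^{-1} = 1/(t^2|\tilde{\mathcal{X}}|)$ for every $\bm{x}\in\tilde{\mathcal{X}}$. Since $[\bm{x}^\ast]\in\tilde{\mathcal{X}}$, $U_t\ge 0$, and $0\le p_{t-1;\eta}\le 1$ (hence $p_{t-1;\eta}-U_t\le 1$), I can bound $p_{t-1;\eta}([\bm{x}^\ast]) - U_t([\bm{x}^\ast])$ by $\sum_{\bm{x}\in\tilde{\mathcal{X}}}\1[p_{t-1;\eta}(\bm{x}) - U_t(\bm{x}) > 0]$; taking expectations through the tower property and summing the resulting $1/t^2$ over $t$ yields the bound $\pi^2/6$.

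For the second inequality, I would split $U_t(\bm{x}_t) - p_{t-1;\eta}(\bm{x}_t) = (\mu^{(p)}_{t-1;\eta}(\bm{x}_t) - p_{t-1;\eta}(\bm{x}_t)) + \beta_t^{1/m}\gamma^{2/m}_{t-1;\eta}(\bm{x}_t)$ exactly as in \eqref{eq:TS12}. The mean term sums to zero by the tower property, and for the variance term I would apply \eqref{eq:gamma_upper} to replace $\gamma^{2/m}_{t-1;\eta}(\bm{x}_t)$ by $(\eta\sqrt{2\pi})^{-1/m}\,2m\,\sigma^2_{t-1}(\bm{x}_t,\bm{w}_t)/\eta^2$---which still requires $\bm{w}_t$ to be selected via \eqref{eq:w_t}, as it is in the query-based algorithm---then use monotonicity of $\beta_t$ with $\beta_T^{1/m} = T^{2/m}|\tilde{\mathcal{X}}|^{1/m}$ together with Lemma~\ref{lem:var_bound} to sum the posterior variances against $\kappa_T$, giving the stated constant $4m|\tilde{\mathcal{X}}|^{1/m}/((2\pi)^{1/(2m)}\log(1+\sigma^{-2}))$.

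The main obstacle is the tower-property argument for the mean term, because the query now passes through the rounding map $[\cdot]$. I must verify that $\bm{x}_t = [\bm{x}'_t]$ is still measurable with respect to $\sigma(H_t,\hat f)$ and, conditionally on $H_t$, independent of the true $f$, so that $\mathbb{E}[p_{t-1;\eta}(\bm{x}_t)\mid H_t,\bm{x}_t] = \mu^{(p)}_{t-1;\eta}(\bm{x}_t)$ continues to hold. Since $[\cdot]$ is a fixed deterministic rounding applied to the Thompson sample $\hat f$, this measurability and conditional independence are preserved, so the argument of Lemma~\ref{lem:first_term_bound} goes through verbatim with $|\mathcal{X}|$ replaced by $|\tilde{\mathcal{X}}|$; the remaining work is only the routine bookkeeping of the cardinality factor.
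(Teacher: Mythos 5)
Your proposal is correct and follows exactly the route the paper takes: the paper's own proof consists of the single remark that the argument is the same as that of Lemma~\ref{lem:second_term_bound} and Lemma~\ref{lem:first_term_bound}, which is precisely the replay-with-$|\tilde{\mathcal{X}}|$-in-place-of-$|\mathcal{X}|$ strategy you carry out. Your extra check that ${\bm x}_t = [{\bm x}'_t]$ remains $H_t$-conditionally exchangeable with $[{\bm x}^\ast]$ and that the tower-property step survives the rounding map is a detail the paper leaves implicit, and it is handled correctly.
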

\begin{proof}
Proof is the same as that of Lemma \ref{lem:second_term_bound} and Lemma \ref{lem:first_term_bound}.
\end{proof}
Finally, we prove Theorem \ref{thm:BPT-TS-infinite}.
\begin{proof}
From Lemma \ref{lem:4bunkatu}, \ref{lem:upper_dis}, \ref{lem:2eta_infinite} and \ref{lem:xast_closest_bound},
noting that $| \tilde{\mathcal{X}}  | = \zeta^d$ we get Theorem \ref{thm:BPT-TS-infinite}.
\end{proof}

\subsection{Convergence of BPT-LSE when $\mathcal{X}$ is infinite set}
In this subsection, we give the theorem about the accuracy and convergence for BPT-LSE.
First, for  BPT-LSE, we redefine $\mathcal{H}_t $, $\mathcal{L}_t $ as
\begin{align*}
\mathcal{H}_t = \{ {\bm x} \in \mathcal{X} \mid l_{t;\eta} ([{\bm x}]) > \alpha -\epsilon/2 \}, \
\mathcal{L}_t = \{ {\bm x} \in \mathcal{X} \mid u_{t;\eta} ([{\bm x}]) < \alpha +\epsilon/2 \}. %\label{eq:HtLtredefine}
\end{align*}
Then, the following theorem holds:
\begin{theorem}\label{thm:BPT-LSE-infinite}
Let $\delta \in (0,1)$, $\alpha \in (0,1)$, $\epsilon >0$, $m \geq 2$, $\zeta = \lceil d (\epsilon /4)^{-1} b r \sqrt{  \log (4 d a /\delta) }  \rceil  $,  $\beta _t = 2 (1+ \zeta^d ) \pi^2 t^2 /(3 \delta ) $ and    $2 \eta = \min \{ \frac{ \epsilon \sigma_{0,min}}{8}, \frac{ \epsilon^2 \delta \sigma_{0,min}}{128 \zeta ^d }\}$.
Furthermore, let $C_3$ be defined as in Theorem \ref{thm:lse_convergence}.
Then, running BPT-LSE terminates after at most $T$ rounds, where $T$ is the smallest positive integer satisfying \eqref{eq:LSEbound}.
Moreover, with probability at least $1-\delta$, BPT-LSE returns $\epsilon$-accurate solution, i.e., the following inequality holds:
$$
Pr  \left \{
\sup_{ {\bm x} \in \mathcal{X} } e_\alpha ({\bm x} ) \leq \epsilon
\right \}  \geq 1-\delta.
$$
\end{theorem}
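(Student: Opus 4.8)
The plan is to replay the finite-case argument of Theorem~\ref{thm:lse_convergence} on the discretization $\tilde{\mathcal{X}}$, and to bridge from $\tilde{\mathcal{X}}$ to the continuum $\mathcal{X}$ with the stochastic-Lipschitz estimate already isolated for BPT-UCB in Lemma~\ref{lem:stochastic}. First I would split the failure probability $\delta$ over three events. \textbf{(a)} Using condition {\sf (C1)} with $L=b\sqrt{\log(4da/\delta)}$, the discretization bound \eqref{eq:discretization}, and the stated choice of $\zeta$, exactly as in the proof of Lemma~\ref{lem:stochastic}, with probability at least $1-\delta/4$ it holds that $|p_{\rm upper}(\bm{x})-p_{\rm upper}([\bm{x}])|\le \epsilon/4$ for all $\bm{x}\in\mathcal{X}$. \textbf{(b)} Applying the argument of Lemma~\ref{lem:fin_union} over the finite set $\tilde{\mathcal{X}}$ with the stated $\beta_t=2(1+\zeta^d)\pi^2t^2/(3\delta)$, with probability at least $1-\delta/4$ we have $p_{t-1;\eta}(\bm{x}')\in[l_{t;\eta}(\bm{x}'),u_{t;\eta}(\bm{x}')]$ for every $\bm{x}'\in\tilde{\mathcal{X}}$ and every $t\ge 1$. \textbf{(c)} Invoking Lemma~\ref{lem:prior_pro} with $\epsilon$ replaced by $\epsilon/4$ and $|\mathcal{X}|$ replaced by $\zeta^d$ — which is precisely what the stated value of $2\eta$ encodes — with probability at least $1-\delta/2$ we have $\tilde{p}_{2\eta}(\bm{x}')<\epsilon/4$ for all $\bm{x}'\in\tilde{\mathcal{X}}$. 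A union bound gives all three simultaneously with probability at least $1-\delta$.

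Next I would establish $\epsilon$-accuracy on this good event. Chaining the sandwich $p_{t-1;\eta}\le p_{\rm upper}\le p_{t-1;\eta}+\tilde{p}_{2\eta}$ with (a)--(c) evaluated at $[\bm{x}]$ yields, for every $\bm{x}\in\mathcal{X}$ and every $t$,
\begin{align*}
p_{\rm upper}(\bm{x}) &\ge p_{\rm upper}([\bm{x}])-\epsilon/4 \ge l_{t;\eta}([\bm{x}])-\epsilon/4, \\
p_{\rm upper}(\bm{x}) &\le p_{\rm upper}([\bm{x}])+\epsilon/4 \le u_{t;\eta}([\bm{x}])+\epsilon/2 .
\end{align*}
Feeding these into the redefined classification rule closes the accuracy claim: if $\bm{x}\in\hat{\mathcal{H}}$ then $l_{t;\eta}([\bm{x}])>\alpha-\epsilon/2$, so $p_{\rm upper}(\bm{x})>\alpha-3\epsilon/4$; if $\bm{x}\in\hat{\mathcal{L}}$ then $u_{t;\eta}([\bm{x}])<\alpha+\epsilon/2$, so $p_{\rm upper}(\bm{x})<\alpha+\epsilon$. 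In both cases $e_\alpha(\bm{x})\le\epsilon$, hence $\sup_{\bm{x}\in\mathcal{X}}e_\alpha(\bm{x})\le\epsilon$ with probability at least $1-\delta$.

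For termination I would reuse the straddle machinery essentially verbatim, now read over $\tilde{\mathcal{X}}$. Since the queries are made at representatives in $\tilde{\mathcal{X}}$ and $\mathcal{H}_t,\mathcal{L}_t$ are defined through $[\bm{x}]$, an unclassified $\bm{x}\in\mathcal{U}_t$ forces $\min\{u_{t;\eta}([\bm{x}])-\alpha,\ \alpha-l_{t;\eta}([\bm{x}])\}\ge\epsilon/2$, i.e.\ $\text{STR}_t([\bm{x}])\ge\epsilon/2$; this is the one place where the finite-$\mathcal{X}$ statement of Lemma~\ref{lem:terminate_cond} must be reinterpreted over $\tilde{\mathcal{X}}$, and it goes through because the argmax defining $\bm{x}_t$ ranges over the same finite set. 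The bounds of Lemma~\ref{lem:str_gam}, \eqref{eq:gamma_upper} and Lemma~\ref{lem:var_bound} are agnostic to the precise form of $\beta_t$ (they use only its monotonicity), so Lemma~\ref{lem:str_bound} holds with the infinite-case $\beta_t$; combined with \eqref{eq:LSEbound} it produces some $t'\le T$ with $\text{STR}_{t'}(\bm{x}_{t'})<\epsilon/2$, and Lemma~\ref{lem:terminate_cond} then forces $\mathcal{U}_{t'}=\emptyset$, so the algorithm stops within $T$ rounds.

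I expect the only real obstacle to be the bookkeeping of the probability budget and the three $\epsilon/4$-slacks: one must check that the Lipschitz discretization, the credible interval, and the $\tilde{p}_{2\eta}$ tail compose to exactly $\epsilon$ under the stated $\zeta$, $\beta_t$ and $\eta$, and that reusing Lemmas~\ref{lem:prior_pro} and~\ref{lem:fin_union} with $(\epsilon/4,\zeta^d)$ in place of $(\epsilon,|\mathcal{X}|)$ reproduces those exact constants (in particular that the failure masses $\delta/4+\delta/4+\delta/2$ sum to $\delta$). The genuinely new analytic ingredient — the Lipschitz transfer from $\tilde{\mathcal{X}}$ to $\mathcal{X}$ — is already handled by Lemma~\ref{lem:stochastic}, so no difficulty beyond verifying that these constants line up should arise.
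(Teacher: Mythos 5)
Your proposal matches the paper's own proof essentially step for step: the same $\delta/4+\delta/4+\delta/2$ budget (the first two halves packaged in the paper as Lemma~\ref{lem:stochastic} applied with $\delta/2$), the same sandwich $-\epsilon/4 + l_{t;\eta}([\bm x]) \le p_{\rm upper}(\bm x) \le u_{t;\eta}([\bm x]) + \epsilon/2$ yielding the $\alpha - 3\epsilon/4$ and $\alpha+\epsilon$ classification bounds, and the same reuse of Lemmas~\ref{lem:str_gam}--\ref{lem:terminate_cond} over $\tilde{\mathcal{X}}$ for termination. The constants $\zeta$, $\beta_t$, and $\eta$ line up exactly as you anticipated, so no gap remains.
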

\begin{proof}
From the definition of ${\bm x}_t$, the following inequality holds for any $t \geq 1$ and ${\bm x}^\prime \in \tilde{\mathcal{X} } $:
\begin{align}
\text{STR}_t ({\bm x}^\prime )  \leq \text{STR}_t ({\bm x}_t) . \label{eq:str_ine}
\end{align}
 In addition, by using the same argument as the proof of Lemma \ref{lem:str_bound},
the following holds for some $t^\prime \leq t$:
\begin{align}
\text{STR}_{t^\prime} ({\bm x}_{t^\prime} )  \leq \frac{  C_3 \eta^{-(2+1/m)} \beta^{1/m}_t \kappa_t      }{t}. \label{eq:str_bu}
\end{align}
Therefore, if a positive integer $T$ satisfies  \eqref{eq:LSEbound}, from \eqref{eq:str_ine} and
\eqref{eq:str_bu} we have
$$
\forall {\bm x}^\prime  \in \tilde{\mathcal{X} } , \ \text{STR}_{\tilde{t} } ({\bm x}^\prime ) < \epsilon /2,
$$
where $\tilde{t} \leq T$.
Hence, from the classification rule, each point ${\bm x} \in \mathcal{X}$ is classified into $\mathcal{H} _{\tilde{t}} $ or
 $\mathcal{L} _{\tilde{t} }$ at time $\tilde{t}$.

On the other hand, from Lemma \ref{lem:stochastic}, with probability at least $1-\delta/2$,
the following holds for any ${\bm x} \in \mathcal{X}$ and $t \geq 1$:
\begin{align}
-\epsilon/4 + l_{t;\eta} ([{\bm x}]) \leq p_{\rm upper} ({\bm x}) \leq \epsilon/4 + \tilde{p}_{2 \eta} ([{\bm x}]) +  u_{t;\eta} ([{\bm x}]) . \nonumber
\end{align}
Moreover, from Lemma \ref{lem:prior_pro}, with probability at least $1-\delta/2$, it holds that
$$
\forall {\bm x} \in \mathcal{X}, \ \tilde{p}_{2 \eta} ([{\bm x}]) < \epsilon/4.
$$
By combining these, with probability at least $1-\delta$, the following holds inequality holds for any ${\bm x} \in \mathcal{X}$ and $t \geq 1$:
\begin{align}
-\epsilon/4 + l_{t;\eta} ([{\bm x}]) \leq p_{\rm upper} ({\bm x}) \leq \epsilon/2 +  u_{t;\eta} ([{\bm x}]) . \label{eq:strltut}
\end{align}
Thus, when ${\bm x} \in \mathcal{H}_t$, $l_{t;\eta} ([{\bm x}])$ satisfies $l_{t;\eta} ([{\bm x}]) >\alpha -\epsilon/2$.
 By substituting this inequality into \eqref{eq:strltut}, we have
$$
\alpha - 3 \epsilon /4 \leq p_{\rm upper} ({\bm x}).
$$
Similarly, when ${\bm x} \in \mathcal{L}_t$, we get
$$
p_{\rm upper} ({\bm x}) \leq \epsilon.
$$
This means that $e_\alpha ({\bm x} ) \leq \epsilon$.
Therefore, with probability at least $1-\delta$, each point ${\bm x} \in \mathcal{X}$ satisfies $e_\alpha ({\bm x} ) \leq \epsilon$
 when BPT-LSE terminates.
\end{proof}

\subsection{Details for condition {\sf (C1)}}
In this subsection, we consider a sufficient condition of  {\sf (C1)} because all theorems in this section are derived based on it.
However, it is difficult to derive a sufficient condition %of  {\sf (C1)} 
 because $p_{\rm upper} ({\bm x} )$ does not follow GP.
For this reason, instead of  {\sf (C1)}, we derive a sufficient condition for the following inequality:
\begin{align}
Pr ( \forall {\bm x} \in \mathcal{X}, \ | p_{\rm upper} ({\bm x} ) - p_{\rm upper} ([{\bm x} ]) | \leq \epsilon ) \geq 1-\delta.
\label{eq:de_ine}
\end{align}
Note that \eqref{eq:de_ine}, derived from {\sf (C1)}, is the basis of proofs of all theorems in this section.
In order to derive  \eqref{eq:de_ine}, we assume the following condition about stochastic Lipschitz continuity for $f({\bm x},{\bm w})$:
\begin{description}
\item [(C2)] There exists positive constants $a^\prime$ and $b^\prime$ such that
$$
Pr \left \{ \sup_{ {\bm\theta}\equiv ({\bm x},{\bm w}) \in \mathcal{X} \times \Omega } | \partial f ({\bm \theta}) / \partial \theta_j | >L \right  \}  \leq a^\prime  e^{-(L/{b^\prime} )^2}, \ j=1,\ldots,d+k.
$$
\end{description}
This condition is also used in \cite{srinivas2009gaussian} to  derive theoretical guarantee of   (original) GP-UCB
under the case of infinite sample space.
It is known that   {\sf (C2)} holds under mild conditions such as  compactness of the sample space and  smoothness of the kernel function \cite{srinivas2009gaussian,ghosal2006posterior}. 
Hereafter, assume that $\mathcal{X} \times \Omega \subset [0,r]^{d+k} $ is compact and convex, which is the same assumption of   \cite{srinivas2009gaussian}.
In addition, we allow that a prior mean function $\mu ( {\bm x},{\bm w} ) $ of GP is non-zero.
For simplicity, we set $\mu ( {\bm x},{\bm w} ) = \mu <h$.
Then, the following lemma holds:
\begin{lemma}\label{lem:sc1}
Let $\delta \in (0,1)$, $\epsilon >0$, $h- \mu > \eta >0$ and
$\zeta = \lceil d \eta^{-1} b^\prime r \sqrt{  \log (2 d a^\prime /\delta) }  \rceil  $.
 Assume that {\sf (C2)} holds.
Then, with probability at least $1-\delta$, the following holds for any ${\bm x} \in \mathcal{X}$:
\begin{align}
| p_{\rm upper} ({\bm x}) - p_{\rm upper} ([{\bm x}])   | \leq \phi  ( h-\eta-\mu )
\frac{ \eta  }{\sigma_{0,min}} + \sqrt{ \frac{ 4 \zeta^d \eta \phi  ( h-\eta-\mu )   }{ \delta \sigma_{0,min} }   }. \label{eq:phi_bound}
\end{align}
Moreover, if  $ \phi (h-\eta - \mu )$ satisfies
\begin{align}
\phi (h-\eta - \mu ) \leq \min \left \{
\frac{ \sigma_{0,min} \epsilon }{2 \eta} ,
\frac{ \delta \sigma_{0,min} \epsilon^2 }{ 16 \zeta^d \eta    }
\right \}, \label{eq:phi_condition}
\end{align}
then, the inequality \eqref{eq:de_ine} holds.
\end{lemma}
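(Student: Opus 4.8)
The plan is to mirror the two-part structure of Lemma~\ref{lem:prior_pro} and Lemma~\ref{lem:stochastic}: first use the stochastic Lipschitz condition {\sf (C2)} to reduce the difference $|p_{\rm upper}(\bm x)-p_{\rm upper}([\bm x])|$ to an integral of a ``straddle'' indicator confined to a band around the threshold $h$, and then control that integral by a Chebyshev bound over the finite net $\tilde{\mathcal{X}}$. I would split the failure probability as $\delta/2+\delta/2$, allotting the first half to the Lipschitz event and the second to the concentration step.

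First I would invoke {\sf (C2)} with $L=b'\sqrt{\log(2da'/\delta)}$ and take a union bound over the $d$ design coordinates $j=1,\dots,d$, so that $da'e^{-(L/b')^2}=\delta/2$ and hence, with probability at least $1-\delta/2$, $|f(\bm x,\bm w)-f([\bm x],\bm w)|\le L\|\bm x-[\bm x]\|_1$ for every $\bm w$. Combining this with the discretization bound \eqref{eq:discretization}, $\|\bm x-[\bm x]\|_1\le rd/\zeta$, and the definition of $\zeta$ yields $|f(\bm x,\bm w)-f([\bm x],\bm w)|\le\eta$ uniformly. On this event the indicators $\1[f(\bm x,\bm w)>h]$ and $\1[f([\bm x],\bm w)>h]$ can differ only when $f([\bm x],\bm w)$ lies in $(h-\eta,h]$ (an upward sign change) or in $(h,h+\eta]$ (a downward one); writing $q_A([\bm x])=\int_\Omega\1[h-\eta<f([\bm x],\bm w)\le h]\,p(\bm w)\,\text{d}\bm w$ and $q_B([\bm x])$ for the analogous $(h,h+\eta]$ band, and noting that these two effects enter $p_{\rm upper}(\bm x)-p_{\rm upper}([\bm x])$ with opposite signs, this gives $|p_{\rm upper}(\bm x)-p_{\rm upper}([\bm x])|\le\max\{q_A([\bm x]),q_B([\bm x])\}$.

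Next I would bound the prior mean and variance of each width-$\eta$ band integral. By a mean-value expansion of $\Phi$ over an interval of length $\eta$, using $\sigma_{0,min}\le\sigma_0(\bm x,\bm w)\le1$, the hypothesis $h-\mu>\eta>0$ (so the relevant arguments of $\Phi$ are positive), and the monotonicity of $\phi$ on $(0,\infty)$, both $\mathbb{E}[q_A]$ and $\mathbb{E}[q_B]$ are at most $\tfrac{\eta}{\sigma_{0,min}}\phi(h-\eta-\mu)$; the variance of each is bounded by the same quantity via the Bernoulli inequality $\mathbb{V}\le\mathbb{E}$ together with the $\text{Cov}\le\tfrac12(\mathbb{V}+\mathbb{V})$ reduction used in Appendix~\ref{derivation_mean_upper}. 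Applying Chebyshev's inequality to $q_A$ and $q_B$ at each of the $\zeta^d$ points of $\tilde{\mathcal{X}}$ and taking a union bound over these $2\zeta^d$ events at total level $\delta/2$ produces the deviation term $\sqrt{4\zeta^d\eta\,\phi(h-\eta-\mu)/(\delta\sigma_{0,min})}$. Since $[\bm x]\in\tilde{\mathcal{X}}$ for every $\bm x\in\mathcal{X}$, intersecting the Lipschitz event with the concentration event gives \eqref{eq:phi_bound} for all $\bm x\in\mathcal{X}$ with probability at least $1-\delta$.

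Finally, the second assertion follows by direct substitution: if $\phi(h-\eta-\mu)$ satisfies \eqref{eq:phi_condition}, the first term of \eqref{eq:phi_bound} is at most $\tfrac{\eta}{\sigma_{0,min}}\cdot\tfrac{\sigma_{0,min}\epsilon}{2\eta}=\epsilon/2$ and the second is at most $\sqrt{\epsilon^2/4}=\epsilon/2$, so the right-hand side is at most $\epsilon$ and \eqref{eq:de_ine} holds. I expect the main obstacle to be the reduction carried out on the Lipschitz event, namely arguing that the indicator mismatch pins $f([\bm x],\bm w)$ into a one-sided band of width exactly $\eta$ and, in parallel, establishing the mean bound with the precise factor $\phi(h-\eta-\mu)$, which relies on the sign conditions $\mu<h$ and $h-\mu>\eta$ together with $\sigma_0\le1$; the Lipschitz-net construction and the Chebyshev concentration are routine given Lemmas~\ref{lem:prior_pro}--\ref{lem:stochastic}.
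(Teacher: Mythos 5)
Your proposal is correct and follows essentially the same route as the paper's proof: the Lipschitz event from {\sf (C2)} at level $\delta/2$ reduces the discrepancy to the maximum of the two width-$\eta$ band integrals $p_{\pm\eta}([\bm x])$ (your $q_A,q_B$), the prior mean of each band is bounded by $\eta\,\phi(h-\eta-\mu)/\sigma_{0,min}$ via the mean-value bound on $\Phi$ with the sign conditions $h-\mu>\eta>0$ and $\sigma_0\le 1$, and Chebyshev plus a union bound over the $\zeta^d$ net points at level $\delta/2$ gives the square-root deviation term, after which \eqref{eq:phi_condition} yields $\epsilon/2+\epsilon/2$. The only cosmetic difference is that you derive the band reduction by tracking where the two indicators can disagree, whereas the paper sandwiches $p_{\rm upper}([\bm x])$ between $p_{\rm upper}(\bm x)\pm p_{\mp\eta}([\bm x])$ directly; the two arguments are equivalent.
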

\begin{proof}
From the condition {\sf (C2)}, with probability at least $1- d a^\prime  e^{-(L/{b^\prime})^2}$, the following inequality holds:
$$
\forall {\bm x} \in \mathcal{X}, \forall {\bm w} \in \Omega, \ | f({\bm x},{\bm w} )-  f([{\bm x}],{\bm w} )| \leq L \| {\bm x} -[{\bm x}]   \|_1 .
$$
Thus, since $\| {\bm x} -[{\bm x}]   \|_1 \leq r d / \zeta $, by choosing $L= b ^\prime \sqrt{ \log (2 d a^\prime/ \delta) }$
we have
$$
\forall {\bm x} \in \mathcal{X}, \forall {\bm w} \in \Omega, \ | f({\bm x},{\bm w} )-  f([{\bm x}],{\bm w} )| \leq b ^\prime \sqrt{ \log (2 d a^\prime/ \delta) } r d / \zeta.
$$
Note that this inequality holds with probability at least $1- \delta/2$.
Hence, from the definition of $\zeta$, we get
\begin{align*}
\forall {\bm x} \in \mathcal{X}, \forall {\bm w} \in \Omega, \ | f({\bm x},{\bm w} )-  f([{\bm x}],{\bm w} )| \leq \eta . %\label{eq:eta_bound}
\end{align*}
Here, with probability at least $1- \delta/2$, for any ${\bm x} \in \mathcal{X}$ it holds that
\begin{align*}
p_{\rm upper} ([{\bm x}]) &\leq \int _\Omega  \1 [ f([{\bm x}],{\bm w}) > h+\eta ] p({\bm w}) {\rm d} {\bm w} +
\int _\Omega  \1 [ h+\eta \geq f([{\bm x}],{\bm w}) > h ] p({\bm w}) {\rm d} {\bm w} \\
&\leq \int _\Omega  \1 [ f({\bm x},{\bm w}) + \eta > h+\eta ] p({\bm w}) {\rm d} {\bm w} +
\int _\Omega  \1 [ h+\eta \geq f([{\bm x}],{\bm w}) > h ] p({\bm w}) {\rm d} {\bm w} \\
&=p_{\rm upper} ({\bm x}) +
\int _\Omega  \1 [ h+\eta \geq f([{\bm x}],{\bm w}) > h ] p({\bm w}) {\rm d} {\bm w} \\
&\equiv   p_{\rm upper} ({\bm x}) +p_\eta ([{\bm x}])    .
\end{align*}
Similarly, we obtain
\begin{align*}
p_{\rm upper} ([{\bm x}]) &\geq \int _\Omega  \1 [ f([{\bm x}],{\bm w}) > h-\eta ] p({\bm w}) {\rm d} {\bm w} -
\int _\Omega  \1 [ h \geq f([{\bm x}],{\bm w}) > h -\eta ] p({\bm w}) {\rm d} {\bm w} \\
&\geq \int _\Omega  \1 [ f({\bm x},{\bm w}) - \eta > h-\eta ] p({\bm w}) {\rm d} {\bm w} -
\int _\Omega  \1 [ h \geq f([{\bm x}],{\bm w}) > h -\eta] p({\bm w}) {\rm d} {\bm w} \\
&=p_{\rm upper} ({\bm x}) -
\int _\Omega  \1 [ h \geq f([{\bm x}],{\bm w}) > h - \eta ] p({\bm w}) {\rm d} {\bm w} \\
&\equiv p_{\rm upper} ({\bm x}) -  p_{-\eta} ([{\bm x}]) .
\end{align*}
Therefore, by combining these we get
\begin{align}
|  p_{\rm upper} ({\bm x}) -p_{\rm upper} ([{\bm x}]) | \leq \max \{   p_\eta ([{\bm x}]), p_{-\eta} ([{\bm x}] ) \} . \label{eq:max_bound}
\end{align}
In addition, by using the same argument as the proof of Lemma \ref{lem:prior_pro},
with probability at least $1- \delta/2$ the following holds for any ${\bm x} \in \mathcal{X}$:
\begin{align*}
p_\eta ([ {\bm x}] ) &< \mathbb{E} [p_\eta ([ {\bm x}] )  ]   + \sqrt{\frac{ 4 | \tilde{\mathcal{X} }|  \mathbb{E} [p_\eta ([ {\bm x}] )  ]   }{ \delta  } }, \\
p_{-\eta} ([ {\bm x}] ) &< \mathbb{E} [p_{-\eta} ([ {\bm x}] )  ]   + \sqrt{\frac{ 4 | \tilde{\mathcal{X} }|  \mathbb{E} [p_{-\eta} ([ {\bm x}] )  ]   }{ \delta  } }.
\end{align*}
Moreover, $\mathbb{E} [p_\eta ([ {\bm x}] )  ]$ can be expressed as
$$
\mathbb{E} [p_\eta ([ {\bm x}] )  ] = \int _\Omega \left \{ \Phi \left (\frac{h+\eta - \mu }{\sigma_0 ([ {\bm x}],{\bm w})} \right ) - \Phi \left (\frac{h-\mu}{\sigma_0 ( [{\bm x}],{\bm w})} \right ) \right \} p({\bm w}) \text{d} {\bm w}.
$$
Furthermore, from Taylor's expansion, for any $0 < a< b$, it holds that
$$
\Phi (b) = \Phi (a) + \phi (c) (b-a) \leq  \Phi (a) + \phi (a) (b-a),
$$
where $c \in (a,b)$ and the last inequality is given by using $\phi (0) \geq \phi (a) \geq (c)$ and $b-a \geq 0$.
Hence, we have
\begin{align*}
\mathbb{E} [p_\eta ([ {\bm x}] )  ] &=    \int _\Omega \phi \left ( \frac{h-\mu}{\sigma_0 ([{\bm x}],{\bm w})} \right )
\frac{ \eta  }{\sigma_0 ([{\bm x}],{\bm w})}  p({\bm w} ) {\rm d} {\bm w} \\
& \leq \int _\Omega \phi \left ( \frac{h-\mu}{1} \right )
\frac{ \eta  }{\sigma_{0,min}}  p({\bm w} ) {\rm d} {\bm w} =\phi  ( h-\mu )
\frac{ \eta  }{\sigma_{0,min}} .
\end{align*}
Therefore, we get
\begin{align}
p_\eta ([ {\bm x}] ) < \phi  ( h-\mu )
\frac{ \eta  }{\sigma_{0,min}} + \sqrt{ \frac{ 4 |\tilde{\mathcal{X} } | \eta \phi  ( h-\mu )   }{ \delta \sigma_{0,min} }   }. \label{eq:p_eta_bound}
\end{align}
Similarly, we have
\begin{align}
p_{-\eta} ([ {\bm x}] ) < \phi  ( h-\eta-\mu )
\frac{ \eta  }{\sigma_{0,min}} + \sqrt{ \frac{ 4 |\tilde{\mathcal{X} } | \eta \phi  ( h-\eta-\mu )   }{ \delta \sigma_{0,min} }   }. \label{eq:p_minuseta_bound}
\end{align}
Here, it holds that $\phi  ( h-\eta-\mu )   \geq \phi  ( h-\mu )$ because $0 \leq h- \eta - \mu   \leq h- \mu$.
In addition, from the definition of $\tilde{\mathcal{X}}$, $ |\tilde{\mathcal{X} } | $ is $\zeta^d$.
Thus, by substituting   \eqref{eq:p_eta_bound} and \eqref{eq:p_minuseta_bound} into \eqref{eq:max_bound},
with probability at least $1-\delta$, the inequality \eqref{eq:phi_bound} holds for any ${\bm x} \in \mathcal{X}$.
Finally, if \eqref{eq:phi_condition} holds, then we get  \eqref{eq:de_ine}.
\end{proof}
Note that \eqref{eq:phi_condition} is the strong condition to derive \eqref{eq:de_ine}. Thus, it is important to derive a mild condition. 
It is also important to derive a sufficient condition  for the condition  {\sf (C1)} to be satisfied.
These will be considered in our future work.

\section{Details of Numerical Experiments}
\label{app:detail-exeperiments}
In this appendix, we describe the details of the experimental results summarized in \S5.
For the sake of readability, some results already described in \S5 are also replicated here.

\subsection{Artificial Data Experiments}\label{sec:art_data}

In this subsection, we present experiments on artificial data.

\subsubsection{Optimization Experiments}\label{sec:exp_bo_app}
Here, we tested the performances of the proposed methods in the optimization setting.
We evaluated the performances of the algorithms up to step $t$ by
$p_{\text{upper}}(\bm{x}^*) - p_{\text{upper}}(\hat{\bm{x}}_t)$,
where
$\hat{\bm{x}}_t$
is the estimated maximizer reported by the algorithm at step $t$.
In BPT-UCB and BPT-TS,
$\hat{\bm{x}}_t$
is defined as
${\rm argmax}_{t^\prime =1, \ldots, t}\mu_{t}^{(p)}(\bm{x}_{t^\prime})$.
For comparison,
we considered the following existing methods.
Although these existing methods can be directly applied to the problem setup considered in this paper,
they were originally designed to optimize different objective functions.

\begin{description}

\item[GP-UCB \cite{srinivas2009gaussian}]
    First, we considered the GP-UCB method by assuming that $\bm{w}$ is fixed to its mean. Specifically,
    in this method, we chose $\bm{x}_t$ and $\hat{\bm{x}}_t$
    as
    \begin{align*}
        \bm{x}_t &= \argmax_{\bm{x} \in \mathcal{X}} \text{ucb}_t^f(\bm{x},
        \mathbb{E}[\bm{w}]), \\
        \hat{\bm{x}_t} &= \argmax_{t^\prime =1, \ldots, t}\text{lcb}^f(\bm{x}_{t^\prime}).
    \end{align*}
    where $\text{lcb}_t^f(\bm{x}, \bm{w})$ and $\text{ucb}_t^f(\bm{x}, \bm{w})$ represent the LCB and UCB of $f(\bm{x}, \bm{w})$ at step $t$. To compute
    $\text{lcb}_t^f$ and $\text{ucb}_t^f$, we used $\beta_t^{1/2} = 2$.
 \item[StableOpt \cite{bogunovic2018adversarially}]
    This method was designed to find the worst-case maximizer
    within a user specified domain $\Delta \subset \Omega$.
     % $\argmax_{\bm{x} \in \mathcal{X}}
    %  \min_{\bm{w} \in \Delta} f(\bm{x}, \bm{w})$
    Given $\Delta$, $(\bm{x}_t, \bm{w}_t)$ were
    chosen as:
    \begin{align*}
        \bm{x}_t = \argmax_{\bm{x} \in \mathcal{X}}\min_{\bm{w} \in \Delta}~\text{ucb}_t^{f}(\bm{x}, \bm{w}),~
        \bm{w}_t = \argmin_{\bm{w} \in \Delta}~\text{lcb}_t^{f}(\bm{x}_t, \bm{w}).
    \end{align*}
    and we defined $\hat{\bm{x}}_t$ as ${\rm argmax}_{t^{\prime}=1, \ldots, t}\min_{\bm{w} \in \Delta} \text{lcb}_t^{f}(\bm{x}_{t^{\prime}}, \bm{w})$.
    We set $\beta_t^{1/2} = 2$ to compute $\text{ucb}_t^{f}$ and $\text{lcb}_t^{f}$.
    In this method, there is no canonical way to choose $\Delta$.
    In our experiments, we defined $\Delta$ as
    $50\%$ credible interval of $\bm{w}$. We confirmed that this choice worked well in all the settings of our experiment.

 \item[BQO-EI \cite{nguyen2020distributionally}]
    This method was designed to find the maximum of expected function
    $g(\bm{x}) \coloneqq \int_{\Omega} f(\bm{x}, \bm{w})p(\bm{w})\text{d}\bm{w}$.
    To this end,
    $\bm{x}_t$ was chosen by EI of $g(\bm{x})$
    and $\bm{w}_t$ was chosen as $\argmax_{\bm{w} \in \Omega}\sigma_{t-1}(\bm{x}_t, \bm{w})$.
    We defined $\hat{\bm{x}}_t$ as ${\rm argmax}_{t^\prime=1, \ldots, t}\int_{\Omega} \mu_t(\bm{x}_{t^\prime})p(\bm{w})\text{d}\bm{w}$.
    Note that this definition of $\hat{\bm{x}}_t$ represents the point that has the highest posterior mean of $g$
    among $\{\bm{x}_{t^\prime}\}_{t^\prime=1}^t$.
    \item[BQO-UCB]
	      In this method, $\bm{x}_t$ was chosen by UCB of $g$, and $\bm{w}_t$ was chosen in the same way as BQO-EI.
	       We set $\beta_t^{1/2} = 2$ to compute the UCB of $g$.
	       Additionally, $\hat{\bm{x}}_t$ was chosen in the same way as BQO-EI.

 \item[BQO-TS \cite{nguyen2020distributionally}]
	    In this method,
	    $\bm{x}_t$ was chosen
	    by the posterior probability
	    such that
	    $g(\bm{x})$
	    is maximized
	    and
	    $\bm{w}_t$ was chosen
	    in the same way as BQO-EI.
	    Additionally,
	    $\hat{\bm{x}}_t$ was chosen in the same way as BQO-EI.

\end{description}

We also tested the performances of Random Sampling (RS),
in which samples
$(\bm{x}_t, \bm{w}_t)$ were
uniformly at random,
and
$\hat{\bm{x}}_t$ was defined
as ${\rm argmax}_{t^\prime=1, \ldots, t}\mu_{t}^{(p)}(\bm{x}_{t^\prime})$.

Furthermore,
we also considered the adapted versions of these existing methods,
in which
$(\bm{x}_t, \bm{w}_t)$
were chosen as above,
while
their
$\hat{\bm{x}_t}$
was selected
in the same way as the proposed method, i.e.,
as
${\rm argmax}_{t^\prime=1, \ldots, t}\mu_{t}^{(p)}(\bm{x}_{t^\prime})$.
We denote these adapted versions of the extended methods with prefix of {\bf Pmax} (e.g., the adapted version of {\bf GP-UCB} is referred to as {\bf Pmax-GP-UCB}).
Furthermore,
in BPT-UCB,
since the theoretically recommended values of
$\beta_t$
and
$m$
are well known to be overly conservative,
we used
$\beta_t=2, m=2$ and chose $\eta = 0$ for simplicity.

% \subsubsection{GP Test Function}\label{sec:bo_gp}
\paragraph{GP Test Function}\label{sec:bo_gp}
First,
we tested the performances on a test function generated by two-dimensional GP.
We used Gaussian kernel
$k((\bm{x}, \bm{w}), (\bm{x}^{\prime}, \bm{w}^{\prime})) = \sigma_{\text{ker}}^2 \exp \left( (\|\bm{x} - \bm{x}^{\prime}\|^2 + \|\bm{w} - \bm{w}^{\prime}\|^2)/2l^2 \right)$
with
$l=0.5,~\sigma_{\text{ker}} = 1$,
and defined
$\mathcal{X}$
and
$\Omega$
as the $50$ girds points evenly allocated in $[-1, 1]$.
Furthermore,
we defined
$p(w) = \phi(w)/Z$, $Z = \sum_{w \in \Omega} \phi(w)$,
where
$\phi(w)$
is the density function of the standard Normal distribution.
We set $h=0$ and $\sigma=0.001$.

The experimental results are shown in
Fig.~\ref{fig:bo_gp_result}.
The proposed methods have better performances than existing methods.
It is reasonable since the proposed methods are developed to optimize the target task, while existing methods are developed to optimize different robustness measures. Furthermore, adaptive versions of existing methods did not work well
because their selected points $(\bm{x}_t, \bm{w}_t)$ are inefficient for optimizing $p_{\rm {upper}}$.

\begin{figure}[t]
 \centering
 \includegraphics[width=0.5\linewidth]{}
 \caption{
 The experimental results in optimization setting with a test function generated from GP.
 This plot shows the average performance over 50 trials.
 The shaded area represents the confidence interval of the average performance ($\pm 2\times$[standard error]).
 }
 \label{fig:bo_gp_result}
\end{figure}

% \subsubsection{Benchmark Functions for Optimization}\label{sec:bo_bench}
\paragraph{Benchmark Functions for Optimization}\label{sec:bo_bench}
We also tested the performances with two benchmark functions called 2d-Rosenbrock function and McCormick function that are often used as benchmark in optimization setting.
First, we rescaled the domain of these functions to $[-1, 1]^2$ and considered $50$ evenly allocated grids points in each dimension.
We set $\mathcal{X}, \Omega$ as the first and the second dimension of the domain, respectively.
Furthermore,
we defined
$p(w) = \text{Gam}(w + 1 \mid 2, 0.5)/Z$, $Z = \sum_{w \in \Omega} \text{Gam}(w + 1 \mid 2, 0.5)$,
where
$\text{Gam}(w \mid a, b)$
is the density of Gamma distribution with parameters $a$ and $b$.
For modeling $f$,
we used Gaussian kernel with
$l=0.5,~\sigma_{\text{ker}} = 150$
in Rosenbrock function,
and with
$l=1,~\sigma_{\text{ker}}=4$
in McCormick function.
We chose
$h=-1000$
in Rosenbrock function,
and
$h=-5$
in McCormick,
and
$\sigma = 0.01$
in both functions.
The experimental results are shown in Fig.~\ref{fig:bo_benchmark_result_app}.
The proposed methods also have better performances than existing methods.

\begin{figure}[t]
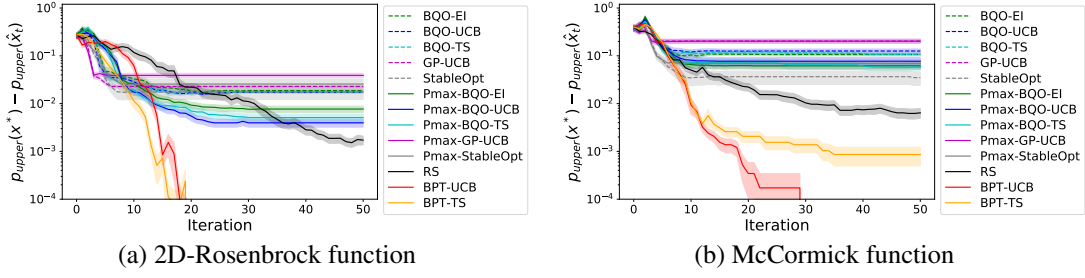

    \begin{center}
        \begin{tabular}{cc}
         \includegraphics[width=0.500\linewidth]{paper_bo_rosenbrock.pdf} &
         \includegraphics[width=0.500\linewidth]{paper_bo_mccormick.pdf} \\
         (a) 2D-Rosenbrock function &
         (b) McCormick function
        \end{tabular}
   \end{center}
 \caption{
 The experimental results in optimization setting with two benchmark functions.
 The left and the right plots represent the results of Rosenbrock function and McCormick function, respectively.
 These plots show the average performances over $50$ trials.
 }
 \label{fig:bo_benchmark_result_app}
\end{figure}

\subsubsection{Level Set Estimation Experiments}\label{sec:exp_lse_app}
We also tested the performance of the proposed method in the LSE setting.
In the LSE setting, we chose the following methods for comparison:
\begin{description}
 % \item[Random Sampling(RS)]
	%     %
	%     This method choose
	%     $(\bm{x}_t, \bm{w}_t)$
	%     uniformly at random on
	%     $\mathcal{X} \times \Omega$,
	%     and report the estimated super-level set
	%     $\hat{\mathcal{H}}_t$
	%     and sub-level set
	%     $\hat{\mathcal{L}}_t$
	%     as in (\ref{eq:straddle}).
\item[LSE\cite{bryan2006active}]
        We considered the standard LSE by assuming that $\bm{w}$ is fixed to its mean. Namely,
        we chose $\bm{x}_t$ as
        \begin{align*}
            \bm{x}_t = \argmax_{\bm{x} \in \mathcal{X}}
            \min \left\{ \tilde{u}_t^{f}(\bm{x}) - h,
            h - \tilde{l}_t^{f}(\bm{x}) \right\}
        \end{align*}
        where $[\tilde{l}_t^{f}(\bm{x}), \tilde{u}_t^{f}(\bm{x})]$ denotes the
        credible interval of $f(\bm{x}, \mathbb{E}[\bm{w}])$ at step $t$.
        In this method, at step $t$, estimated superlevel set $\hat{\mathcal{H}}_t$ and sublevel set $\hat{\mathcal{L}}_t$
        are respectively defined as
        \begin{align*}
         \hat{\mathcal{H}}_t = \{\bm{x} \in \mathcal{X} \mid \tilde{l}_t^{(f)}(\bm{x}) > h\},~
         \hat{\mathcal{L}}_t = \{\bm{x} \in \mathcal{X} \mid \tilde{u}_t^{(f)}(\bm{x}) < h\}.
        \end{align*}
        To compute $\tilde{l}_t^{f}$ and $\tilde{u}_t^{f}$, we used $\beta_t^{1/2} = 2$.
\item[StableLSE]
    We considered the LSE version of StableOpt to classify the worst-case function within a domain $\Delta \subset \Omega$.
    We initially constructed the credible interval
    $Q_t^{\text{worst}}$ as
    $Q_t^{\text{worst}}(\bm{x}) = [\min_{\bm{w} \in \Delta} \text{lcb}^{f}(\bm{x}, \bm{w}),~
    \min_{\bm{w} \in \Delta} \text{ucb}^{f}(\bm{x}, \bm{w})]
    \coloneqq [l_t^{\text{wosrt}}(\bm{x}),~u_t^{\text{wosrt}}(\bm{x})]$.
    In the experiment,
    we chose
    $\beta_t^{1/2} = 2$ to compute $\text{lcb}^{f}$ and $\text{ucb}^{f}$.
    Then, $(\bm{x}_t, \bm{w}_t)$ were chosen as
    \begin{align*}
     \bm{x}_t = \argmax_{\bm{x} \in \mathcal{X}} \text{STR}_t^{\text{worst}}(\bm{x}),
     ~\bm{w}_t = \argmax_{\bm{w} \in \Delta} \sigma_{t-1}(\bm{x}_t, \bm{w})
    \end{align*}
    where
    $\text{STR}_t^{\text{worst}}(\bm{x}) = \min \left\{ u_t^{\text{worst}}(\bm{x}) - h, ~ h - l_t^{\text{worst}}(\bm{x})\right\}$.
    In this method,
    $\hat{\mathcal{H}}_t$
    and
    $\hat{\mathcal{L}}_t$
    are respectively defined as
    \begin{align*}
     \hat{\mathcal{H}}_t = \{\bm{x} \in \mathcal{X} \mid l_t^{\text{worst}}(\bm{x}) > h\},~
     \hat{\mathcal{L}}_t = \{\bm{x} \in \mathcal{X} \mid u_t^{\text{worst}}(\bm{x}) < h\}.
    \end{align*}
    Finally, we chose $\Delta$ as $50$\% credible interval of $\bm{w}$ in our experiment.
 \item[BQLSE]
	    This method was designed to classify the expected function
	    $g(\bm{x}) \coloneqq \int_{\Omega} f(\bm{x}, \bm{w})p(\bm{w})\text{d}\bm{w}$.
	    First,
	    the credible interval
	    $Q_t^{(g)}$
	    was constructed as
	    $Q_t(\bm{x}) = [\mu_{t-1}^{(g)}(\bm{x}) - \beta_t^{1/2}\sigma_{t-1}^{(g)}(\bm{x}),~ \mu_{t-1}^{(g)}(\bm{x}) + \beta_t^{1/2}\sigma_{t-1}^{(g)}(\bm{x})] \coloneqq [l_t^{(g)}(\bm{x}),~u_t^{(g)}(\bm{x})]$,
	    where
	    $\mu_{t-1}^{(g)}$
	    and
	    $\sigma_{t-1}^{(g)}$
	    are the posterior mean and variance of
	    $g$,
	    respectively.
	    In the experiment,
	    we chose
	    $\beta_t^{1/2} = 3$.
	    Then, $(\bm{x}_t, \bm{w}_t)$ were chosen as
	    \begin{align*}
	     \bm{x}_t = \argmax_{\bm{x} \in \mathcal{X}} \text{STR}_t^{(g)}(\bm{x}),~\bm{w}_t = \argmax_{\bm{w} \in \Omega} \sigma_{t-1}(\bm{x}_t, \bm{w})
        \end{align*}
	    where
	    $\text{STR}_t^{(g)}(\bm{x}) = \min \left\{ u_t^{(g)}(\bm{x}) - h, ~ h - l_t^{(g)}(\bm{x})\right\}$.
	    In this method,
	    $\hat{\mathcal{H}}_t$
	    and
	    $\hat{\mathcal{L}}_t$
	    are respectively defined as
	    \begin{align*}
	     \hat{\mathcal{H}}_t = \{\bm{x} \in \mathcal{X} \mid l_t^{(g)}(\bm{x}) > h\},~
	     \hat{\mathcal{L}}_t = \{\bm{x} \in \mathcal{X} \mid u_t^{(g)}(\bm{x}) < h\}.
        \end{align*}

\end{description}
We also tested the performances of Random Sampling (RS),
in which
$(\bm{x}_t, \bm{w}_t)$
were sampled
uniformly at random.
We defined
$\hat{\mathcal{H}}_t$ and $\hat{\mathcal{L}_t}$ as in (\ref{eq:straddle}).
Furthermore,
we also considered the adapted versions of the existing methods,
in which
$(\bm{x}_t, \bm{w}_t)$
were chosen as above,
while
their
$\hat{\mathcal{H}}_t$ and $\hat{\mathcal{L}}_t$
were selected
in the same way as the proposed method.
We denote these adapted versions of the extended methods with prefix of {\bf P} (e.g., the adapted version of {\bf LSE} is referred to as {\bf P-LSE}).

For the evaluation of the algorithm performance,
as in \cite{gotovos2013active},
we used F1-score,
which is computed by treating $\mathcal{H}$ and $\mathcal{L}$ as positively and negatively labeled instances, respectively.
Furthermore, we choose
$\beta_t = 1.5, k=2, \eta=0, \epsilon=0$
in BPT-LSE.

% \subsubsection{GP Test Function}
\paragraph{GP Test Function}
First,
we tested the performances on the test function sampled from GP in the same way as \ref{sec:bo_gp} except $\alpha = 0.8$.
The results are shown in Fig.~\ref{fig:lse_gp_result}.
F1-score of BPT-LSE converged to $1$. On the other hand, existing methods tend to be low F1-score because their objective  functions have different formulations. Furthermore, adaptive versions of existing methods did not work
well. This is because existing methods tend to finish the classification of their objective
in relatively early stage
hence their sample points $(\bm{x}_t, \bm{w}_t)$ were stacked before our classification
scheme worked well. These results indicate that properly designed classification scheme and
sample strategy of $(\bm{x}_t, \bm{w}_t)$ are important to classify $p_{\rm upper}(\bm{x})$.
\begin{figure}[t]
    \centering
    \includegraphics[width=0.49\linewidth]{}
 \caption{
 The experimental results in the LSE setting with the test function generated from GP.
 This plot shows the average performance over $50$ trials.
 }
 \label{fig:lse_gp_result}
\end{figure}

% \subsubsection{Benchmark Functions for Optimization}
\paragraph{Benchmark Functions for Optimization}
We also tested the performances on two benchmark functions called Himmelblau function and Goldstein-Price function.
First,
we defined
$\mathcal{X}$
and
$\Omega$
in the same way as in
\ref{sec:bo_bench}.
Additionally,
in Goldstein-Price function,
we rescaled the function range by multiplying $10^{-5}$.
Furthermore,
we defined
$p(w)$
as in
\ref{sec:bo_gp}.
For modeling $f$,
we used Gaussian kernel with
$l=0.5,~\sigma_{\text{ker}} = 200$
in Himmelblau function,
and with
$l=0.4,~\sigma_{\text{ker}}=200$
in Goldstein-Price function.
Moreover,
we chose
$h=-150, \alpha=0.8$
in Himmelblau function,
and
$h=-1, \alpha=0.5$
in Goldstein-Price function,
and
$\sigma = 0.01$ in both functions.

The results are shown in Fig.~\ref{fig:lse_benchmark_result_app}.
Although some of the existing methods could increase the F1-scores in the early stage when the parameter settings were appropriate to the test functions. However, since the target robustness measures in the existing methods are inconsistent with the problem setup considered in this paper, the proposed methods eventually outperformed.

\begin{figure}[t]
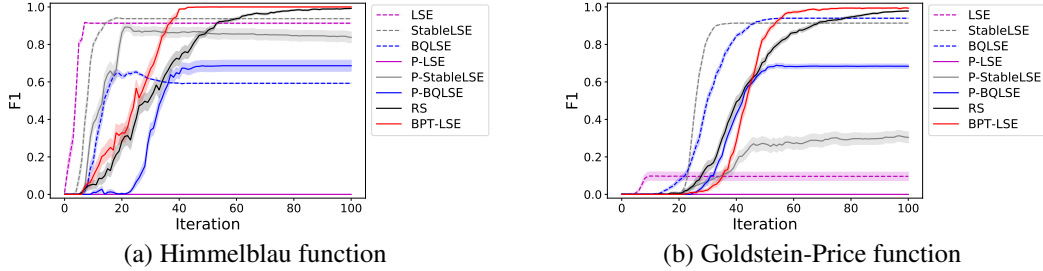

    \begin{center}
        \begin{tabular}{cc}
         \includegraphics[width=0.500\linewidth]{paper_lse_himmelblau_f1.pdf} &
         \includegraphics[width=0.500\linewidth]{paper_lse_goldstein_f1.pdf} \\
         (a) Himmelblau function &
         (b) Goldstein-Price function
        \end{tabular}
   \end{center}
 \caption{
 The experimental results in the LSE setting with two benchmark functions.
 The left and the right plots represent the result on Himmelblau function and Goldstein-Price function, respectively.
 These plots show the average performances over $50$ trials.
 }
 \label{fig:lse_benchmark_result_app}
\end{figure}

\subsubsection{Hyperparameter Sensitivity in BPT-UCB and BPT-LSE}
    In this subsection, we analyzed the effect of the choices $\beta_t$ and $m$ in BPT-UCB and BPT-LSE.
    The experiments were conducted in the same settings as \ref{sec:exp_bo_app} and \ref{sec:exp_lse_app} except
    $\beta_t$ and $m$.
    Fig.~\ref{fig:bo_effect_beta_app} and Fig.~\ref{fig:lse_effect_beta_app}
     show results of BPT-UCB and BPT-LSE with various $\beta_t$, respectively,
    while Fig.~\ref{fig:bo_effect_m_app} and Fig.~\ref{fig:lse_effect_m_app} show
    results of BPT-UCB and BPT-LSE with various $m$, respectively.
    From these results, we observe that BPT-LSE is especially sensitive to the choice of $m$. It is reasonable in LSE
    because $m$ affects not only sample points $(\bm{x}_t, \bm{w}_t)$ but also estimated sets
    $\hat{\mathcal{H}}$ and $\hat{\mathcal{L}}$. For example, if $m=2$ is sufficient to archive
    high precision, larger $m$ makes our classification scheme unnecessarily
    conservative and it leads to low F1-score.

    Although our experiments show that BPT-LSE is sensitive to the choice of $m$,
    since all our experiments show that $m=2$ is the best choice, we recommend $m=2$ as
    the choice of BPT-LSE in practice.
    % These results indicate that our algorithms relatively works well regardless of choice of $\beta_t$
    % in practice.

    % Also, we analysis the effect of $m$ and the results are in Fig. \ref{fig:bo_effect_m_app} and Fig. \ref{fig:lse_effect_m_app}.
    % In contrast to choice of $\beta_t$,
    % these results indicate that our algorithms are sensitive to the choice of $m$. However, all experiments show that $m=2$ is the
    % best choice hence it is sufficient to set $m=2$ in practice.

\begin{figure}[t]
    \begin{center}
        \begin{tabular}{ccc}
         \includegraphics[width=0.33\linewidth]{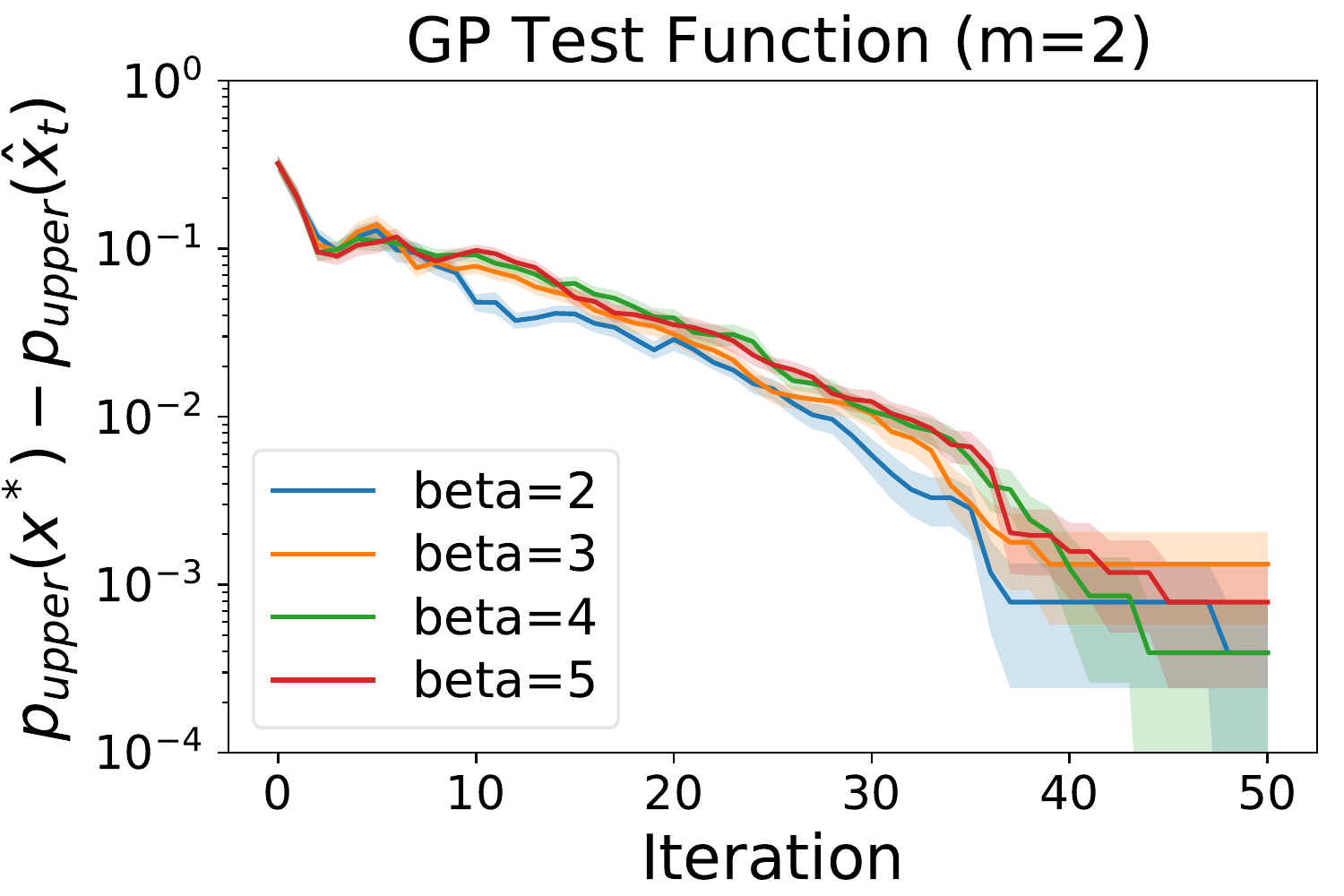} &
         \includegraphics[width=0.33\linewidth]{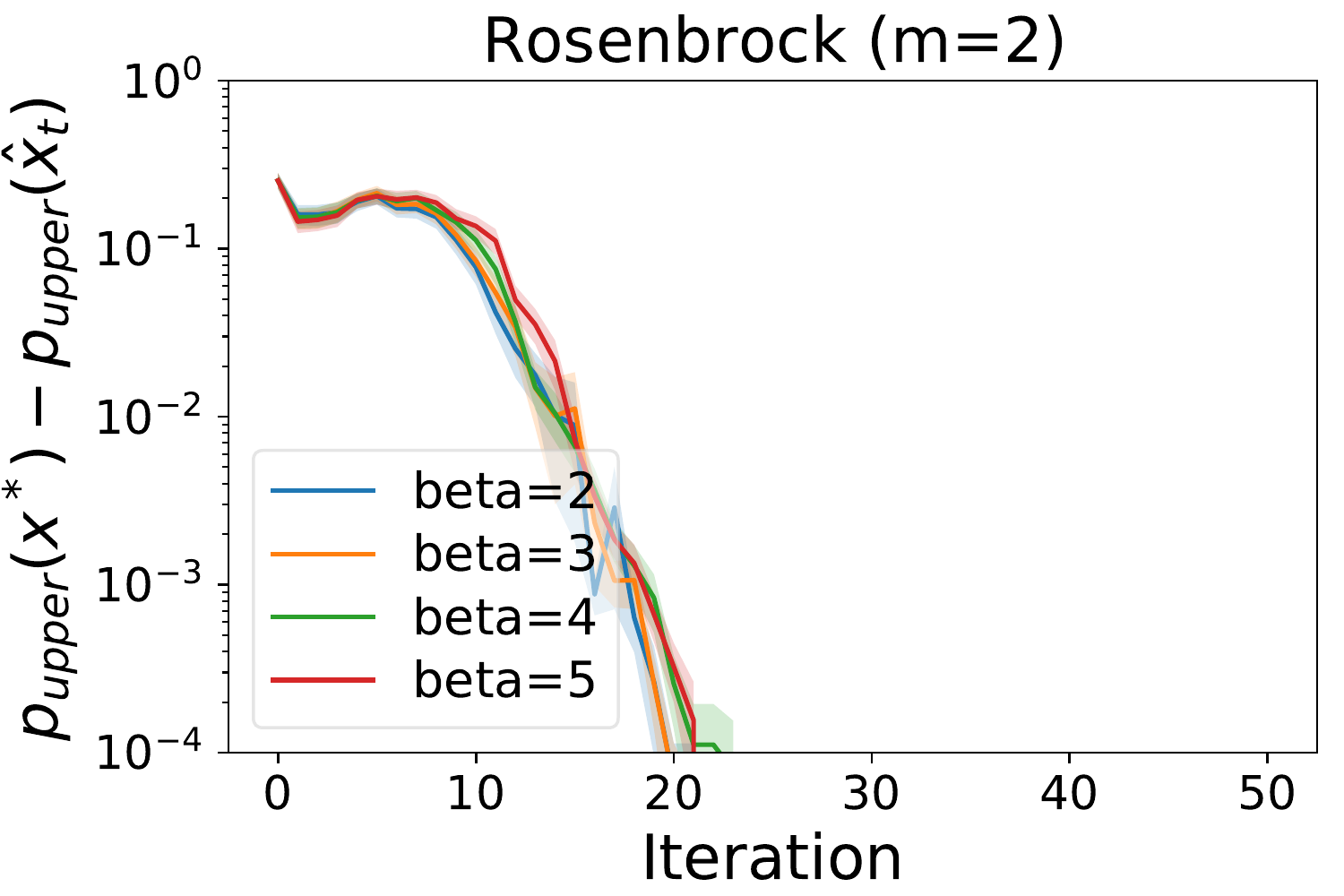} &
         \includegraphics[width=0.33\linewidth]{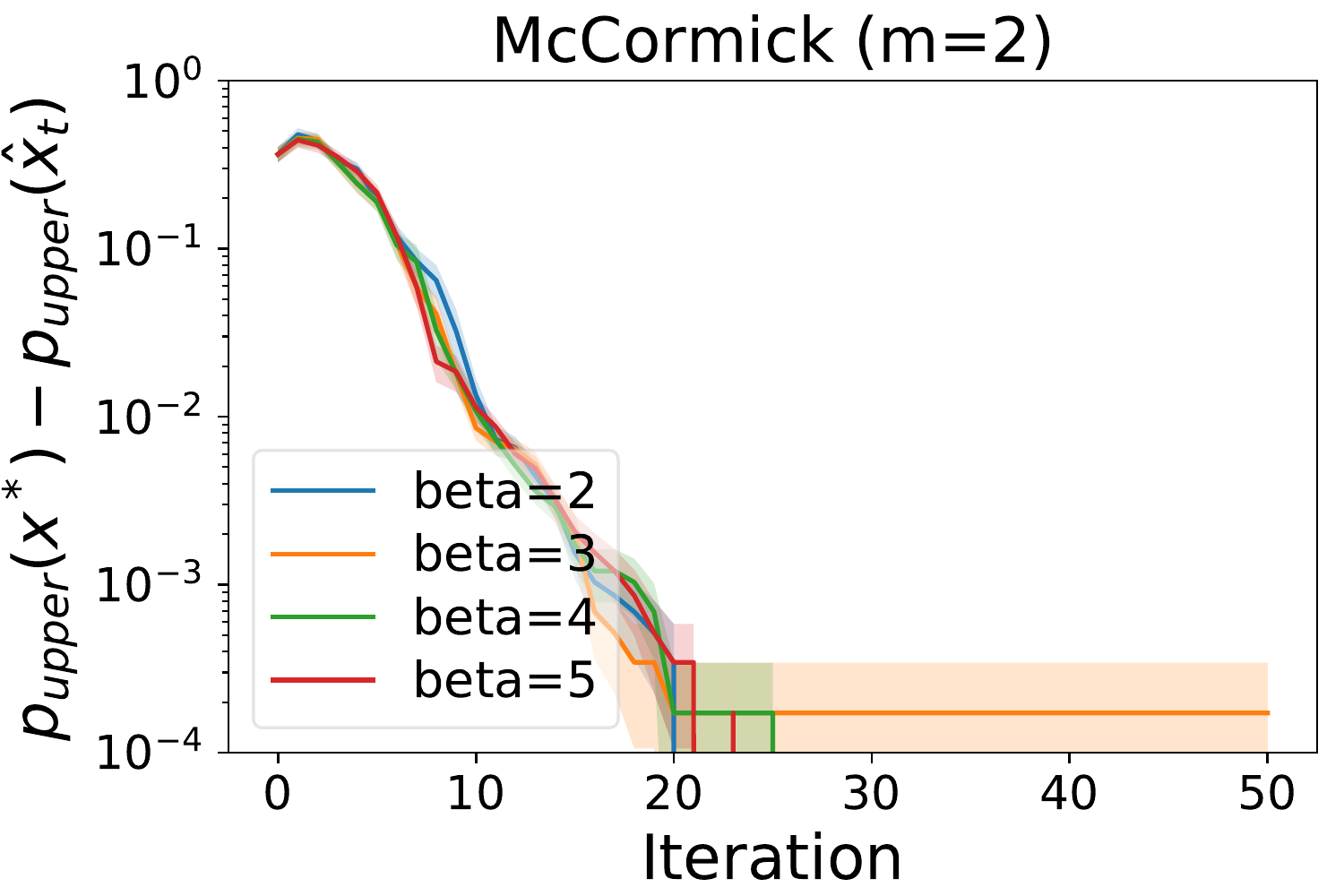} \\
         \includegraphics[width=0.33\linewidth]{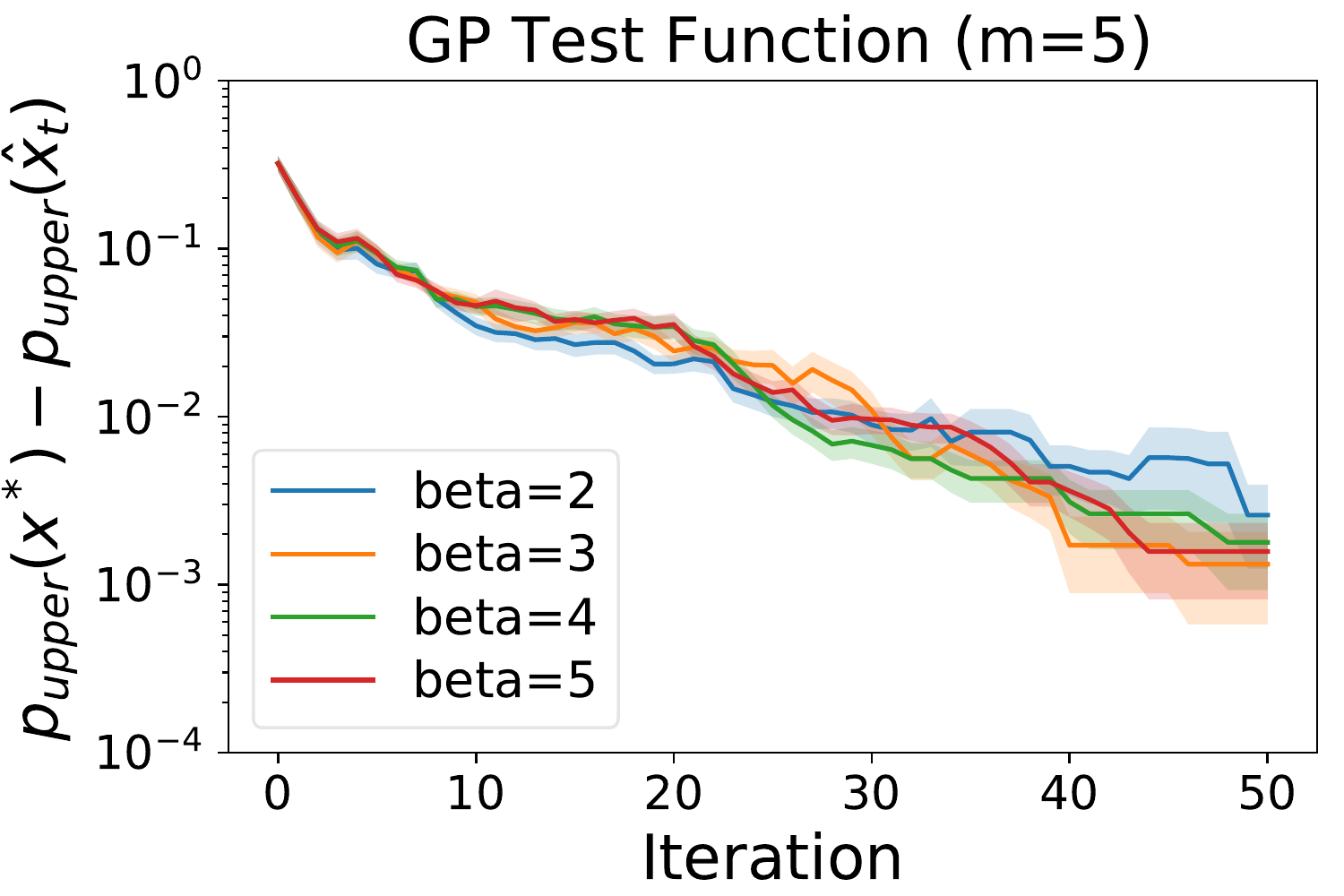} &
         \includegraphics[width=0.33\linewidth]{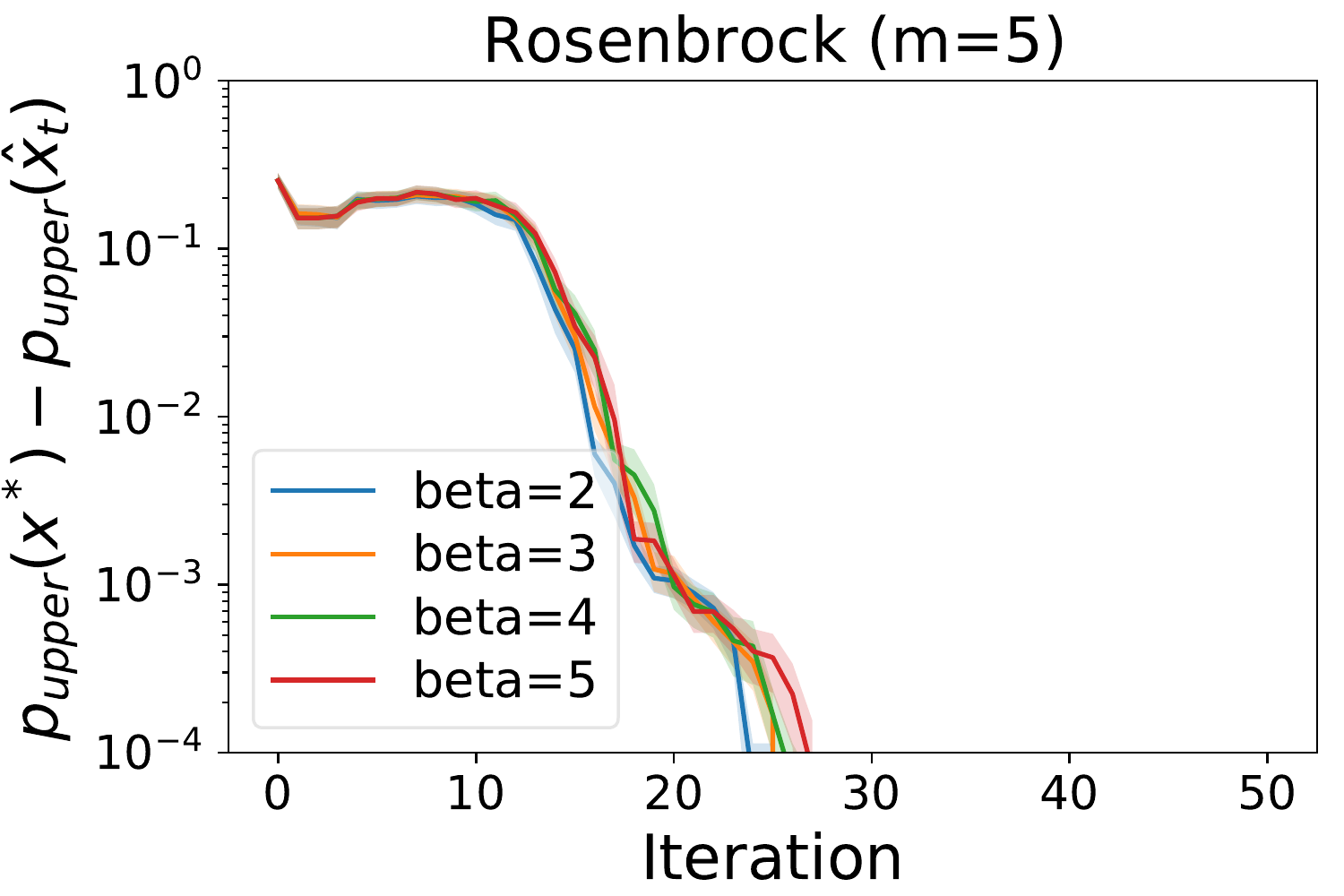} &
         \includegraphics[width=0.33\linewidth]{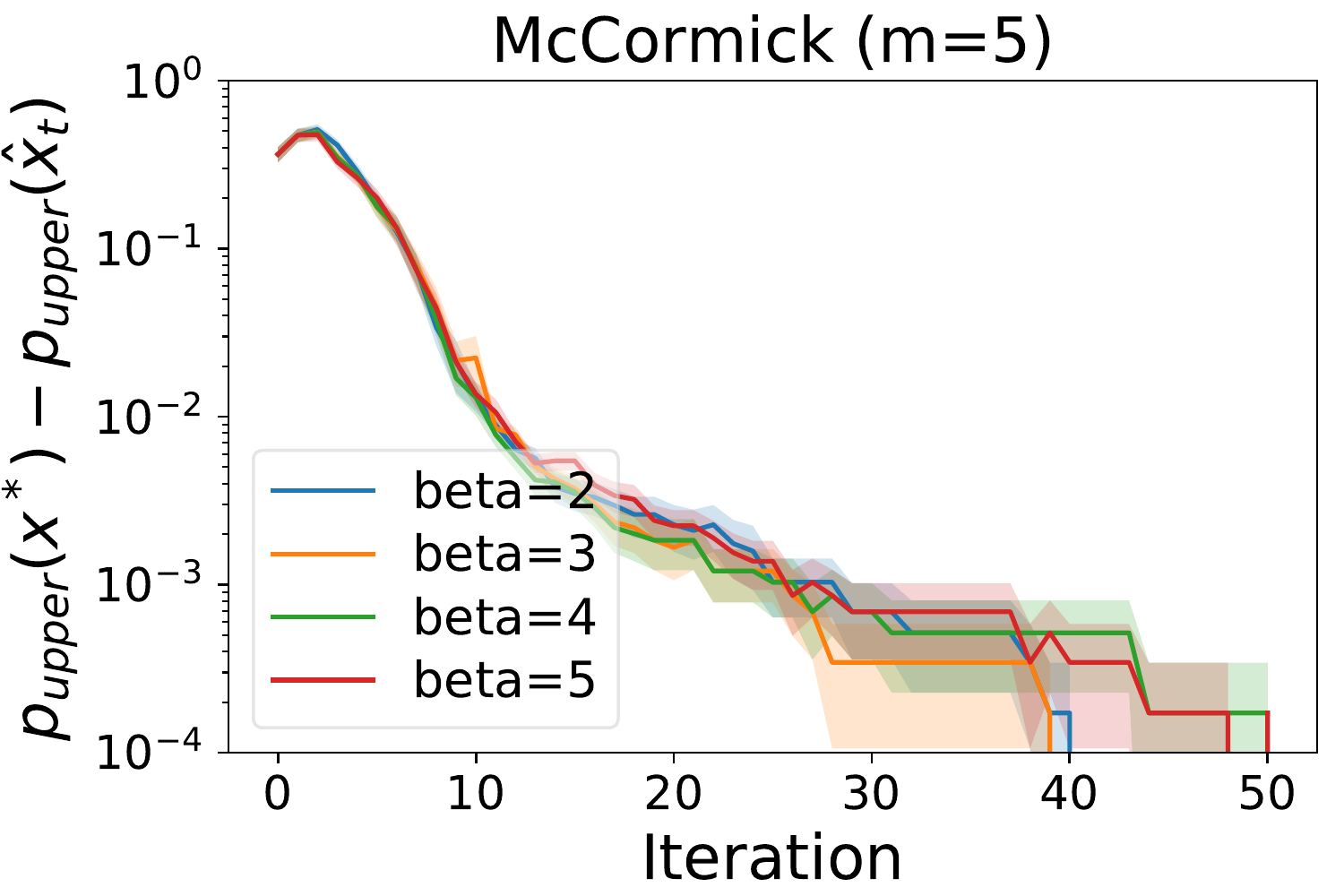} \\
        \end{tabular}
   \end{center}
 \caption{
 The experimental results of BPT-UCB with various $\beta_t$.
 The left, middle and the right plots represent the result on a function generated from GP, 2D-Rosenbrock function and McCormick function, respectively. Additionally, top and bottom plots represent results of
 $m=2$ and $m=5$ respectively.
 These plots show the average performances over $50$ trials.
 }
 \label{fig:bo_effect_beta_app}
\end{figure}

\begin{figure}[t]
    \begin{center}
        \begin{tabular}{ccc}
         \includegraphics[width=0.33\linewidth]{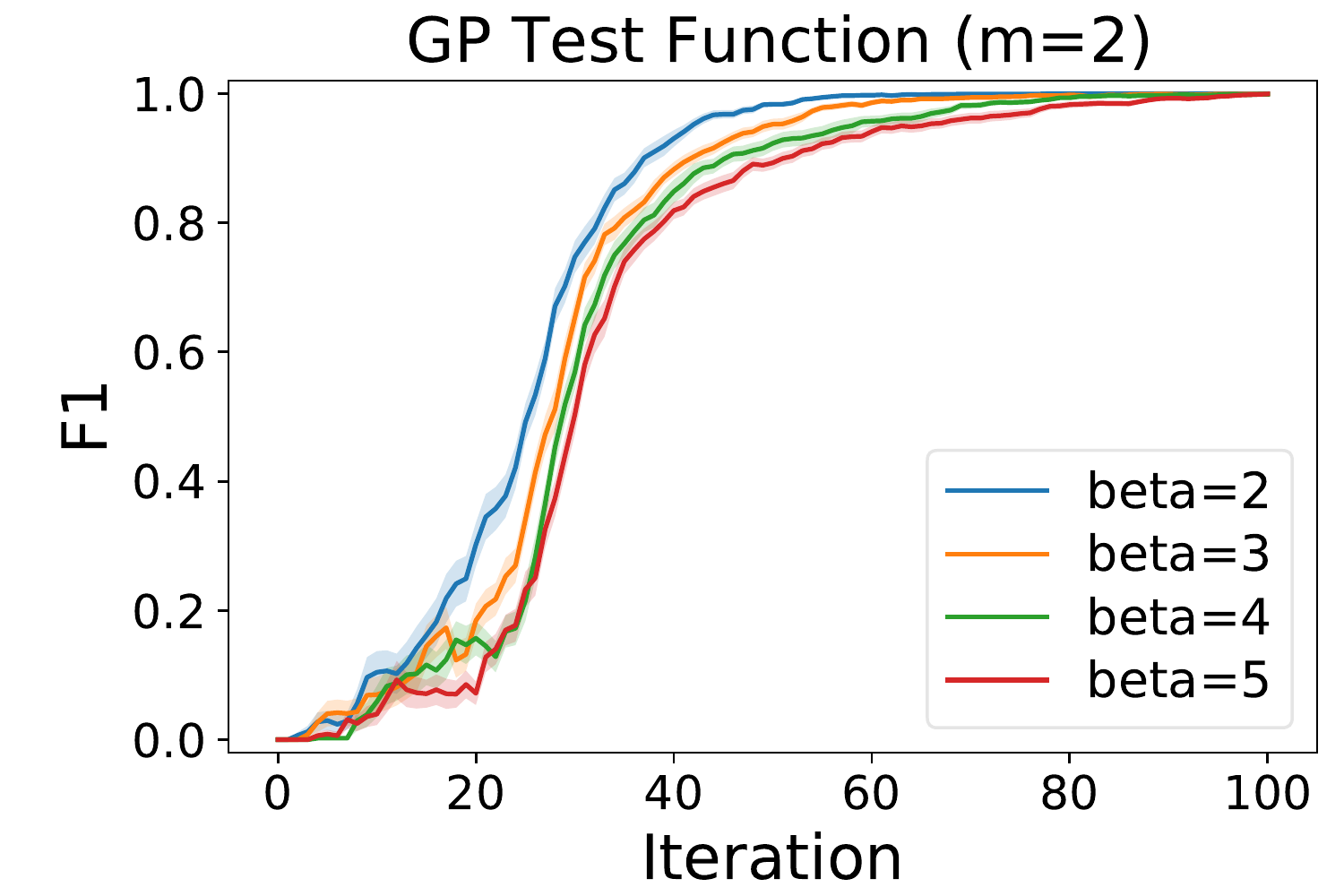} &
         \includegraphics[width=0.33\linewidth]{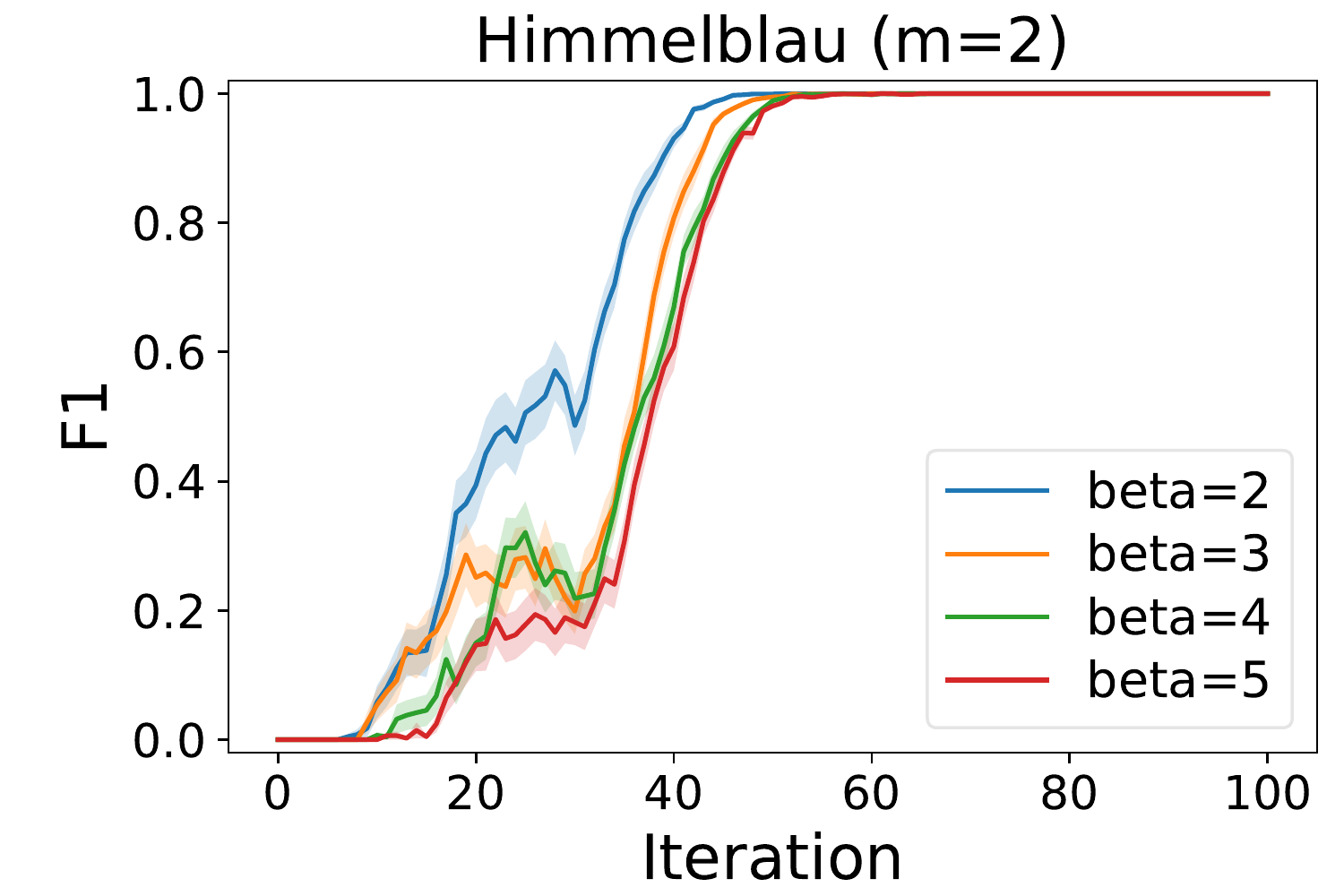} &
         \includegraphics[width=0.33\linewidth]{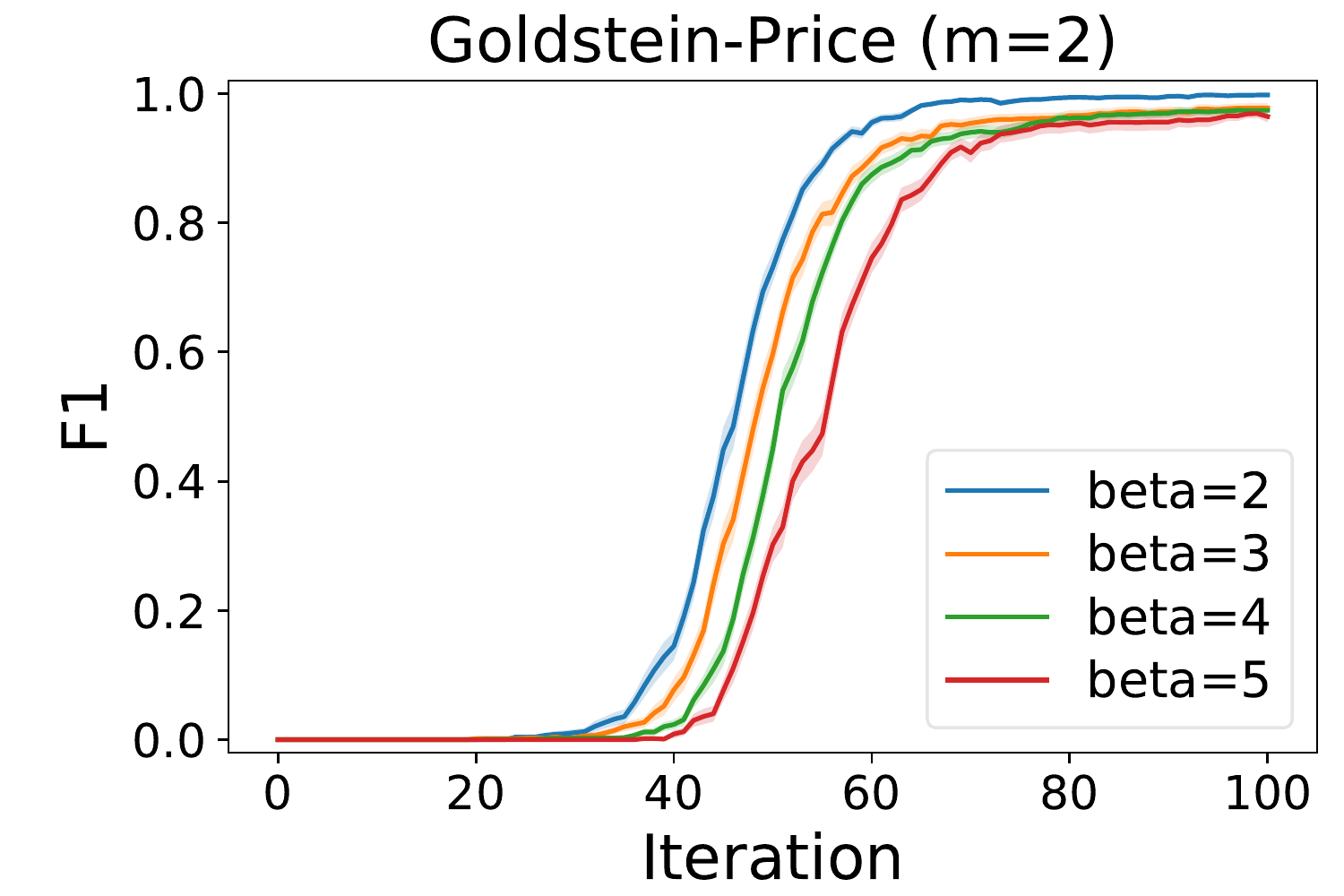} \\
         \includegraphics[width=0.33\linewidth]{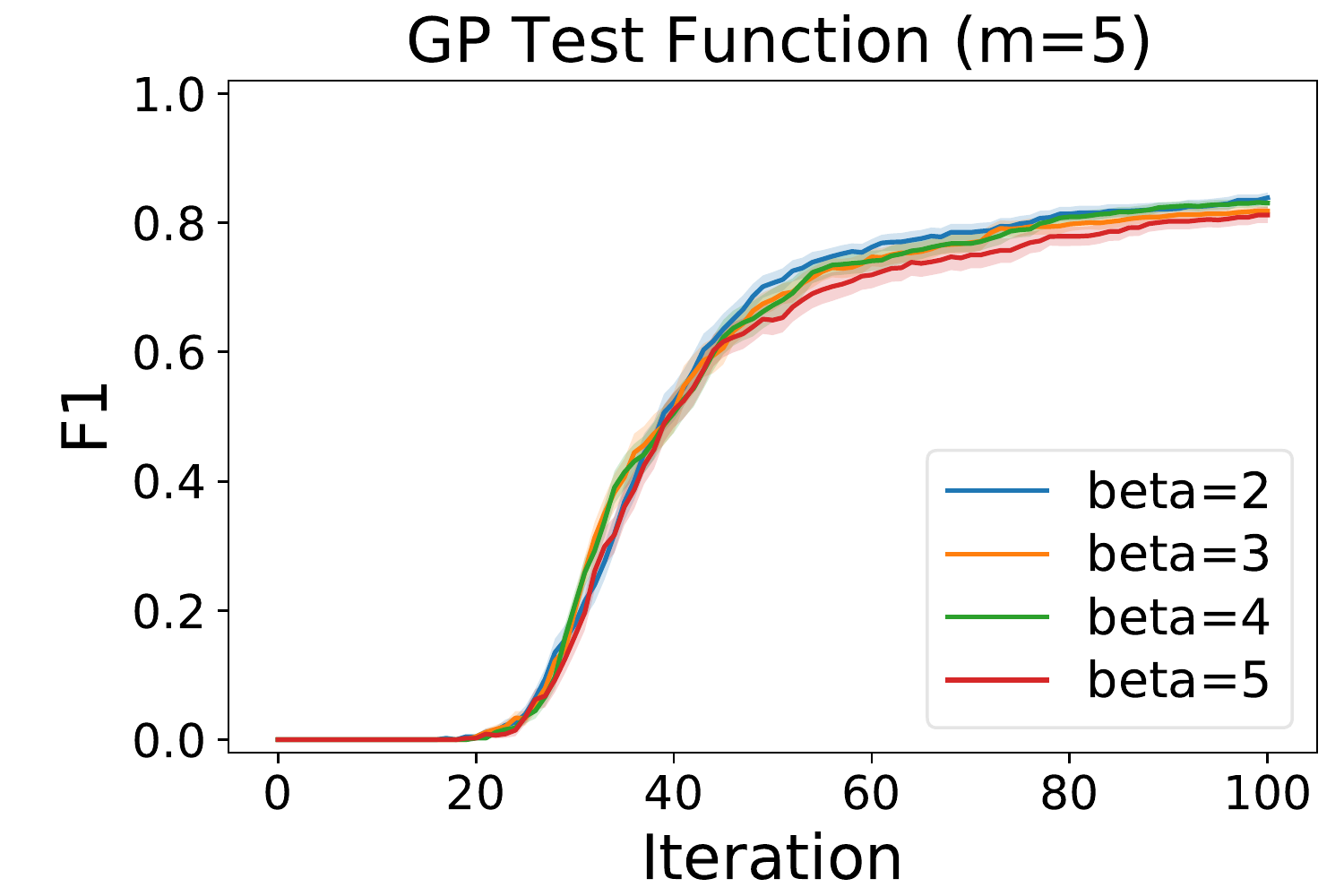} &
         \includegraphics[width=0.33\linewidth]{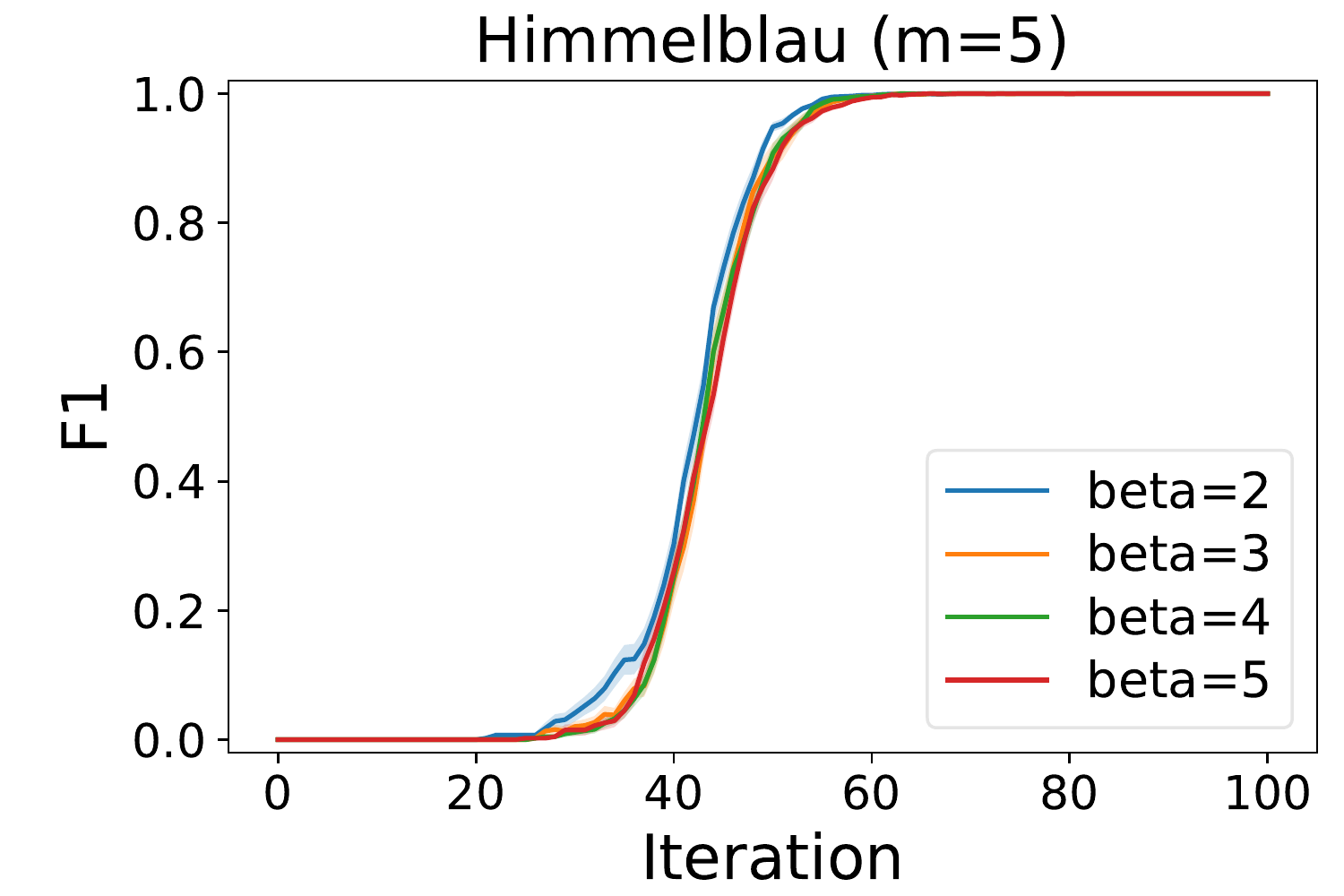} &
         \includegraphics[width=0.33\linewidth]{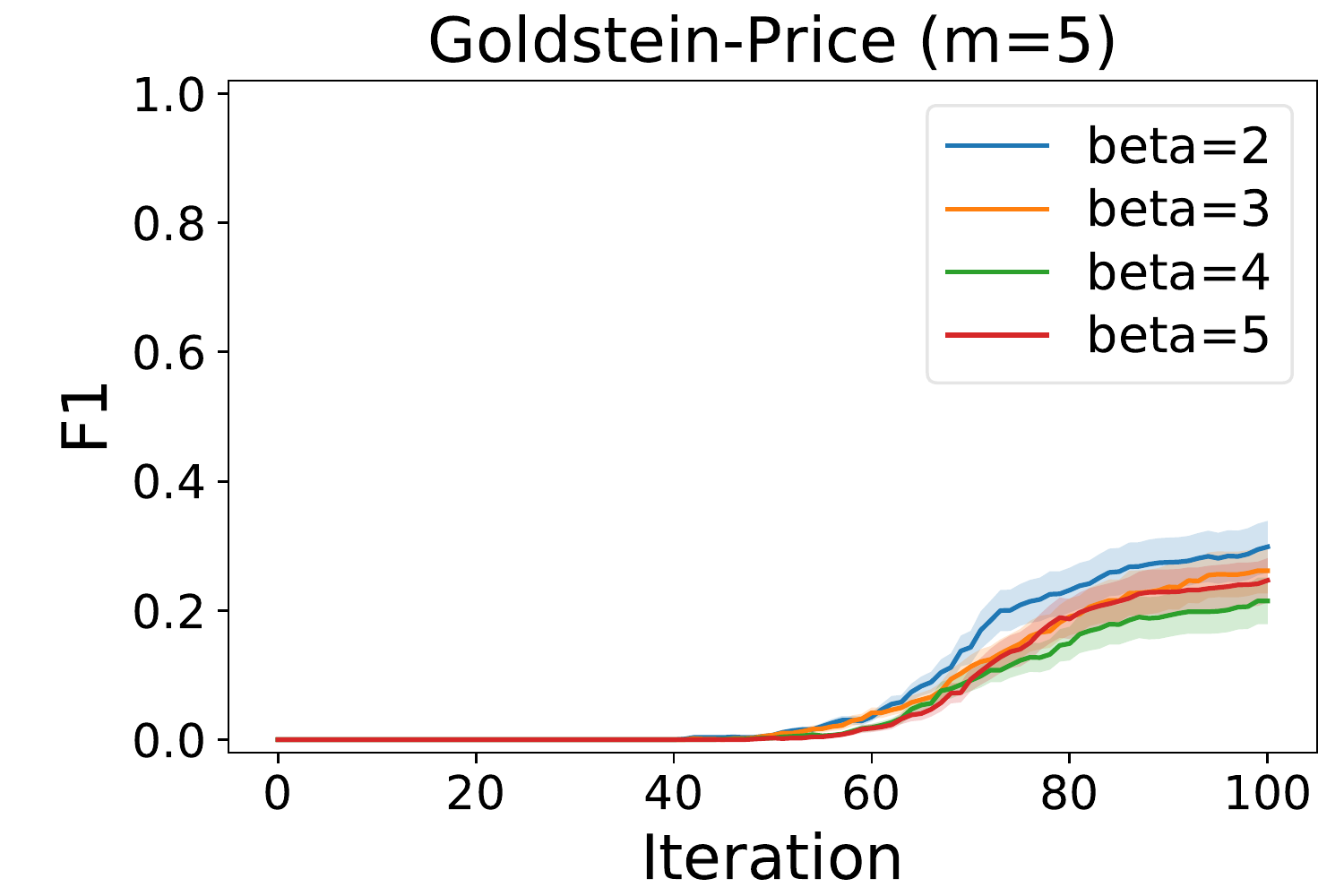} \\
        \end{tabular}
   \end{center}
 \caption{
 The experimental results of BPT-LSE with various $\beta_t$.
 The left, the middle and the right plots represent the result on a function generated from GP, Himmelblau function and Goldstein-Price function, respectively. Additionally, top and bottom plots represent results of
 $m=2$ and $m=5$ respectively.
 These plots show the average performances over $50$ trials.
 }
 \label{fig:lse_effect_beta_app}
\end{figure}

\begin{figure}[t]
    \begin{center}
        \begin{tabular}{ccc}
         \includegraphics[width=0.33\linewidth]{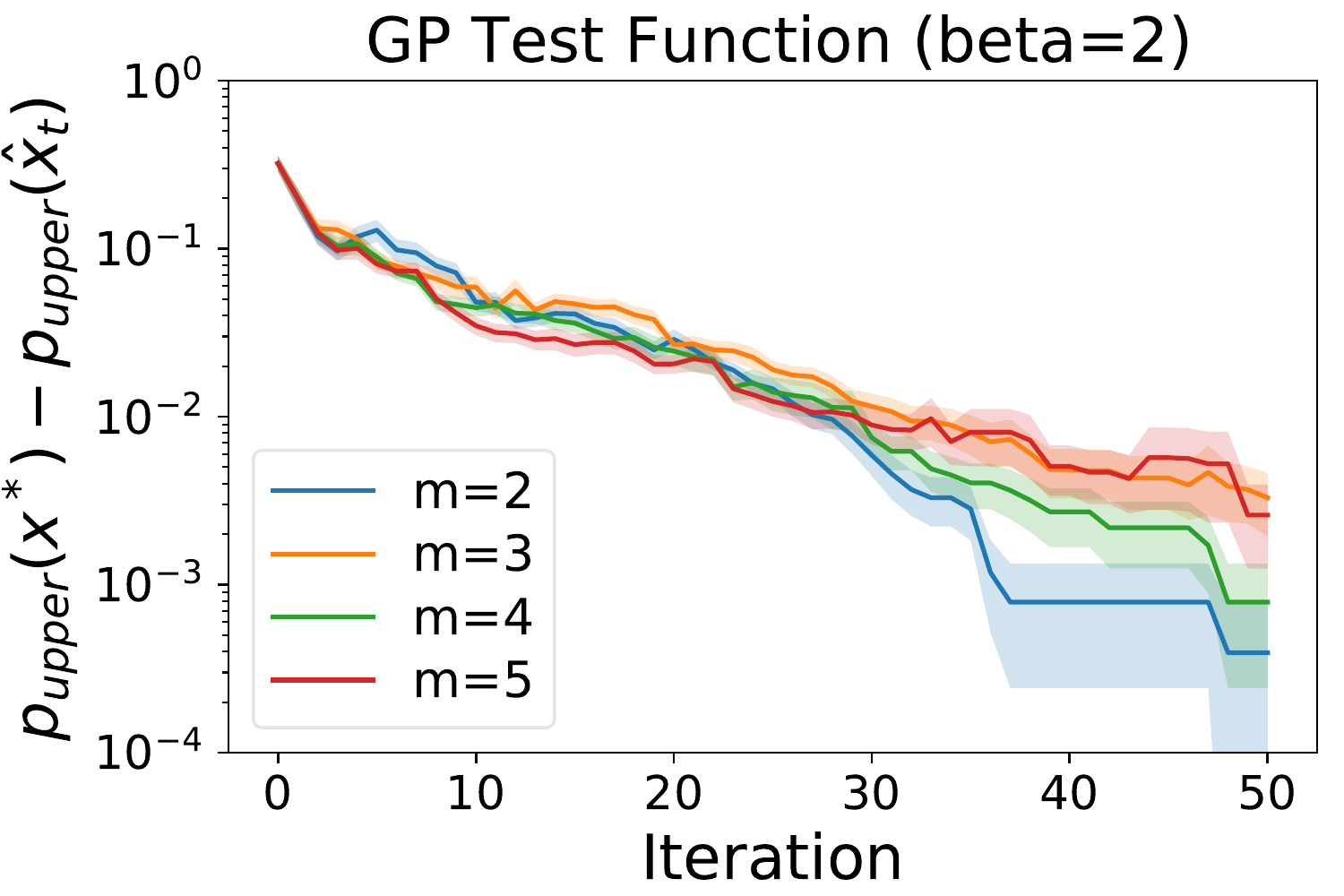} &
         \includegraphics[width=0.33\linewidth]{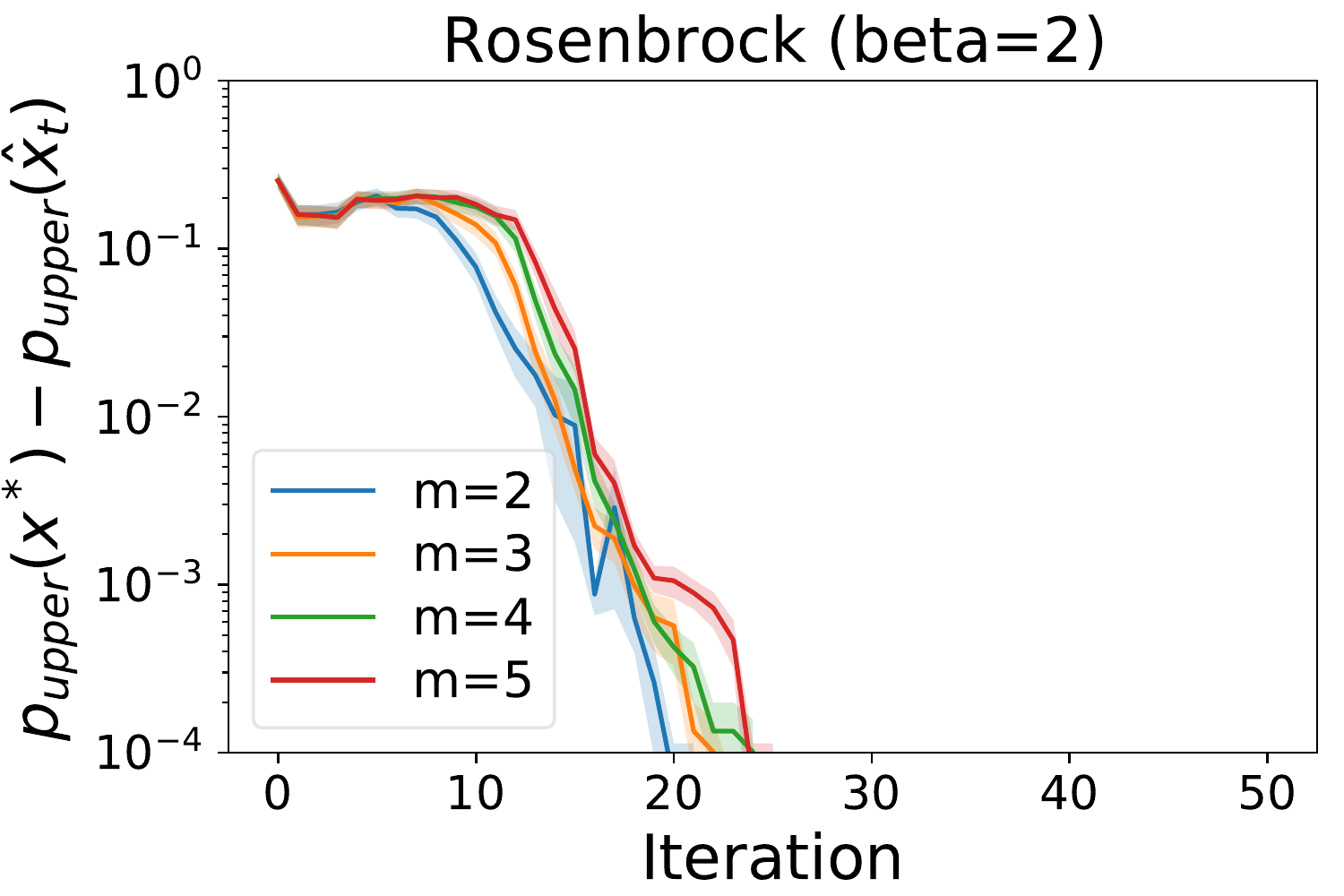} &
         \includegraphics[width=0.33\linewidth]{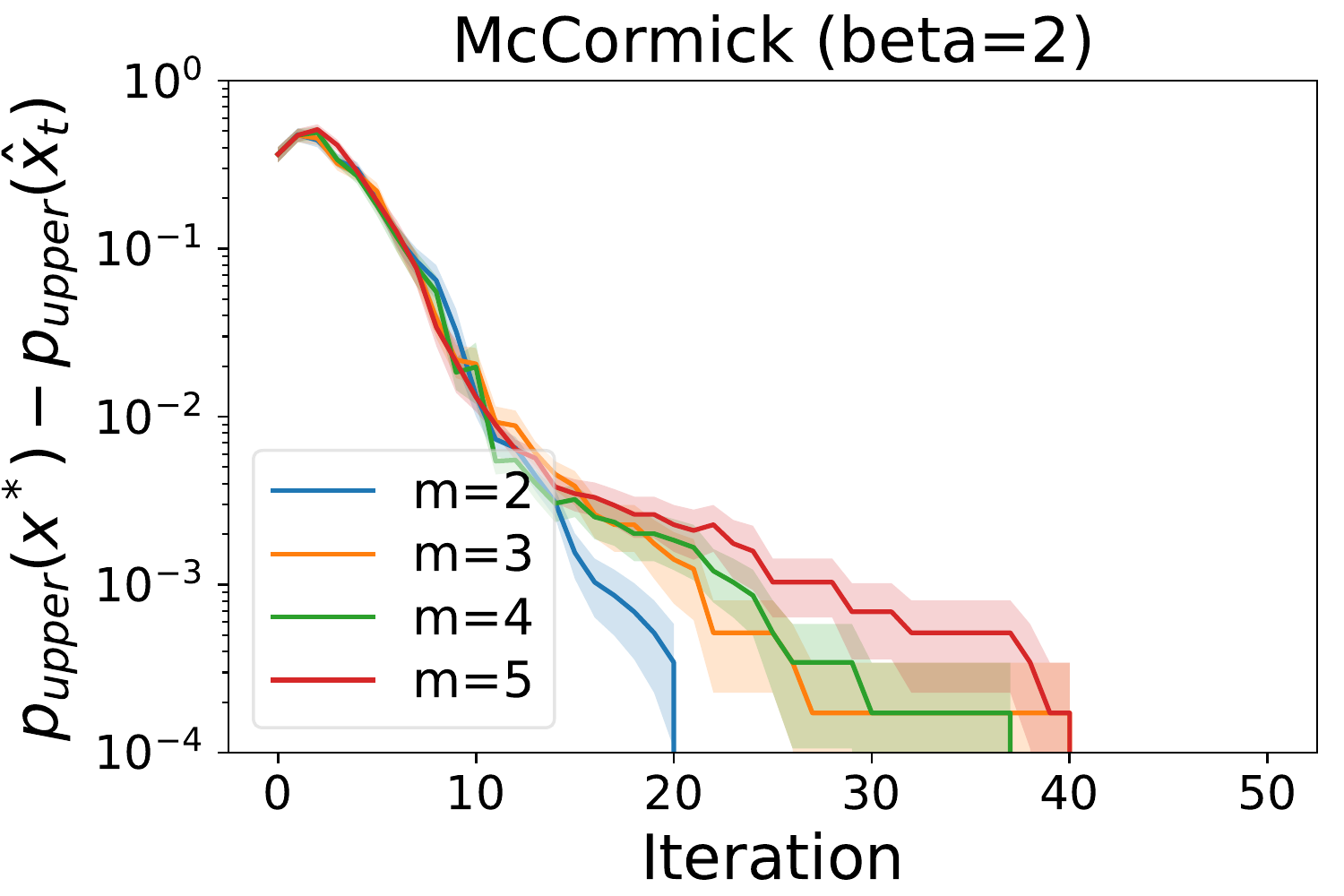} \\
         \includegraphics[width=0.33\linewidth]{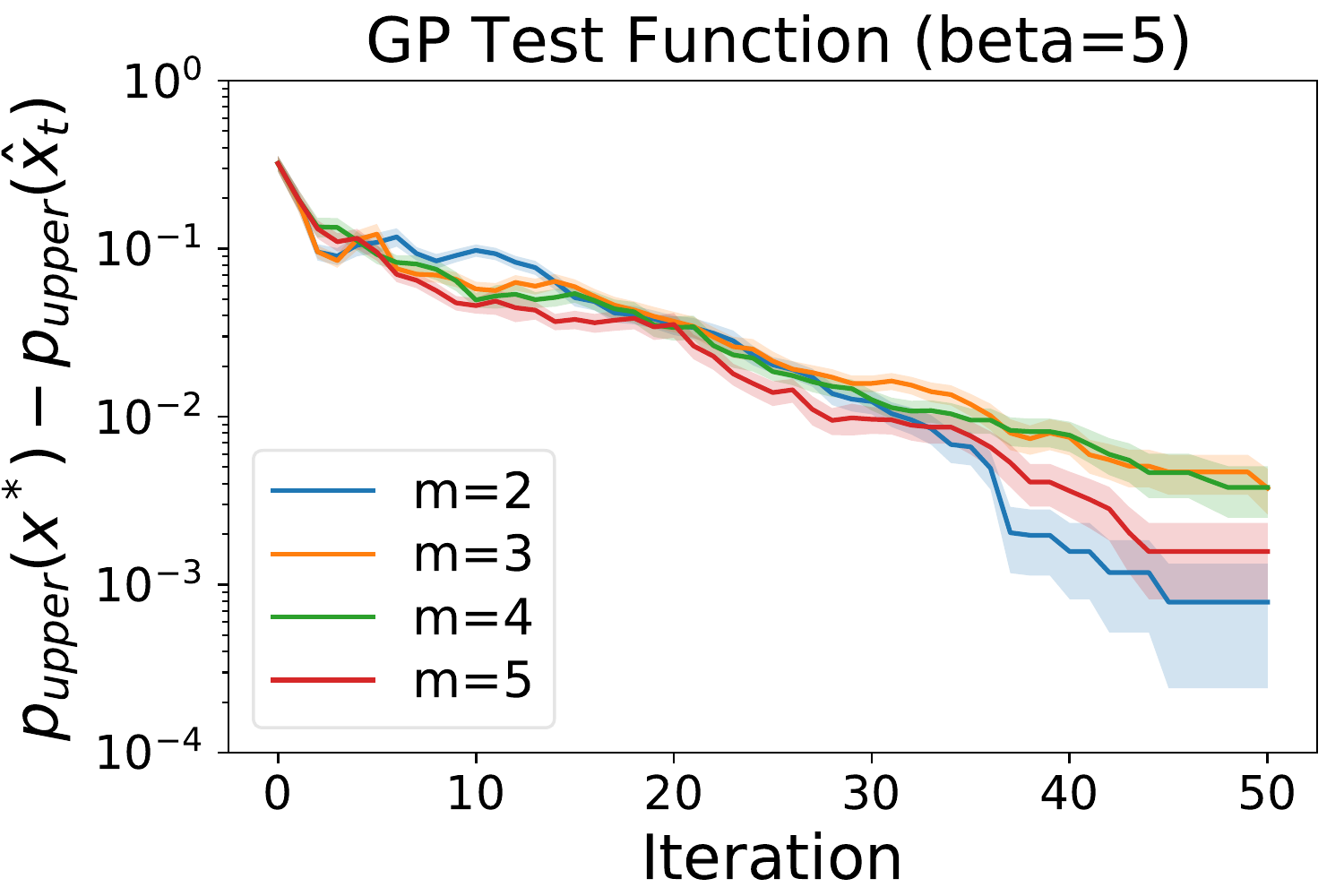} &
         \includegraphics[width=0.33\linewidth]{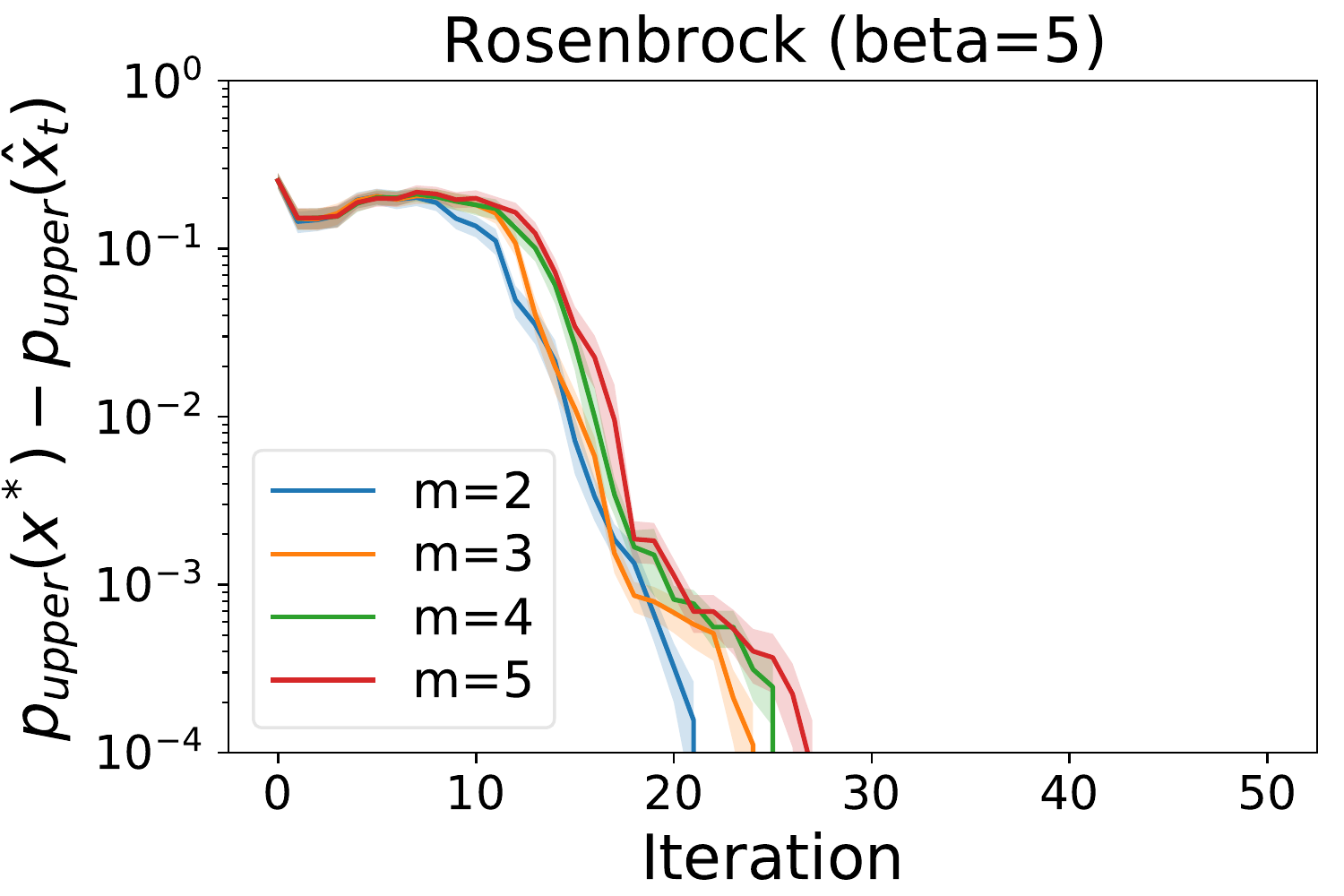} &
         \includegraphics[width=0.33\linewidth]{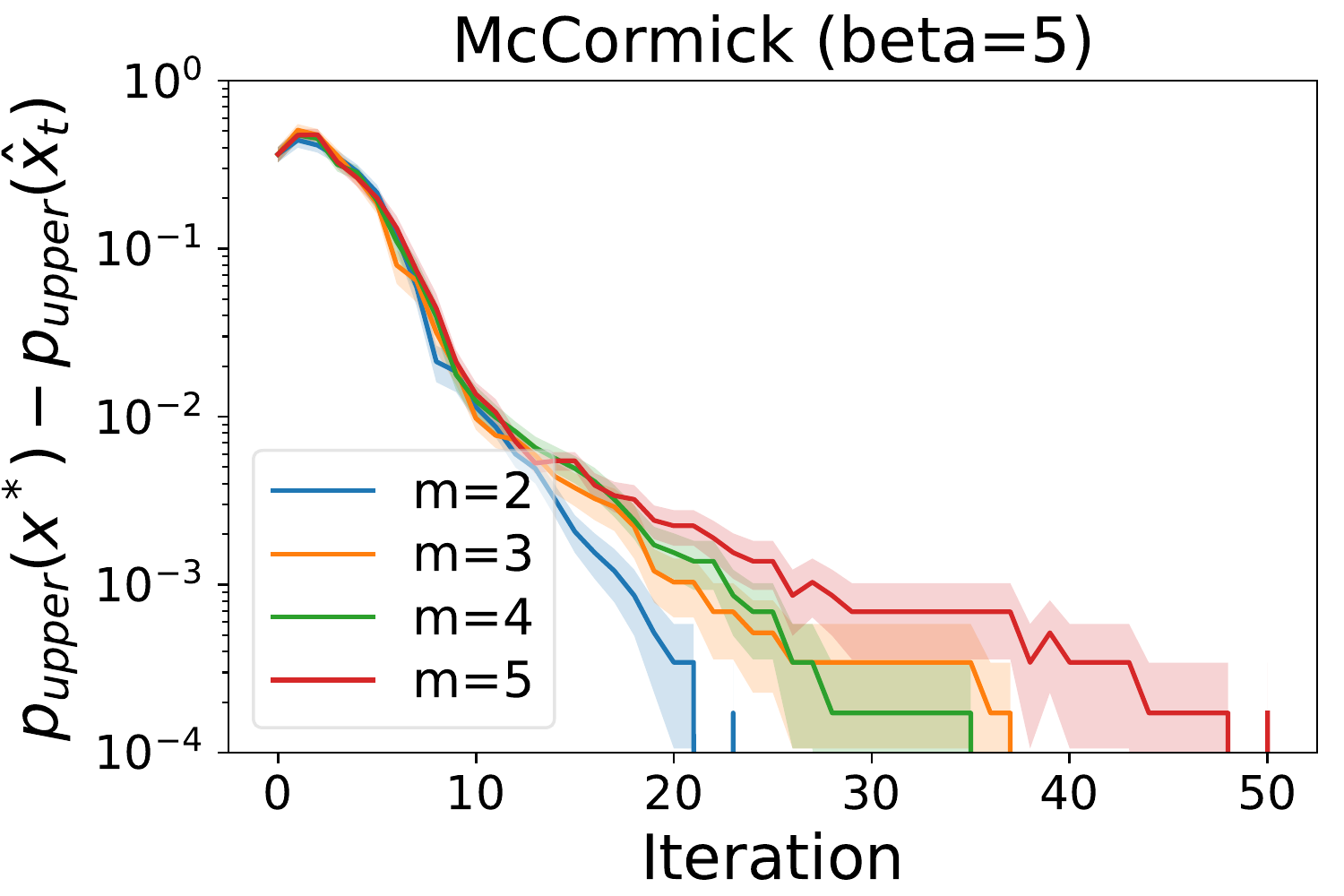} \\
        \end{tabular}
   \end{center}
 \caption{
 The experimental results of BPT-UCB with various $\beta_t$.
 The left, the middle and the right plots represent the result on a function generated from GP, 2D-Rosenbrock function and McCormick function, respectively. Additionally, the top and the bottom plots represent results of
 $\beta_t=2$ and $\beta_t=5$ respectively.
 These plots show the average performances over $50$ trials.
 }
 \label{fig:bo_effect_m_app}
\end{figure}

\begin{figure}[t]
    \begin{center}
        \begin{tabular}{ccc}
         \includegraphics[width=0.33\linewidth]{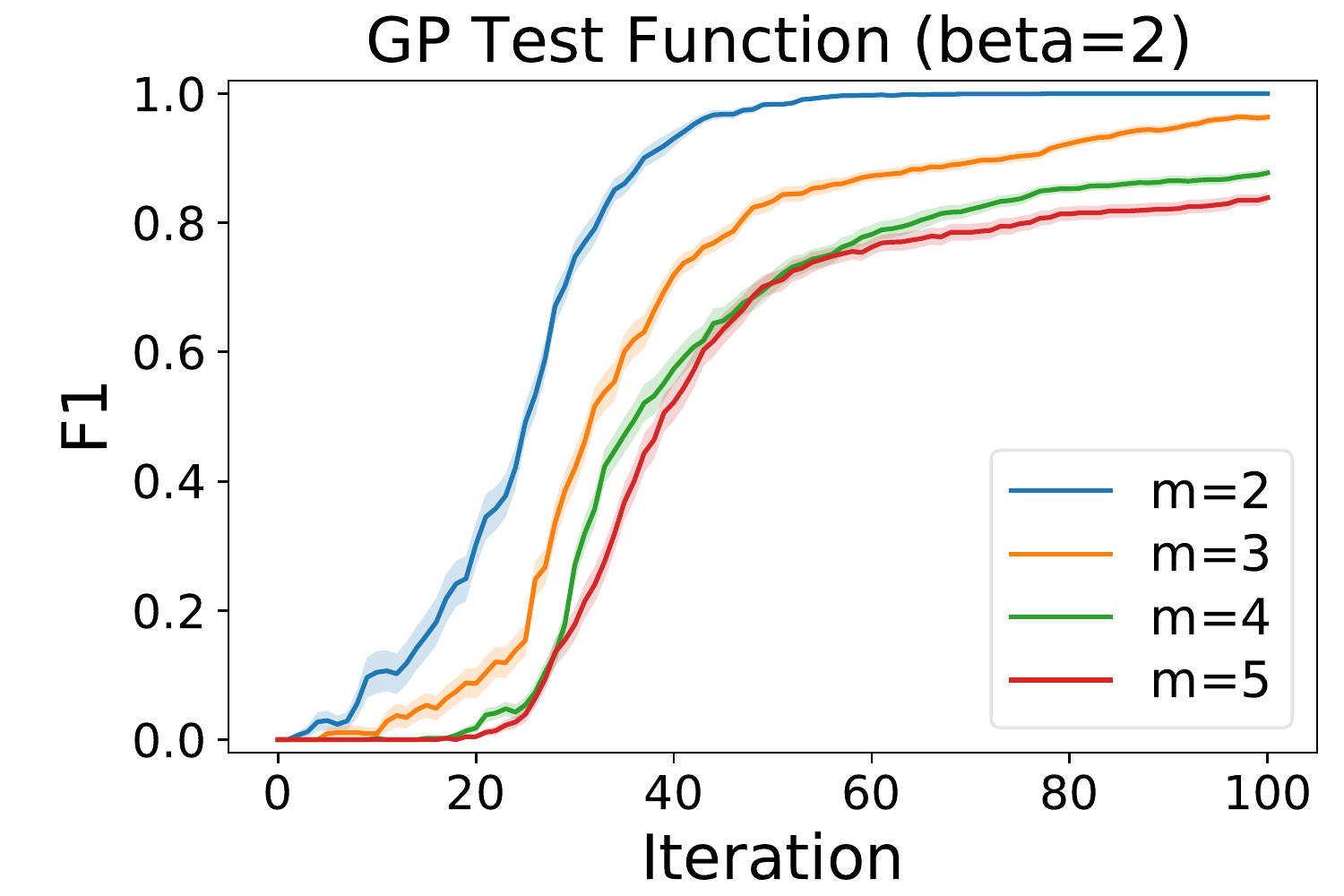} &
         \includegraphics[width=0.33\linewidth]{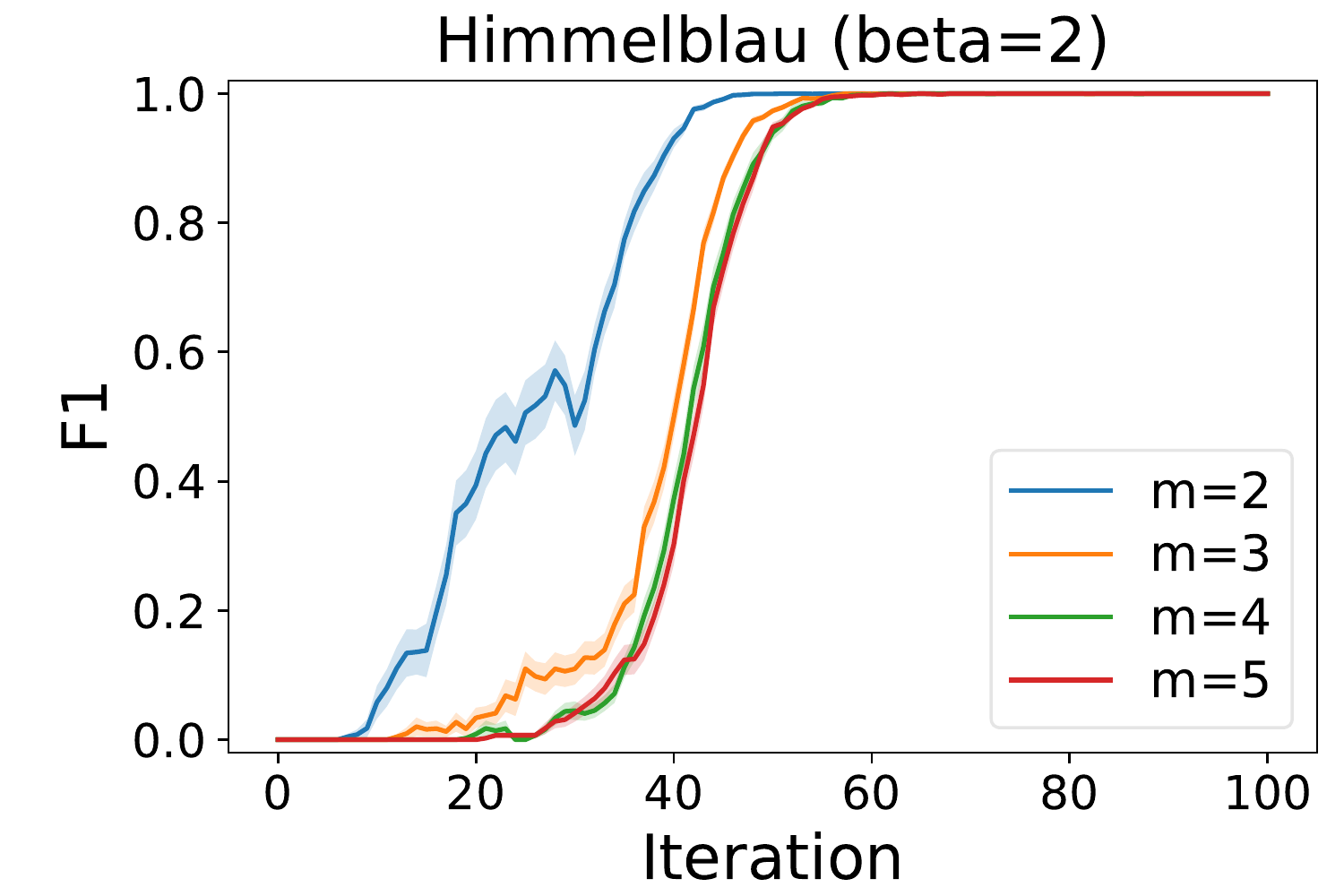} &
         \includegraphics[width=0.33\linewidth]{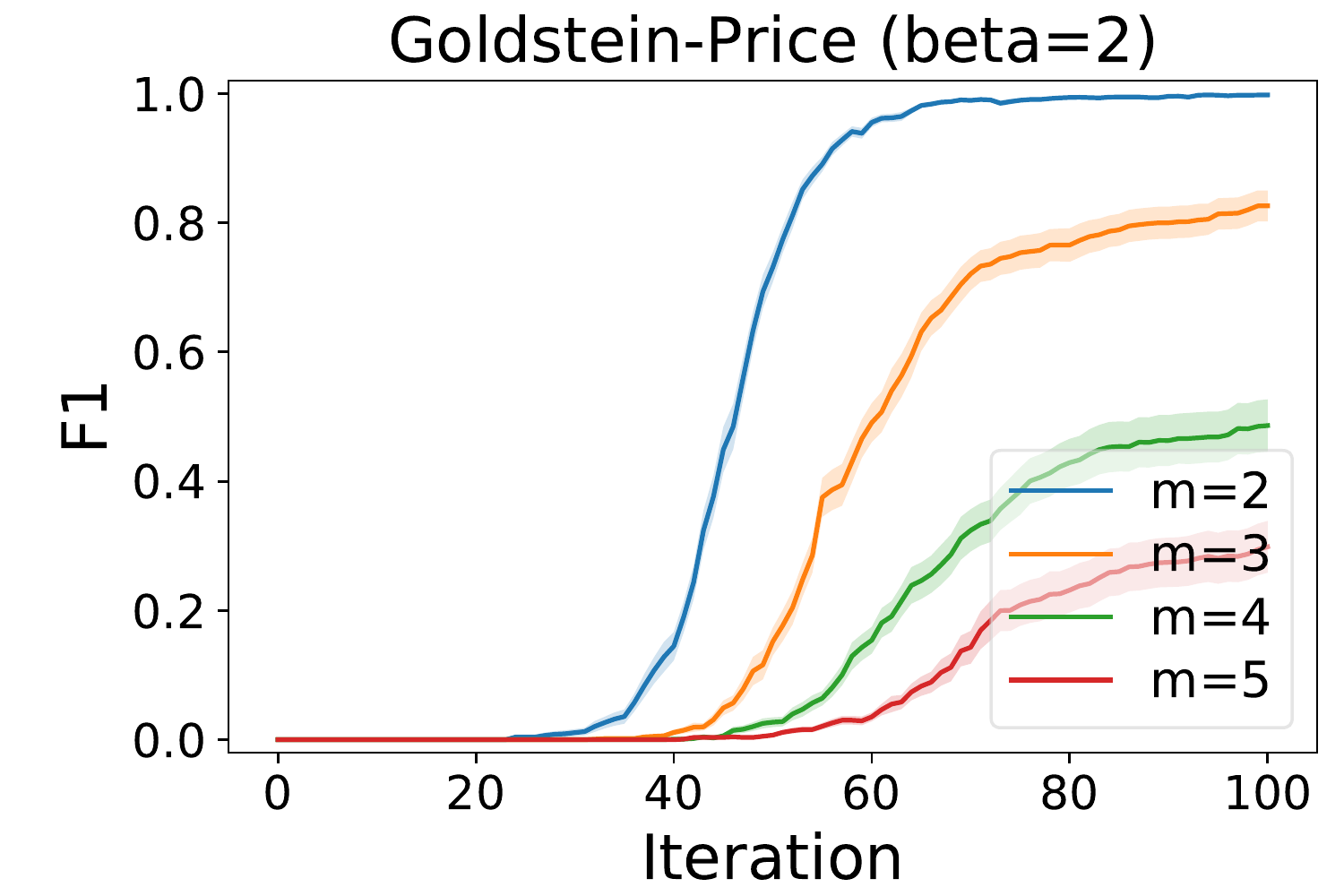} \\
         \includegraphics[width=0.33\linewidth]{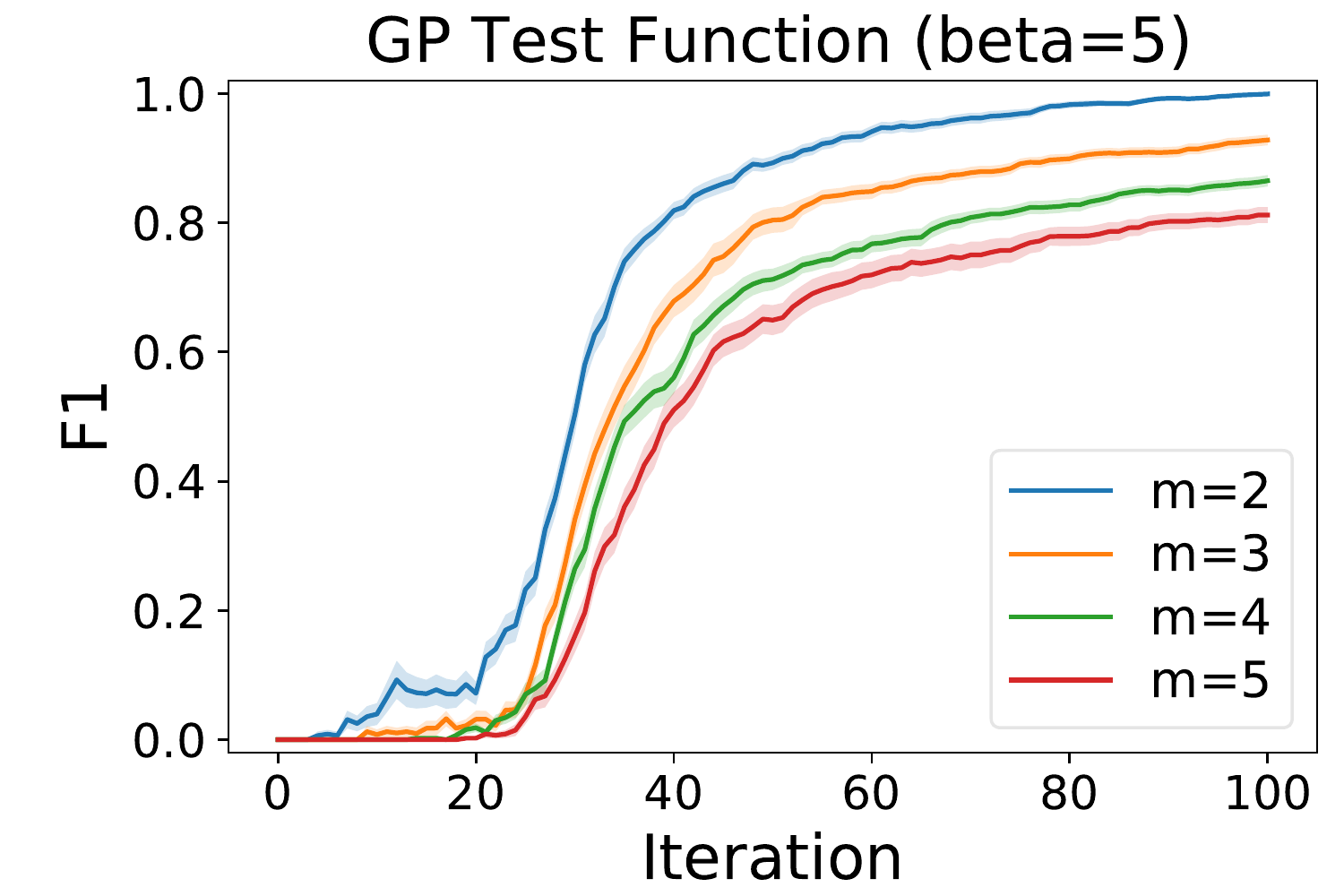} &
         \includegraphics[width=0.33\linewidth]{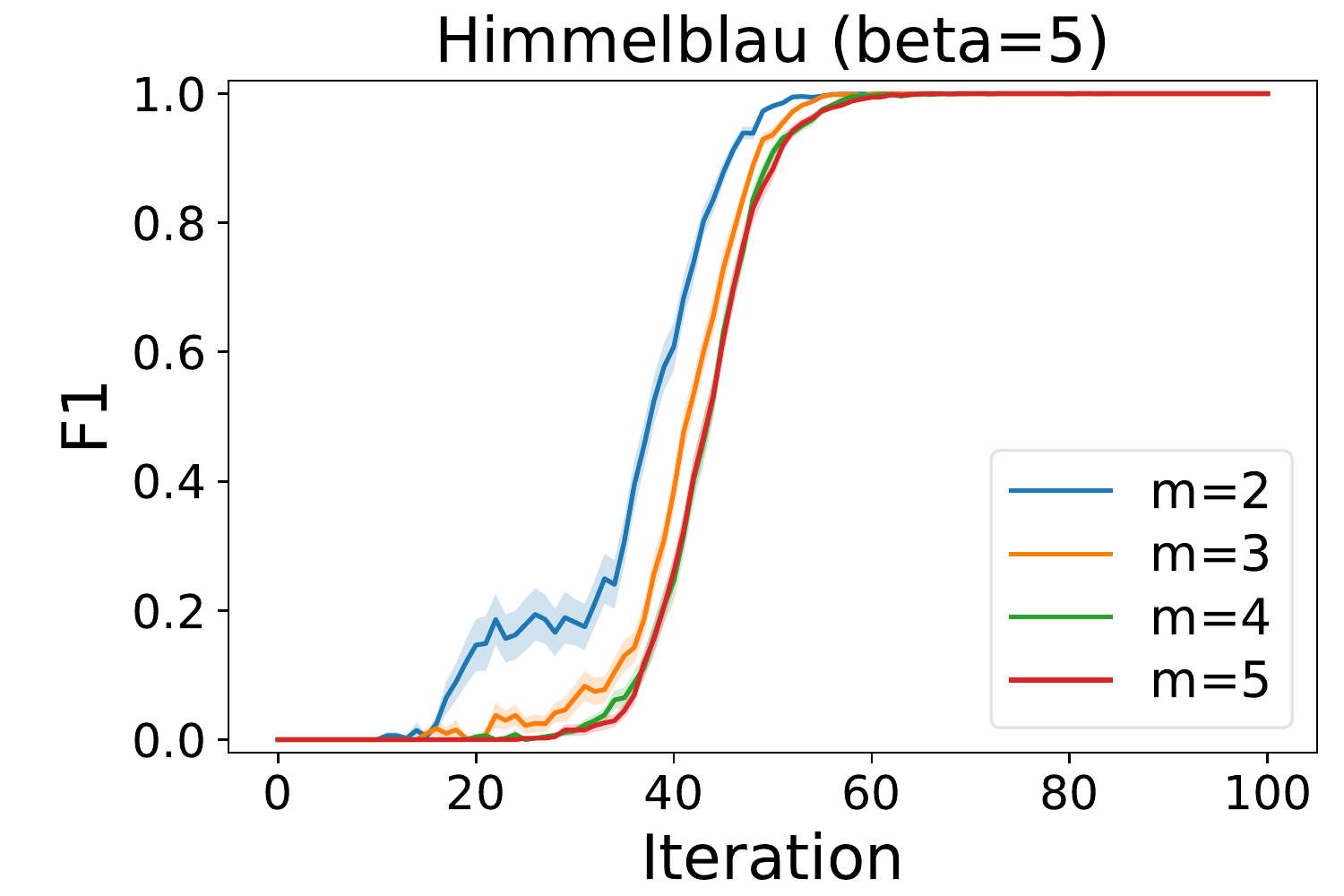} &
         \includegraphics[width=0.33\linewidth]{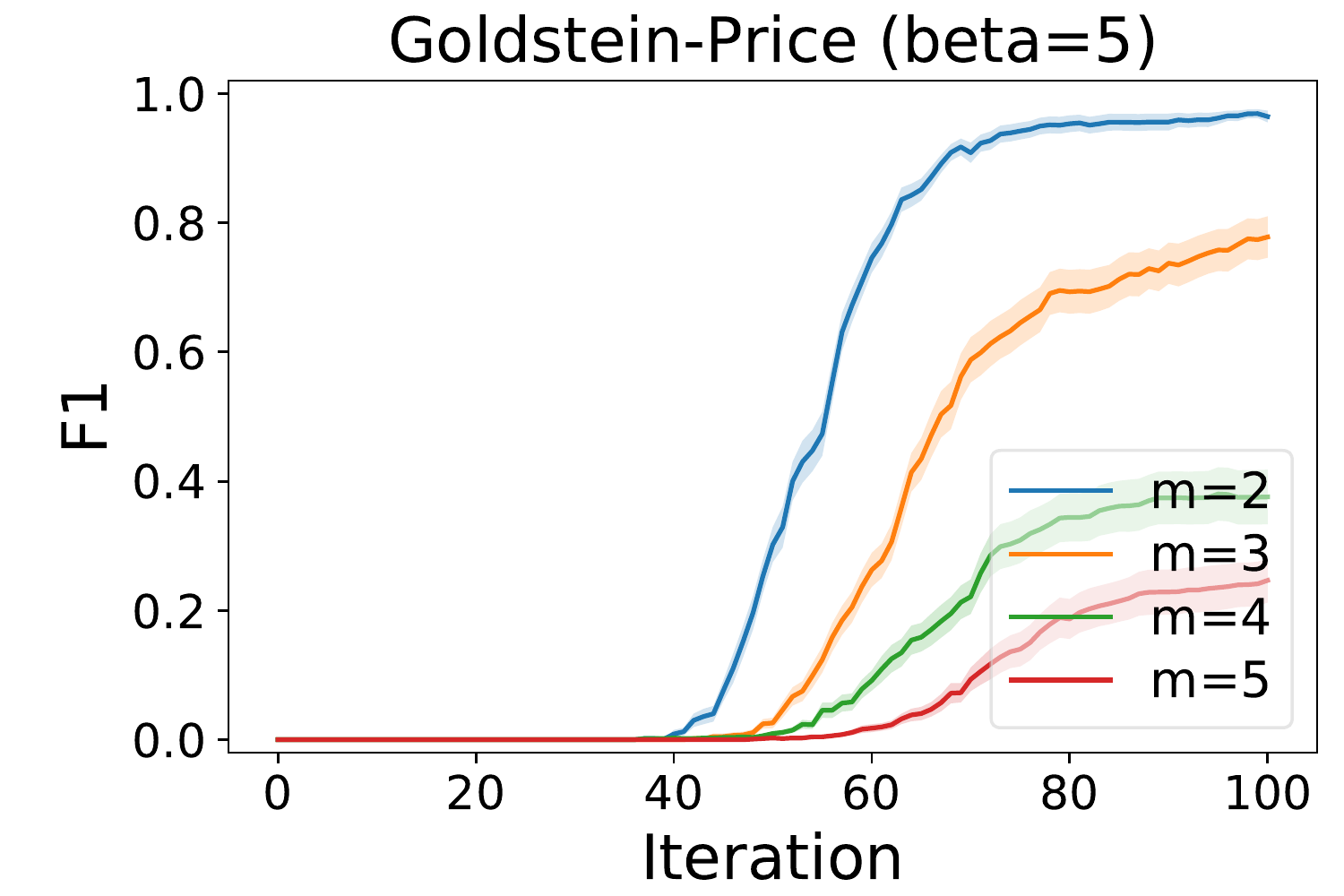} \\
        \end{tabular}
   \end{center}
 \caption{
 The experimental results of BPT-LSE with various $m$.
 The left, middle and the right plots represent the result on a function generated from GP, Himmelblau function and Goldstein-Price function, respectively. Additionally, the top and the bottom plots represent results of
 $\beta_t=2$ and $\beta_t=5$ respectively.
 These plots show the average performances over $50$ trials.
 }
 \label{fig:lse_effect_m_app}
\end{figure}

\subsection{Real Data Experiments}
We tested the performances of the proposed methods on two real examples for the optimization setting and one example in the LSE setting.

\subsubsection{Infection Control Problem}
We considered a decision making problem
on epidemic simulation model used in \cite{DBLP:conf/uai/GessnerGM19}
with a slight modification.
In this problem,
the goal is to decide the target infection rate to minimize the associated economic risk
with as small number of simulation runs as possible.
For instance, if we decide to make all the economic activity stop, the lowest infection rate would be archived but the economic risk would be extremely large.
On the other hand, if we do not take any action to control the infection, the infection rate stays high and the economic risk would be also non-negligibly high due to the spread of infection.
Hence,  we want to find the target infection rate that archives low risk on tolerance level $h$ with the highest probability (or with sufficiently high probability in LSE)
In our experiments, we used SIR model \cite{kermack1927contribution} as the epidemic simulation model.
This model simulates the transition of the number of infected people given two parameters called infection rate and recovery rate.
Here we regarded the infection rate as the design parameter $x$ and the recovery rate as the environmental parameter $w$ because the uncertainty of the latter is uncontrollable in reality.
We assumed shifted gamma prior: $c/w - 1 \sim \text{Gam}(a, b)$, where $c=0.5, a=5, b=4$ as in \cite{DBLP:conf/uai/GessnerGM19}, and then define $p(w)$.
We then rescaled the domain of $x$ and $w$ to $[-1, 1]$, and considered evenly allocated $50$ grid points in each dimension.
We assumed the following risk function as $f$:
\begin{align*}
 f(x, w) = n_{\text{infected}}(x, w) - 150x,
\end{align*}
where
$n_{\text{infected}}(x, w)$,
which is computed via SIR model simulation,
is the maximum number of infected people within a certain period.
Furthermore,
we set
$h=135$
and
$\alpha=0.9$,
and for GP modeling,
we used
Gaussian kernel with
$l=0.5,~\sigma_{\text{ker}}=250$,
and
$\sigma^2=0.025$.
Additionally,
we used the same settings as \S\ref{sec:art_data}
for other parameters.

The results are shown in Fig.~\ref{fig:sir_result_app}.
We confirm that the proposed methods worked well in both the optimization and the LSE settings.

\begin{figure}[t]
    \centering
     \begin{tabular}{cc}
         \includegraphics[width=0.5\linewidth]{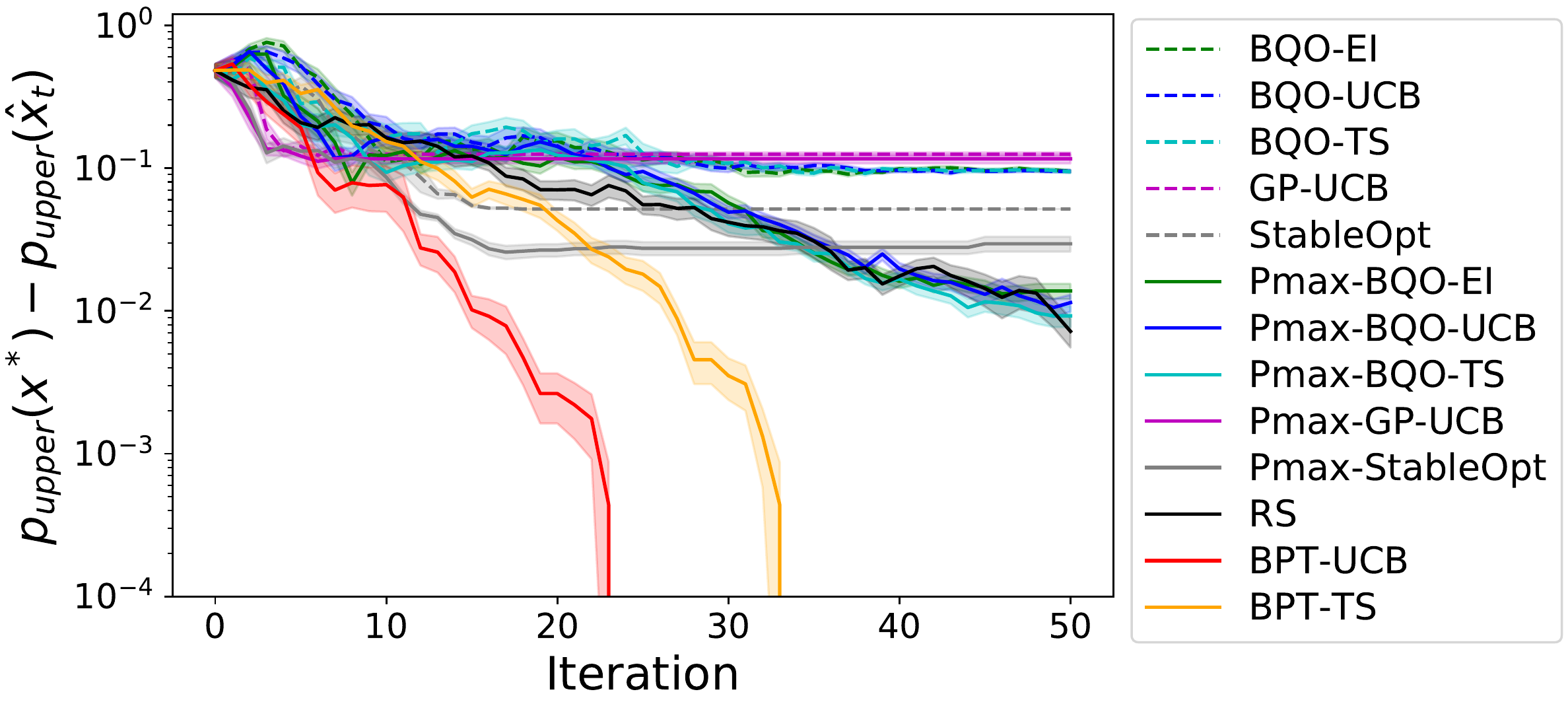} &
         \includegraphics[width=0.5\linewidth]{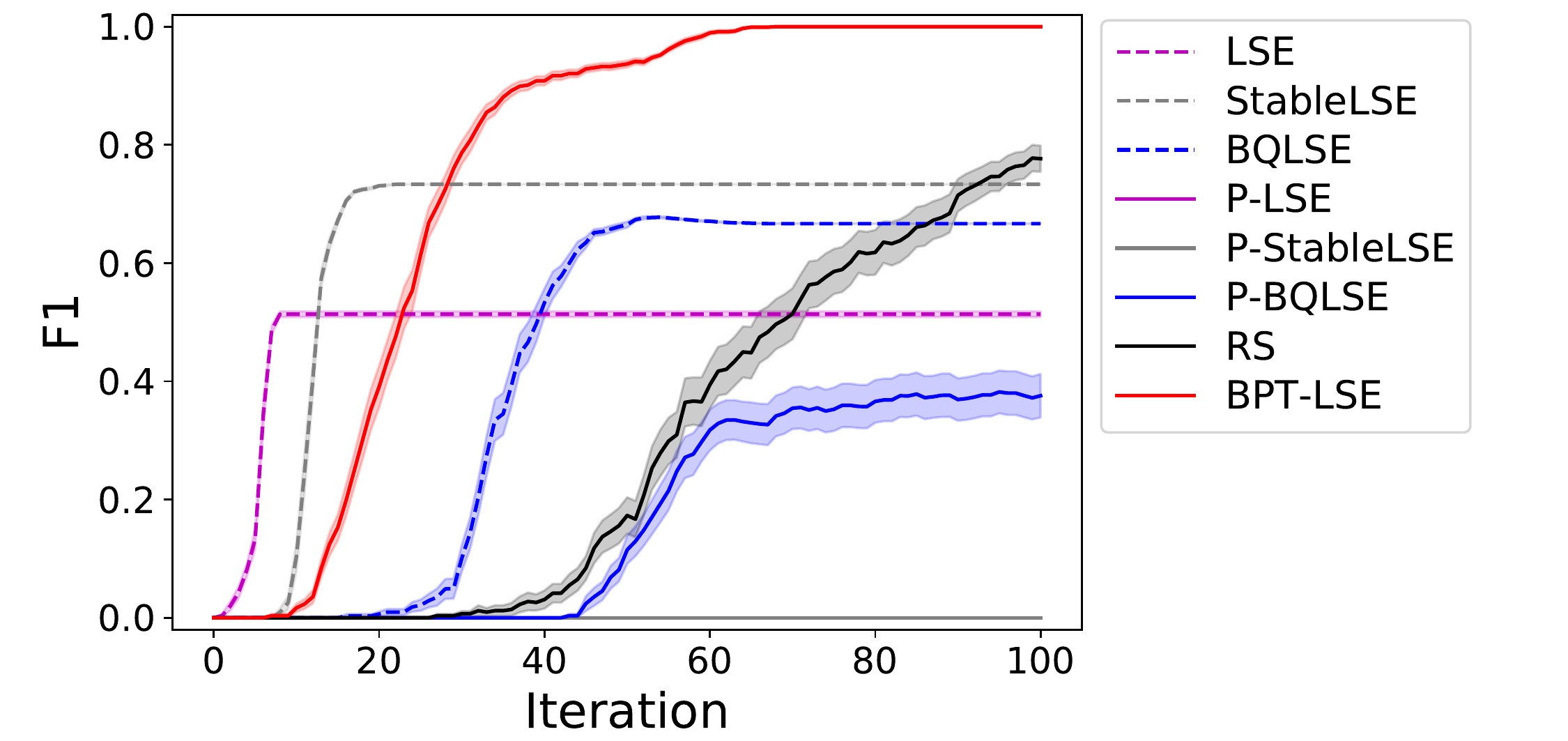} \\
      (a) Optimization setting &
      (b) LSE setting
     \end{tabular}

 \caption{
 The experimental results on the infection control problem.
 The left and the right plots represent the results in the optimization and the LSE settings, respectively.
 These plots show the average performances over $50$ trials.
 }
 \label{fig:sir_result_app}
\end{figure}

\subsection{Newsvendor Problem under Dynamic Consumer Substitution}
We applied the proposed methods
to Newsvendor Problem
under Dynamic Consumer Substitution \cite{mahajan2001stocking}.
This problem was also studied in \cite{toscano2018bayesian}.
The goal of this problem is to find the optimal initial inventory level to maximize profit, which is computed by a stochastic simulation, with as small number of simulation runs as possible.

In this problem, each product $j$ has the cost $c_j$ and $p_j$, and the initial inventory level is noted as $x_j$.
In a simulation, a sequence of $I$ customers indexed by $i$ arrives in order and decide whether they buy an in-stock product or not.
These decisions are made based on the utility $U_i^j$, which is assigned for the customer $i$ and the product $j$.
Utilities are modeled with the multi-nominal logit model,
where
$U_i^j = u^j + \xi_i^j$
and
$u_j$
are constant.
Here,
\{$\xi_i^j\}$
follows mutually independent Gumbel distributions,
whose distribution function is written as
$\Psi_i^j(z) \coloneqq P(\xi_i^j \leq z) = \exp(-e^{-(z/\mu + \eta)})$,
where $\eta$ is Euler's constant.
Furthermore,
let
$w_j$
be
$\sum_{i=1}^I \Upsilon^{-1}\left( \Psi_i^j(\xi_i^j) \right)$
where
$\Upsilon$
is the cumulative distribution function of Gamma distribution,
and
$w_j$
follows mutually independent Gamma distribution.
Additionally,
$\{\xi_i^j\}$
can be simulated given
$\{w_j\}$
(see more details at \S6.6 in \cite{toscano2018bayesian}).
In the end of the simulation,
the profit is computed as the sum of the prices of the products sold minus the cost of the initial inventory.
We defined the function
$f(\bm{x}, \bm{w})$
as the conditional expectation of the profit given initial inventory $\bm{x}$ and $\bm{w}$ described above.

In our experiment,
we considered two products whose costs are
$c_1=4, c_2=13$
and
the prices are
$p_1=10, p_2=23$,
respectively,
and chose
$I=50, u^1=1, u^2=1$.
Furthermore,
we set
$\mathcal{X}=[0, I]\times[0, I]$,
and
$\Omega=[w_1^{\text{st}}, w_1^{\text{ed}}]\times[w_2^{\text{st}}, w_2^{\text{ed}}]$,
where
$[w_j^{\text{st}}, w_j^{\text{ed}}]$
is the $99.9$\% confidence interval of
$w_j$.

In this experiment,
since
$\mathcal{X}$
and
$\Omega$
are continuous set,
we use Random Feature Map method \cite{rahimi2008random}
with
$1000$
random features to
approximate posterior sampling of GP in BQO-TS and BPT-TS.
Additionally,
we chose
$h=350$,
and for GP modeling,
we used Matern$5/2$ kernel
$k((\bm{x}, \bm{w}), (\bm{x}^{\prime}, \bm{w}^{\prime})) = \sigma_{\text{ker}}^2(1 + \sqrt{5}r+\frac{5}{3}r^2)\exp(-\sqrt{5}r)$,
$r = \sqrt{\sum_{j=1}^d(\bm{x}_j - \bm{x}^{\prime}_j)^2/l_j^{(x)2} + \sum_{j=1}^k(\bm{w}_j - \bm{w}^{\prime}_j)^2/l_j^{(w)2}}$,
and all the kernel hyper parameters were estimated within algorithms by maximizing marginal likelihood.

The experimental results in the optimization setting is in Fig~\ref{fig:newsvendor_result_app}.
The proposed methods archived better performance than existing methods.

\begin{figure}[t]
 \centering
 \includegraphics[width=0.6\linewidth]{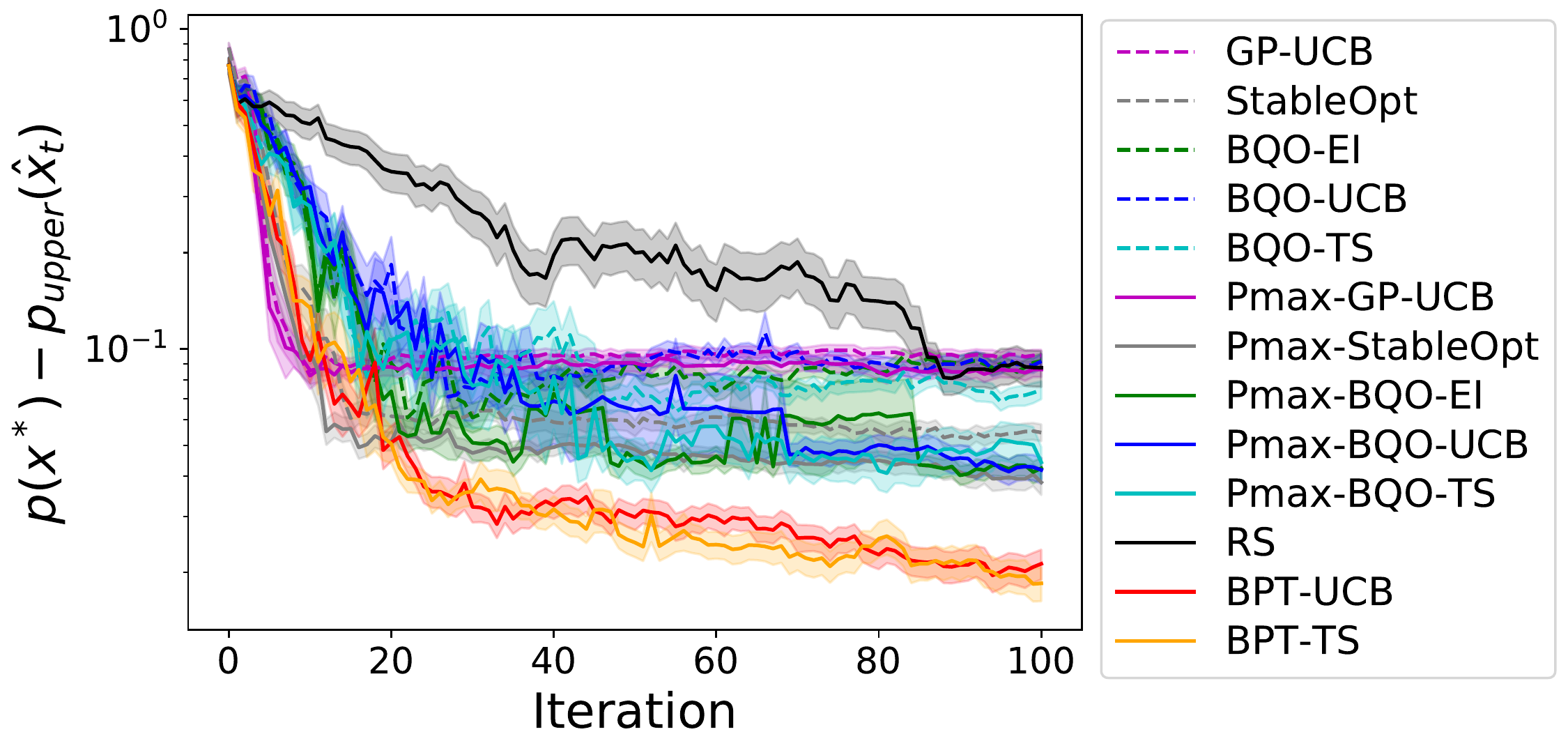}
 \caption{The experimental result in the optimization setting for Newsvendor Problem.
 This plot shows the average performance over $50$ trials.
 }
 \label{fig:newsvendor_result_app}
\end{figure}

\end{document}